\def\eqref#1{equation~\ref{#1}}
\def\1{\bm{1}}
\DeclareMathAlphabet{\mathsfit}{\encodingdefault}{\sfdefault}{m}{sl}
\SetMathAlphabet{\mathsfit}{bold}{\encodingdefault}{\sfdefault}{bx}{n}
\DeclareMathOperator*{\argmax}{arg\,max}
\newtcolorbox{siderules}[2][]{blanker, breakable, 
     left=3mm, right=3mm, top=0mm, bottom=0mm,
     borderline west={4pt}{0pt}{#2},
     before upper=\indent, parbox=false, before upper={\noindent}, #1}
\definecolor{EvalColor}{HTML}{B3E2CD}
\definecolor{CompColor}{HTML}{FDCDAC}
\definecolor{CorrColor}{HTML}{CBD5E8}
\definecolor{DemoColor}{HTML}{F4CAE4}
\definecolor{DescrColor}{HTML}{FFF2AE}
\definecolor{DescrPrefColor}{HTML}{B7DEE8}
\DeclareFixedFont{\ttb}{T1}{txtt}{bx}{n}{12} % for bold
\DeclareFixedFont{\ttm}{T1}{txtt}{m}{n}{12}  % for normal
\definecolor{deepblue}{rgb}{0,0,0.5}
\definecolor{deepred}{rgb}{0.6,0,0}
\definecolor{deepgreen}{rgb}{0,0.5,0}
\newcommand\pythonstyle{\lstset{
language=Python,
basicstyle=\small,
morekeywords={self},              % Add keywords here
keywordstyle=\small\color{deepblue},
emph={MyClass,__init__},          % Custom highlighting
emphstyle=\small\color{deepred},    % Custom highlighting style
stringstyle=\color{deepgreen},
frame=tb,                         % Any extra options here
showstringspaces=false
}}
\newcommand{\cellcontent}[2]{%
  \begin{tabular}{@{}c@{}}
    $#1$ \\[0.5ex]
    \footnotesize\textit{#2}
  \end{tabular}%
}
\newcolumntype{Y}{>{\centering\arraybackslash}X}
\theoremstyle{plain}
\newtheorem{theorem}{Theorem}[section]
\newtheorem{lemma}[theorem]{Lemma}
\theoremstyle{definition}
\theoremstyle{remark}
\titlespacing*{\paragraph}{0em}{0em}{0.3333333333em}
\titleformat{\paragraph}[runin]{\normalsize\bfseries}{}{0em}{}[\hspace{0.3333333333em}---]
\renewcommand{\paragraph}[1]{\refstepcounter{paragraph}\noindent\textbf{#1\ ---}\label{par:\theparagraph}}
\title{Reward Learning from\\ Multiple Feedback Types}
\author{Yannick Metz\textsuperscript{1,2}, András Geiszl\textsuperscript{2}, Raphaël Baur\textsuperscript{2}, Mennatallah El-Assady\textsuperscript{2} \\
\textsuperscript{1}University of Konstanz, Germany\\
\texttt{yannick.metz@uni-konstanz.de} \\
\textsuperscript{2}ETH Zurich, Switzerland \\
\texttt{ageiszl@inf.ethz.ch, \{raphael.baur, menna.elassady\}@ai.ethz.ch}
}
\begin{document}

\maketitle

\begin{abstract}
Learning rewards from preference feedback has become an important tool in the alignment of agentic models. Preference-based feedback, often implemented as a binary comparison between multiple completions, is an established method to acquire large-scale human feedback. However, human feedback in other contexts is often much more diverse. Such diverse feedback can better support the goals of a human annotator, and the simultaneous use of multiple sources might be mutually informative for the learning process or carry type-dependent biases for the reward learning process.
Despite these potential benefits, learning from different feedback types has yet to be explored extensively.
In this paper, we bridge this gap by enabling experimentation and evaluating multi-type feedback in a broad set of environments. We present a process to generate high-quality simulated feedback of six different types. Then, we implement reward models and downstream RL training for all six feedback types.
Based on the simulated feedback, we investigate the use of types of feedback across ten RL environments and compare them to pure preference-based baselines. We show empirically that diverse types of feedback can be utilized and lead to strong reward modeling performance. This work is the first strong indicator of the potential of multi-type feedback for RLHF.
\end{abstract}

\section{Introduction}
\looseness -1 Reinforcement learning from human feedback (RLHF) is a powerful tool to train agents when it is difficult to specify a reward function or when human knowledge can improve training efficiency. Recently, using multiple forms of human feedback for reward modeling has come into focus \citep{jeon_reward-rational_2020, ghosal_effect_2023, ibarz2018reward, biyik_learning_2022, mehta_unified_2022}. Using diverse sources of information opens up several possibilities: (1) feedback from different sources allows for correcting potential biases in the data; (2) the feedback type can be adapted to a particular task or user based on preferences, knowledge state, or available input modalities; (3) agents can actively select an appropriate type of feedback during training to optimize learning ~\citep{jeon_reward-rational_2020}. However, there are only a few empirical investigations on the characteristics of different feedback types~\citep{mehta_unified_2022, ghosal_effect_2023, biyik_learning_2022}. Even though there have been some efforts to unify different types of reward~\citep{jeon_reward-rational_2020}, and attempts to learn from a combination of two or three feedback types~\citep{ibarz2018reward, mehta_unified_2022}, the use of a larger number of different feedback types has not been fully explored. 
This paper addresses three key questions: (1) How can we define, model, and simulate different explicit types of human feedback consistently? (2) What characteristics do reward functions learned from these feedback types exhibit? (3) Can we combine reward models from different feedback types for robust learning? 

\paragraph{Contributions} To answer these, we contribute by (1) Implementing synthetic generation and reward models for six distinct human feedback types, along with a joint-reward modeling approach based on a reward function ensemble (\autoref{sec:defining_and_simulating}), (2) Empirically investigating the effectiveness and complementarity of these feedback types (\autoref{sec:invest_fb_types}), and (3) Analyzing the performance of joint training with multiple feedback types, highlighting the potential of learning from diverse human feedback in future applications (\autoref{sec:joint-modeling}).

%In this paper, we want to look at three research questions: (1) How can we define and model different explicit types of human feedback, and can we simulate different feedback types consistently? (2) What characteristics do reward functions learned from different feedback types have? (3) Can we combine reward models from different feedback types for robust learning?

%\paragraph{Contributions} In this paper, we (1)~Describe the implementation of synthetic generation and reward models for six separate types of human feedback, as well as a simple reward modeling approach based on a reward function ensemble, (2)~Empirically investigate the effectiveness and complementarity of different feedback types, (3)~Discuss the performance of joint training with multiple feedback types, highlighting the potential of learning from diverse human feedback in future applications.

\section{Related Work}

\paragraph{Reinforcement Learning from Human Feedback}
Using human feedback as the sole or an additional source of reward information has gained traction in research \citep{ng2000algorithms, Knox2009,griffith_policy_2013,christiano2017deep}, especially for the alignment of large language models \citep{ouyang_training_2022}. This paper focuses on learning rewards solely from human feedback, excluding approaches like interactive reward shaping \citep{Knox2009, warnell_deep_2017}.

\paragraph{Synthetic Generation of Human Feedback} So far, the generation of synthetic feedback has been primarily explored in the area of preferences~\citep{biyikapprel2022, christiano2017deep}. In their foundational work for modern RLHF,~\citep{christiano2017deep} utilize a simple oracle model based on underlying ground truth reward, which uses the sum of rewards over segments to compute preferences. To account for human error, they assume a fixed miss-labeling probability of $10\%$, implemented via preference label switching. 
\citep{biyikapprel2022} implement a comprehensive toolbox to simulate preferences with adaptable levels of noise/irrationality and methods to collect data from humans. We extend these efforts into a more versatile multi-type feedback setting.

\paragraph{Combining Multiple Reward Functions and Multi-Type Feedback}
Different types of feedback, ranging from ratings, demonstrations \citep{ng2000algorithms, Abbeel2004}, comparisons \citep{Wirth2017} to interruptions \citep{Hadfield-Menell2017} or even language and narrations \citep{fish_reward_2018, sumers_linguistic_2022} has been proposed. A wide range of different types of feedback can be interpreted via a common framework of reward-rational choice \citep{jeon_reward-rational_2020}. There has been some work on combining multiple feedback types, most notably demonstrations and preferences~\cite{ibarz2018reward, biyik_learning_2022}. However, the authors used separate phases of demonstrations and preference-based learning. Most similar in spirit is work by~\citeauthor{mehta_unified_2022}, who also train a single reward model based on the reward rational choice framework. However, they restrict themselves to two (three) feedback types, including demonstrations and preferences (as well as stops).\\
\citeauthor{sumers_how_2022} investigate evaluative, instructive, and descriptive language feedback in a simple bandit environment. We see our work as an extension of these efforts towards more complex environments and more diverse and dynamic feedback.\\ 
Recently, there has been some work on extending the space of possible feedback by presenting implementations for collection from human annotators ~\citep{metz2023rlhf, yuan2023unirlhf}. However, these works do not investigate the full implementation of diverse reward models and have not analyzed reward learning and RL from diverse feedback.

\section{Defining and Synthesizing Multi-Type Feedback}
\label{sec:defining_and_simulating}
We start our investigation by defining relevant feedback types, describing our approach to simulate feedback of these different types, and discussing the implementation of reward models.

\paragraph{Preliminaries} We assume a RLHF scenario with an agent that acts within an environment, observing states $s_t \in \mathcal{S} \subseteq \mathbb{R}^N$ and performing actions $a_t \in \mathcal{A} \subseteq \mathbb{R}^M$. The agents select actions according to a policy $\pi$, which throughout training is optimized with respect to a learned reward function $\hat{r}: \mathcal{S} \times \mathcal{A} \rightarrow \mathbb{R}$. In RLHF, the reward function estimator $\hat{r}$ is updated based on human feedback. In RLHF, humans often provide feedback over trajectories $\xi \in \Xi$, with $\Xi$ being the set of all possible trajectories. Trajectories are sequences of states and actions, e.g., generated by the agent acting within the environment. Providing feedback for trajectories requires fewer human interactions, as human labelers do not need to label every state-action pair. Finally, in this paper, we refer to an expert policy $\pi_e$. $\pi_e$ is the policy that maximizes the true reward function, which is not observable in humans. We use $\pi(\xi_{t:t+H}) \coloneqq \prod_{t}^{t+H} \pi(s_t)$ as a shorthand for the probability of a policy acting over a segment.

%We therefore define feedback as a mapping from a target $\mathcal{T}$ to a scalar value $r_{new} \in \mathbb{R}$: $\mathcal{F}: \mathcal{T} \rightarrow r_{new}$. However, as in established frameworks, a target can be a trajectory $\xi \in \Xi$, including the special cases of a single state-action pair. However, we also include the possibility that it can be an arbitrary set of features from the state and action space (we will use this framing later to define descriptive feedback). However, a target must be a state-action pair $(s, a)$ for a reward model update. We, therefore, need to translate potential multi-type feedback into this fixed form.

\subsection{Feedback Types}
\label{subsec:feedback-types}
In the scope of this study, we decided on six exemplary types of feedback motivated by existing work~\citep{metz2023rlhf}. We briefly present how these six types of feedback can be defined individually.

\subsubsection{Evaluative Feedback}
\begin{siderules}[oversize]{EvalColor}
\paragraph{F1: Rating Feedback} We define this type of feedback for cases in which a human gives a numerical or otherwise quantifiable judgment of a target, e.g., binary feedback or a numeric score \citep{arzate_cruz_survey_2020}. We define \textit{evaluative feedback} as a mapping from a target $\mathcal{T}$ to a scalar value in a value set $v_{fb} \in V \subseteq \mathbb{R}$. Common choices for the value set can be a binary score $V={-1,1}$ or a rating $V={1,..,10}$, which we choose for our experiments:
$$\mathcal{F}_{eval}(\xi) = v_{fb} \in V=\{1,2,3,...,10\}$$ 
\end{siderules}

\begin{siderules}[oversize]{CompColor}
\paragraph{F2: Comparative Feedback} Here, a user makes a relative judgment, i.e., a pairwise comparison between two targets or a ranking of multiple targets. This type of preference-based feedback is widely used because it is assumed that it is easier for humans to give comparative judgment compared to absolute scores \citep{Wirth2017}. Comparative feedback is a set of targets with an associated ordering relation. We define an instance of comparative feedback as a preference over two segments:
$$\mathcal{F}_{comp}(\xi_1, \xi_2) = v_{fb} \in \{\succ, \prec, =\}$$
\end{siderules}

\subsubsection{Instructional Feedback}
\begin{siderules}[oversize]{DemoColor}
\paragraph{F3: Demonstrative Feedback} The human is asked to provide a reference of optimal behavior that the agent should imitate \citep{ng2000algorithms}. We generally assume that these demonstrations are optimal w.r.t. the "internal" human reward function. A demonstration is not conditioned on any specific trajectory but rather independent and is given for the entire trajectory space.
$$\mathcal{F}_{demo}(\Xi) := \bar{\xi} \quad \text{with} \quad \bar{\xi} \sim \pi_e$$
with $T$ being a final state (e.g., a terminal environment state).
\end{siderules}

\begin{siderules}[oversize]{CorrColor}
\paragraph{F4: Corrective Feedback} Here, the user has a trajectory showcasing imperfect behavior as a reference and needs to provide an improved trajectory. This can be done either implicitly, e.g., by pushing a robot into a correct position \citep{mehta_unified_2022}, or explicitly via specifying a better action. We define a correction as:
$$\mathcal{F}_{corr}(\xi_{t:t+H}) = \Delta {\xi}_{t:t+H}, \quad \text{with} \quad \xi_{t:t+H} + \Delta {\xi}_{t:t+H} \succ  \xi_{t:t+H}$$
\end{siderules}

\subsubsection{Descriptive Feedback}
\begin{siderules}[oversize]{DescrColor}
\paragraph{F5: Descriptive Feedback} The definition of descriptive feedback is more open, and we could interpret any declarative statement about an MDP as descriptive~\citep{rodriguez2023rlang}. We chose a formulation in which descriptive feedback is a qualitative judgment about features, compared to \citep{sumers_how_2022}, i.e., information about "state-action pairs like these are..." As such, descriptive feedback should not inform about a single trajectory but instead, serve as a way to assign reward information about a set of features:
$$\mathcal{F}_{desc}(f_s \subseteq \mathcal{S}, f_a \subseteq \mathcal{A}) = v_{fb} \in \mathbb{R}$$
\end{siderules}

\begin{siderules}[oversize]{DescrPrefColor}
\paragraph{F6: Descriptive Preferences} As an extension to descriptions that mirror comparative and corrective feedback, which can be interpreted as relative versions of evaluative and demonstrative feedback, respectively, we propose descriptive preferences. Descriptive preferences are similar to feature preferences proposed before~\cite {yuan2023unirlhf}.
$$\mathcal{F}_{des.pref.}((f_s^1 \subseteq \mathcal{S}, f_a^1 \subseteq \mathcal{A}), (f_s^2 \subseteq \mathcal{S}, f_a^2 \subseteq \mathcal{A})) = v_{fb} \in \{\succ, \prec, =\}$$
\end{siderules}

These six feedback types are a non-exhaustive selection of human feedback, and many more can be imagined~\citep{jeon_reward-rational_2020, kaufmann2024surveyreinforcementlearninghuman, metz2023rlhf}. However, we argue that our selection represents a set of fundamental, general, and explicit human feedback types. Many other types of human feedback are implicit~\citep{kovac_socialai_2021, kaufmann2024surveyreinforcementlearninghuman}, i.e., need to be reduced to an explicit reward signal eventually, or collect meta-information that support the interpretation of human feedback~\citep{kaufmann2024surveyreinforcementlearninghuman}.

\subsection{Simulating Multi-Type Human Feedback}
We base our experimental evaluation on simulated feedback of different types to enable larger-scale environments and ensure reproducibility. While there exists some work to simulate individual feedback types (e.g., preference feedback for locomotion environments, Atari \citep{brown2019drex} or language modeling~\citep{Dubois2024}), pipelines to provide feedback for multiple feedback types has been limited so far~\citep{yuan2023unirlhf, metz2023rlhf}. \citeauthor{yuan2023unirlhf}~describe five feedback types but only collected annotated datasets for two feedback types. 

%\subsubsection{Requirements}
%We define a set of key requirements for the synthetic generation process:

%\paragraph{Availability} Various types of simulated feedback can be generated for each environment, at least for the feedback mentioned above types. The process should be simple and extendable. Feedback can be created on-demand to support experimental queries.

%\paragraph{Consistency} The generated feedback maintains consistency across different states and feedback types. It adheres to a uniform generation scheme, including consistent perturbations for experimental investigation of robustness. The process must be reproducible.

%\paragraph{Flexibility} Feedback is adaptable to various criteria (e.g., noise levels). Extensible and transferable to different environments.

To tackle these challenges, we implement a lightweight \footnote{The code is available at: \url{https://github.com/ymetz/multi-type-feedback}} software library to generate feedback of six types mentioned above. The library is fully interoperable with well-established RL frameworks such as \textit{Gymnasium}~\citep{towers2024gymnasium}, \textit{Stable Baselines}~\citep{Raffin2021}, and \textit{Imitation}~\citep{gleave2022imitation}. It provides utilities for synthetic feedback generation, reward models, and agent training.

\subsubsection{Generation Process}
We describe a method to generate fixed labeled feedback datasets for reward model training. Our approach can also be adapted for dynamic feedback generation with minimal modifications. \autoref{fig:generation_process} gives an overview of the synthetic generation process. We aim to create synthetic feedback for trajectory data based on the six abovementioned types.

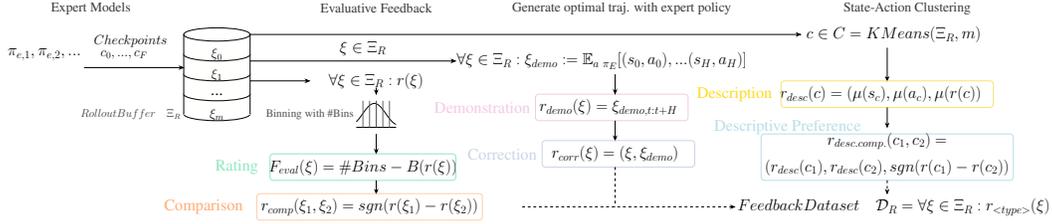
\begin{figure}[]
\centering
\resizebox{1\textwidth}{!}{%
\begin{circuitikz}
\tikzstyle{every node}=[font=\LARGE]
\draw  (6.25,11.75) ellipse (1.25cm and 0.25cm);
\draw [ color={rgb,255:red,255; green,255; blue,255} , fill={rgb,255:red,255; green,255; blue,254}] (5,12.25) rectangle (7.5,11.75);
\node [font=\LARGE] at (4.25,17.25) {     };
\draw  (6.25,12.5) ellipse (1.25cm and 0.25cm);
\draw [ color={rgb,255:red,255; green,255; blue,255} , fill={rgb,255:red,255; green,255; blue,254}] (5,13) rectangle (7.5,12.5);
\draw  (6.25,13.25) ellipse (1.25cm and 0.25cm);
\draw [ color={rgb,255:red,255; green,255; blue,255} , fill={rgb,255:red,255; green,255; blue,254}] (5,13.75) rectangle (7.5,13.25);
\draw  (6.25,14) ellipse (1.25cm and 0.25cm);
\draw [ color={rgb,255:red,255; green,255; blue,255} , fill={rgb,255:red,255; green,255; blue,254}] (5,14.5) rectangle (7.5,14);
\draw  (6.25,14.75) ellipse (1.25cm and 0.25cm);
\draw [short] (5,14.75) -- (5,11.75);
\draw [short] (7.5,14.75) -- (7.5,11.75);
\node [font=\LARGE] at (1,14.75) {};
\node [font=\LARGE] at (-0.25,14) {${\pi_{e,1},\pi_{e,2},...} $};
\node [font=\Large, color={rgb,255:red,56; green,56; blue,56}] at (1.5,15.75) {\text{Expert Models}};
\draw [->, >=Stealth] (1.25,13.5) -- (5,13.5);
\node [font=\large, color={rgb,255:red,56; green,56; blue,56}] at (3.0,11.75) {$Rollout Buffer \quad \Xi_R$};
\node [font=\Large] at (6.25,14) {$\xi_0$};
\node [font=\Large] at (6.25,13.25) {$\xi_1$};
\node [font=\Large] at (6.25,11.75) {$\xi_m$};
\node [font=\Large] at (3,14.5) {$Checkpoints$};
\node [font=\Large] at (2.75,14) {$ {c_0, ..., c_F}$};
\node [font=\LARGE] at (6.25,12.5) {...};
\draw [->, >=Stealth] (7.5,13) -- (9.75,13);
\node [font=\LARGE] at (12.25,13) {$\forall \xi \in \Xi_R: r(\xi) $};
\draw[domain=11.5:13.0,samples=100,smooth] plot (\x,{0.4*sin(3.53*\x r -10.0 r ) +11.75});
\draw [short] (11.5,11.25) -- (13,11.25);
\draw [ color={rgb,255:red,99; green,99; blue,99}, line width=0.2pt, short] (12,11.25) -- (12,12.25);
\node [font=\LARGE, color={rgb,255:red,99; green,99; blue,99}] at (12,11.5) {};
\draw [ color={rgb,255:red,99; green,99; blue,99}, line width=0.2pt, short] (12.5,11.25) -- (12.5,12.25);
\draw [ color={rgb,255:red,99; green,99; blue,99}, line width=0.2pt, short] (12.75,11.25) -- (12.75,12.25);
\draw [ color={rgb,255:red,99; green,99; blue,99}, line width=0.2pt, short] (11.75,11.25) -- (11.75,12.25);
\node [font=\large, color={rgb,255:red,82; green,82; blue,82}] at (9.75,11.75) {Binning with \#Bins};
\draw [ color={rgb,255:red,82; green,82; blue,82}, line width=0.5pt, short] (12.25,11.25) -- (12.25,12.25);
\draw [->, >=Stealth] (12.25,11) -- (12.25,10.25);
\node [font=\LARGE] at (11.75,9.75) {$F_{eval}(\xi) = \#Bins - B(r(\xi))$};
\draw [ color={rgb,255:red,145; green,232; blue,193} , line width=0.8pt , rounded corners = 3.6] (8.25,10.25) rectangle (15.25,9.25);
\node [font=\LARGE, color={rgb,255:red,145; green,232; blue,193}] at (7.0,9.75) {Rating};
\draw [line width=0.2pt, ->, >=Stealth] (12.25,9.25) -- (12.25,8.75);
\node [font=\LARGE] at (12,8.25) {$r_{comp}(\xi_1,\xi_2) = sgn(r(\xi_1) - r(\xi_2))$};
\draw [ color={rgb,255:red,253; green,205; blue,172} , line width=0.8pt , rounded corners = 3.6] (7.75,8.75) rectangle (16.25,7.75);
\node [font=\LARGE, color={rgb,255:red,255; green,172; blue,117}] at (5.75,8.25) {Comparison};
\draw [->, >=Stealth] (7.5,13.75) -- (15.25,13.75);
\node [font=\LARGE] at (11.75,14.25) {$\xi \in \Xi_R$};
\draw [line width=0.2pt, ->, >=Stealth] (12.25,12.75) -- (12.25,12.5);
\node [font=\LARGE] at (12.75,10) {};
\node [font=\LARGE] at (20.75,13.75) {$\forall \xi \in \Xi_R: \xi_{demo} := \mathbb{E}_{a ~ \pi_E} [(s_0,a_0),...(s_H,a_H)]$};
\node [font=\LARGE] at (21,12) {$r_{demo}(\xi) = \xi_{demo,t:t+H}$};
\draw [ color={rgb,255:red,244; green,202; blue,228} , line width=0.8pt , rounded corners = 3.6] (18.25,12.5) rectangle (24,11.5);
\node [font=\LARGE, color={rgb,255:red,244; green,202; blue,228}] at (16.25,12) {Demonstration};
\draw [->, >=Stealth] (21.25,13.25) -- (21.25,12.5);
\draw [->, >=Stealth] (21.25,11.5) -- (21.25,10.75);
\node [font=\LARGE] at (21.25,10.25) {$r_{corr}(\xi) = (\xi, \xi_{demo})$};
\draw [ color={rgb,255:red,203; green,213; blue,232} , line width=0.8pt , rounded corners = 3.6] (18.5,10.75) rectangle (24.25,9.75);
\node [font=\LARGE, color={rgb,255:red,203; green,213; blue,232}] at (17,10.25) {Correction};
\draw [->, >=Stealth] (7.5,14.75) -- (28.25,14.75);
\draw [line width=0.2pt, ->, >=Stealth] (31.5,14.25) -- (31.5,13);
\node [font=\Large, color={rgb,255:red,56; green,56; blue,56}] at (32.25,15.75) {\text{State-Action Clustering}};
\node [font=\LARGE] at (31.25,12.5) {$r_{desc}(c) = (\mu(s_c),\mu(a_c),\mu(r(c))$};
\node [font=\Large, color={rgb,255:red,56; green,56; blue,56}] at (21.5,15.75) {\text{Generate optimal traj. with expert policy}};
\node [font=\Large, color={rgb,255:red,56; green,56; blue,56}] at (12.25,15.75) {\text{Evaluative Feedback}};
\draw [ color={rgb,255:red,249; green,226; blue,113} , line width=0.5pt , rounded corners = 3.6] (27.5,13) rectangle (35.5,12);
\node [font=\LARGE, color={rgb,255:red,255; green,218; blue,31}] at (25.75,12.5) {Description};
\node [font=\LARGE] at (32,12.75) {};
\draw [line width=0.2pt, ->, >=Stealth] (31.5,12) -- (31.5,11.25);
\node [font=\LARGE] at (31.5,9.75) {$(r_{desc}(c_1), r_{desc}(c_2), sgn(r(c_1) - r(c_2))$};
\draw [ color={rgb,255:red,183; green,222; blue,232} , line width=0.8pt , rounded corners = 3.6] (26.75,9.25) rectangle (36.25,11);
\node [font=\LARGE, color={rgb,255:red,183; green,222; blue,232}] at (27.75,11.25) {Descriptive Preference};
\node [font=\LARGE] at (32.5,12.25) {};
\node [font=\LARGE] at (31.75,8.25) {$Feedback Dataset\quad\mathcal{D}_R = {\forall\xi \in \Xi_R: r_{<type>}(\xi)}$};
\draw [line width=0.2pt, ->, >=Stealth, dashed] (31.5,9.25) -- (31.5,8.75);
\draw [line width=0.2pt, ->, >=Stealth, dashed] (17.75,8.25) -- (25.75,8.25);
\draw [line width=0.2pt, dashed] (21.25,9.75) -- (21.25,8.25);
\node [font=\LARGE] at (31.5,10.75) {$r_{desc.comp.}(c_1,c_2) = $};
\node [font=\LARGE] at (31.75,14.75) {$c \in C = KMeans(\Xi_R,m) $};
\end{circuitikz}
}
\caption{\label{fig:generation_process} Generation of Simulated Feedback of different types: Based on an existing expert model and rollout buffer, we generate six types of feedback, including ratings and comparisons, demos and corrections, as well as descriptions and descriptive preferences.}
\end{figure}

As the foundation for feedback datasets, we sample segments across 20 model checkpoints from four expert RL models for each environment to ensure diversity, with random start indices and truncation at the end of episodes (Details in~\autoref{subsec:expert_policy_hp}).

\paragraph{Evaluative Feedback} 
We simulate ratings (1-10) based on discounted segment returns, creating a more realistic scenario than raw rewards. Using a calibration set $\Xi_{cal} := {\xi_0,\xi_1,...}$, we implement equal-width binning of discounted returns, assigning scores decrementally from highest (10) to lowest (1) bins (distribution shown in~\autoref{fig:rew_rating_corr}). Online feedback generation compares incoming returns against this binning, with optional dynamic updates to maintain distribution consistency. For \textit{comparative feedback}, we derive pairwise comparisons from reward differences, excluding pairs with similar rewards (difference $<10\%$ of return standard deviation).

\paragraph{Instructive Feedback} 
For both corrective and demonstrative feedback, we utilize best-performing expert checkpoints. We generate expert trajectories from saved initial states using policy $\pi_e$, selecting the highest-performing samples from multiple expert models. For \textbf{corrective feedback}, we retain only corrections achieving higher discounted returns than original segments (comparison in~\autoref{fig:segs_v_demos}). When state resets are unavailable, single-step corrections via expert action queries provide an alternative. \textbf{Demonstrative feedback} utilizes these high-quality corrections independently.

\paragraph{Descriptive Feedback} 
We implement clustering using mini-batch k-means on concatenated state-action pairs. The number of clusters matches other feedback types' sample sizes, providing comparable trajectory space resolution. Each cluster's mean representative serves as the reward model target, with averaged rewards as description scores. \textbf{Descriptive preferences} are derived from reward differences between randomly sampled cluster pairs.

\subsection{Introducing Feedback Noise}
We implement a consistent noise scheme across feedback types based on human feedback rationality modeling~\citep{jeon_reward-rational_2020, ghosal_effect_2023}. To ensure consistency, rather than directly introducing noise for each individual feedback type, we introduce noise into the underlying reward distribution, which then influences generated feedback.

For evaluative feedback, we add noise via a truncated Gaussian distribution $\mathcal{N}_T(\mu, \sigma, \textit{lower}, \textit{upper})$:
$$f_{fb, eval} = f_{fb} + \mathcal{N}_T(\mu=0, \sigma= \beta * 10, 1, 10)$$
Preference labels are flipped according to perturbed rewards, effectively reducing rationality by increasing incorrect preference probability. More nuanced than random flips, this approach primarily affects segments with similar rewards. 

For descriptive feedback, the noise scale is determined by cluster reward ranges:
$$f_{fb, descr} = r(c) + \mathcal{N}_T(\mu=0, \sigma= \beta * |max(r_c) - min(r_c)|, min(r_c), max(r_c))$$

For demonstrative feedback, we perturb states and actions based on the observed standard deviations in the dataset:
$$\bar{\xi} = (s_i + \mathcal{N}_T(\mu=0, \sigma= \beta * \sigma(s), a_i + \mathcal{N}_T(\mu=0, \sigma= \beta * \sigma(a))_{t:t+H}$$

In~\autoref{app:analysis_of_synthetic_feedback}, we detail the feedback dataset generation process and its characteristics to ensure reproducibility. We see the documentation of the generated datasets as a key part of our approach, as it influences down-stream analysis. An alternative regret-based scheme is discussed in~\autoref{app_subsec:alternate_regret}. Real human feedback may contain various biases, and we encourage human-subject studies to determine appropriate values for different feedback types.

\subsection{Reward Functions from Feedback Types}
For our baseline implementation of single reward functions, we closely follow the established methodology from RLHF: For feedback mapping to scalar values (\textit{Ratings} and \textit{Descriptions}), our reward model is optimized with an MSE-loss between the predicted reward $\hat{r}$ and annotated scalar feedback values $v_{fb}$. For relative feedback (such as \textit{Comparisons}, \textit{Corrections} and \textit{Descriptive Preferences}), we optimize the reward function with a cross-entropy loss between predicted and annotated pairwise preferences.

\begin{align}
\mathcal{L}_{MSE}(\hat{r}) \ &= \frac{1}{n} \sum_{\xi \in \mathcal{D}}(v_{fb}(\xi) - \sum_{s \in \xi} \hat{r}(s))^2 \\[10pt]
\mathcal{L}_{MLE}(\hat{r}) &= -\sum_{(\xi^1, \xi^2, \mu) \in \mathcal{D}} \mu(1) \log \hat{P}[\xi^1 \succ \xi^2] + \mu(2) \log \hat{P}[\xi^2 \succ \xi^1].
\end{align}
with $\hat{P}[\xi^1 \succ \xi^2]$ following the established Bradley-Terry model~\citep{bradley1952rank}:
\begin{equation}
    \hat{P}[\xi^1 \succ \xi^2] = \frac{\exp \beta \sum_i \hat{r}(o_i^1, a_i^1)}{\exp \beta \sum_i \hat{r}(o_i^1, a_i^1) + \exp \beta \sum_i \hat{r}(o_i^2, a_i^2)}
\end{equation}

To train the reward model, we also model \textit{Demonstrative} feedback as preferences between the optimal demonstrations and trajectories based on random policies, i.e., random behavior. In our experiments, sampling against random behavior has been stronger than sampling against other sub-optimal rollouts.
We follow the reward (rational) choice formalism~\citep{jeon_reward-rational_2020} for relative feedback. We discuss the formalization of our proposed feedback types under this framework in~\autoref{app_sec:formalization}. For our baseline experiments, we assume optimal reward, i.e., a rationality coefficient of $\beta=\infty$. However, we will investigate learning from sub-optimal, more irrational feedback later.
\begin{table}[h!]
\centering
\begin{tabularx}{\textwidth}{@{}Y|Y|Y@{}}
\cellcontent{ (\xi, v_{fb}), \mathcal{L}_{MSE}}{Ratings} & 
\cellcontent{(\xi_i, \xi_j, \prec), \mathcal{L}_{MLE}}{Comparisons} & 
\cellcontent{(\xi_{rand}, \bar{\xi}_i, \prec), \mathcal{L}_{MLE}}{Demonstrations} \\
\hline
\cellcontent{(\xi_i, \xi_{i}^e, \prec), \mathcal{L}_{MLE}}{Corrections} &
\cellcontent{(s_c,a_c, r_c), \mathcal{L}_{MSE}}{Descriptions} &
%\cellcontent{S = k \ln W}{((\mu(s_i),\mu(a_i)), (\mu(s_j),\mu(a_j)), \succ), \mathcal{L}_{MLE}}{Descriptive Comparison}
\cellcontent{((s_{c,1},a_{c,1}), (s_{c,2},a_{c,2})), \prec), \mathcal{L}_{MLE}}{Descriptive Comparison}
\end{tabularx}
\caption{A summary of the reward model implementations for different feedback types, i.e., the reward inputs and loss function used to optimize the reward model.}
\end{table}
In line with existing work~\citep{christiano2017deep}, we optimize our reward model over trajectory segments $\xi$, with a maximal length of $50$ steps, except for descriptive feedback, which uses single states.

In our analysis, we pre-train reward models based on trajectories collected from different checkpoints of an RL model, similar to~\citep{brown_extrapolating_2019}. This approach should lead to the model learning the reward distribution for a representative set of state-action pairs. For details, we refer to~\autoref{app:rew_model_train_details}.

\section{Investigating the Effectiveness of Different Feedback Types}
\label{sec:invest_fb_types}
%The availability of simulated feedback of different types enables us to experiment efficiently. We specifically want to analyze: (1) Whether each feedback type enables non-trivial learning performance as to be considered a useful type of feedback for reward learning and RL agent training. (2) Compare the learned reward functions in terms of what they represent of the reward function.

We investigate the effectiveness and characteristics of reward learning across different feedback types, focusing on three key research questions: (1) How do various feedback types compare to sampled pairwise comparisons under both optimal and perturbed conditions? (2) What distinct characteristics emerge in reward functions and agent behavior across different feedback types? (3) Does combining multiple feedback types offer advantages over single-type approaches? We explore this through an ensemble of feedback-type reward functions in RL agent training.

\subsection{Training Setup}
We evaluate our approach on established benchmark environments: \textit{Mujoco} locomotion environments (\textit{HalfCheetah-v5, Walker2d-v5, Swimmer-v5, Ant-v5, Hopper-v5, Humanoid-v5}), a \textit{MetaWorld}~\cite{yu2019meta} environments (\textit{sweep-into-v2}), and three discrete action space environment from \textit{HighwayEnv}~\cite{Leurent2018}: \textit{merge-v0, highway-fast-v0, roundabout-v0}.

For expert model training, we use PPO~\citep{schulman2017proximal} and SAC~\citep{haarnoja2018soft}, implementing tuned hyperparameters from available implementations~\citep{raffin2020, cobbe2019procgen, Machado2018}. The reward model consists of a six-layer MLP with ReLU activations.

Our feedback datasets comprise 10000 segments of 50 steps or 10000 descriptive clusters. We generate five separate feedback datasets to control for dataset composition variations due to random sampling and environment stochasticity. Reward models undergo supervised pre-training on labeled datasets (see~\autoref{app:rew_model_train_details}).

We evaluate learned reward functions using a simplified RLHF setup with a single pre-trained reward model. Agent training follows a standard PPO configuration (details in~\autoref{app:agent_train_details}), with results aggregated across five random seeds unless specified otherwise.
As an additional baseline, we train policies via behavioral cloning on the collected demonstration datasets (details in \autoref{app_subsec:bc_baselines}). 

\subsection{RL with Learned Reward Functions from Optimal Feedback}
\label{subsec:agent_perf}
Initial reward model training across all feedback types shows effective learning with decreasing validation loss curves (detailed in~\autoref{app:rew_model_train_details}). All reward types demonstrate steady loss reduction even when noise is introduced, with specific effects discussed in the following section.

\begin{figure}[htbp]
    \centering
    \begin{subfigure}[b]{\textwidth}
        \centering
        \includegraphics[width=\textwidth]{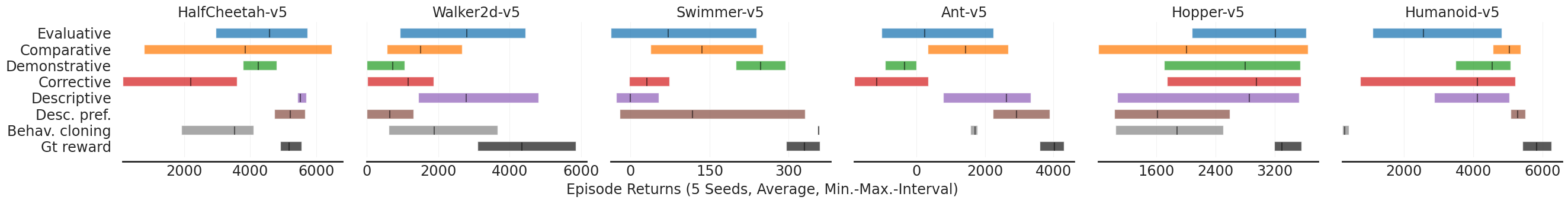}
        \caption{Episode Returns for all Mujoco-Environments}
        \label{fig:rl_results_mujoco}
    \end{subfigure}
    
    \vspace{1em}
    
    \begin{subfigure}[b]{\textwidth}
        \centering
        \includegraphics[width=\textwidth]{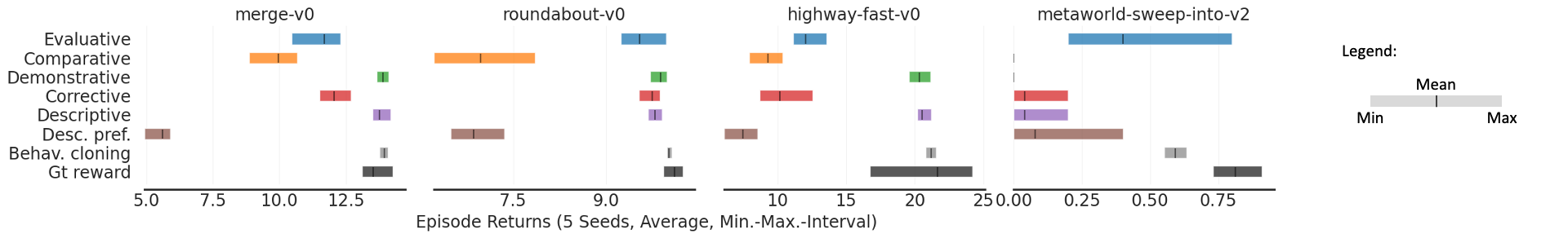}
        \caption{Episode Returns/Success rate for all Highway-Env and Metaworld}
        \label{fig:rl_curve_main_hopper}
    \end{subfigure}
    \vspace{-5px}
    \caption{RL from individual feedback type reward models. The area boundaries indicate minimum and maximum values out of the sampled runs. Results are averaged over five random seeds/ feedback datasets. For full training curves see~\autoref{app_subsec:rl_training_curves}.}
    \label{fig:single_reward_rewards}
\end{figure}

\autoref{fig:single_reward_rewards} demonstrates that different feedback types achieve competitive performance across environments. Some feedback types can match the expert model performance for three environments with ground-truth feedback. However, performance is significantly worse in other environments like \emph{Swimmer-v5} or \emph{Ant-v5} than training with a ground-truth reward function. Descriptive feedback and descriptive preferences generally yield the best results, though specific environments show exceptions—for instance, demonstrative feedback outperforms others in \textit{Swimmer-v5}.

\subsection{Sensitivity to Feedback Value Noise}
While our initial analysis assumed optimal feedback relative to ground-truth rewards, real human feedback inevitably contains errors. We investigate this by introducing controlled noise into the feedback data, using comparable perturbation scales across feedback types.

\begin{figure}[htb]
    \centering
    \begin{subfigure}[b]{0.48\textwidth}
        \centering
        \includegraphics[width=\textwidth]{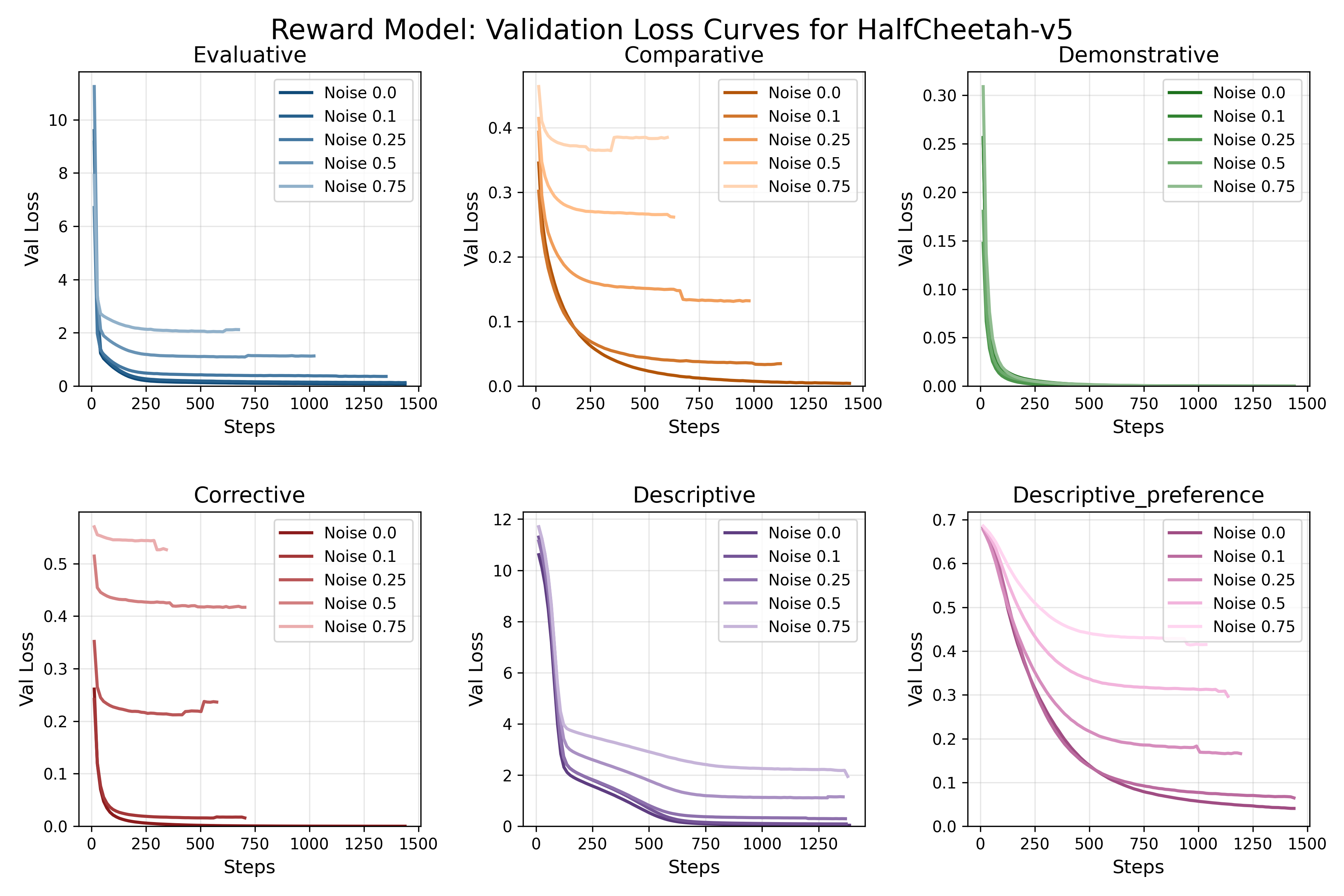}
        \caption{Reward Model Validation Loss: Half-Cheetah-v5}
        \label{fig:halfcheetah-noise-rew}
    \end{subfigure}
    \hfill
    \begin{subfigure}[b]{0.48\textwidth}
        \centering
        \includegraphics[width=\textwidth]{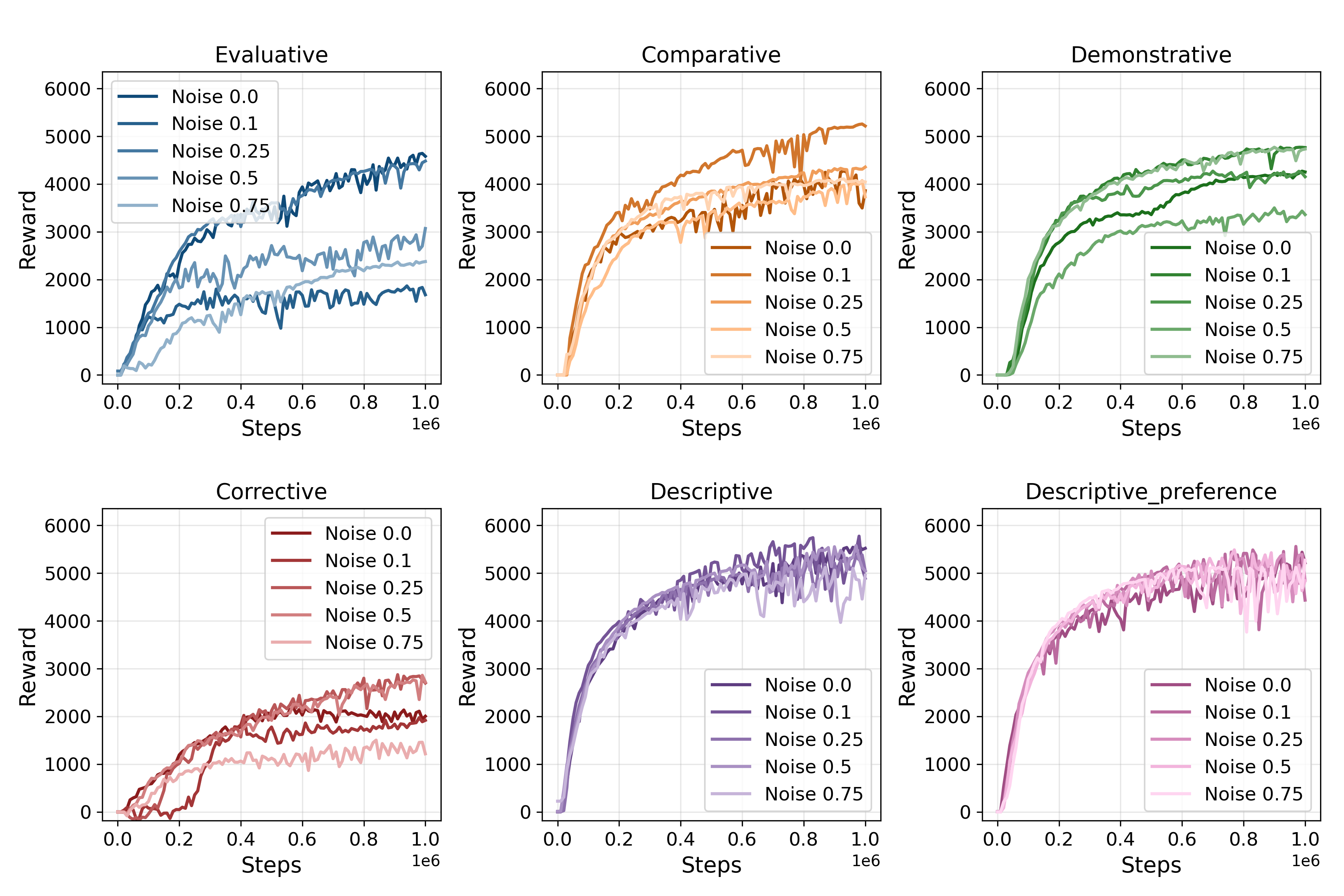}
        \caption{RL Episode Returns: Half-Cheetah-v5}
        \label{fig:halfcheetah-rl-noise}
    \end{subfigure}
    \caption{Showcasing the influence of noise on different feedback types in the \textbf{HalfCheetah-v5} environment. }
    \label{fig:halfcheetah-noise}
\end{figure}

\autoref{fig:halfcheetah-noise} illustrates varying sensitivity to noise across feedback types. While the validation loss of reward function decreases across feedback types, downstream RL performance varies by feedback type. For \emph{Half-Cheetah-v5} feedback seems more susceptible, whereas descriptive feedback is more robust. Notably, moderate noise can sometimes improve downstream RL performance despite negatively affecting reward learning, particularly for corrective feedback. We hypothesize this acts as a form of label smoothing, encouraging the model to focus on more distinctly different pairs rather than highly similar ones affected by noise. We provide additional results in~\autoref{app_subsec:detail_noise_reward_curves}.

%\begin{figure}[htb]
%    \centering
%    \begin{subfigure}[b]{0.48\textwidth}
%        \centering
%        \includegraphics[width=\textwidth]{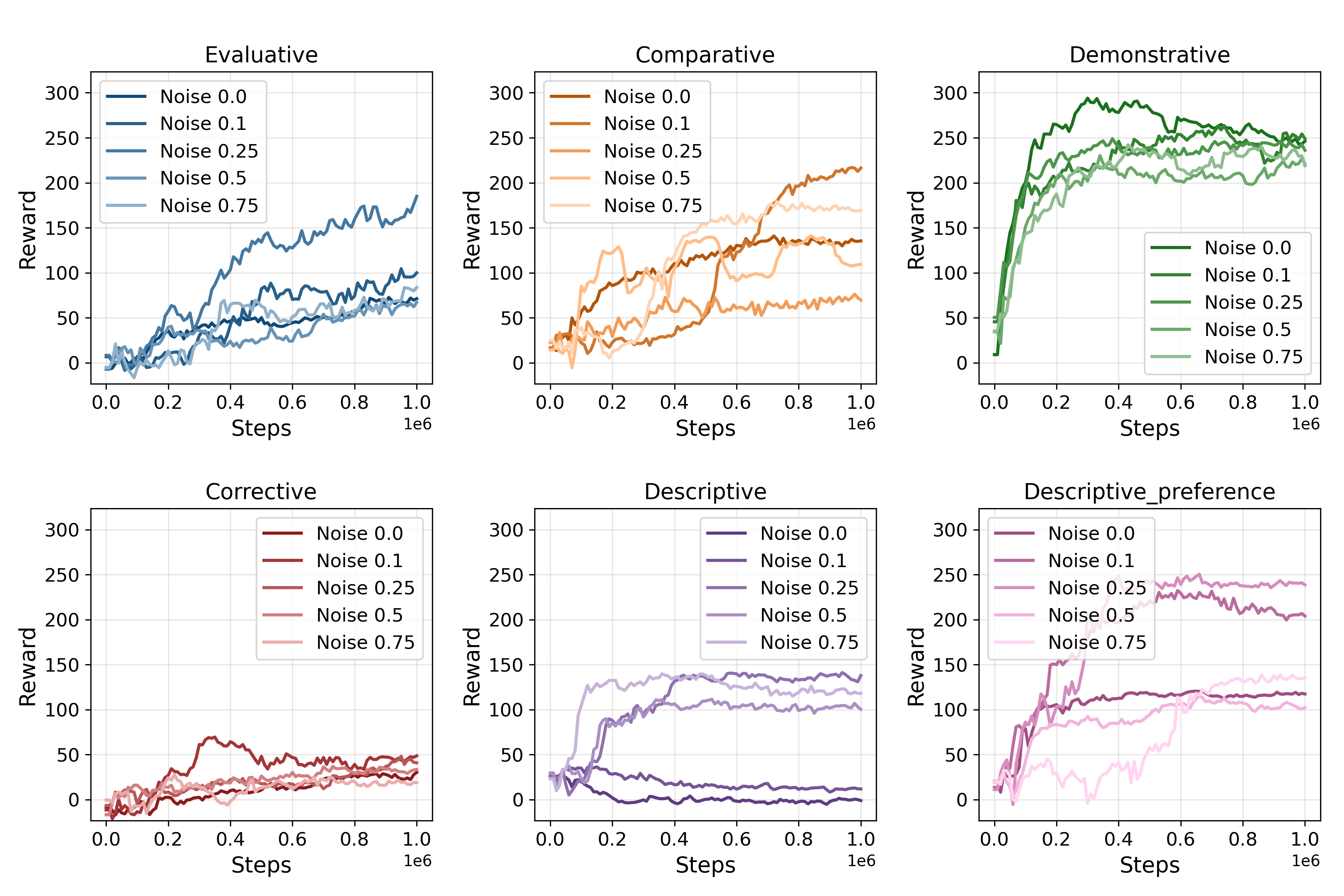}
%        \caption{RL Episode Returns: Swimmer-v5}
%        \label{fig:swimmer-rl-noise}
%    \end{subfigure}
%    \hfill
%    \begin{subfigure}[b]{0.48\textwidth}
%        \centering
%        \includegraphics[width=\textwidth]{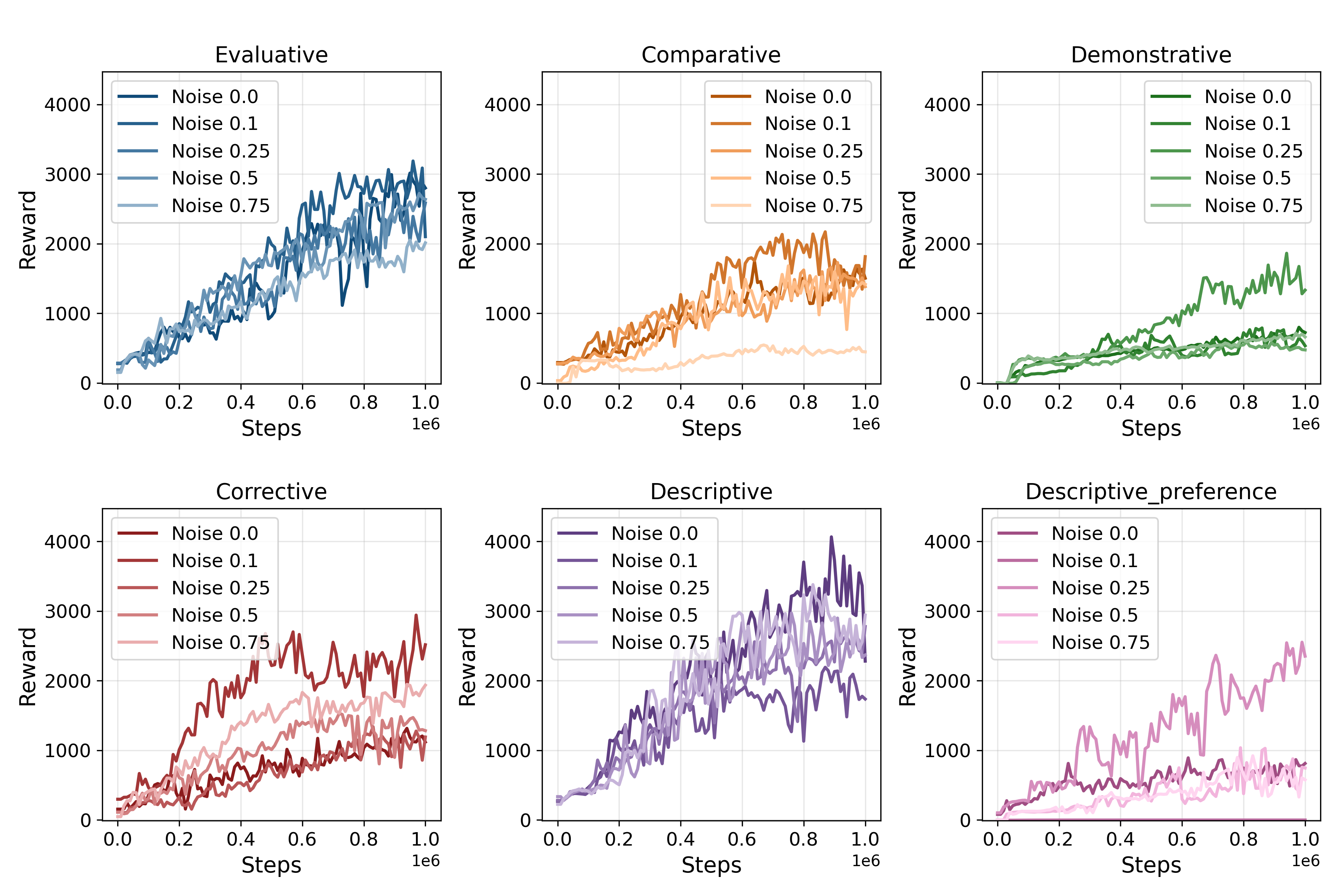}
%        \caption{RL Episode Returns: Walker2d-v5}
%        \label{fig:walker-rl-noise}
%    \end{subfigure}
%    \caption{Showcasing the influence of noise on different feedback types in the HalfCheetah-v5 environment (averaged over three feedback datasets).}
%    \label{fig:swimmer-walker-noise}
%\end{figure}

\paragraph{Summary:} Our experiments demonstrate that various feedback types are effective across different environments and training stages. Comparative feedback is matched or even surpassed by other types across environments. While descriptive feedback shows consistently strong performance, its real-world collection presents challenges, though approaches using state clustering have been proposed~\cite{zhang_time-efficient_2022}. These findings suggest potential benefits in leveraging multiple feedback types to exploit their respective strengths in specific scenarios.

\subsection{Analyzing Reward Functions from Different Feedback Types}
Beyond downstream RL agent performance, we analyze the learned reward functions to better understand the characteristics of different feedback types.

\paragraph{Correlation with Ground-Truth Reward Function}
\autoref{fig:correlation-plots-main} illustrates pairwise correlations between ground-truth and learned reward functions, computed on a separate validation dataset aggregated across five feedback datasets.

\begin{figure}[h]
    \centering
    \begin{subfigure}[b]{0.42\textwidth}
        \centering
        \includegraphics[width=\textwidth]{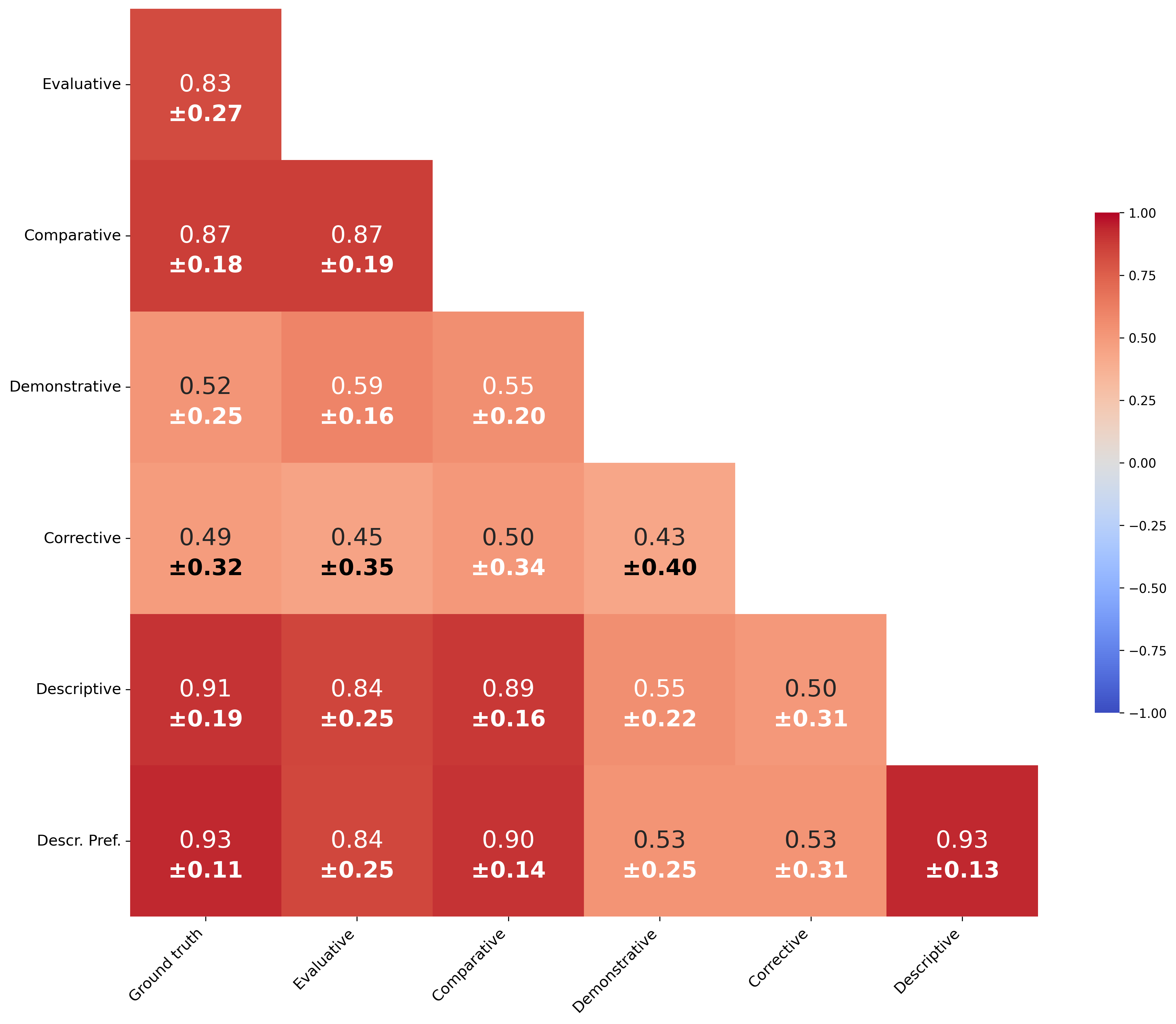}
        \caption{Aggregated reward function correlations (Mujoco, five seeds, optimal). Details see~\autoref{app_subsec:correlation_heatmaps}}
        \label{fig:corrs-rew-funcs-swimmer-no-noise}
    \end{subfigure}
    \hfill
    \begin{subfigure}[b]{0.55\textwidth}
        \centering
        \includegraphics[width=\textwidth]{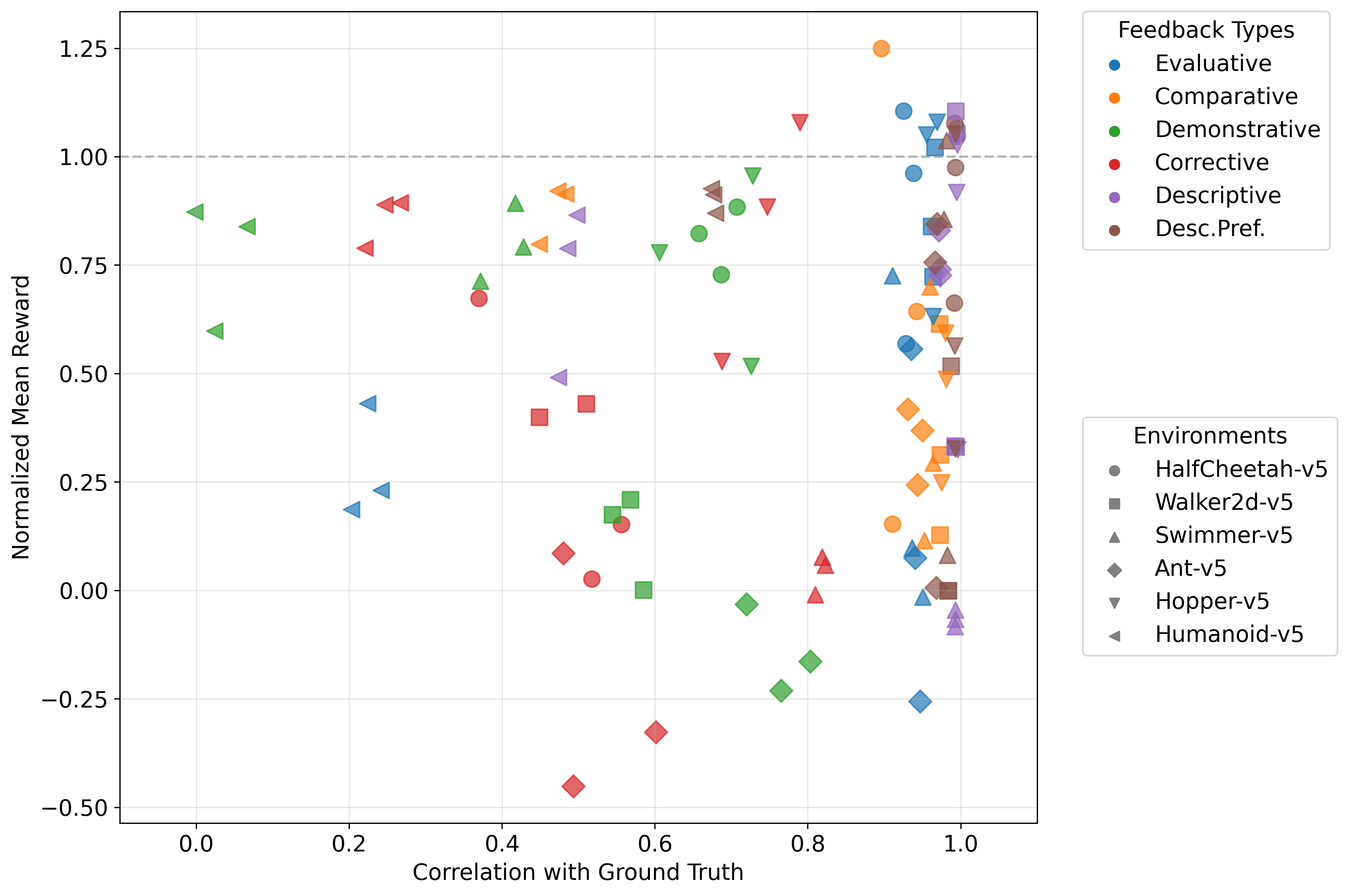}
        \caption{Relationship between the correlation of ground-truth reward function, learning reward function, and downstream reward (normalized by expert rewards).}
        \label{fig:rew_corr_scatter}
    \end{subfigure}
    \caption{Correlation of Learned Reward Functions for Different Feedback Types: (a) Correlation differs between types (b) We do not observe a strong relationship between GT correlation and rewards.}
    \label{fig:correlation-plots-main}
\end{figure}

Most feedback types demonstrate a high correlation with the ground-truth reward function in Mujoco, except for corrections and demonstrations, which show greater divergence due to their dependence on expert policy rather than direct derivation from ground-truth. As shown in~\autoref{fig:correlation-plots-main}(b), the weak correlation between ground-truth reward similarity and final reward suggests that alternative feedback types can uncover different yet effective reward functions. Furthermore, high correlation is itself not indicative of downstream performance.

\paragraph{Effectiveness for Different Noise Levels}
\autoref{fig:robustness_plots} demonstrates varying noise sensitivity across feedback types in two environments. Increased noise generally reduces correlation with ground-truth rewards. Descriptive feedback shows more stability to noise, consistent with its robust downstream RL performance. In contrast, evaluative and comparative rewards exhibit larger performance drops. Demonstrative feedback has a lower correlation and is less affected by noise. These effects are also visible across other environments~\autoref{app_subsec:noise_curves}.

\begin{figure}[htb]
    \centering
    \begin{subfigure}[b]{0.48\textwidth}
        \centering
        \includegraphics[width=\textwidth]{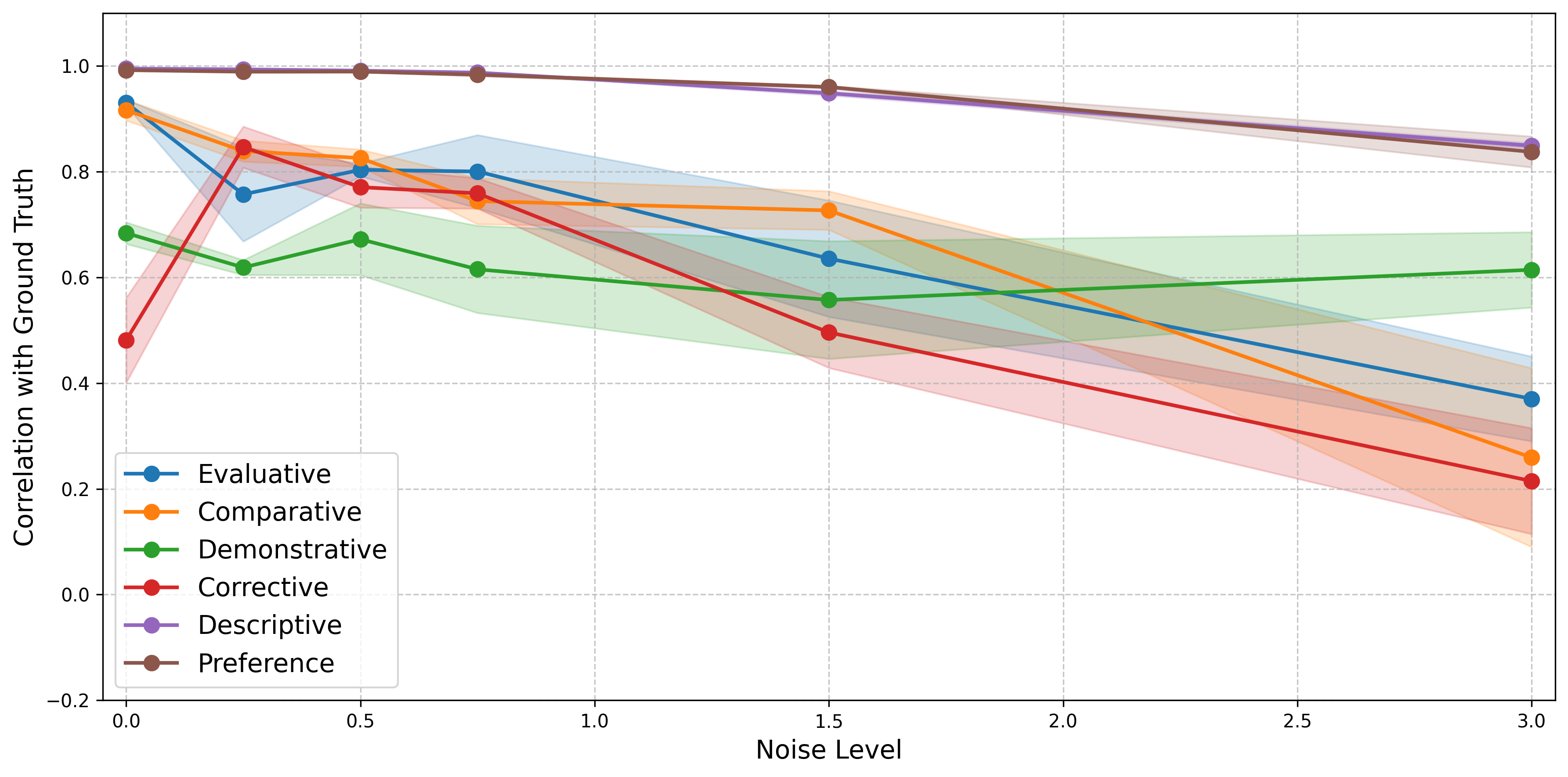}
        \caption{Correlation with ground truth reward function over different noise levels for \textbf{HalfCheetah-v5}}
        \label{fig:corr-line-half_a}
    \end{subfigure}
    \hfill
    \begin{subfigure}[b]{0.48\textwidth}
        \centering
        \includegraphics[width=\textwidth]{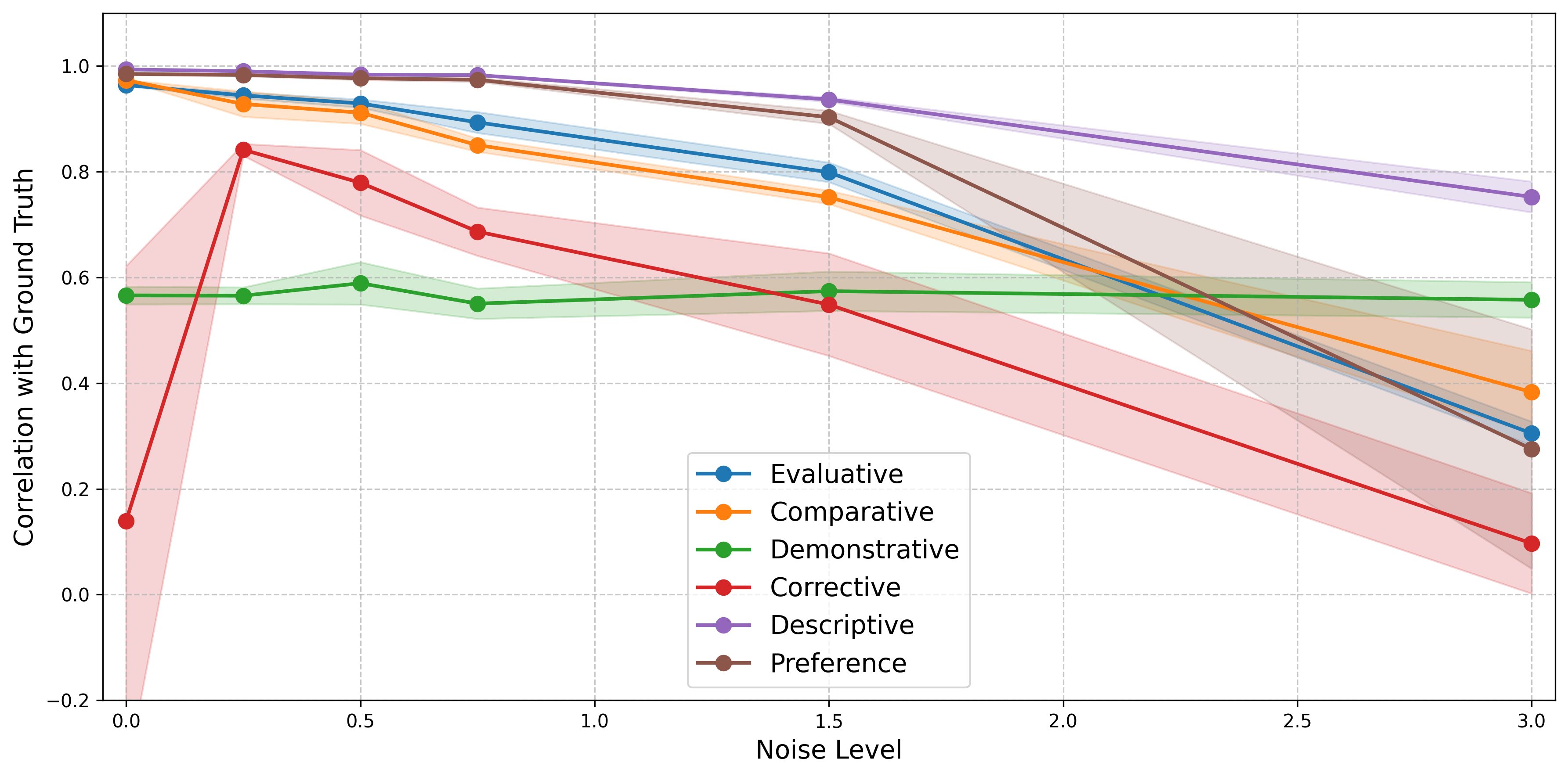}
        \caption{Correlation with ground truth reward function over different noise levels for \textbf{Walker-v5}}
        \label{fig:corr-line-half__b}
    \end{subfigure}
    \caption{Robustness of different reward functions for different noise levels: With increasing noise levels, we see a drop in performance for types, but to a different degree.}
    \label{fig:robustness_plots}
\end{figure}

\paragraph{Comparing Predictions over Sequences}
Analyzing reward predictions across 100-step trajectories reveals interesting patterns. \autoref{fig:sequential_plots} demonstrates that a high correlation with ground-truth rewards does not necessarily translate to strong RL performance. For instance, models achieve near-expert performance in the Humanoid environment despite low reward correlation, particularly with demonstrative feedback. This suggests that effective RL training can occur with various reward functions independent of their ground-truth correlation.

\begin{figure}[tbh]
    \centering
    \begin{subfigure}[b]{0.48\textwidth}
        \centering
        \includegraphics[width=\textwidth]{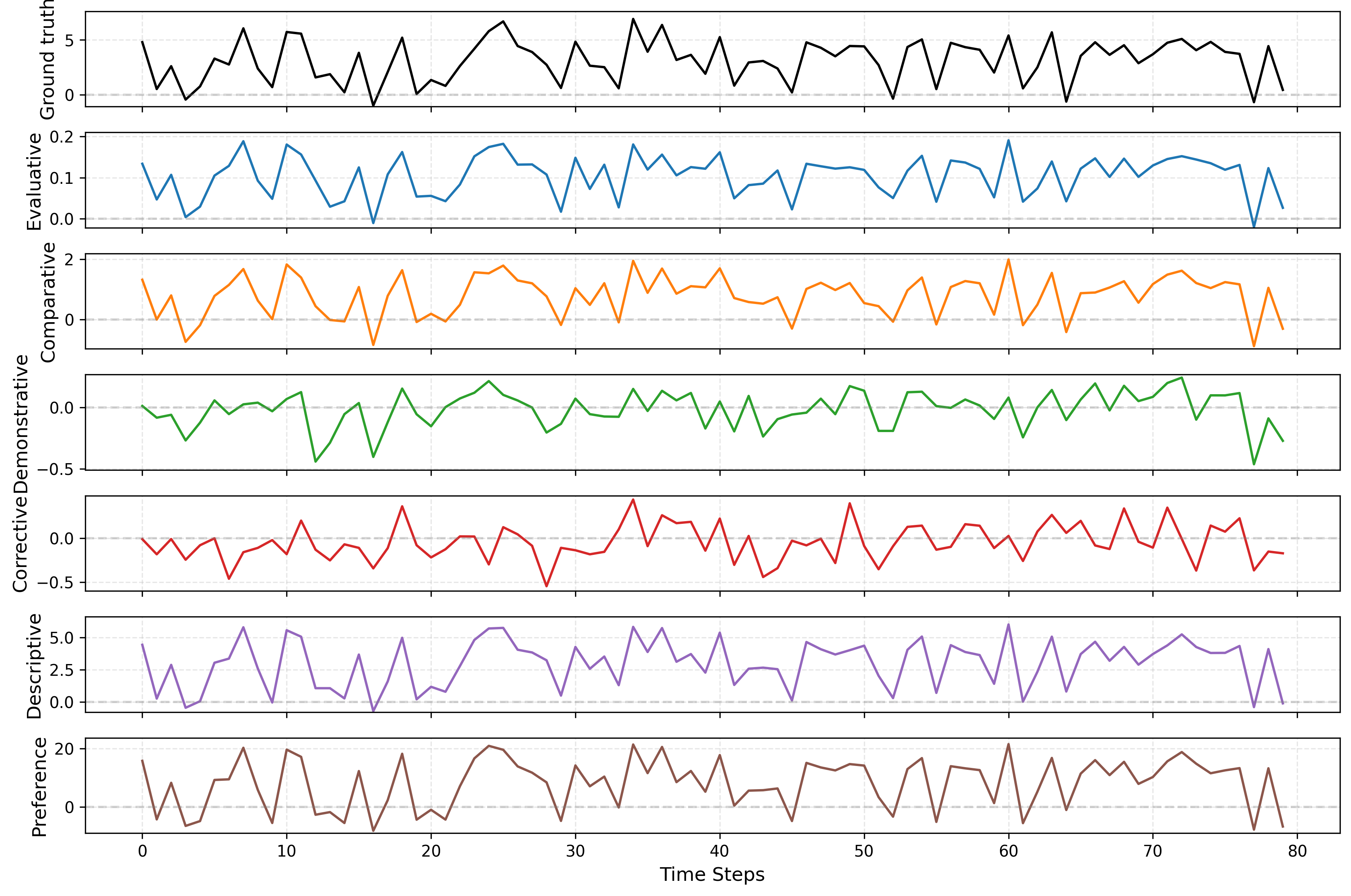}
        \caption{Sequential reward predictions for \textbf{Ant-v5}}
        \label{fig:seq-ant-rew-preds}
    \end{subfigure}
    \hfill
    \begin{subfigure}[b]{0.48\textwidth}
        \centering
        \includegraphics[width=\textwidth]{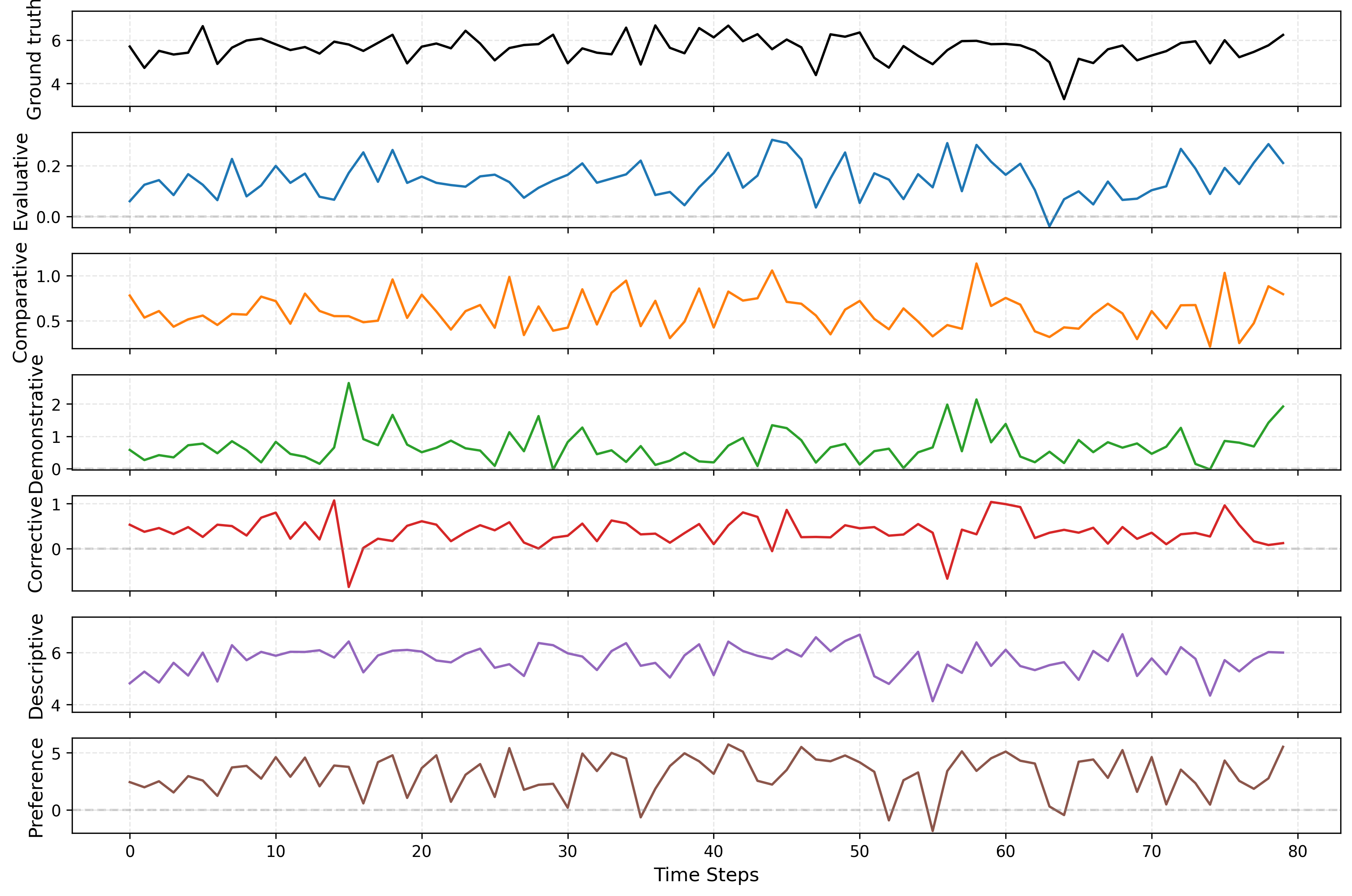}
        \caption{Sequential reward predictions for \textbf{Humanoid-v5}}
        \label{fig:rew_corr_scatter}
    \end{subfigure}
    \caption{Reward predictions from learned reward functions: (a) \textit{Ant-v5} shows high correlation with ground-truth rewards yet suboptimal agent performance. (b) \textit{Humanoid-v5} maintains strong downstream performance despite lower reward correlations than other Mujoco environments.}
    \label{fig:sequential_plots}
\end{figure}

\paragraph{Summary:} Our analysis reveals that most feedback types achieve high ground-truth accuracy, excluding demonstrative and corrective. However, even uncorrelated reward functions, like demonstrative feedback, can be highly informative and enable learning in downstream RL. The varying performance characteristics across feedback types, especially with introduced noise, suggest complementary strengths that could be combined.
\section{Joint-Modeling to learn Rewards from Multi-Type Human Feedback}
\label{sec:joint-modeling}
While different feedback types effectively learn reward functions, their performance varies across scenarios and noise conditions. We explore a proof-of-concept approach combining multiple feedback types through ensemble methods.
\paragraph{Averaging Model} 
Using our pre-trained reward models, we implement a basic ensemble that averages predictions from all six feedback types.
\paragraph{Uncertainty-Weighted Ensemble}
We extend the basic ensemble by incorporating epistemic uncertainty to mitigate underfitting. Using standard deviations from individual reward model ensembles~\cite {durasov2021masksembles}, we weigh each feedback type's predictions based on model inverse uncertainty to leverage complementary strengths.
\subsection{Results}
\autoref{fig:joint-modeling} demonstrates environment-specific success. The ensemble matches the best individual feedback performance in \textit{HalfCheetah-v5}, significantly outperforming the single-feedback average. However, \textit{Walker2d-v5} shows limited improvement, performing at or below average, indicating room for improvement in complementary learning. These results indicate that the combination of feedback types might harness the information within the set of feedback types. However, it might fail in other scenarios.\\
Our preliminary experiments (see~\autoref{app_sec:combining_feedback_types}) suggest that selective feedback type combinations outperform full ensembles, opening promising directions for optimizing feedback type selection and combination strategies.
\begin{figure}[htb]
    \centering
    \begin{subfigure}[b]{0.48\textwidth}
        \centering
        \includegraphics[width=\textwidth]{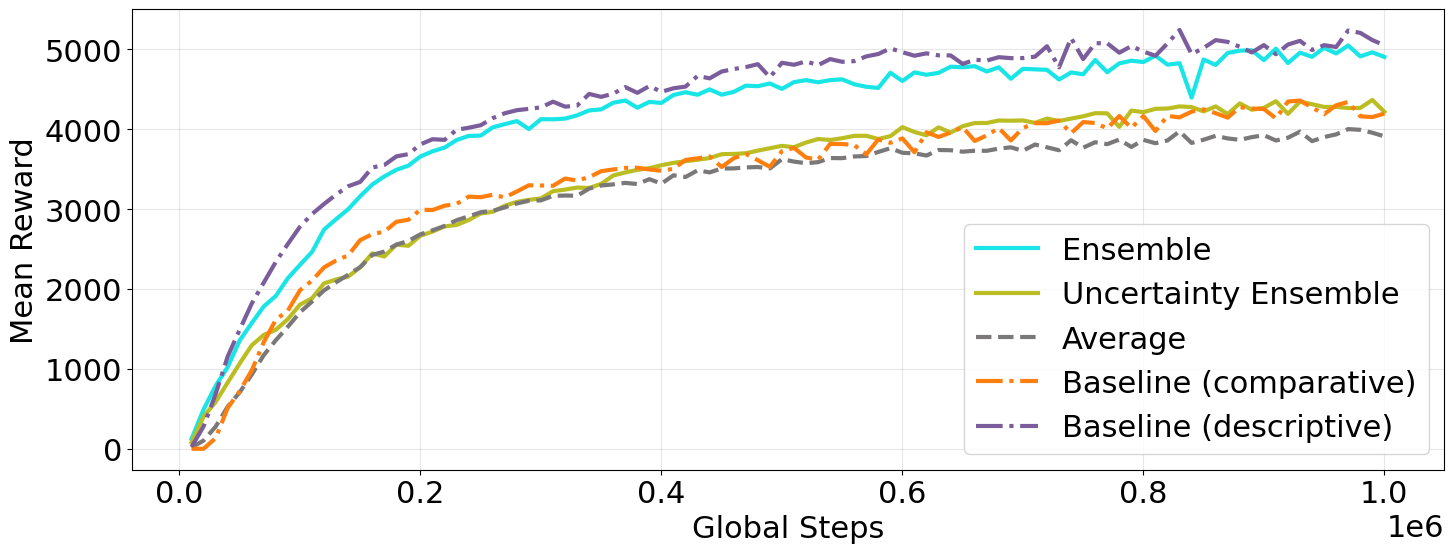}
        \caption{HalfCheetah-v5}
        \label{fig:halfcheetah}
    \end{subfigure}
    \hfill
    \begin{subfigure}[b]{0.48\textwidth}
        \centering
        \includegraphics[width=\textwidth]{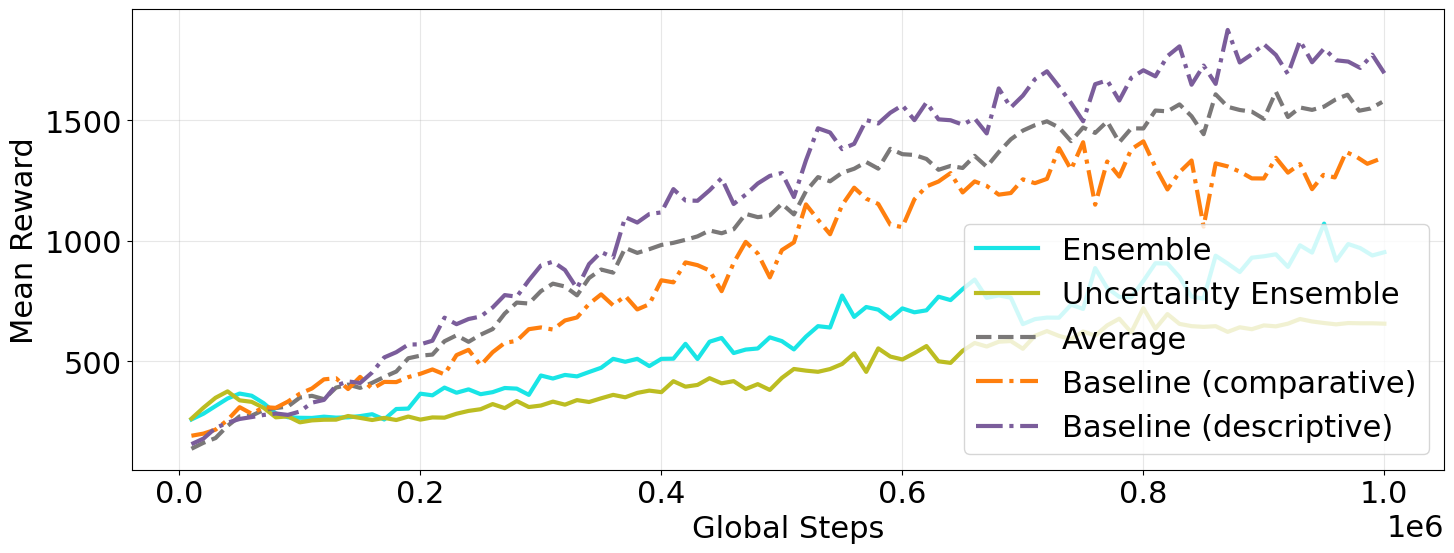}
        \caption{Walker2d-v5}
        \label{fig:swimmer}
    \end{subfigure}
        \begin{subfigure}[b]{0.48\textwidth}
        \centering
        \includegraphics[width=\textwidth]{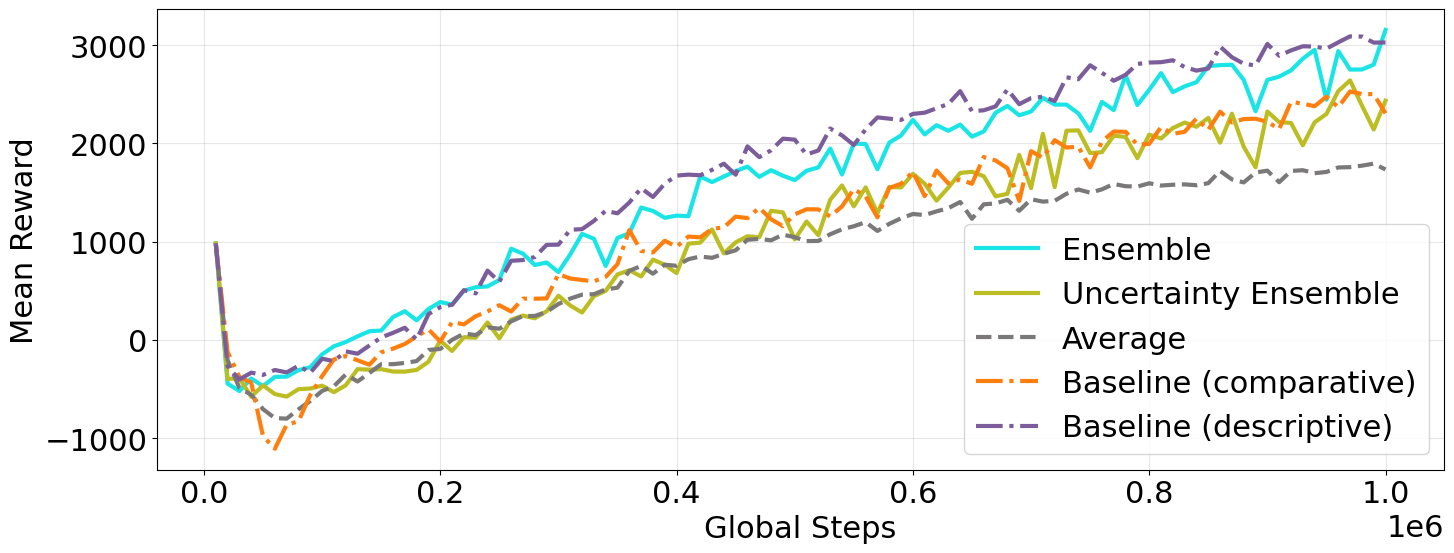}
        \caption{Ant-v5}
     c   \label{fig:halfcheetah}
    \end{subfigure}
    \hfill
    \begin{subfigure}[b]{0.48\textwidth}
        \centering
        \includegraphics[width=\textwidth]{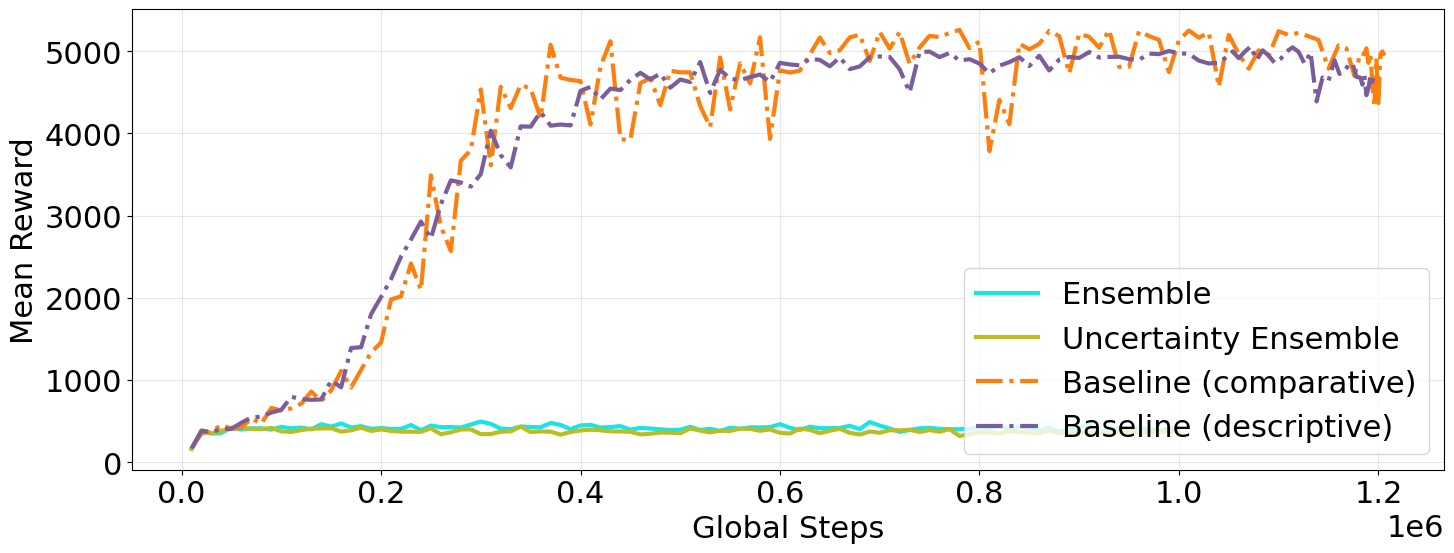}
        \caption{Humanoid-v5}
        \label{fig:swimmer}
    \end{subfigure}
    \caption{Comparison between the ensemble (with and without uncertainty-based scaling), average returns across all single feedback RL runs, descriptive and comparative baselines (averaged over 12 runs). The joint modeling approach can match the best single type but also has failure cases.}
    \label{fig:joint-modeling}
\end{figure}
As shown in~\autoref{fig:joint-modeling}, this approach has limitations, including potentially improper uncertainty calibration, though we expect it to provide full utility and present additional challenges in an online learning setup where specific feedback types might be queried, and we strongly encourage further exploration in this research area.

\section{Discussion and Outlook}
Our work yields two key insights: (1) Multiple feedback types effectively enable reward learning and RL, with noise-robust results across environments, challenging the presumed superiority of comparative feedback. (2) While simultaneous learning from multiple reward models shows promise, further research is needed to realize its potential fully.

Our implementation advances multi-type feedback beyond theoretical frameworks~\citep{jeon_reward-rational_2020}. We identify several key directions for future work:

\paragraph{Dynamic Reward Models} 
Transitioning from fixed pre-trained to continuously updating models could enable more dynamic learning processes. This approach could incorporate complex querying mechanics~\cite{zhan2023query, jeon_reward-rational_2020, metz2023rlhf} and leverage different feedback types at various training stages, building on our weighted ensemble implementation.

\paragraph{Scaling to Complex Domains} 
While our evaluation spans multiple RL environments, extending these findings to more complex domains remains crucial, particularly LLM alignment and fine-tuning. Additionally, expanding to more feedback types presents opportunities and challenges—while integration is straightforward, efficient strategies for managing multiple feedback sources need investigation.

\paragraph{Human Feedback Integration} 
A natural extension is transitioning to real human feedback, where adapting synthetic reward generation to observed human feedback patterns could improve experimental validity. In particular, we base our simulation on simple assumptions, such as modeling noise via an additive Gaussian distribution. Fitting characteristics to measurements from human experiments promises more realistic experiments in the future.

\paragraph{Exploring RL from AI Feedback}
Strongly related to the concept of synthetic feedback discussed in this paper is the use of RL from AI Feedback (RLAIF)~\cite{lee2024rlaifvsrlhfscaling, bai2022constitutionalaiharmlessnessai}. 
As human feedback annotations may be hard to acquire at scale, RLAIF based on limited human guidance can vastly improve efficiency ~\cite{liu2024enhancingroboticmanipulationai, lee2024rlaifvsrlhfscaling}. Multi-type feedback can enable new ways of integrating various sources of human and AI feedback. Our framework allows the investigation and comparison of new sources and comparison of their effectiveness.

\section{Conclusion}
This paper demonstrates the effectiveness of multi-type human feedback for reward learning, moving beyond traditional binary preferences. We defined six feedback types and described a protocol to generate synthetic feedback. We systematically evaluate learned reward functions and reveal some of their characteristics and complementary strengths in reward learning and downstream RL performance. Our analysis of joint reward models suggests that joint modeling can perform well but needs further exploration. We encourage researchers to explore more diverse human-AI interaction paradigms and contribute to the development of more capable and reliable AI systems.

\subsubsection*{Reproducibility Statement}
Source code for all experiments is available in the supplementary materials, and will be open sourced. We used the \textit{Stable-Baselines3}~\cite{Raffin2021} library (Version 2.3.2) to train all RL expert models with the provided default architectures for MLP- and CNN-based policies. Furthermore, we used the \textit{RL Baselines3 Zoo}~\cite{raffin2020} framework (Version 2.3.2) to train the agents, and used the included hyperparamters to train both the expert models and downstream RL models with learned reward models. Hyperparameters and scores of expert models are reported in~\autoref{subsec:expert_policy_hp}. Reward models were implemented and trained as \textit{Pytorch Lightning}-models, with the training and architecture details reported in~\autoref{app:rew_model_train_details}.\\
All experiments were performed on machines with single \textit{Nvidia V100} GPUs. Most tasks were CPU-bound and required relatively little GPU memory. We estimate that reproducing all experiments results reported in the paper, takes approximately 500 hours on a single node with 4 GPUs and 64 cores. The runtime for feedback generation of 10,000 steps is approximately around 60 minutes (depending on network inference time and environment complexity), fitting of reward models takes 10-30 minutes. The training times of (expert) RL models are highly dependent on the scenario (timesteps and availability of parallel simulation environments). With a learned reward function, training of a Mujoco environment with SAC and 1 million environment steps takes around one hour. Training with a reward model ensemble does only impose a modest runtime increase, but potentially a large memory increase (if all reward models are kept in GPU memory).

\section*{Acknowledgements}
The authors acknowledge support by the state of Baden-Württemberg through bwHPC. Part of this research was conducted with financial support from a DAAD Research grants for doctoral students.

\bibliographystyle{iclr2025_conference}
\bibliography{references}

%%%%%%%%%%%%%%%%%%%%%%%%%%%%%%%%%%%%%%%%%%%%%%%%%%%%%%%%%%%%%%%%%%%%%%%%%%%%%%%
%%%%%%%%%%%%%%%%%%%%%%%%%%%%%%%%%%%%%%%%%%%%%%%%%%%%%%%%%%%%%%%%%%%%%%%%%%%%%%%
% APPENDIX
%%%%%%%%%%%%%%%%%%%%%%%%%%%%%%%%%%%%%%%%%%%%%%%%%%%%%%%%%%%%%%%%%%%%%%%%%%%%%%%
%%%%%%%%%%%%%%%%%%%%%%%%%%%%%%%%%%%%%%%%%%%%%%%%%%%%%%%%%%%%%%%%%%%%%%%%%%%%%%%

\newpage

\appendix

\section{Details on the Formalization of Feedback Types}
\label{app_sec:formalization}
This study investigates six exemplary feedback types motivated by existing work \citep{metz2023rlhf}. Here, we will briefly model these feedback types according to the reward rational choice formalism presented in previous work \citep{jeon_reward-rational_2020}. \autoref{tab:formalization_choice_set_grounding} and \autoref{tab:formalization_constraints} show the choice set, grounding functions and constraint-based formulations respectively. The formalism proposes that feedback types can be described by an explicit or implicit set of (feedback) choices, that $\mathcal{C}$ \textit{humans} optimize over, and a grounding function $\psi \> \colon \> \mathcal{C} \to f_\Xi$, which maps these choices $c \in \mathcal{C}$ to a distribution over trajectories.

\begin{table}[htbp]
\centering
\caption{Feedback types, choice sets, and grounding functions}
\begin{tabular}{|l|l|l|}
\hline
\textbf{Feedback type} & $\mathcal{C}$ (Choice set) & $\mathcal{\psi}: \mathcal{C} \to \Xi$ (Grounding func.) \\ \hline
\textbf{Rating} & $(\xi, f_{fb}) \in \{\xi\} \times [a, b]$, $[a, b] \subset \mathbb{R}$ & $\psi(\xi, f_{fb}) = \xi$ \\ \hline
\textbf{Comparative} & $\xi_i \in \{\xi_1, \xi_2\}$ & $\mathrm{id}$ \\ \hline
\textbf{Corrective} & $(\xi, \Delta \xi) \in \{\xi\} \times \{\xi_c - \xi \mid \xi_c \in \Xi\}$ & $\psi(\xi, \Delta\xi) = \xi + \Delta \xi$ \\ \hline
\textbf{Demonstrative} & $\xi_d \in \Xi$ & $\mathrm{id}$ \\ \hline
\textbf{Descriptive} & $(f_s \in \mathcal{S}, f_a \in \mathcal{A}, r \in \mathbb{R})$ & $\{(s, a) \in \Xi | s = f_s, a = f_a\}$ \\ \hline
\textbf{Descr. Pref.} & $(f_{s,i}, f_{a,i}) \in \{(f_{s,1} \in \mathcal{S}, f_{a,1} \in \mathcal{A}), (f_{s,2} \in \mathcal{S}, f_{a,2} \in \mathcal{A})\}) $ & $\{(s, a) \in \Xi | s = f_s, a = f_a\}$ \\ \hline
\end{tabular}
\label{tab:formalization_choice_set_grounding}
\end{table}

\begin{table}[htbp]
\centering
\caption{Feedback types and corresponding constraints}
\begin{tabular}{|l|l|}
\hline
\textbf{Feedback type} & \textbf{Constraint} \\ \hline
\textbf{Evaluative} & $r(\xi) = f_{fb}$ \\ \hline
\textbf{Comparative} & 
$\begin{cases} 
r(\xi_1) \geq r(\xi_2) & \text{if} \quad \xi_i = \xi_1 \\ 
r(\xi_2) \geq r(\xi_1) & \text{if} \quad \xi_i = \xi_2 
\end{cases}$ \\ \hline
\textbf{Corrective} & $r(\xi + \Delta \xi) \geq r(\xi)$ \\ \hline
\textbf{Demonstrative} & $r(\xi_d) \geq r(\xi) \quad \forall \xi \in \Xi$ \\ \hline
\textbf{Description} & $r(\{(s, a) \in \Xi | s = f_s, a = f_a\}) = v_{fb}$ \\ \hline
\textbf{Descriptive Preference} & 
$\begin{cases} 
r(\{(s, a) \in \Xi | s = f_{s,1}, a = f_{a,1}\}) \geq r(\{(s, a) \in \Xi | s = f_{s,2}, a = f_{a,2}\}) & \text{if} \quad i = 1 \\ 
r(\{(s, a) \in \Xi | s = f_{s,2}, a = f_{a,2}\}) \geq r(\{(s, a) \in \Xi | s = _{s,1}, a = f_{a,1}\}) & \text{if} \quad i = 2 
\end{cases}$ \\
\hline
\end{tabular}
\label{tab:formalization_constraints}
\end{table}

\section{Analysis of Synthetic Generated Feedback}
\label{app:analysis_of_synthetic_feedback}
In this first appendix, we want to highlight the characteristics of different types of synthetic feedback. We want to discuss details because the simulated feedback is integral to our investigation.

\subsection{Details of Evaluative Feedback}

\begin{algorithm}[htb]
\caption{Generation of Evaluative and Comparative Feedback}
\label{alg:synthetic_feedback_generation}
\begin{algorithmic}[1]
\Require Environment $E$, Expert models $\mathcal{E} = \{e_1, \ldots, e_n\}$, Novice model checkpoints $\Theta = \{\theta_1, \ldots, \theta_m\}$, Steps per checkpoint $T$, Segment length $L$, Discount factor $\gamma$
\Statex
\State $\mathcal{S} \leftarrow \varnothing$ \Comment{Set of segments}
\For{$\theta \in \Theta$}
    \State $\pi_{\theta} \leftarrow \text{NoviceModel}(\theta)$
    \State $s_0 \leftarrow E.\text{reset}()$
    \State $F \leftarrow \varnothing, C \leftarrow \varnothing$ \Comment{Feedback and state copy collection}
    \For{$t = 1$ to $T$}
        \State $C \leftarrow E.get\_state()$
        \State $a_t \leftarrow \pi_\theta(s_{t-1})$
        \State $s_t, r_t \leftarrow E.\text{step}(a_t)$
        \State $F \leftarrow F \cup \{(s_{t-1}, a_t, r_t)\}$
    \EndFor
    \State $\mathcal{S} \leftarrow \mathcal{S} \cup \text{CreateSegments}(F, L)$
\EndFor
\For{$\sigma \in \mathcal{S}$}
    \State $V_0 \leftarrow \frac{1}{|\mathcal{E}|} \sum_{e \in \mathcal{E}} V^e(\sigma_0)$
    \State $V_L \leftarrow \frac{1}{|\mathcal{E}|} \sum_{e \in \mathcal{E}} V^e(\sigma_L)$
    \State $R_\sigma \leftarrow \sum_{t=0}^{L-1} \gamma^t r_t$ \Comment{Discounted reward sum}
    \State $\mathcal{F}_{eval} \leftarrow 10 - \text{BinIndex}(R_\sigma, 10)$
    \Comment{Evaluative feedback}
\EndFor
\State $\mathcal{F}_{comp} \leftarrow \text{GetPreferencePairs}(\mathcal{S}, R_\sigma)$ \Comment{Comparative feedback}
\State \Return $\mathcal{F}_{eval}, \mathcal{F}_{comp}$
\end{algorithmic}
\end{algorithm}

Evaluative feedback is based on discounted segments returns (i.e., the discounted sum of step rewards within a segment). The discount factors $\gamma$ match the discount factors of the expert models (as reported in~\autoref{subsec:expert_policy_hp}). \autoref{fig:histograms_1} and~\autoref{fig:histograms_2} show the distribution of discounted segments returns for the feedback datasets. The distribution of rewards is a result of the data collection strategy, i.e. collecting segments from a set of checkpoints throughout training. To generate ratings, we use an equal-width binning approach of returns, and assign ratings of 1 to 10 for each bin.

\begin{figure}[h]
    \centering
    \includegraphics[width=\linewidth]{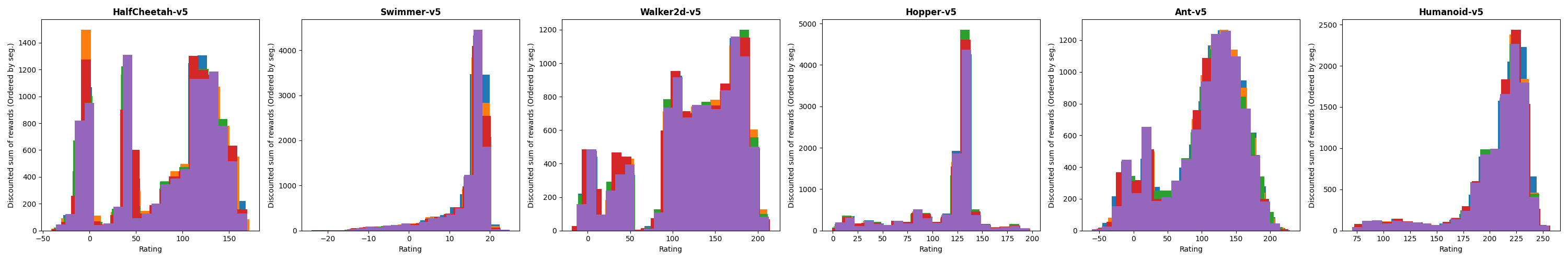}
    \caption{Histogram of total discounted reward distribution (for 10000 randomly sampled segments with max. length 50/ truncated at episode end). Each color represents a different generated feedback dataset.}
    \label{fig:histograms_1}
\end{figure}

\begin{figure}[h]
    \centering
    \includegraphics[width=\linewidth]{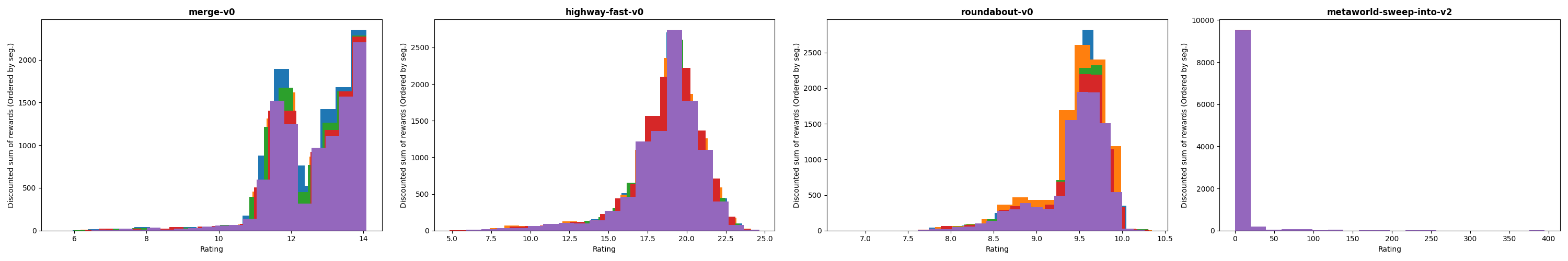}
    \caption{Cont. Histogram of total discounted reward distribution (for 10000 randomly sampled segments with max. length 50/ truncated at episode end). Each color represents a different generated feedback dataset.}
    \label{fig:histograms_2}
\end{figure}

\autoref{fig:ratings_v_rewards} shows the extracted ratings (plotted as 0-9) in relation to the underlying rewards. 

\begin{figure}[h]
    \centering
    \includegraphics[width=\linewidth]{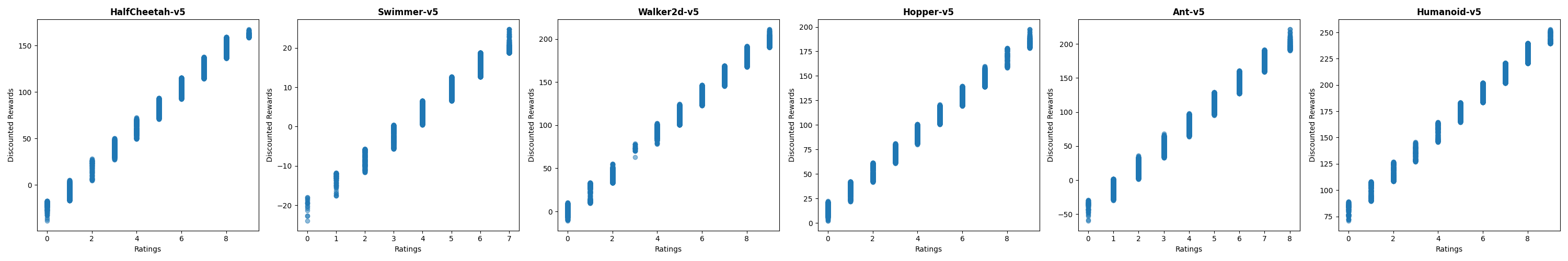}
    \caption{Comparison between ratings and total discounted reward (for 10000 randomly sampled segments with max. length 50/ truncated at episode end). Colors indicated the origin model of the rating (out of the expert model ensemble). Segments are sorted by discounted return.}
    \label{fig:ratings_v_rewards}
\end{figure}

\noindent\textbf{Comparative feedback} is directly extracted from the  and therefore shares the characteristics with rating-based feedback. \autoref{alg:synthetic_feedback_generation} contains the pseudo-code describing the feedback generation process for evaluative (rating) feedback, and comparisons. 

\autoref{fig:rew_rating_corr} and \autoref{fig:rew_rating_corr_2} plot the return difference between preferred and non-preferred segments in the dataset. As the segments with higher return is preferred (in the optimal non-noise case), we only see values in the upper-left triangle. We see, that the segments pairs have a wide distribution of return differences, i.e. the dataset contains pairs with very similar or dissimilar returns, i.e. very similar or dissimilar levels of performance with respect to the ground truth reward function. Furthermore, pairs are collected across a wide range of returns, indicating the effectiveness of our expert model-based segment sampling approach.

\begin{figure}[h]
    \centering
    \includegraphics[width=\linewidth]{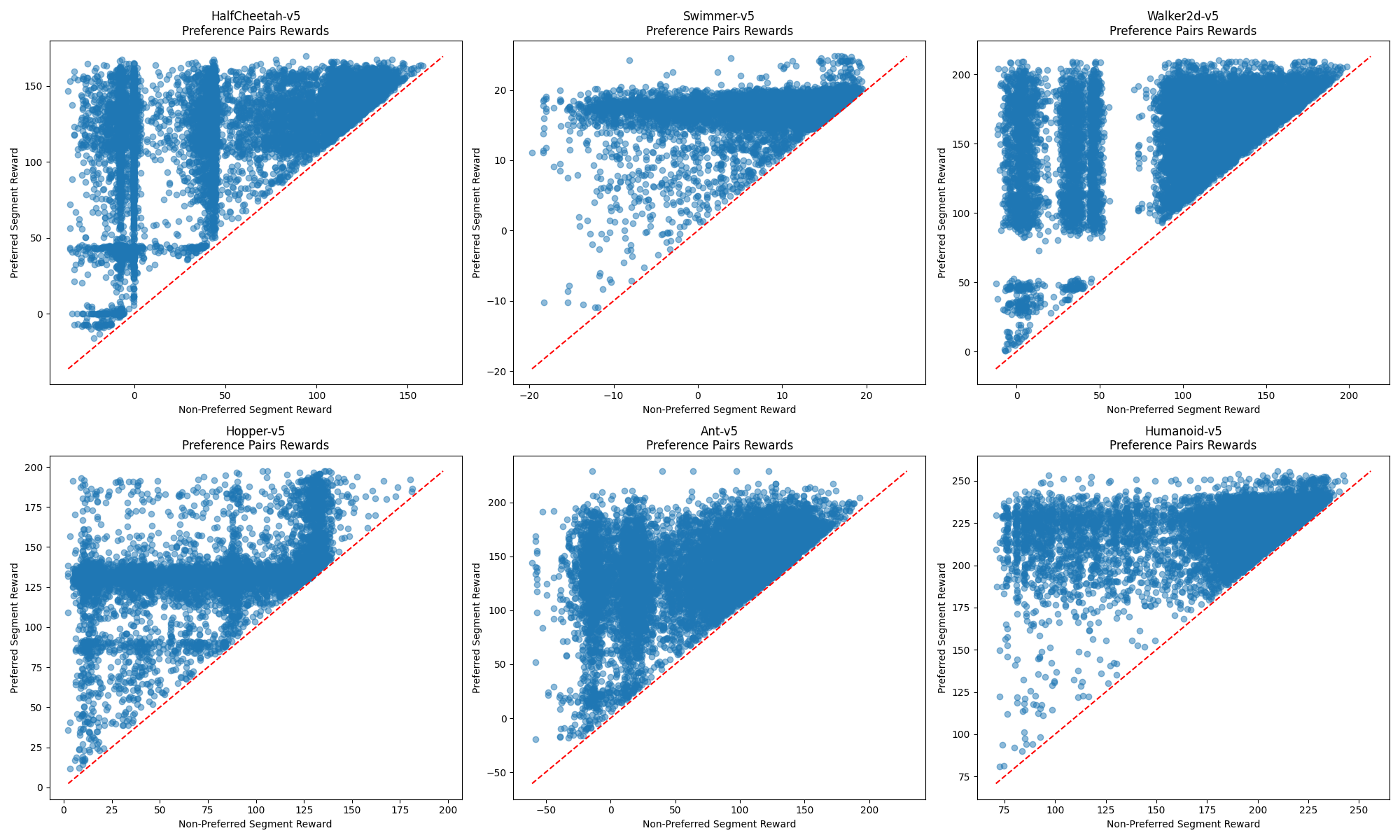}
    \caption{Scatter plot of discounted rewards for preference pairs: On the y-axis, we plot the rewards of the preferred segments, on the x-axis are the rewards of the non-preferred segments. Because the return of the preferred result in the optimal case (Noise 0.0) is always preferred, we only observe points above the diagonal.}
    \label{fig:rew_rating_corr}
\end{figure}

\begin{figure}[h]
    \centering
    \includegraphics[width=\linewidth]{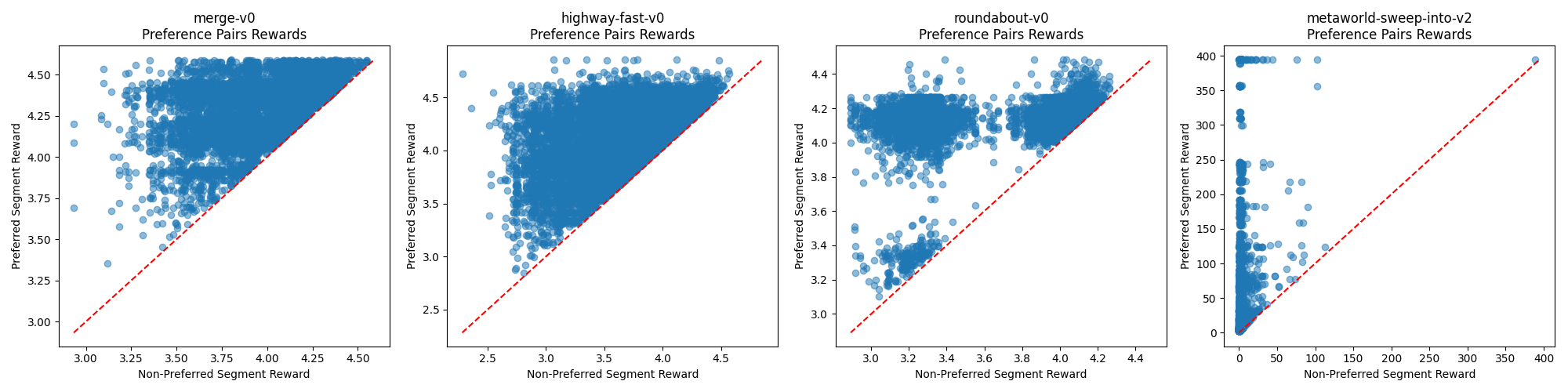}
    \caption{Continuation: Scatter plot of discounted rewards for preference pairs: On the y-axis, we plot the rewards of the preferred segments, on the x-axis are the rewards of the non-preferred segments. Because the return of the preferred result in the optimal case (Noise 0.0) is always preferred, we only observe points above the diagonal.}
    \label{fig:rew_rating_corr_2}
\end{figure}

\autoref{fig:pref_diff_hist} and \autoref{fig:pref_diff_hist_2} show histograms of the segment pair return differences for different environments.

\begin{figure}[h]
    \centering
    \includegraphics[width=\linewidth]{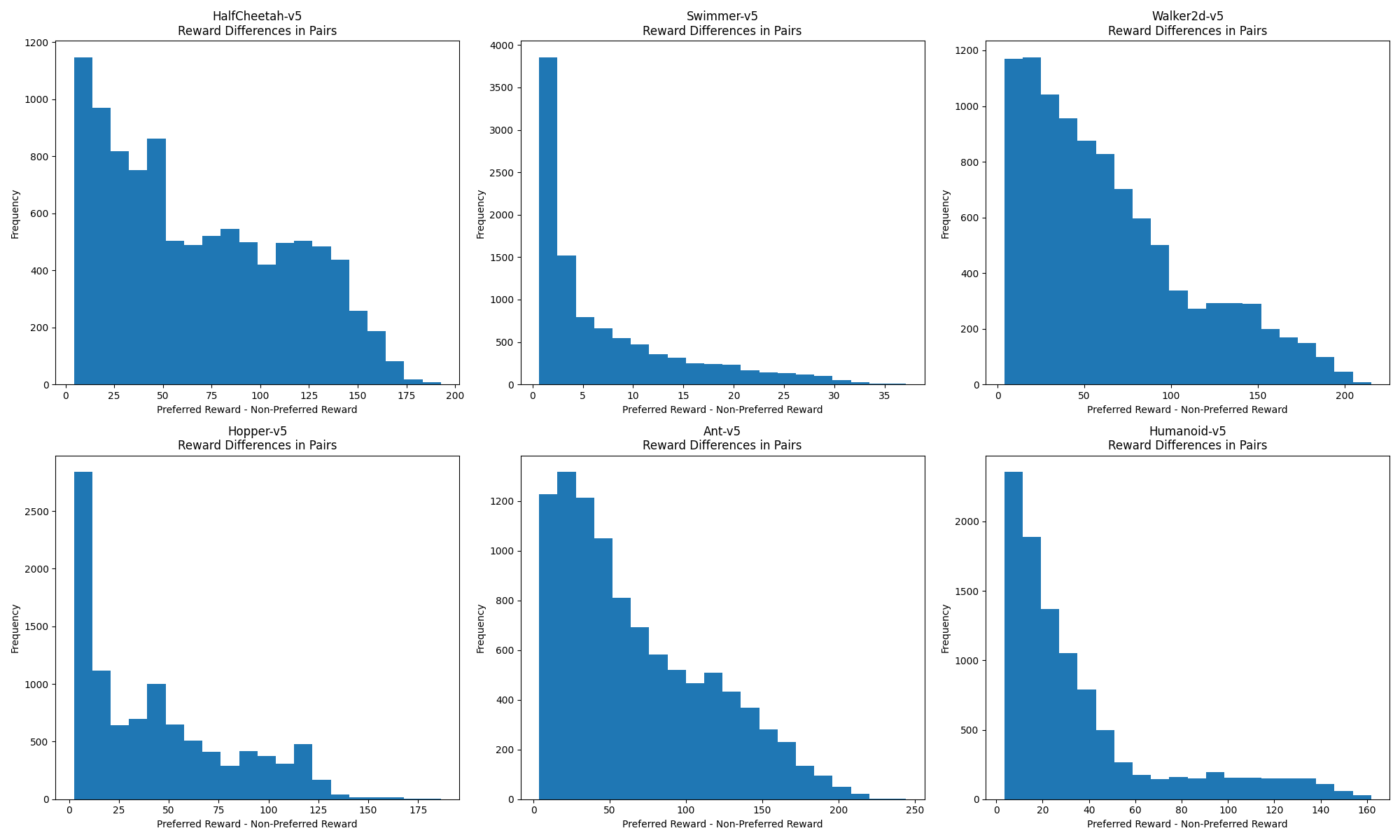}
    \caption{A histogram showing the distribution of return differences between preference pairs (for 10000 randomly sampled segments with max. length 50/ truncated at episode end).}
    \label{fig:pref_diff_hist}
\end{figure}

\begin{figure}[h]
    \centering
    \includegraphics[width=\linewidth]{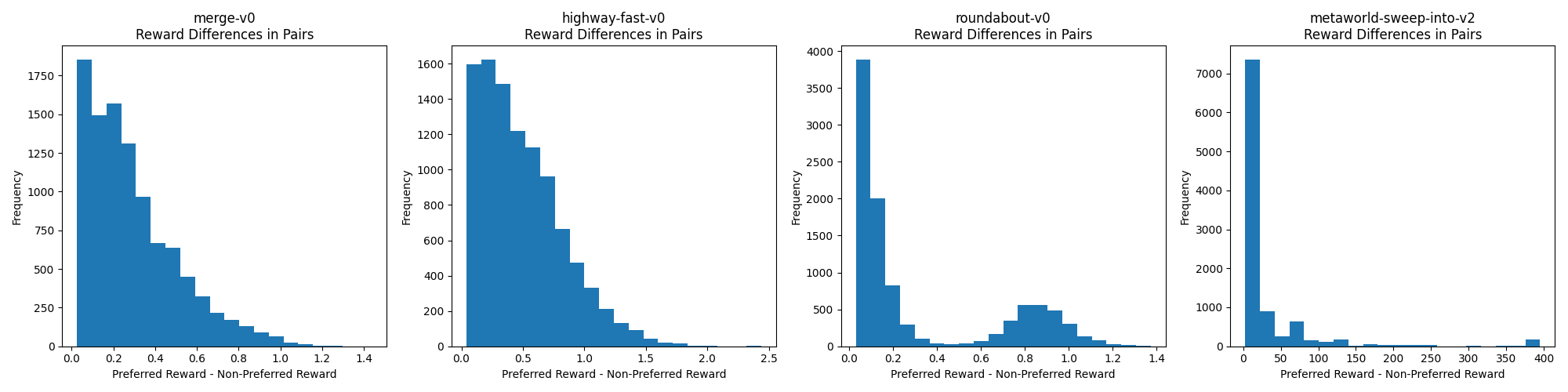}
    \caption{Continuation: A histogram showing the distribution of return differences between preference pairs (for 10000 randomly sampled segments with max. length 50/ truncated at episode end).}
    \label{fig:pref_diff_hist_2}
\end{figure}

\clearpage

\begin{algorithm}[htb]
\caption{Generation of Demonstrative and Corrective Feedback}
\label{alg:synthetic_feedback_generation_demonstrative_corrective}
\begin{algorithmic}[1]
\Require Environment $E$, Expert models $\mathcal{E} = \{e_1, \ldots, e_n\}$, State copies $\mathcal{C} = \{c_1, \ldots, c_m\}$, Segments $\mathcal{S} = \{s_1, \ldots, s_m\}$, Segment length $L$, Discount factor $\gamma$
\Ensure Demonstrative feedback $\mathcal{F}_{demo}$, Corrective feedback $\mathcal{F}_{corr}$
\Statex
\State $\mathcal{F}_{demo} \leftarrow \varnothing, \mathcal{F}_{corr} \leftarrow \varnothing$
\For{$i \in \{1, \ldots, m\}$}
    \State $D \leftarrow \varnothing$ \Comment{Current demonstrations}
    \State $R \leftarrow \varnothing$ \Comment{Current expert returns}
    \For{$e \in \mathcal{E}$}
        \State $s \leftarrow E.\text{set\_state}(c_i)$
        \State $d \leftarrow \varnothing$ \Comment{Current demonstration}
        \For{$j \in \{1, \ldots, L\}$}
            \State $a \leftarrow e(s)$
            \State $s', r \leftarrow E.\text{step}(a)$
            \State $d \leftarrow d \cup \{(s, a, r)\}$
            \State $s \leftarrow s'$
        \EndFor
        \State $D \leftarrow D \cup \{d\}$
        \State $R \leftarrow R \cup \{\sum_{t=0}^{L-1} \gamma^t r_t : (s_t, a_t, r_t) \in d\}$
    \EndFor
    \State $k \leftarrow \argmax R$
    \State $d^* \leftarrow D_k, r^* \leftarrow R_k$
    \State $r_\text{orig} \leftarrow \sum_{t=0}^{L-1} \gamma^t r_t : (s_t, a_t, r_t) \in s_i$
    \If{$r^* > r_\text{orig}$}
        \State $\mathcal{F}_{demo} \leftarrow \mathcal{F}_{demo} \cup \{d^*\}$
        \State $\mathcal{F}_{corr} \leftarrow \mathcal{F}_{corr} \cup \{(s_i, d^*)\}$
    \EndIf
\EndFor
\State \Return $\mathcal{F}_{demo}, \mathcal{F}_{corr}$
\end{algorithmic}
\end{algorithm}

\subsection{Distribution and Examples of Demonstrative and Corrective Feedback}
A key novelty over existing work is the ability to also generate demonstrations/action advice and corrections dynamically for given segments based on direct access to the expert policy. \autoref{alg:synthetic_feedback_generation_demonstrative_corrective} contains pseudo-code for the generation of demonstrative and corrective feedback. However, this approach raises several questions, which we should investigate: (1) How does the distribution of generated demonstrations/corrections look like? (2) How do they compare to the existing episodes within the dataset? (3) How does the choice of an expert policy influence the generated feedback?

\paragraph{Comparison of Returns between Learner and Experts}
First, we want to make sure that the expert model policies that we use to \textit{correct} and \textit{demonstrate} advanced behavior, actually meaningfully improve over the baseline learner performance. As we can see in~\autoref{fig:segs_v_demos} and~\autoref{fig:segs_v_demos_2}, expert policies can improve performance over the learner. However, depending on the situation, we see a wide range of achieved rewards for both policy types. This is caused by the initial start state of a trajectory, which might be highly non-optimal, and the expert policy might only be able to recover from a failure state but not achieve fully optimal reward.  

\begin{figure}[h]
    \centering
    \includegraphics[width=\linewidth]{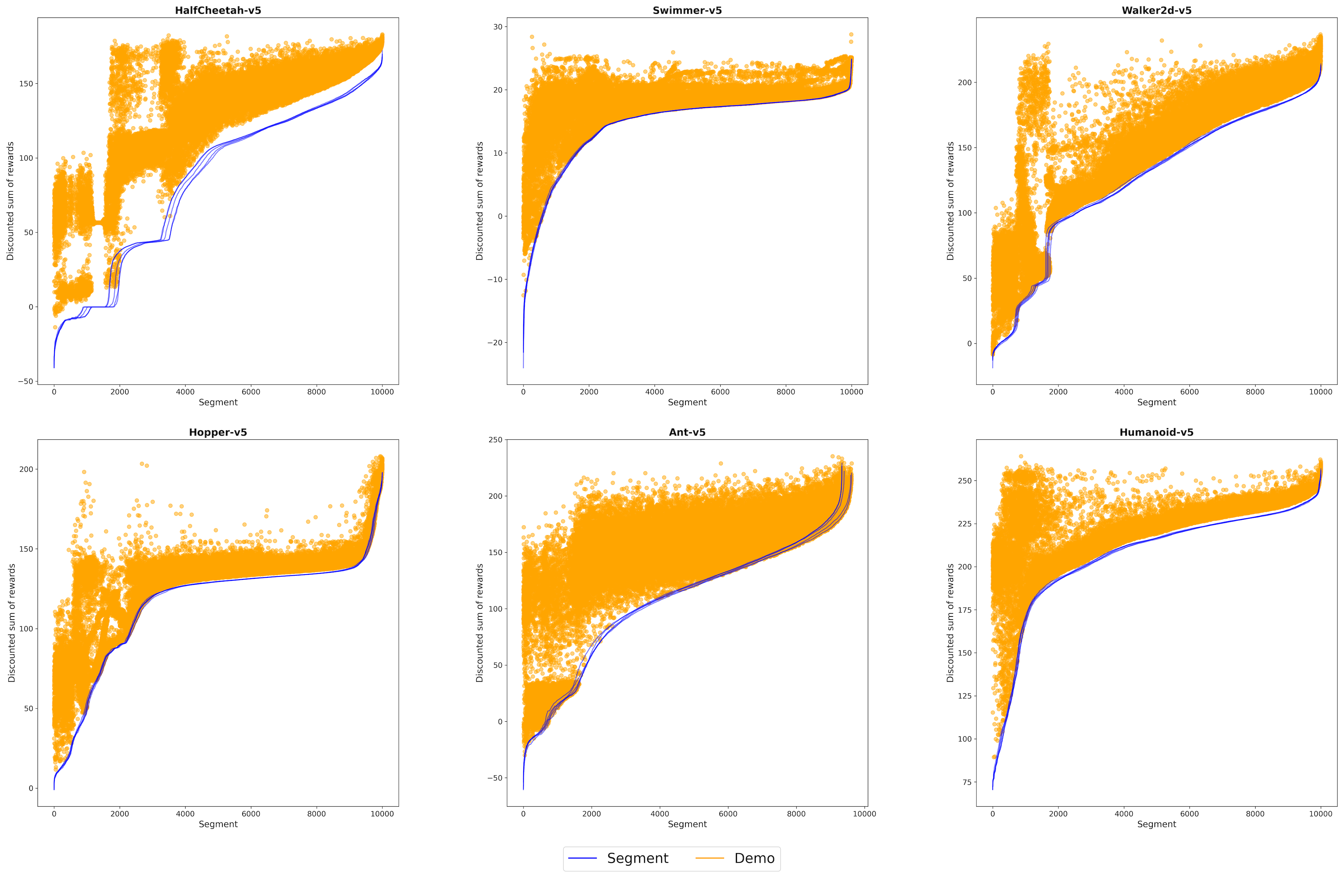}
    \caption{Comparison of achieved segment return between learner and expert demonstrations that we use as corrective and demonstrative feedback.}
    \label{fig:segs_v_demos}
\end{figure}

\begin{figure}[h]
    \centering
    \includegraphics[width=0.8\linewidth]{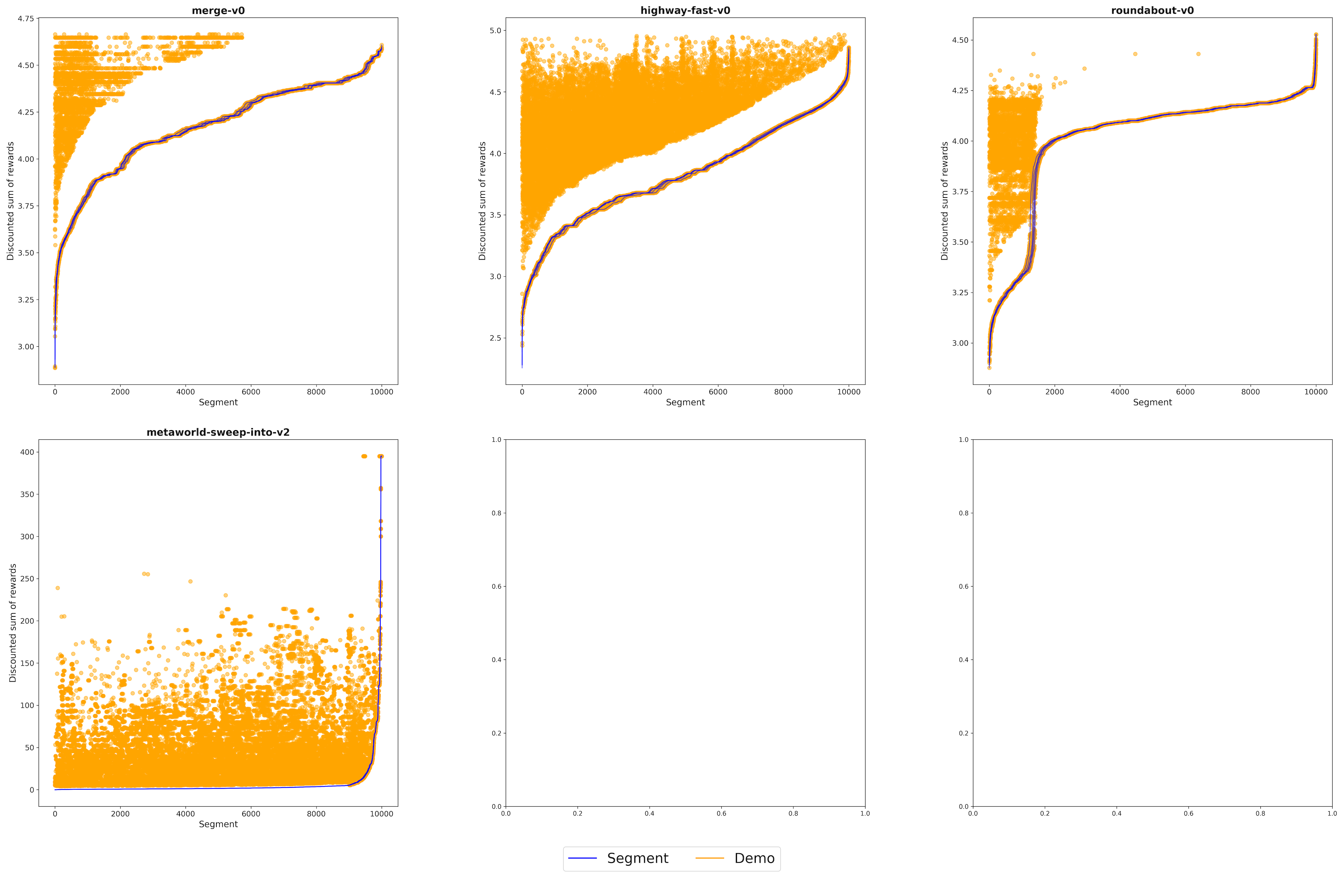}
    \caption{Comparison of achieved segment return between learner and expert demonstrations that we use as corrective and demonstrative feedback.}
    \label{fig:segs_v_demos_2}
\end{figure}

\paragraph{Examples for Learner and Corrected Trajectories}
To further illustrate the difference between weak \textit{learner} trajectories and more optimal \textit{expert} trajectories, we give some examples:~\autoref{fig:examples_halfcheetah_weak_vs_corr}-\autoref{fig:examples_ant_weak_vs_corr} highlight the differences in behavior by the weak and strong policy. As expected, we can identify that expert policies better exploit good initial states and are often able to recover from failure cases. More video examples are provided in the supplementary material.

\begin{figure}[h]
    \centering
    \includegraphics[width=\linewidth]{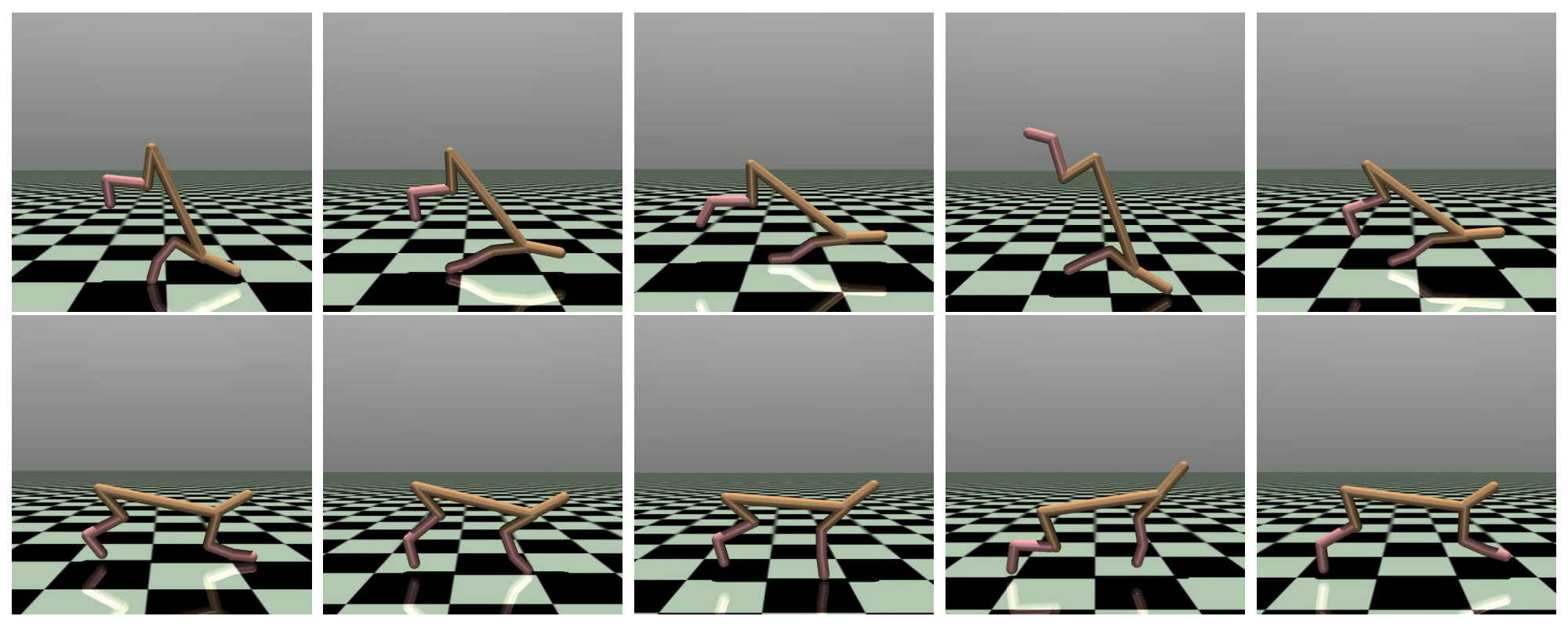}
    \caption{\textbf{HalfCheetah-v5} - Comparison of key-frames between learner and expert demonstrations, that we use as corrective and demonstrative feedback: Top Row are states from the \textit{learner segment}, bottom row is the \textit{expert segment}.}
    \label{fig:examples_halfcheetah_weak_vs_corr}
\end{figure}

\begin{figure}[h]
    \centering
    \includegraphics[width=\linewidth]{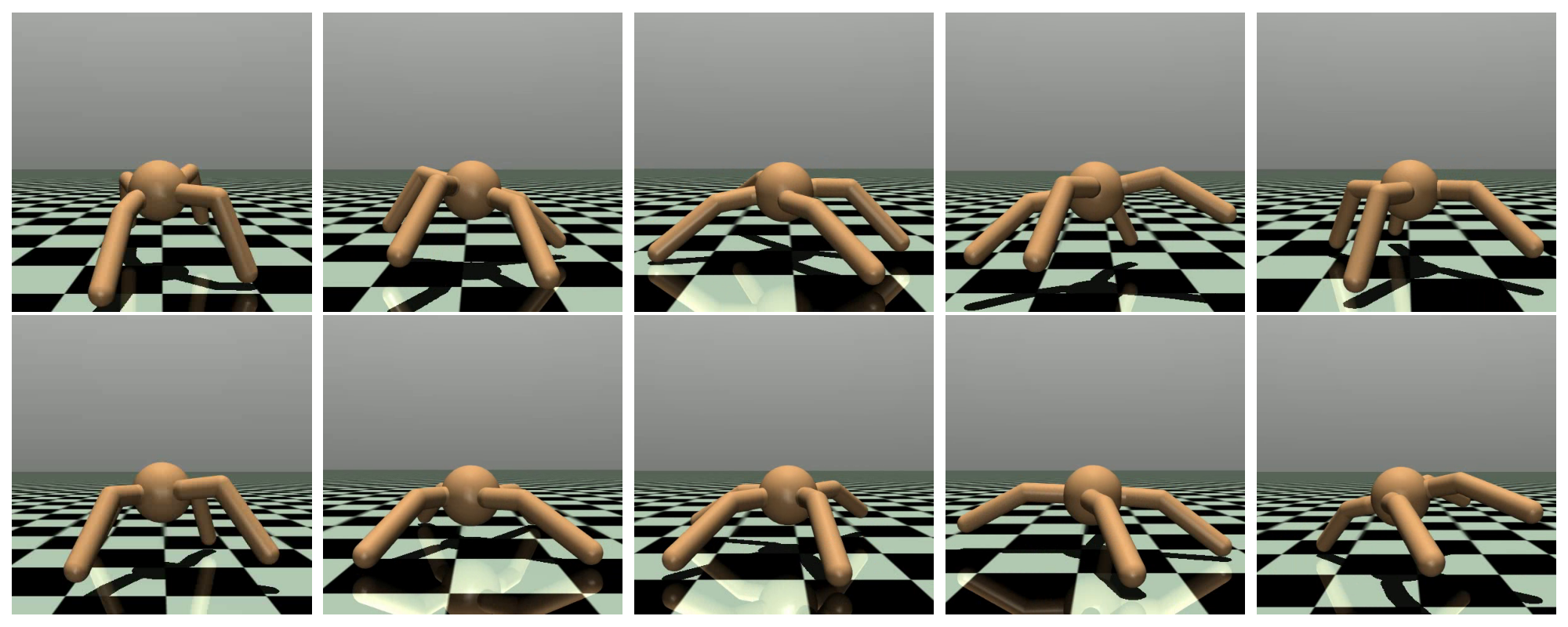}
    \caption{\textbf{Ant-v5} - Comparison of key-frames between learner and expert demonstrations that we use as corrective and demonstrative feedback: Top Row is stated from the \textit{learner segment}, bottom row is the \textit{expert segment}.}
    \label{fig:examples_ant_weak_vs_corr}
\end{figure}

\subsection{Details of Descriptive Feedback}

\begin{algorithm}[htb]
\caption{Generation of Descriptive Feedback}
\label{alg:synthetic_feedback_generation_descriptive}
\begin{algorithmic}[1]
\Require Segments $\mathcal{S} = \{s_1, \ldots, s_n\}$, Number of feedback instances $k$
\Ensure Descriptive feedback $\mathcal{F}_{desc}$, Descriptive preferences $\mathcal{F}_{desc.pref.}$
\Statex
\State $\mathcal{X} \leftarrow \varnothing, \mathcal{R} \leftarrow \varnothing$
\For{$s \in \mathcal{S}$}
    \State $\mathcal{X} \leftarrow \mathcal{X} \cup \{Concatenate(x_i, a_i) : (x_i, a_i, r_i) \in s\}$
    \State $\mathcal{R} \leftarrow \mathcal{R} \cup \{r_i : (x_i, a_i, r_i) \in s\}$
\EndFor
\State $C \leftarrow \text{KMeans}(\mathcal{X}, k)$ \Comment{Cluster assignments}
\State $\bar{\mathcal{X}} \leftarrow \{\frac{1}{|C_i|}\sum_{j \in C_i} \mathcal{X}_j : i \in \{1,\ldots,k\}\}$ \Comment{Cluster representatives}
\State $\bar{\mathcal{R}} \leftarrow \{\frac{1}{|C_i|}\sum_{j \in C_i} \mathcal{R}_j : i \in \{1,\ldots,k\}\}$ \Comment{Cluster mean rewards}
\State $\mathcal{F}_{desc} \leftarrow \{(\bar{\mathcal{X}}_i, \bar{\mathcal{R}}_i) : i \in \{1,\ldots,k\}\}$ \Comment{Cluster descriptions}
\State $\mathcal{F}_{desc.pref.} \leftarrow \text{GetPreferencePairs}(\bar{\mathcal{X}}, \bar{\mathcal{R}})$
\State \Return $\mathcal{F}_{desc}, \mathcal{F}_{desc.pref.}$
\end{algorithmic}
\end{algorithm}

In~\autoref{subsec:feedback-types}, we have defined descriptive feedback as a mapping from abstract state-/action-feature values to a feedback value. We have implemented this feedback type via a clustering approach: All collected state-action pairs $(s, a) \in \mathcal{D}$ are clustered into N clusters (with N being the number of feedback instances/queries). In our experiment, we match the number of clusters with the number of segments, e.g., $10.000$. We utilize mini-batch k-means clustering with a batch size of 1000 to cluster the $(s,a)$-pairs. We then average the observation, action, and optimality gaps for a cluster to get a single feedback instance $(s,a,v_{fb})$ that describes part of the state space without referring to a specific state. 

In~\autoref{fig:cheetah_desc_gt_rews}, we show a sub-sample of state-action pairs (5000 samples) alongside a sample of the computed averaged cluster representatives (500) and the associated step reward for each pair marked in red. 
In~\autoref{fig:all_environments_descr}, we show t-sne projections of the full set clusters alongside the associated descriptive rewards

\begin{figure}[h]
    \centering
    \includegraphics[width=\linewidth]{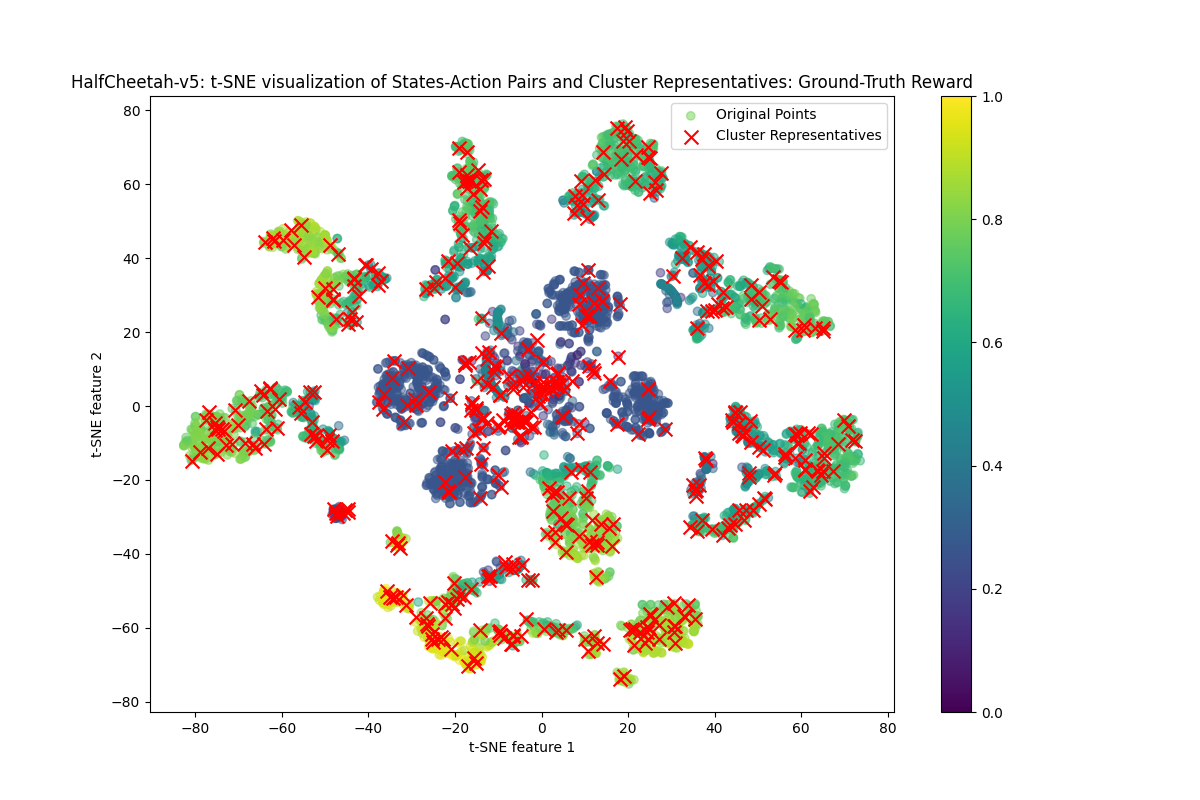}
    \caption{t-SNE projection of state-action pairs, colored with the associated ground-truth reward. Overlayed, marked by $X$ are computed cluster representatives. These cluster representatives, alongside the average reward of underlying pairs, are used as descriptive feedback.}
    \label{fig:cheetah_desc_gt_rews}
\end{figure}

\begin{figure}
    \centering
    \begin{subfigure}[b]{0.48\textwidth}
        \centering
        \includegraphics[width=\textwidth]{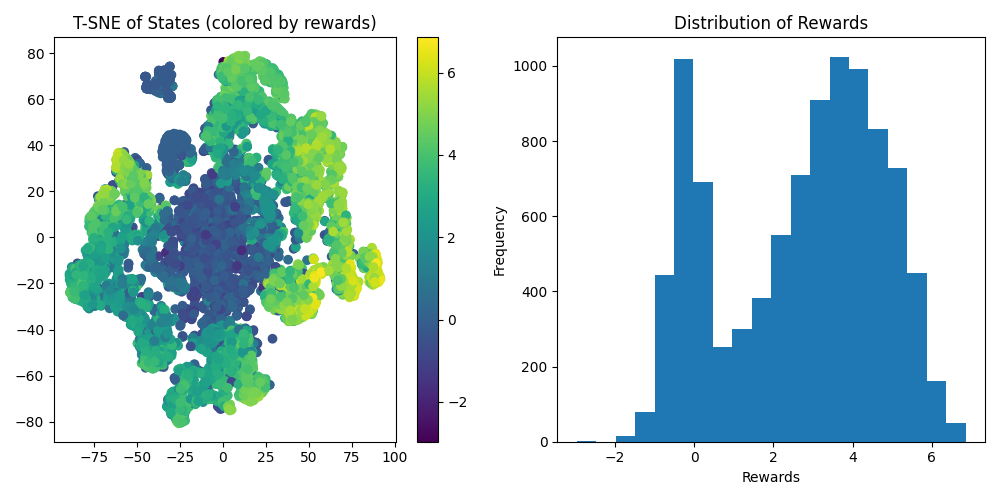}
        \caption{HalfCheetah-v5}
        \label{fig:halfcheetah}
    \end{subfigure}
    \hfill
    \begin{subfigure}[b]{0.48\textwidth}
        \centering
        \includegraphics[width=\textwidth]{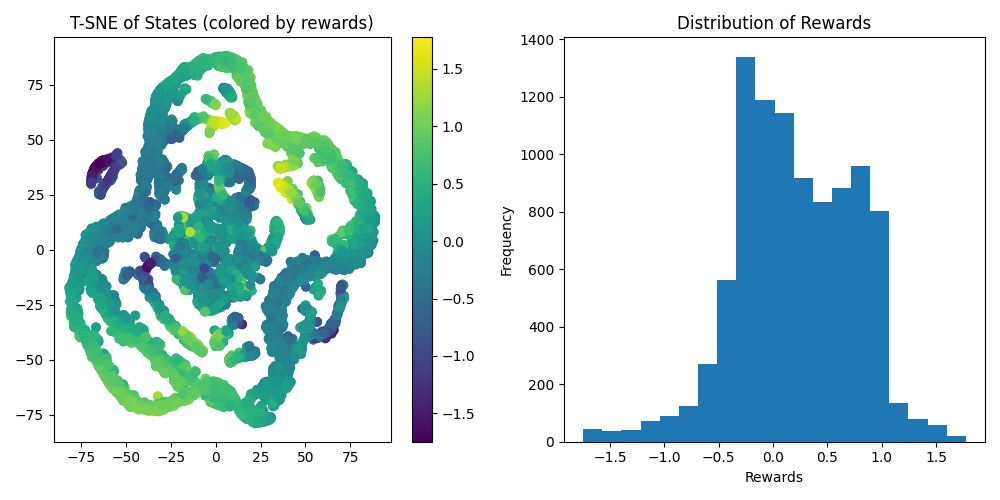}
        \caption{Swimmer-v5}
        \label{fig:swimmer}
    \end{subfigure}
    \vskip\baselineskip
    \begin{subfigure}[b]{0.48\textwidth}
        \centering
        \includegraphics[width=\textwidth]{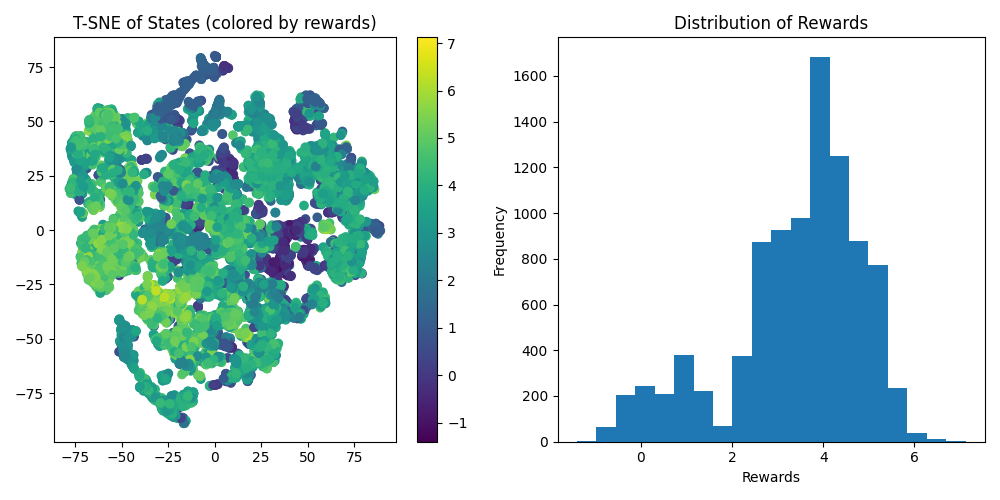}
        \caption{Walker2d-v5}
        \label{fig:walker2d}
    \end{subfigure}
    \hfill
    \begin{subfigure}[b]{0.48\textwidth}
        \centering
        \includegraphics[width=\textwidth]{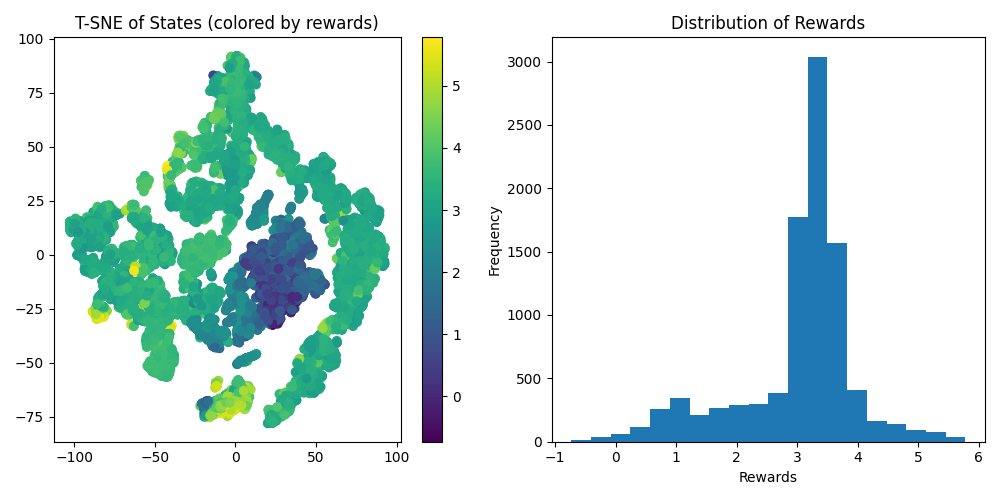}
        \caption{Hopper-v5}
        \label{fig:hopper}
    \end{subfigure}
    \vskip\baselineskip
    \begin{subfigure}[b]{0.48\textwidth}
        \centering
        \includegraphics[width=\textwidth]{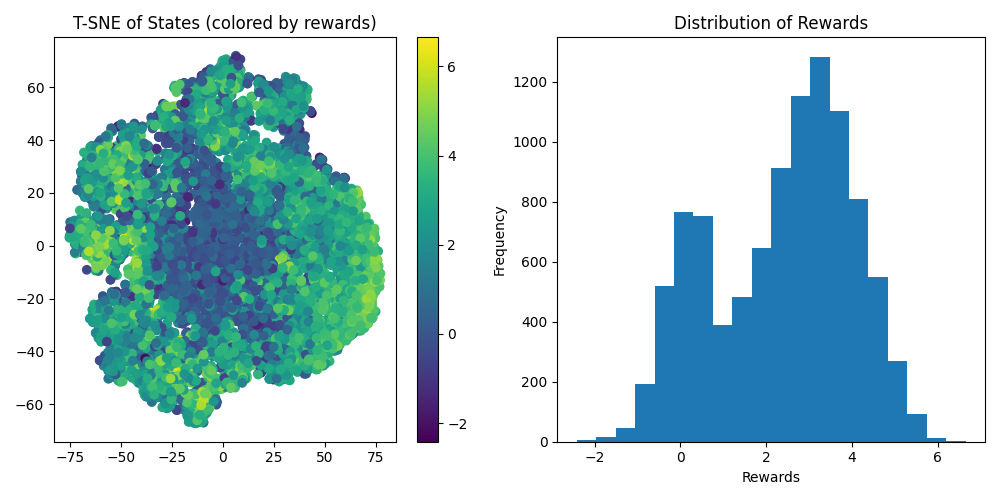}
        \caption{Ant-v5}
        \label{fig:ant}
    \end{subfigure}
    \hfill
    \begin{subfigure}[b]{0.48\textwidth}
        \centering
        \includegraphics[width=\textwidth]{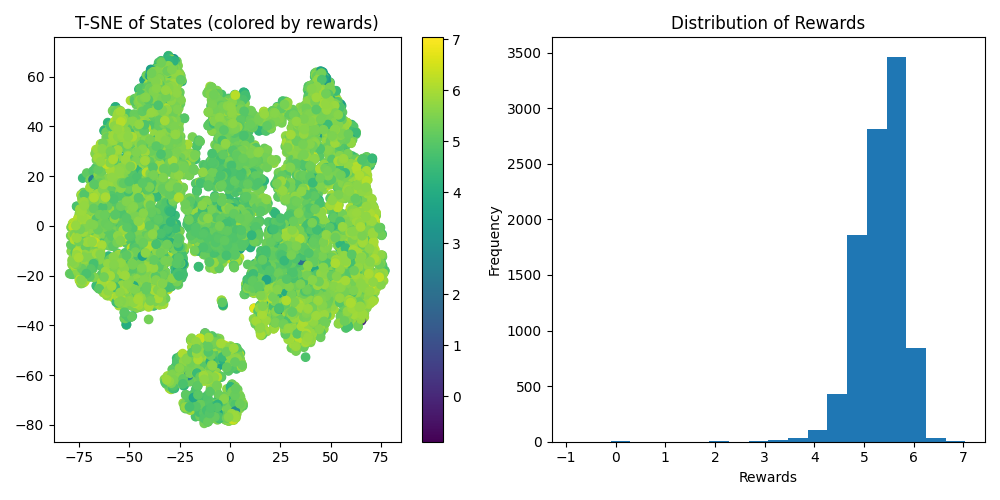}
        \caption{Humanoid-v5}
        \label{fig:humanoid}
    \end{subfigure}
        \begin{subfigure}[b]{0.48\textwidth}
        \centering
        \includegraphics[width=\textwidth]{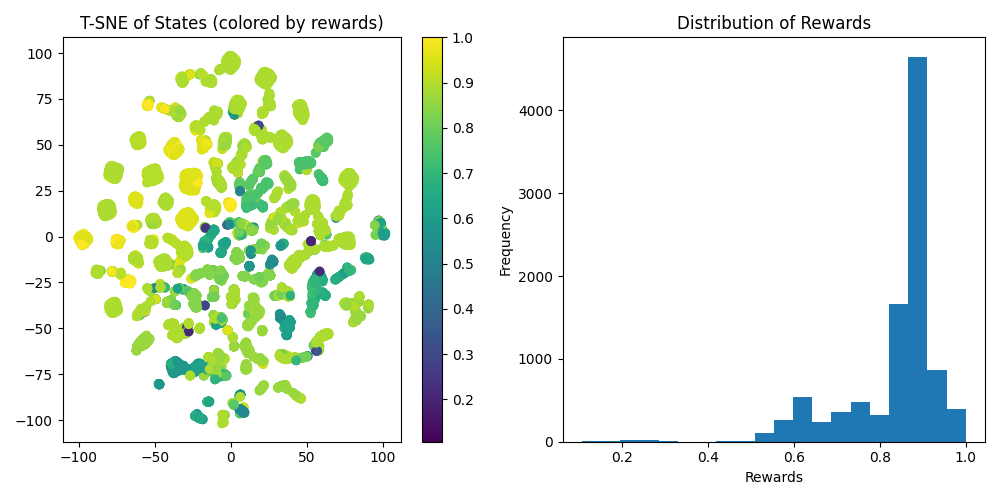}
        \caption{merge-v0}
        \label{fig:merge-v0-desc}
    \end{subfigure}
    \hfill
    \begin{subfigure}[b]{0.48\textwidth}
        \centering
        \includegraphics[width=\textwidth]{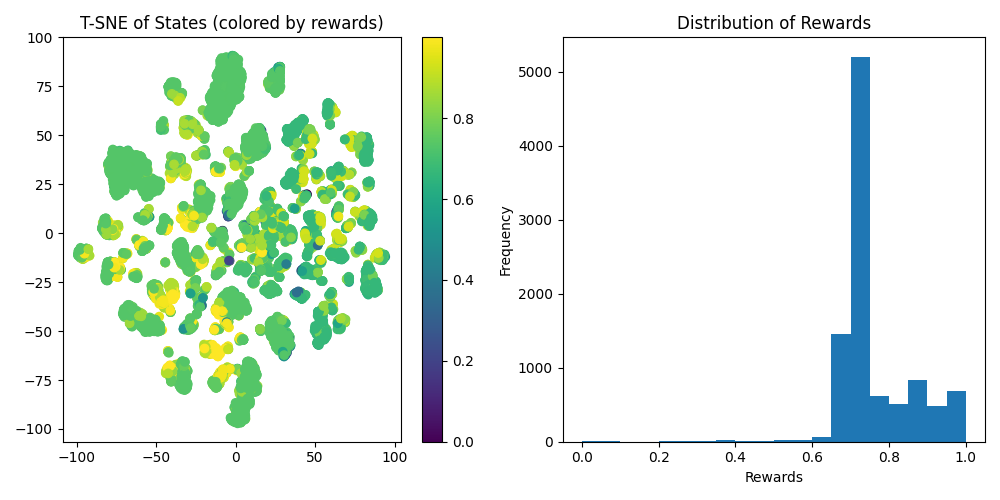}
        \caption{highway-fast-v0}
        \label{fig:high-fast-desc}
    \end{subfigure}
    \vskip\baselineskip
    \begin{subfigure}[b]{0.48\textwidth}
        \centering
        \includegraphics[width=\textwidth]{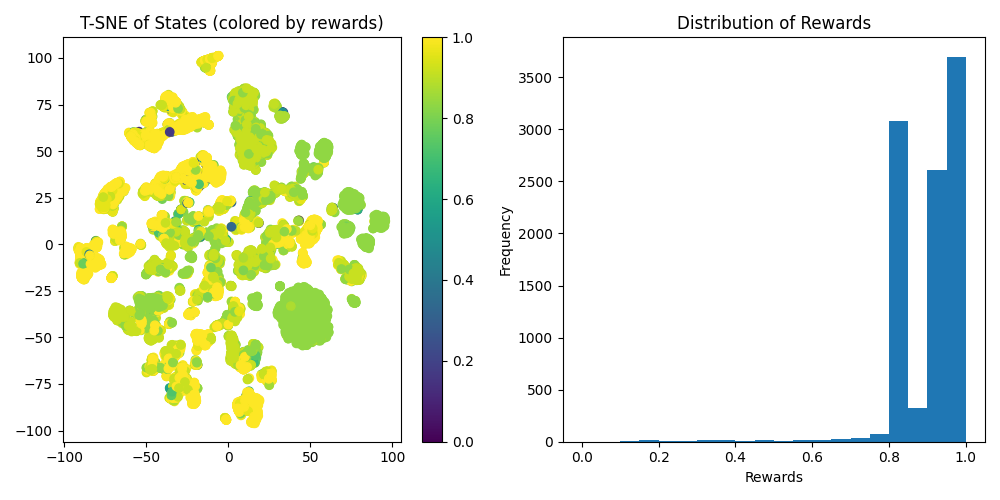}
        \caption{roundabout-v0}
        \label{fig:roundabout-desc}
    \end{subfigure}
    \hfill
    \begin{subfigure}[b]{0.48\textwidth}
        \centering
        \includegraphics[width=\textwidth]{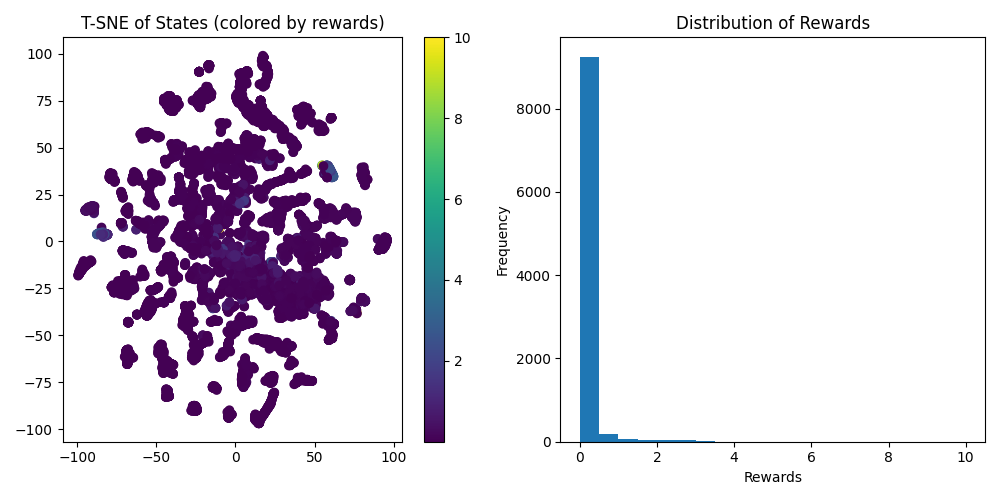}
        \caption{Metaworld-Sweep-Into-v2}
        \label{fig:hopper}
    \end{subfigure}
    \caption{Descriptive Feedback Analysis for the benchmarked environments. Each subfigure shows the T-SNE visualization of states (left) and the distribution of rewards (right) for the respective environment.}
    \label{fig:all_environments_descr}
\end{figure}

\subsection{Limitations and Alternative Formulations}
\label{app_sebsec:limitations_feedback_gen}
Lastly, the generation scheme has some limitations. The biggest limitation in our mind, is the potential mismatch between optimality of the evaluative/descriptive feedback, which is based on the underlying reward function, and the demonstrative/corrective feedback that is based on expert policies. While these experts policies were optimized with respect to the equivalent ground-truth discounted reward functions, they might still be sub-optimal compared to an optimal policy. However, in none of our explored environments, we have access to an optimal expert policy.

The feedback datasets were collected over trajectories sampled by rolling out the models from various checkpoints of the expert policies. This can be seen both as a positive or negative thing: On the positive thing, this ensures a better comparability between feedback types (including demonstrative/corrective feedback(, because this means feedback is available for roughly the same space of trajectories across types, and should therefore transport comparable underlying biases and non-optimality. On the flip-side, this means that the learned reward functions are only well-defined for the trajectories in the datasets, i.e. also better-than-expert trajectories that deviate significantly from the observed behavior, must be considered out-of-distribution. We therefore do not expect that trained RL models can easily surpass expert performance. Indeed, this is mostly supported by the experimental results.

Our method is well suited for generating fixed, offline training datasets. However, our approach requires some modification to generate online oracle feedback. This includes initializing the histogram of ratings based on an initial dataset, but updating it based on online data. Similarly, the clustering used to generate descriptive feedback must also be re-computed on a dynamic buffer to adapt to online trajectories. In the provided code repository, we provide an implementation of dynamic feedback oracle which can be used for online RLHF with simulated multi-type feedback. 

\paragraph{Alternative Formulations:} During the design process, we discussed several alternative ways to generate synthetic feedback:
\begin{itemize}
    \item (Evaluative feedback) Disagreement with expert policy: A possible alternative to using the expert's value function is to base the feedback directly on the policy. Suppose the selected action observed in the segment deviates from the agent's prediction. In that case, we assume it is against the intended series of actions that the expert agent prefers. \textit{However, this formulation does not have room for alternative solution strategies that might be recognized by a human annotator, including segments that are even better than the expert.} In both cases, the optimality-gap-based approach will lead to low and potentially even negative gaps, which will be converted into high-score feedback.
    \item (Descriptive feedback) The formulation of descriptive feedback is more undetermined than the other types. We explored an alternative approach to assign scores or preferences to features instead of segments using feature attribution methods of the value estimator. This allows us to identify features that are relevant for a high/low value. We can then interpret the difference in prediction for the presence/absence of an important feature as a reward assignment. However, we have found the attributions generated by methods such as \textit{Integrated Gradients}~\citep{Sundararajan2017} overly noisy to be effective in our context.\\ 
    However, in our experiments, the attributions maps proved rather unreliable and only provided a weak signal due to being similar across states. Furthermore, they were only suited to create description of state, and not state-action pairs. 
\end{itemize}
An additional approach based on regret, that we explored more extensively, is presented in the following.

\subsection{Alternative Generation of Evaluative and Comparative Feedback with Optimality Gaps}
\label{app_subsec:alternate_regret}
We extensively explored to use the value function of an expert model as a source of feedback. Existing work~\citep{brown_extrapolating_2019, xue2023reinforcement, christiano2017deep} utilizing simulated feedback (often called an \textit{oracle}), use the ground-truth reward function of the environment to simulate human feedback.

In actor-critic RL algorithms like SAC~\citep{haarnoja2018soft} or PPO~\citep{schulman2017proximal}, we can utilize the value $V(s) \in \mathbb{R}$ or q-value $Q(s,a) \in \mathbb{R}$ estimate directly to generate feedback values, and the learned policy for demonstrative feedback.
Similarly, we can directly transfer this approach to purely value-based RL like Q-Learning.

We explored an alternative way to generate \textbf{evaluative feedback} on a notion similar to \textit{regret}, i.e., we rate a segment not based on the actual collected environment reward but instead on whether it showcases optimal behavior from the perspective of an expert model. This notion aligns more closely with reward behavior observed in humans, which have been shown to rate \textit{relative to their expectations}~\citep{macglashan2017interactive, knox2022models, jeon_reward-rational_2020}. Furthermore, feedback based on ground-truth rewards is not entirely suited for sparse-reward tasks, as we might only observe a few rewards during a typical segment, resulting in a weak learning signal. The observed regret is computed by comparing the expected future rewards with the observed ones:
$$\Delta_{opt}(\xi_{0:H}) := V_{e}(s_0) - (\sum_{i=0}^{H-1}\gamma^i r_i + \gamma^{H}V_e(s_H))$$

\paragraph{Regret and Optimality Gap}
We want to briefly clarify the relation between the formal \textbf{regret} as commonly defined in RL research and our definition of
\begin{lemma}{Equality of Regret and Optimality Gap}
\label{lemma:equal_reg_opt}
Under the assumption that the expert policy is optimal, the optimality gap $\Delta_{e,opt}$ is equivalent to the expected regret for a segment in \textbf{fixed-horizon tasks}.
\end{lemma}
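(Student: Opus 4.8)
The plan is to prove the identity by a telescoping decomposition of the optimality gap into per-step Bellman residuals, followed by an expectation argument that identifies each residual with the instantaneous regret through the Bellman optimality equation. First I would fix a working definition of segment regret: in a fixed-horizon task the regret incurred along $\xi_{0:H}$ is the discounted sum of per-step regrets $V_e(s_i) - Q_e(s_i,a_i)$, where $V_e$ and $Q_e$ are the expert value and action-value functions. Because $\pi_e$ is assumed optimal, these coincide with the optimal $V^*$ and $Q^*$ and satisfy the Bellman optimality relations, so each per-step regret is non-negative and equals the negative advantage $-A_e(s_i,a_i)$.

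Next I would rewrite the boundary terms of $\Delta_{opt}$ using the telescoping identity
\begin{equation}
V_e(s_0) - \gamma^H V_e(s_H) = \sum_{i=0}^{H-1}\left[\gamma^i V_e(s_i) - \gamma^{i+1} V_e(s_{i+1})\right].
\end{equation}
Substituting into the definition of the optimality gap and collecting the reward terms yields
\begin{equation}
\Delta_{opt}(\xi_{0:H}) = \sum_{i=0}^{H-1}\gamma^i\left[V_e(s_i) - r_i - \gamma V_e(s_{i+1})\right],
\end{equation}
which already exhibits the gap as a discounted sum of one-step residuals.

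The final step is to take the expectation over the transition dynamics conditioned on each visited pair $(s_i,a_i)$. By definition of the action-value function, $\mathbb{E}[r_i + \gamma V_e(s_{i+1}) \mid s_i,a_i] = Q_e(s_i,a_i)$, so each summand reduces in expectation to $\gamma^i\left(V_e(s_i) - Q_e(s_i,a_i)\right)$, i.e. the discounted instantaneous regret. Summing over $i$ shows that the expected optimality gap equals the expected discounted segment regret, which is the claimed equivalence.

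The main obstacle I anticipate is the fixed-horizon structure itself: in a strictly fixed-horizon MDP the optimal value function is time-indexed, $V_e(s,t)$, and the Bellman recursion runs backward from a terminal boundary condition, so I would need to carry the time index through the telescoping sum and verify that the boundary term $\gamma^H V_e(s_H)$ matches the prescribed terminal value rather than a stationary one. A secondary subtlety is that $\Delta_{opt}$ as written is a realized quantity depending on sampled rewards and transitions, whereas regret is intrinsically an expectation; the equivalence is therefore an identity in expectation and collapses to a pathwise identity only when the dynamics and rewards are deterministic, a point I would make explicit in the statement.
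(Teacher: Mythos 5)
Your proof is correct, but it takes a genuinely different route from the paper's. The paper starts from the foregone-reward definition of regret, $R_H = \mathbb{E}[\sum_{t=0}^{H} r(s_t,a_t^*) - r(s_t,a_t)]$, and completes the head sum to the full fixed horizon $T$ by adding and subtracting the optimal tail rewards $\sum_{t=H+1}^{T} r(s_t,a_t^*)$, identifying the full and tail sums with $V^*(s_0)$ and $V^*(s_H)$ respectively, and finally substituting $V^* = V_e$; this makes the fixed-horizon hypothesis transparent (the tail completion is exactly what fails for infinite horizons, as the paper's subsequent remark notes), but it silently drops the discount factors present in the definition of $\Delta_{opt}$ (so it matches the gap as written only when $\gamma = 1$), and its identification of $\sum_{t=0}^{T} r(s_t,a_t^*)$ with $V^*(s_0)$ tacitly assumes the counterfactual optimal trajectory passes through the visited states, which is only heuristic under stochastic dynamics. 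Your telescoping of $\Delta_{opt}$ into one-step Bellman residuals, followed by conditioning on $(s_i,a_i)$ and invoking $\mathbb{E}[r_i + \gamma V_e(s_{i+1}) \mid s_i, a_i] = Q_e(s_i,a_i)$, is essentially a performance-difference-lemma argument: it handles $\gamma$ and the boundary term $\gamma^H V_e(s_H)$ exactly, remains rigorous under stochastic transitions via the tower property, and localizes precisely where the optimality of $\pi_e$ enters (the Bellman optimality equation). The one definitional caveat is that you define segment regret as the discounted sum of per-step gaps $V_e(s_i) - Q_e(s_i,a_i)$, i.e.\ negated optimal advantages, whereas the paper uses the foregone-reward definition; the two coincide in expectation---your telescoping identity is precisely the bridge---but a sentence making that equivalence explicit would fully align your statement with the paper's. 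The subtleties you flag (time-indexed value functions in strictly fixed-horizon MDPs, and the expectation-versus-pathwise distinction) are real, and both are in fact glossed over in the paper's own proof.
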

\begin{proof}
We can infer this from the common definition of regret $R_H$ over a time horizon $H$ (in our case, the length of a segment). With ground-truth reward function $r(s_t,a_t)$ and ${(s_t,a_t)}_{t=0}^H$ being the segment:

\begin{align*} 
R_H &:= \mathbb{E}[\sum_{t=0}^H r(s_t, a_t^*) - r(s_t,a_t)] \\
    &= \mathbb{E}[\sum_{t=0}^H r(s_t, a_t^*) - \sum_{t=0}^H r(s_t,a_t)] \\
    &= \mathbb{E}[(\sum_{t=0}^H r(s_t, a_t^*) +  \sum_{t=H+1}^T r(s_t, a_t^*) - \sum_{t=H+1}^T R(s_t, a_t^*)) - \sum_{t=0}^H r(s_t,a_t)] \\
    &= \mathbb{E}[(\sum_{t=0}^T r(s_t, a_t^*) - \sum_{t=H+1}^T r(s_t, a_t^*)) - \sum_{t=0}^H R(s_t,a_t)] \\
    &= \mathbb{E}[V^*(s_0) - V^*(s_H) - \sum_{t=0}^H r(s_t,a_t)] \\
    &\stackrel{V* = V_e}{=} \mathbb{E}[V_e(s_0) - V_e(s_H) - \sum_{t=0}^H r(s_t,a_t)]
\end{align*}
\end{proof}
However, in infinite horizon tasks, i.e., \textit{Mujoco} is treated as an infinite horizon task in the common implementations of PPO, we often observe that $V^e(s_0)$ and $V^e(s_H)$ are very similar or even equivalent, except for states that show catastrophic failure, because states cannot be easily differentiated from each other and therefore will have very similar value estimates. \autoref{lemma:equal_reg_opt} also does not hold for the infinite horizon case due to infinite sums.\\
In practice, this means that the \textit{optimality gap} $\Delta_{opt,e}$ is similar to the negative sum of (discounted) ground truth rewards, i.e., it reduces to the standard implementations of feedback.

Another complicating factor is environmental stochasticity. Due to these factors, we chose the term \textit{optimality gap} instead of \textit{regret} to indicate the distinction.

\paragraph{Comparison with Environment Ground-Truth Feedback} As described above, we use regret-based optimality gap w.r.t., an expert value function, as our foundation for evaluative feedback. As a first sanity check for the validity of this generated feedback, we investigate the relationship between these optimality gaps and sums of ground-truth reward (akin to existing work):  Visible is a linear correlation between achieved rewards and the negative optimality gap. In particular for non-episodic tasks with low influence of stochasticity like \textit{Mujoco}-environments, negative optimality gaps, and discounted total rewards almost co-inside, with the imperfect nature of value estimates leading to the introduction of some noise. For some of the other environments, we observe both the difficulty and promise of our approach over ground-truth reward: Return (i.e., the sum of (discounted) rewards over a segment) is an uninformative measure in many environments with sparse rewards (like \textit{Atari}). In contrast, the simulated reward can serve as a stronger signal.

%\begin{figure}[h]
%    \centering
%    \includegraphics[width=\linewidth]{}
%    \caption{Comparison between optimality gaps and total reward (for 10000 randomly sampled segments with max. length 50/ truncated at episode end). Segment Indices are sorted by the discounted sum of rewards.}
%    \label{fig:rew_opt_gap_comparison}
%\end{figure}

\paragraph{The effect of value-function ensembles on simulated feedback} In the main paper, we use the ground truth reward function to generate feedback. We have experimented with using multiple trained agents to get more reliable value estimates, including a broader coverage and the ability to detect out-of-distribution samples that may lead to faulty results. We observed that some models tend to slightly over- or under-estimate the return, leading to slightly overlapping bins. We argue that this expert model noise is compatible with the expected behavior of human labels. Such label noise has been reported in crowd-sourcing tasks by human annotators. We average the optimality gap predictions across four models to achieve representative results.

We see that the value estimates for austere environments are often close together, and the averaged estimates have less noise, which means the estimated optimality gaps more closely mirror the observed ground-truth returns. For more complex environments, estimates can vary more. In the future, we might want to investigate further increasing the number of expert value functions for even more reliable optimality gap estimates.

\subsection{Expert Policy: RL Hyperparameter Settings}
\label{subsec:expert_policy_hp}
To train expert models, we chose the default hyperparameter settings of RL training given in \textit{StableBaselines3 Zoo}~\citep{raffin2020}.

\begin{table}[htbp]
\centering
\caption{RL-Training hyperparameters for \textit{Mujoco} expert agents trained with PPO.}
\begin{tabular}{|l|l|l|l|}
\hline
\multicolumn{4}{|l|}{Hyperparameters for Mujoco Experiments: PPO} \\ \hline
\multicolumn{1}{|l|}{Environment} & {Ant-v5} & {Swimmer-v5} & {Walker2d-v5}\\ \hline
\multicolumn{1}{|l|}{Algorithm} & \multicolumn{3}{|l|}{SAC} \\ \hline
\multicolumn{1}{|l|}{Network Arch.} & \multicolumn{3}{|l|}{3-layer MLP } \\ \hline
\multicolumn{1}{|l|}{Hidden size} & 256 & 256 & 64 \\ \hline
\multicolumn{1}{|l|}{Training Timesteps} & 1e6  & 1e6 & 1e6  \\ \hline
\multicolumn{1}{|l|}{Num. Parallel Env.} & 1  & 4 & 1 \\ \hline
\multicolumn{1}{|l|}{Batch Size}         & 64  & 256 & 32    \\ \hline
\multicolumn{1}{|l|}{Learning Rate}      & 2.063e-5 & 6e-4 & 5e-05  \\ \hline
\multicolumn{1}{|l|}{Num. Epochs}        & 20 & 10 & 20       \\ \hline
\multicolumn{1}{|l|}{Sampled Steps per Env.}           & 512 & 1024 & 512     \\ \hline
\multicolumn{1}{|l|}{GAE lambda}         & 0.92 & 0.98 & 0.95   \\ \hline
\multicolumn{1}{|l|}{Gamma}              & 0.98  & 0.9999 & 0.99 \\ \hline
\multicolumn{1}{|l|}{Ent. Coeff}         & 0.0004 & 0.0554757 & 0.0005    \\ \hline
\multicolumn{1}{|l|}{VF Coeff.}          & 0.581  & 0.38782 & 0.872    \\ \hline
\multicolumn{1}{|l|}{Max. Grad Norm}     & 0.8 & 0.6 & 1.0     \\ \hline
\multicolumn{1}{|l|}{Clip Range}     & 0.1 & 0.3  & 0.1  \\ \hline
\multicolumn{1}{|l|}{Obs. Normalization}     & \multicolumn{3}{|l|}{false}     \\ \hline
\multicolumn{1}{|l|}{Reward Normalization}     & \multicolumn{3}{|l|}{false}     \\ \hline
\end{tabular}
%\vspace{0.5em}
\label{tab:mujoco_hyperparams_ppo}
\end{table}

\begin{table}[htbp]
\centering
\caption{RL-Training hyperparameters for \textit{Mujoco} expert agents trained with SAC.}
\begin{tabular}{|l|l|l|l|}
\hline
\multicolumn{4}{|c|}{Hyperparameters for Mujoco Experiments: SAC} \\ \hline
Environment          & Ant-v5   & Hopper-v5  & Humanoid-v5 \\ \hline
Algorithm            & \multicolumn{3}{c|}{SAC} \\ \hline
Network Arch.        & \multicolumn{3}{c|}{3-layer MLP} \\ \hline
Hidden size          & 256      & 256      & 256      \\ \hline
Training Timesteps   & 1e6      & 1e6      & 2e6      \\ \hline
Num. Parallel Env.   & 1        & 1        & 1        \\ \hline
Batch Size           & 256      & 256      & 256      \\ \hline
Learning Rate        & 3e-4     & 3e-4     & 3e-4     \\ \hline
Gamma                & 0.99     & 0.99     & 0.99     \\ \hline
Ent. Coeff           & "auto"   & "auto"   & "auto"   \\ \hline
\multicolumn{1}{|l|}{Obs. Normalization}     & \multicolumn{3}{|l|}{false}     \\ \hline
\multicolumn{1}{|l|}{Reward Normalization}     & \multicolumn{3}{|l|}{false}     \\ \hline
\end{tabular}
%\vspace{0.5em}
\label{tab:mujoco_hyperparams_sac}
\end{table}

\begin{table}[htbp]
\centering
\caption{RL-Training hyperparameters for \textit{Highway-Env} expert agents trained with PPO.}
\begin{tabular}{|l|l|l|l|}
\hline
\multicolumn{4}{|c|}{Hyperparameters for Highway-Env: PPO} \\ \hline
Environment          & merge-v0   & highway-fast-v5  & roundabout-v5 \\ \hline
Algorithm            & \multicolumn{3}{c|}{PPO} \\ \hline
Network Arch.        & \multicolumn{3}{c|}{3-layer MLP} \\ \hline
Hidden size          & 256      & 256      & 256      \\ \hline
Training Timesteps   & 1e5      & 1e5      & 2e5      \\ \hline
Num. Parallel Env.   & 64        & 64        & 64        \\ \hline
Batch Size           & 32      & 32      & 32      \\ \hline
Learning Rate        & 5e-4     & 5e-4     & 5e-4     \\ \hline
Num. Epochs        & 10     & 10     & 10     \\ \hline
Gamma                & 0.8     & 0.8    & 0.8     \\ \hline
Ent. Coeff           & 0.0   & 0.0   & 0.0  \\ \hline
\multicolumn{1}{|l|}{Obs. Normalization}     & \multicolumn{3}{|l|}{false}     \\ \hline
\multicolumn{1}{|l|}{Reward Normalization}     & \multicolumn{3}{|l|}{false}     \\ \hline
\end{tabular}
%\vspace{0.5em}
\label{tab:highway-env-hyperparameters}
\end{table}

\begin{table}[htbp]
\centering
\caption{RL-Training hyperparameters for \textit{Metaworld} expert agents trained with SAC.}
\begin{tabular}{|l|l|}
\hline
\multicolumn{2}{|c|}{Hyperparameters for Metaworld Experiments: SAC} \\ \hline
Environment          & Ant-v5   \\ \hline
Algorithm            & SAC      \\ \hline
Network Arch.        & 3-layer MLP \\ \hline
Hidden size          & 256      \\ \hline
Training Timesteps   & 1e6      \\ \hline
Num. Parallel Env.   & 1        \\ \hline
Batch Size           & 256      \\ \hline
Learning Rate        & 3e-4     \\ \hline
Gamma                & 0.99     \\ \hline
Ent. Coeff          & ``auto''  \\ \hline
Obs. Normalization  & false    \\ \hline
Reward Normalization & false    \\ \hline
\end{tabular}
\label{tab:metaworld_hyperparams_sac}
\end{table}

Finally, in~\autoref{tab:expert_policy_perfs}, we report the final evaluation returns by the trained expert policies (no-deterministic sampling, ten episodes). We ran five separate seeds for each environment. We selected the four best-performing seeds as part of the model ensemble for simulated feedback.

\begin{table}[thbp]
\centering
\caption{Sorted expert evaluation scores for selected environments, with averages. We only retain the top four expert models to generate synthetic feedback.}
\begin{tabular}{|c|c|c|c|c|c|}
\hline
HalfCheetah-v5 & Walker2d-v5 & Swimmer-v5 & Ant-v5 & Hopper-v5 & Humanoid-v5 \\
\hline
\textbf{5549.8062} & \textbf{5860.0417} & \textbf{359.1049} & \textbf{4306.0373} & \textbf{3556.5895} & \textbf{6254.5279} \\
\textbf{5254.2713} & \textbf{5071.9133} & \textbf{355.3227} & \textbf{4132.0158} & \textbf{3268.0585} & \textbf{5893.5920} \\
\textbf{5124.0117} & \textbf{4018.3773} & \textbf{320.6619} & \textbf{4122.3491} & \textbf{3250.1369} & \textbf{5853.7701} \\
\textbf{5095.0633} & \textbf{3717.2990} & \textbf{317.2805} & \textbf{3989.7485} & \textbf{3205.6200} & \textbf{5682.9198} \\
4907.9425 & 3098.4732 & 295.3936 & 3601.7154 & 3191.8214 & 5403.7185 \\
\hline
5186.2190 & 4353.2209 & 329.5527 & 4030.3732 & 3294.4453 & 5817.7057 \\
\hline
\end{tabular}
\label{tab:expert_policy_perfs}
\end{table}

\clearpage

\subsection{Visualizing the Artificial Noise in Generated Feedback}
Finally, we want to visualize the effect of introduced noise on the generated feedback. \autoref{fig:orig_vs_noisy_rewards} and~\autoref{fig:orig_vs_noisy_rewards} shows the effect of truncated noise added onto the reward distribution. \autoref{fig:descriptive_noise_impact} shows the impact of noise on the descriptive feedback. We showcase the adaptations for one of the datasets in the \textit{HalfCheetah-v5} environment. 

\begin{figure}[h]
    \centering
    \includegraphics[width=1\linewidth]{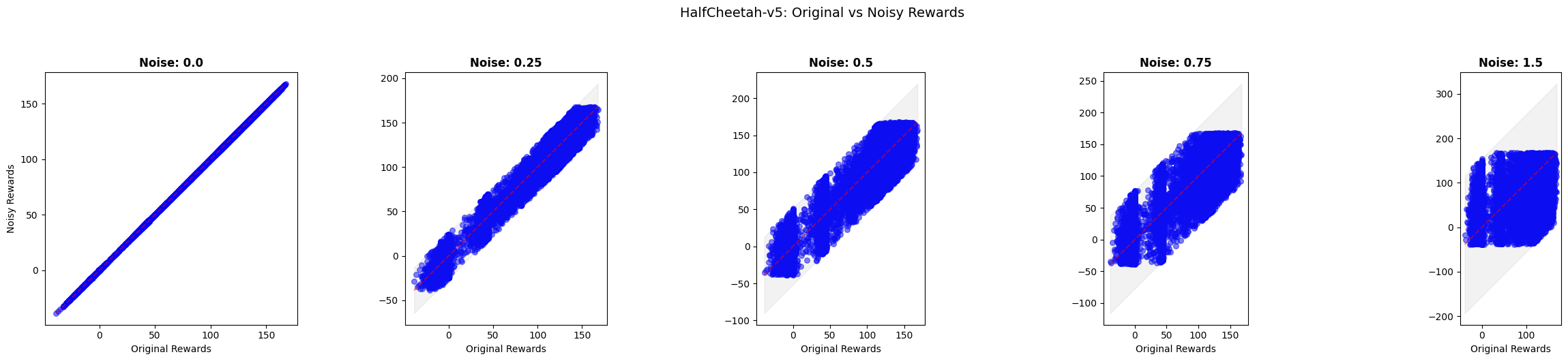}
    \caption{Scatterplot displaying original and perturbed rewards: With increasing noise, the underlying reward distribution gets perturbed.}
    \label{fig:orig_vs_noisy_rewards}
\end{figure}

\begin{figure}[h]
    \centering
    \includegraphics[width=1\linewidth]{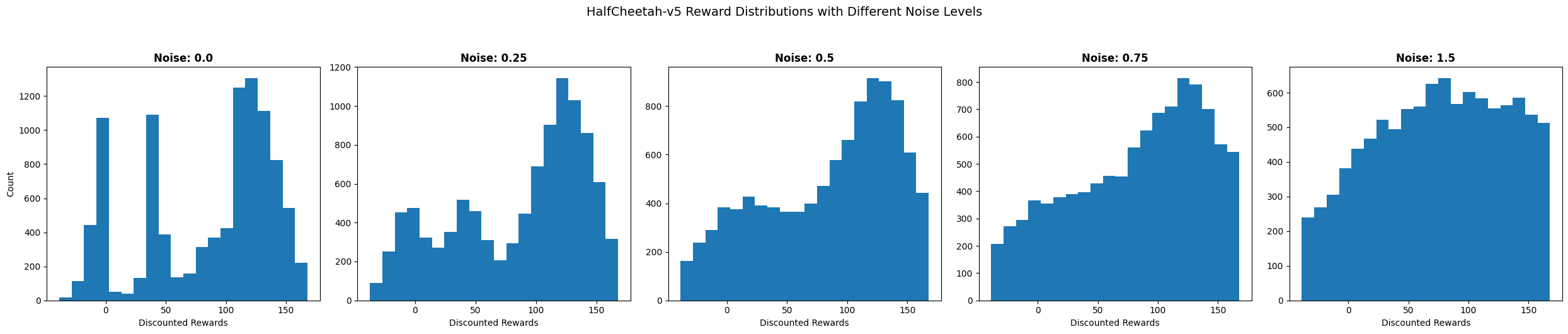}
    \caption{Histogram displaying original and perturbed rewards: With increasing noise, the underlying reward distribution gets perturbed.}
    \label{fig:orig_vs_noisy_rewards_hist}
\end{figure}

\begin{figure}[h]
    \centering
    \includegraphics[width=1\linewidth]{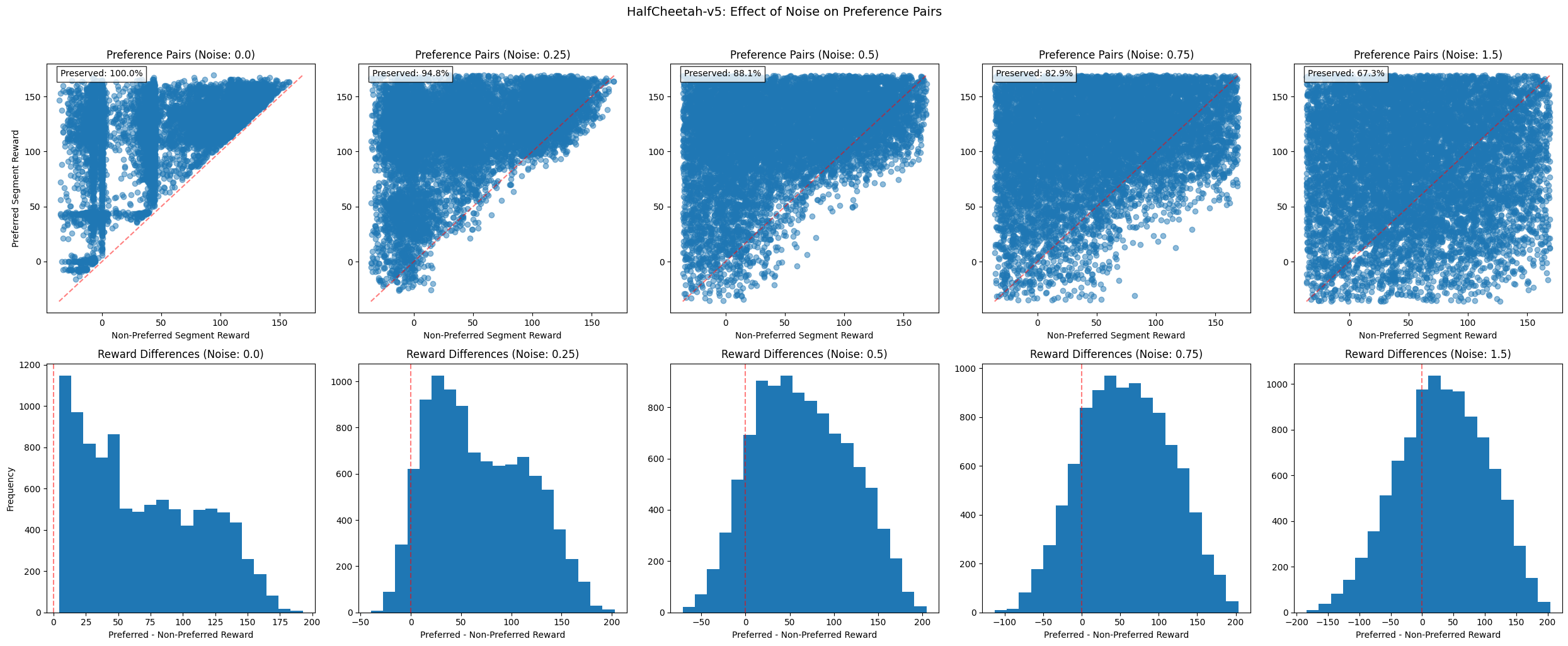}
    \caption{Scatterplot displaying the rewards of \textbf{preference pairs}: For optimal feedback, pairs always have a perfect relationship between good and bad reward. For increased noise, more and more pairs get flipped, i.e., indicate a wrong preference w.r.t. the ground-truth reward function.}
    \label{fig:orig_vs_noisy_rewards_hist}
\end{figure}

Our library includes utilities to investigate and visualize the details of the datasets, including the perturbed data. We argue that this kind of transparency is crucial for reproducibility.

As a future extension of our work, we may develop common reporting standards for the content of feedback datasets, including sample and rating distributions, as well as underlying assumptions.

\begin{figure}[h]
    \centering
    \includegraphics[width=1\linewidth]{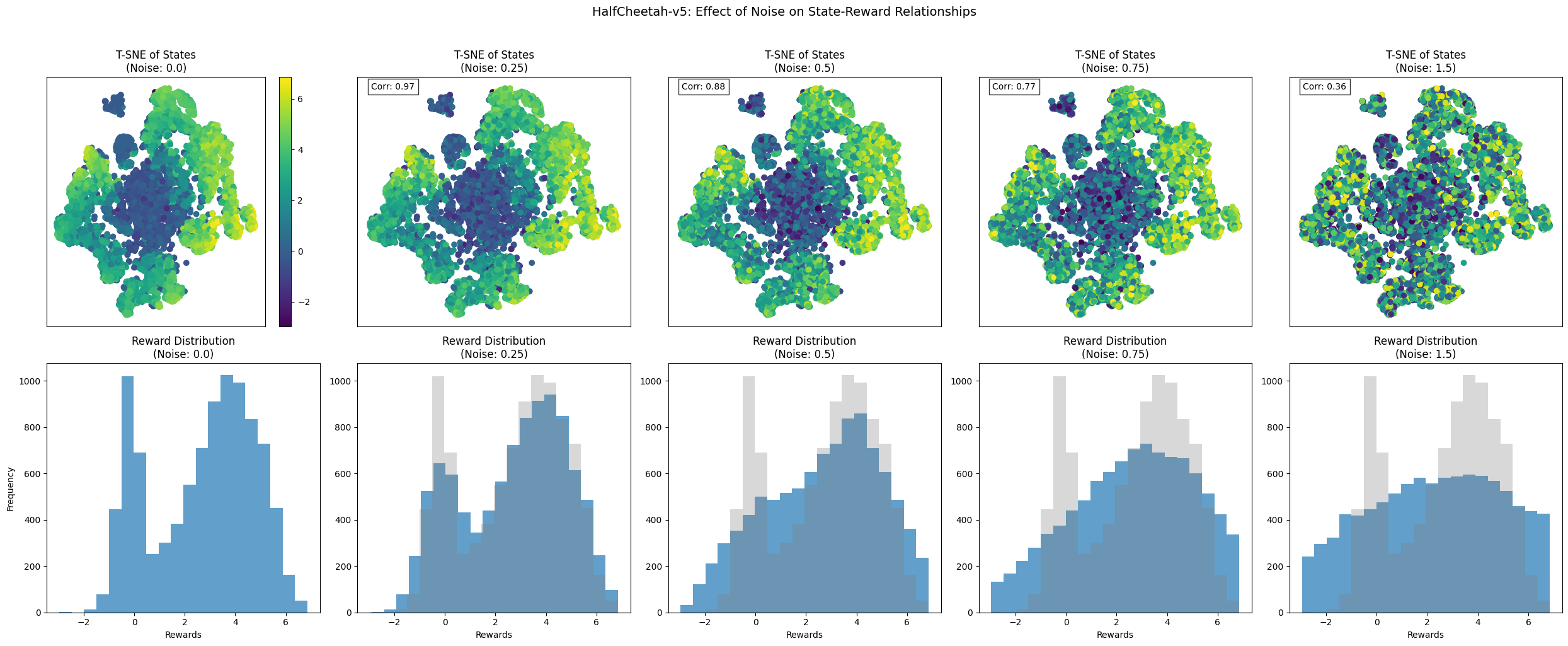}
    \caption{Displaying the effect of artificial noise on \textbf{descriptive feedback}: The assigned cluster reward values are perturbed by the introduced noise.}
    \label{fig:descriptive_noise_impact}
\end{figure}

\section{Experiments with Simplified Rewards}
\label{app_sec:simplified_rewards}
In preparation for the main experiments reported in this paper, we ran trials using a method that used the reward predictions for each step separately. In this setup, we adopted reward models trained from five types (evaluative, comparative, descriptive, corrective, and demonstrative) of synthetically generated feedback to provide dense rewards to an agent trained using the SAC algorithm in the Half Cheetah v3 environment. To train the reward models, we employed a slightly modified version of the training procedure described earlier, where we used feedback given on single steps as inputs to the neural network that was trained to predict a reward for that step. In the case of the runs using ensemble networks, the individual model outputs were averaged to get the final reward for the current step.

\begin{figure}[tbh]
    \centering
    \begin{subfigure}[b]{0.49\textwidth}
        \centering
        \includegraphics[width=\textwidth]{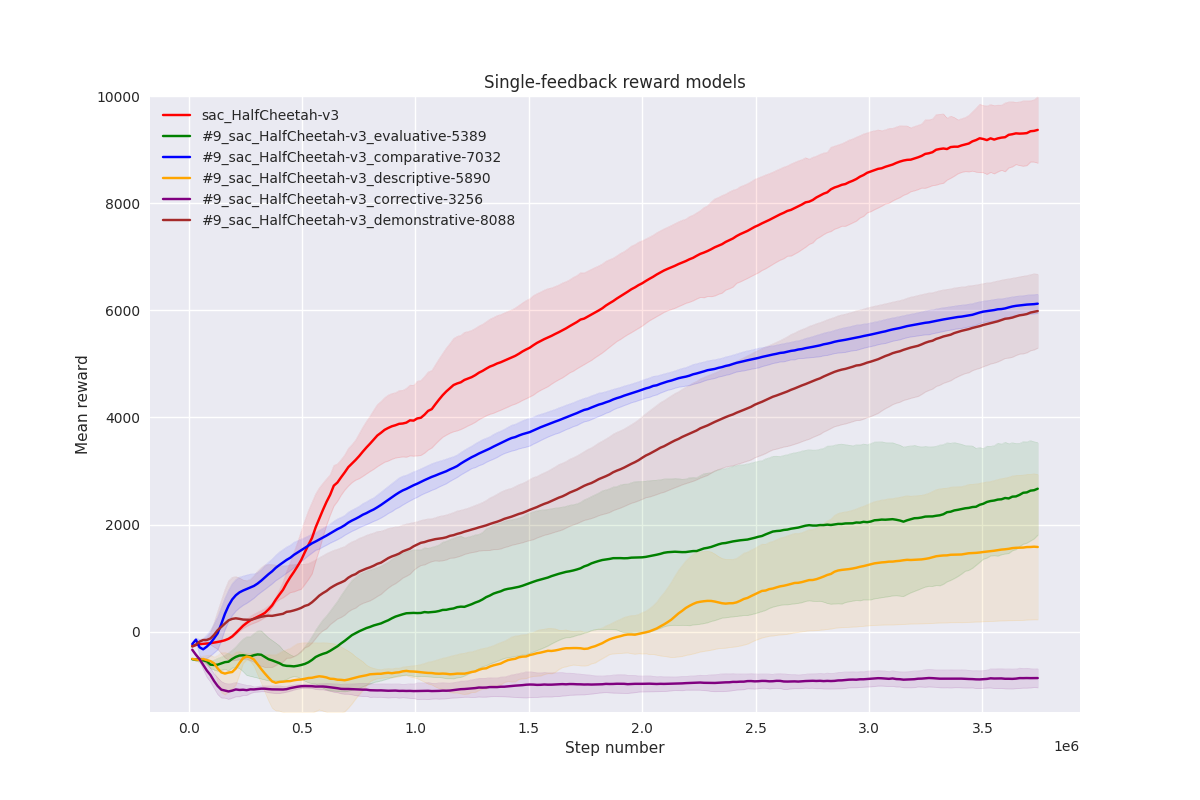}
        \caption{Simple reward networks}
    \end{subfigure}
    \hfill
    \begin{subfigure}[b]{0.49\textwidth}
        \centering
        \includegraphics[width=\textwidth]{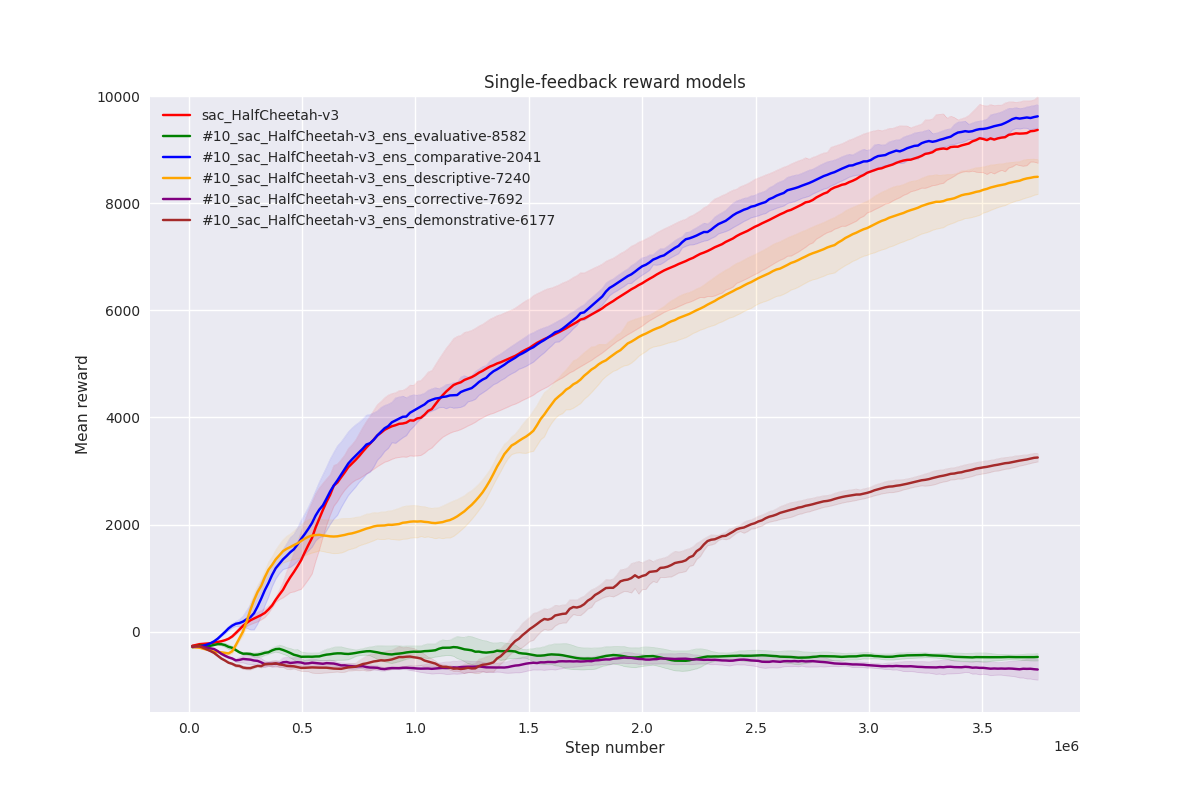}
        \caption{Ensemble reward networks}
    \end{subfigure}
    \caption{RL Training Curves for training from different individual feedback reward models (displayed are the ground-truth rewards). The line labeled Expert represents the performance of the expert policies used for generating feedback.}
    \label{fig:single_reward_rewards_ensemble}
\end{figure}

\subsection{Using Reward Function Ensembles}
\label{app_sub_sec:rew_funct_ensembles}
The summary of mean reward curves from 4 training runs is shown in figure \ref{fig:single_reward_rewards_ensemble}. First, we compare the training of the simple reward model networks (left) to those trained using ensembles (right). In these figures, we can see that, in some cases, using ensembles dramatically improves performance. For example, after introducing ensembles to the comparative and descriptive feedback models, their performance reached that of the agent trained using environment rewards (labeled \emph{Expert}), and the standard deviation of the rewards achieved during the four runs also decreased. In other cases, such as for evaluative, demonstrative, and corrective feedback models, the performance of the trained RL agent decreased.

\subsection{Combination of feedback types by averaging}
\label{app_sec:combining_feedback_types}
In addition to using the feedback models by themselves, we can combine them by averaging their reward predictions. Additionally, since the magnitudes of these outputs are not on the same scale and can vary, we also need to normalize them before averaging. We keep a rolling mean and standard deviation for this and update them using Welford's algorithm \citep{welford}.

\begin{figure}[tbh]
    \centering
    \begin{subfigure}[b]{0.49\textwidth}
        \centering
        \includegraphics[width=\textwidth]{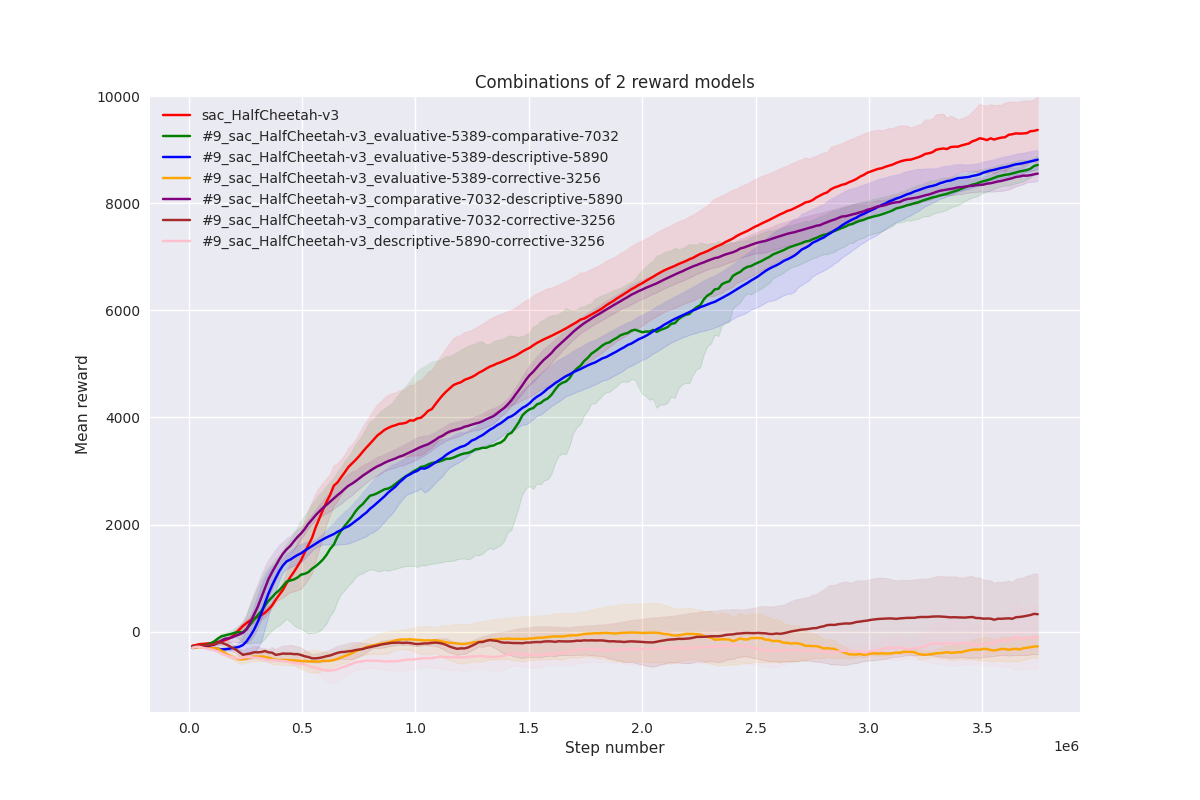}
        \caption{Combination of 2 feedback types}
    \end{subfigure}
    \hfill
    \begin{subfigure}[b]{0.49\textwidth}
        \centering
        \includegraphics[width=\textwidth]{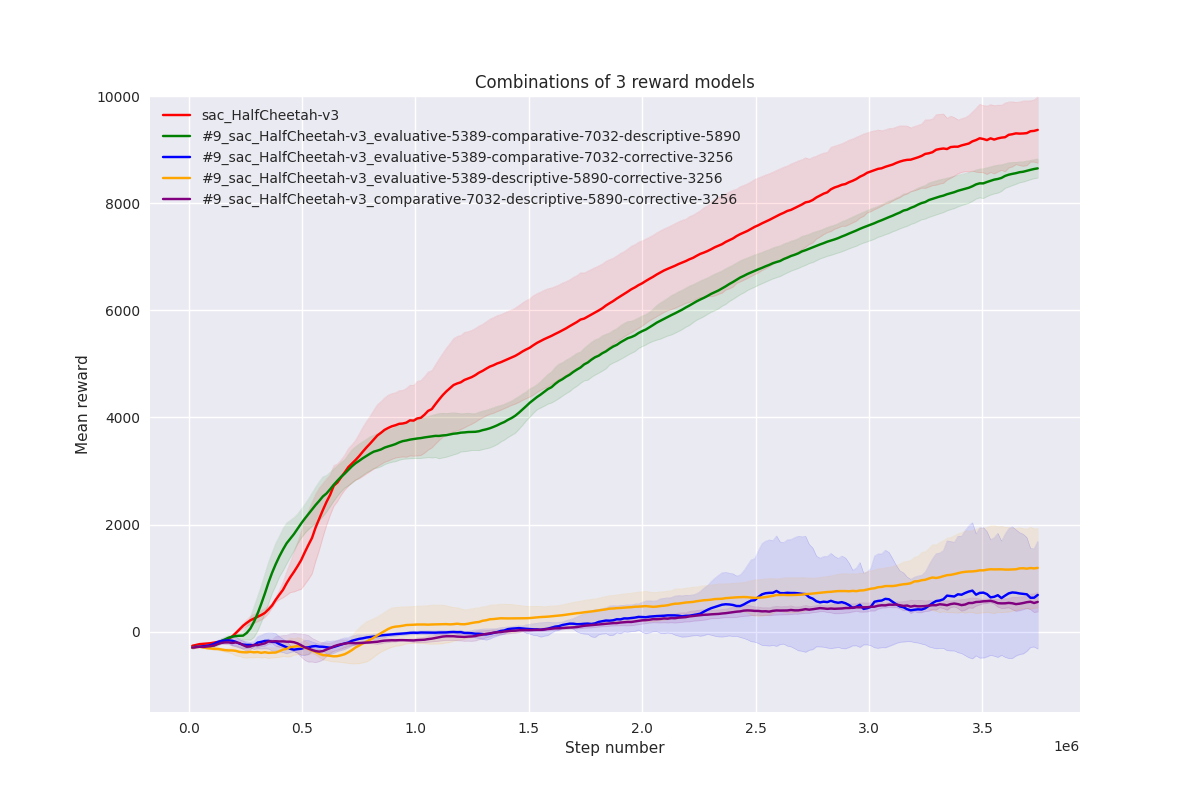}
        \caption{Combination of 3 feedback types}
    \end{subfigure}
    \vskip\baselineskip
    \begin{subfigure}[b]{0.49\textwidth}
        \centering
        \includegraphics[width=\textwidth]{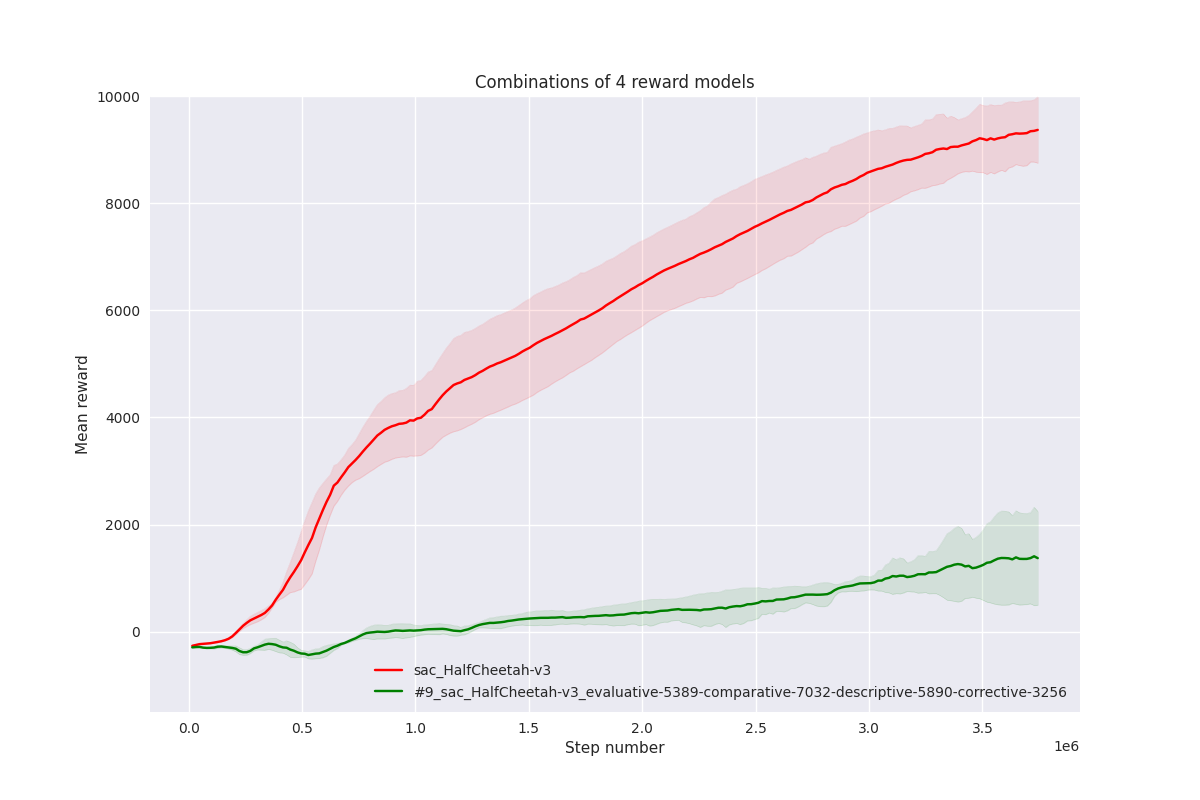}
        \caption{Combination of 4 feedback types}
    \end{subfigure}
    \caption{RL Training Curves for training from different combined feedback reward models (displayed are the ground-truth rewards). The line labeled as \emph{sac\_halfCheetah-v3} represents the performance of the expert policies used for generating feedback.}
    \label{fig:combined_reward_rewards_ensemble}
\end{figure}

The results of runs using different numbers of feedback types are in figure \ref{fig:combined_reward_rewards_ensemble}. From the training sessions using the combination of 2 feedback types, we can see that some significantly improved the performance over agents trained using their feedback components, reaching mean rewards similar to what the expert model obtained while also improving the standard deviation of rewards throughout the different runs. Specifically, all combinations that did not include the corrective feedback type achieved better scores than both of their single feedback components.

We get a similar picture by looking at the figures of combinations of 3 feedback types, where if we do not include the worst-performing corrective feedback, then the rewards obtained by the trained RL agent significantly increase. This might also be the reason for the relatively poor performance of the agent trained with four feedback types, including the corrective feedback.

Overall, combining multiple modalities of feedback, the agents' scores were higher than those obtained using the worst single feedback component - but in many cases, better than all - which indicates that the strengths of these individual modalities are combined through mixed-feedback training.

\newpage

\section{Reward Model Training Details}
\label{app:rew_model_train_details}
In this appendix, we want to discuss details of reward model training, including ablations and hyperparameters settings.

\subsection{Training Details}
Rewards were trained on the collected feedback datasets in a standard, supervised way. For the Mujoco environments, we used a 6-layer MLP with 256 hidden units and ReLu activation function. We use the \textit{Adam} optimizer with a learning rate of $1e-5$ and weight decay $1e-2$. Models are trained for 100 epochs, with early stopping (5 patience epochs) enabled. We use a batch size of 128.

Based on best practices from previous work and our preliminary experiments (see~\autoref{app_sub_sec:rew_funct_ensembles}), we use reward function ensembles instead of single monolithic models and average the predictions of the submodels. We implemented these ensembles by using \textit{Masksembles} layers~\citep{durasov2021masksembles} after each fully-connected layer of the network. Similar to techniques like Monte-Carlo Dropout, a model with \textit{Masksembles} computes multiple predictions of partially overlapping models during inference. We chose a mask number of $4$ with a scale factor of $1.8$, corresponding to four ensemble members.
Using Masksembles reduced the required training compute and parameter count compared to a "true" reward model ensemble while achieving comparable performance in many cases~\citep{durasov2021masksembles, bykovets2022enable}.

\subsection{Learning from Noisy Feedback}
\label{app_subsec:noisy_feedback}
For our baseline experiments, we have analyzed reward modeling and RL performance for optimal feedback (w.r.t. to the expert models)~\citep{griffith_policy_2013}. However, expecting real human feedback to be optimal is a very strong assumption that can generally not be met. We investigated the performance of different feedback types under increasing noise to the generated labels. In order to enable meaningful comparison between different feedback types, we designed a consistent perturbation approach across feedback types:

\paragraph{Basic approach:} We modify the underlying reward distribution by adding Gaussian noise of varying degree. The values for evaluative feedback and cluster description reward are updated accordingly, and preferences are flipped if the rewards change. The level of noise is controlled by single parameter $\beta$. First, the range of the reward distribution is computed, e.g, $[r_{min},r_{max}]$. For evaluative feedback and corrections, this is the range of discounted segment returns; for descriptive feedback this coincides with the range of single cluster rewards. Each single feedback value is then perturbed by additive noise sampled from a truncated gaussian distribution with $\mu = v_{fb}$ and $\sigma = \beta * |r_{max} - r_{min}]|$. 

\paragraph{Truncated Gaussian Distribution:} We use a truncated distribution, to ensure comparability with the binning based approach to generate rating feedback, as well as to respect bounds of observations/actions, which is relevant for demonstrative feedback. Compared to a Gaussian distribution with fixed clipping values, sampling from truncated Gaussian distribution has a lower variance: Based on a brief analysis of both approaches on a subset of our data, we expect that we need to set the standard deviation $sigma$ of the truncated normal distribution $\mathcal{N}_T$ to around four times the standard deviation of an un-truncated normal distribution $\mathcal{N}$ to have a comparable level of perturbation (as measured by the $L2$-norm between original and perturbed value distribution). The $\beta$-values reported in this paper are for the truncated normal distribution, thus for replicating the results with non-truncated gaussian distribution for noise would therefore correspond roughly to a value $\frac{\beta}{4}$.

\paragraph{Rating Feedback} We perturb rating feedback, by adding from truncated Gaussian distribution with mean at the original rating, and the standard deviation controlled by the parameter $\beta \in \mathbb{R}^+$, and truncate the values at the edges of the rating scale. This simulates the effect of a human user "incorrectly" switching between two neighboring ratings.

\paragraph{Comparative, Corrective, Descriptive Preferences} For these preference-based feedback types, we assign labels according to the newly perturbed segment return distribution. In effect, this means that preference pairs with similar performance are more often flipped than examples with a clear difference in performance, which matches our intuition. However, this approach might actually underestimate human error, that can be observed in real labeling tasks due to miss-clicks or miss-understanding.

\paragraph{Demonstrative Feedback} Because we model demonstrative feedback effectively as preference-based feedback, we could also employ label flipping here. However, this does not represent the type of error a human labeler would make. Instead, for demonstrative feedback, noise should occur in sub-optimal demonstrations, i.e., sub-optimal action selection. For the sake of simplicity, being able to add distortion to already collected data, we do \emph{not perform full rollouts} with a distorted policy. Instead we apply additive Gaussian noise on the observations and actions, simulating imperfect demonstrations. We acknowledge that this is a simplistic assumption and will consider further investigations and encourage further investigations to enable more human-aligned perturbations.
%Instead, we keep the set of states fixed and modify the actions with a probability $\beta \in [0,1]$. Specifically, for a fraction of $\beta$ actions within a trajectory, the expert action is replaced by random sampling from the action space, resulting in a random action. We acknowledge that this might be somewhat unrealistic and will consider further investigations and possible changes in the future.

\paragraph{Descriptive Feedback} As descriptive feedback maps to scalar rewards, similar to rating feedback, we use a uniform noise schema for rating feedback. Because there are now fixed bounds for ratings (i.e., 1-10), we infer the feedback range based on the collected segments within the datasets $\mathcal{D}$ and single-step rewards. Feedback scores are then modified by uniform sampling from a truncated normal distribution with width $\beta * |max_\mathcal{D}(r) - min_\mathcal{D}(r)|$, conforming to the existing range of $\beta \in \mathbb{R}^+$. 

To summarize, we can introduce different noise levels to the different feedback types, controlled by a parameter $\beta$, corresponding to a comparable degree of perturbation. For our experiments, we investigate the behavior of reward functions within a range of $\beta$ from $0.1$ to $3.0$, A $\beta$-value of 3.0 already corresponds to a very severe change in distribution.
\newpage

\subsection{Reward Model Training Results}
Shown in~\autoref{fig:all_feedback_types_rew_model_curves_1} and~\autoref{fig:all_feedback_types_rew_model_curves_2}, almost all reward models train effectively with optimal feedback, achieving low validation loss. We conclude that the reward models can fit the reward data to a high degree, and in turn that our chosen architecture and training protocol are generally appropriate for the task. 

\begin{figure}[htbp]
    \centering
    \begin{subfigure}[b]{0.45\textwidth}
        \centering
        \includegraphics[width=\textwidth]{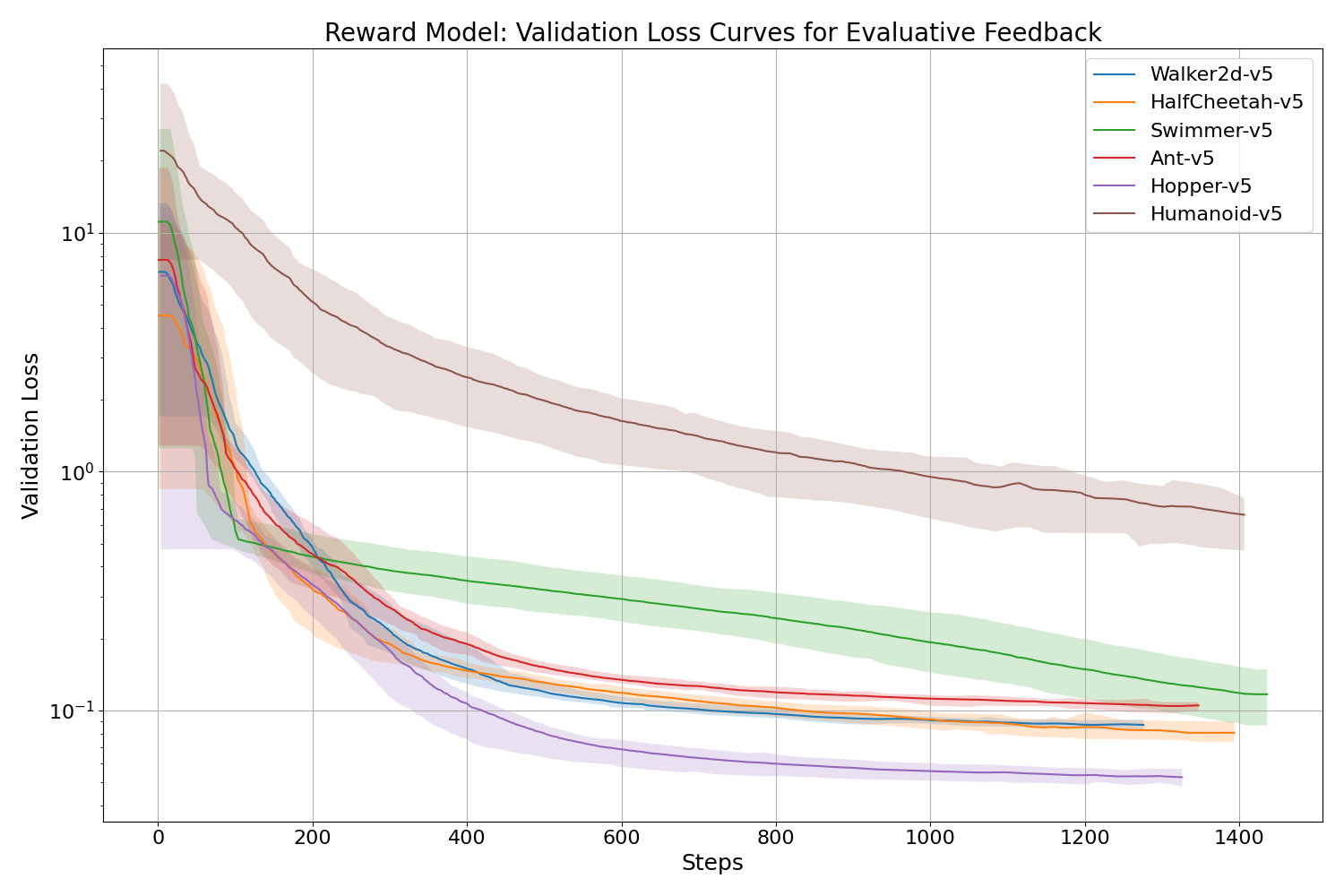}
        \caption{Rating Feedback}
        \label{fig:evaluative}
    \end{subfigure}
    \hfill
    \begin{subfigure}[b]{0.45\textwidth}
        \centering
        \includegraphics[width=\textwidth]{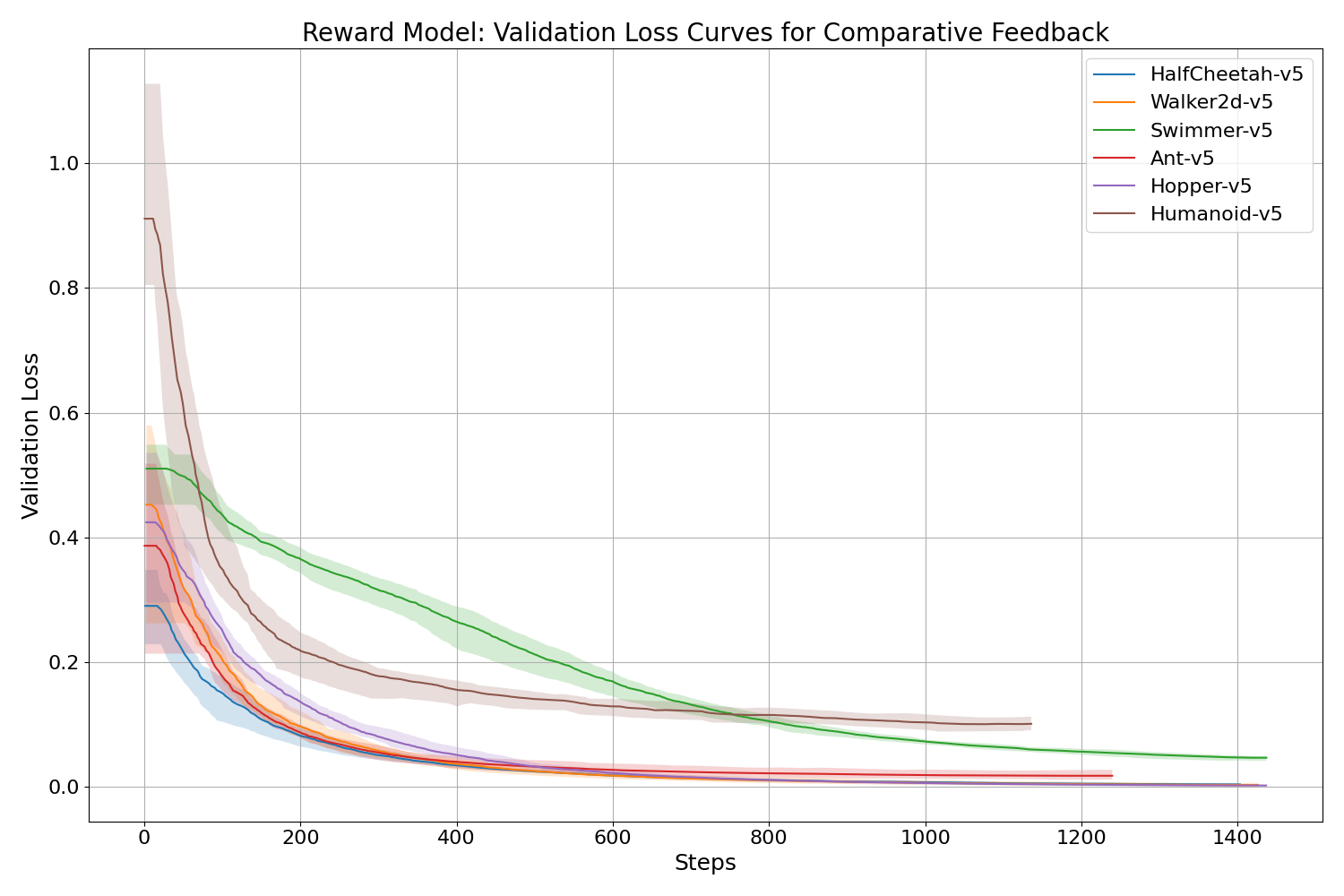}
        \caption{Comparative Feedback}
        \label{fig:comparative}
    \end{subfigure}
    %\vskip\baselineskip
    \begin{subfigure}[b]{0.45\textwidth}
        \centering
        \includegraphics[width=\textwidth]{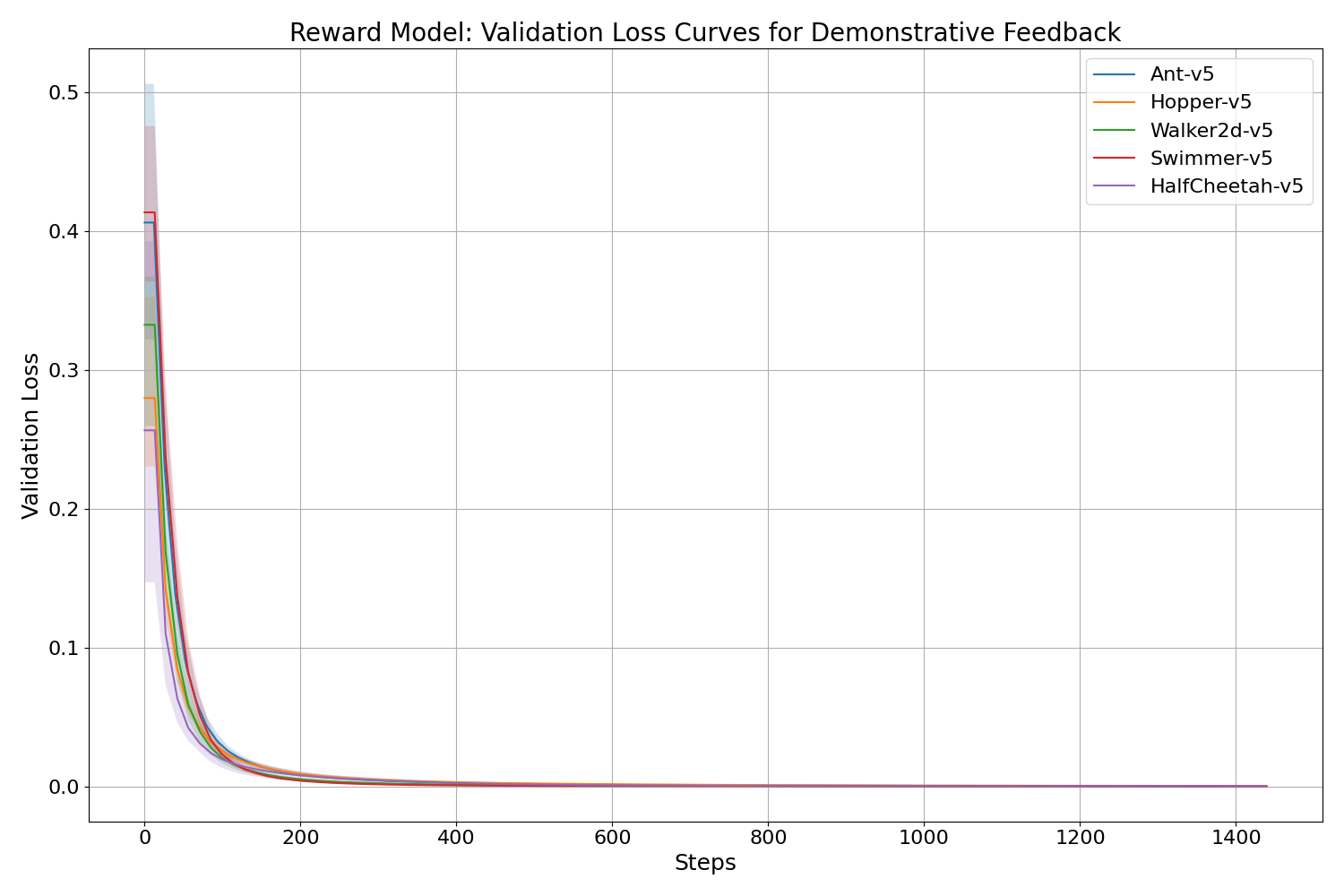}
        \caption{Demonstrative Feedback}
        \label{fig:demonstrative}
    \end{subfigure}
    \hfill
    \begin{subfigure}[b]{0.45\textwidth}
        \centering
        \includegraphics[width=\textwidth]{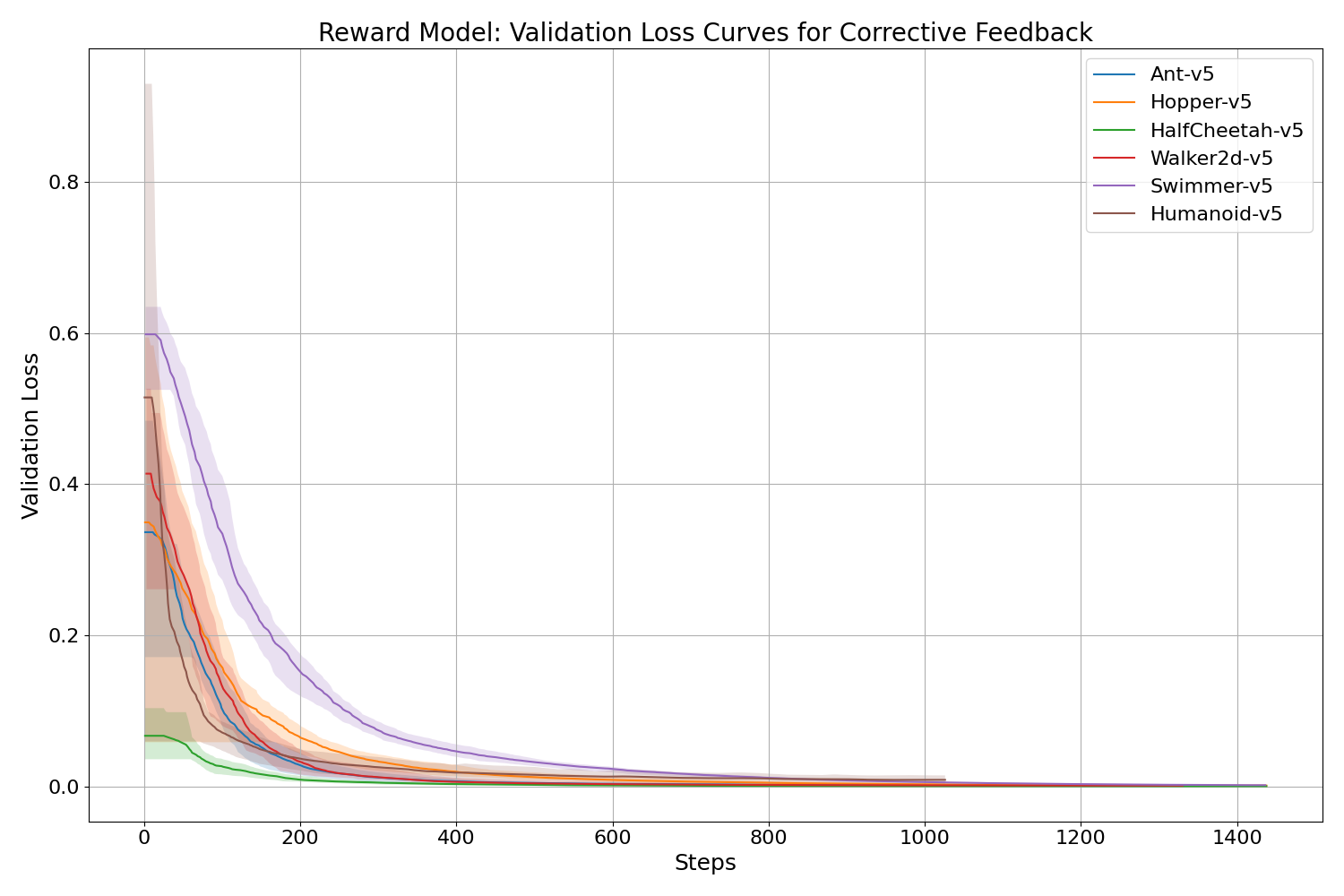}
        \caption{Corrective Feedback}
        \label{fig:comparative}
    \end{subfigure}
    %\vskip\baselineskip
    \begin{subfigure}[b]{0.45\textwidth}
        \centering
        \includegraphics[width=\textwidth]{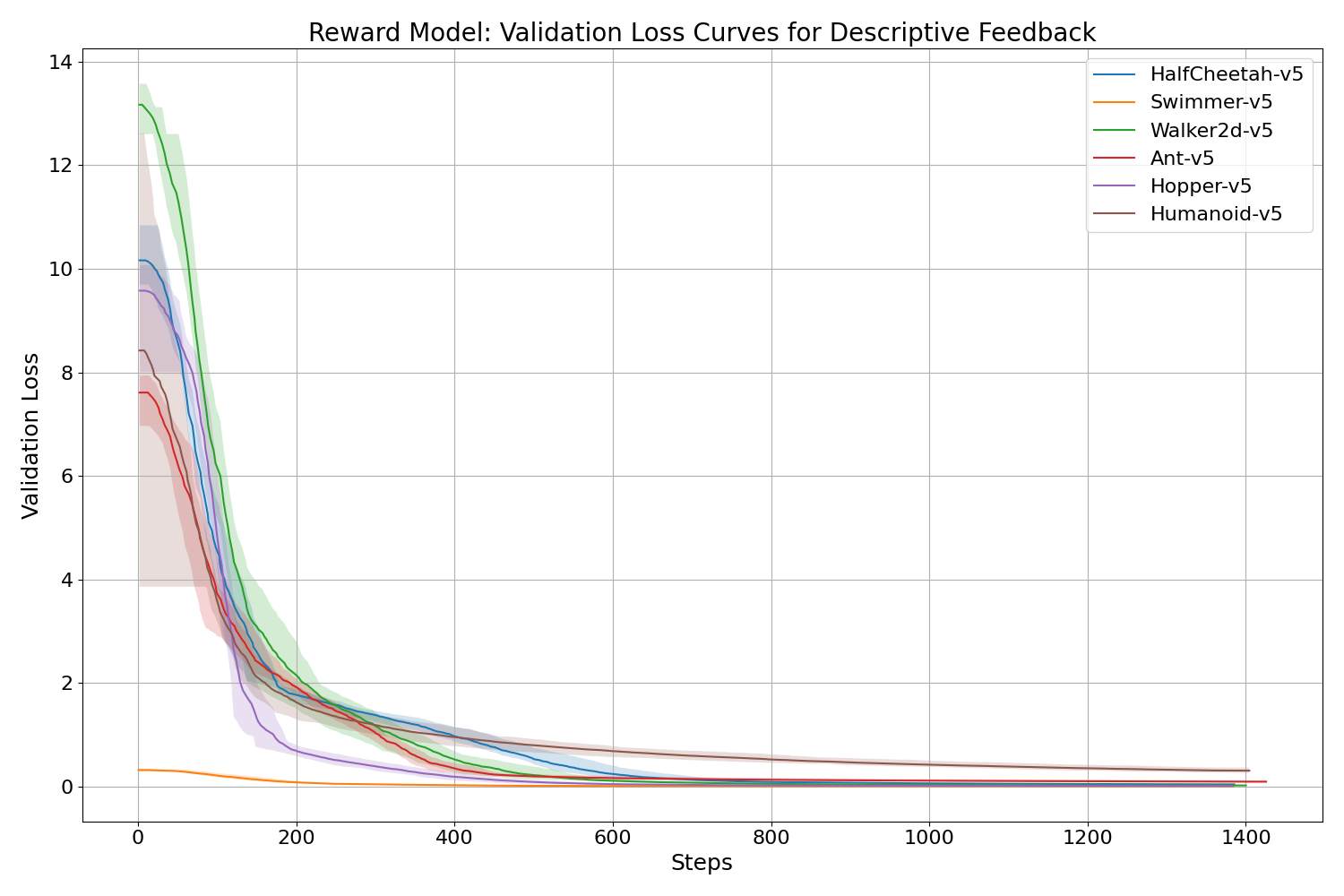}
        \caption{Descriptive Feedback}
        \label{fig:cluster_descr}
    \end{subfigure}
    \hfill
    %\vskip\baselineskip
    \begin{subfigure}[b]{0.45\textwidth}
        \centering
        \includegraphics[width=\textwidth]{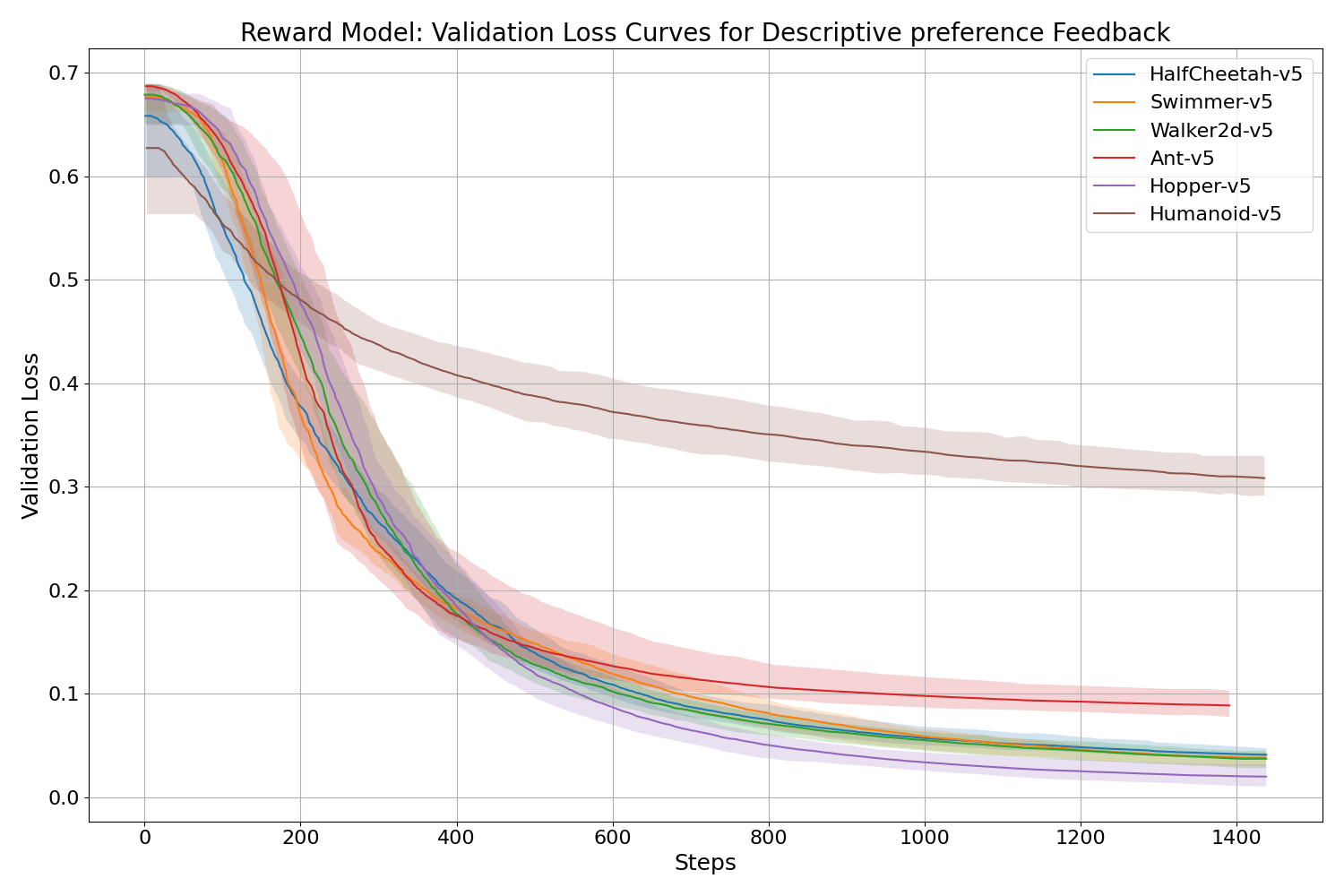}
        \caption{Descriptive Preference Feedback}
        \label{fig:descr_pref}
    \end{subfigure}
    \caption{Validation Loss Curves for Different Feedback Types (\textbf{Mujoco}): Steps is the number of optimizer steps. Averaged over five feedback datasets.}
    \label{fig:all_feedback_types_rew_model_curves_1}
\end{figure}

\begin{figure}[htbp]
    \centering
    \begin{subfigure}[b]{0.45\textwidth}
        \centering
        \includegraphics[width=\textwidth]{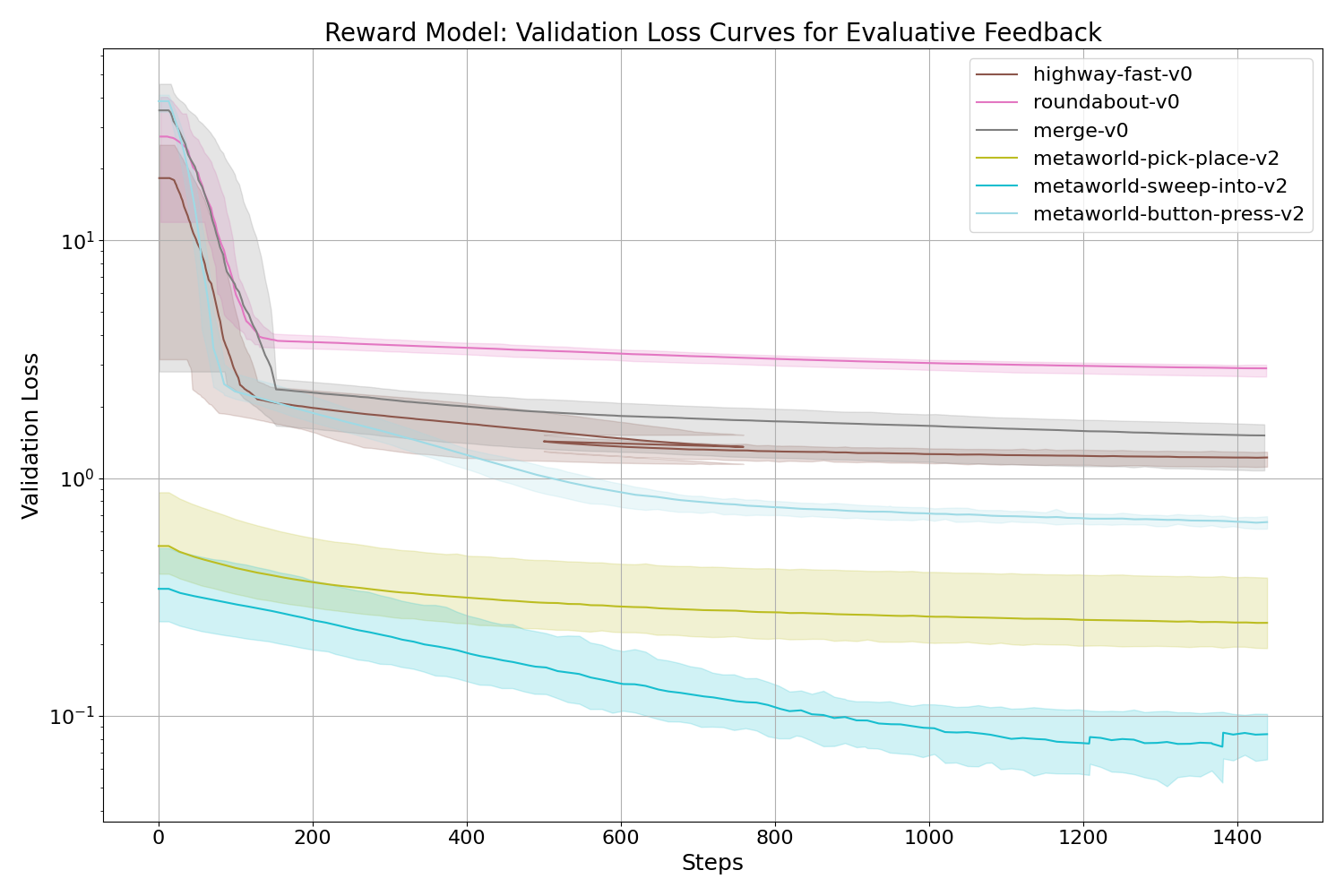}
        \caption{Rating Feedback}
        \label{fig:evaluative}
    \end{subfigure}
    \hfill
    \begin{subfigure}[b]{0.45\textwidth}
        \centering
        \includegraphics[width=\textwidth]{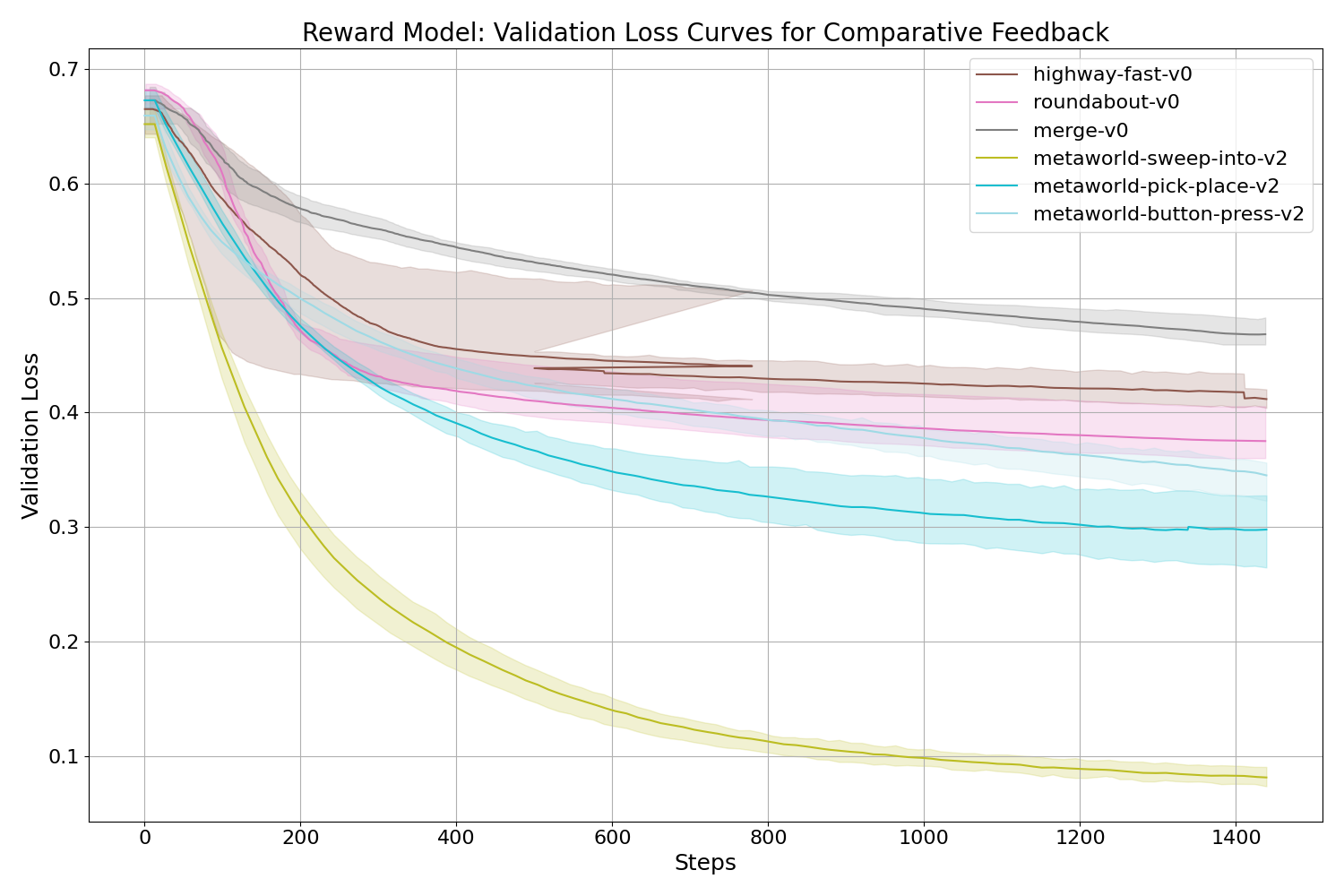}
        \caption{Comparative Feedback}
        \label{fig:corrective}
    \end{subfigure}
    %\vskip\baselineskip
    \begin{subfigure}[b]{0.45\textwidth}
        \centering
        \includegraphics[width=\textwidth]{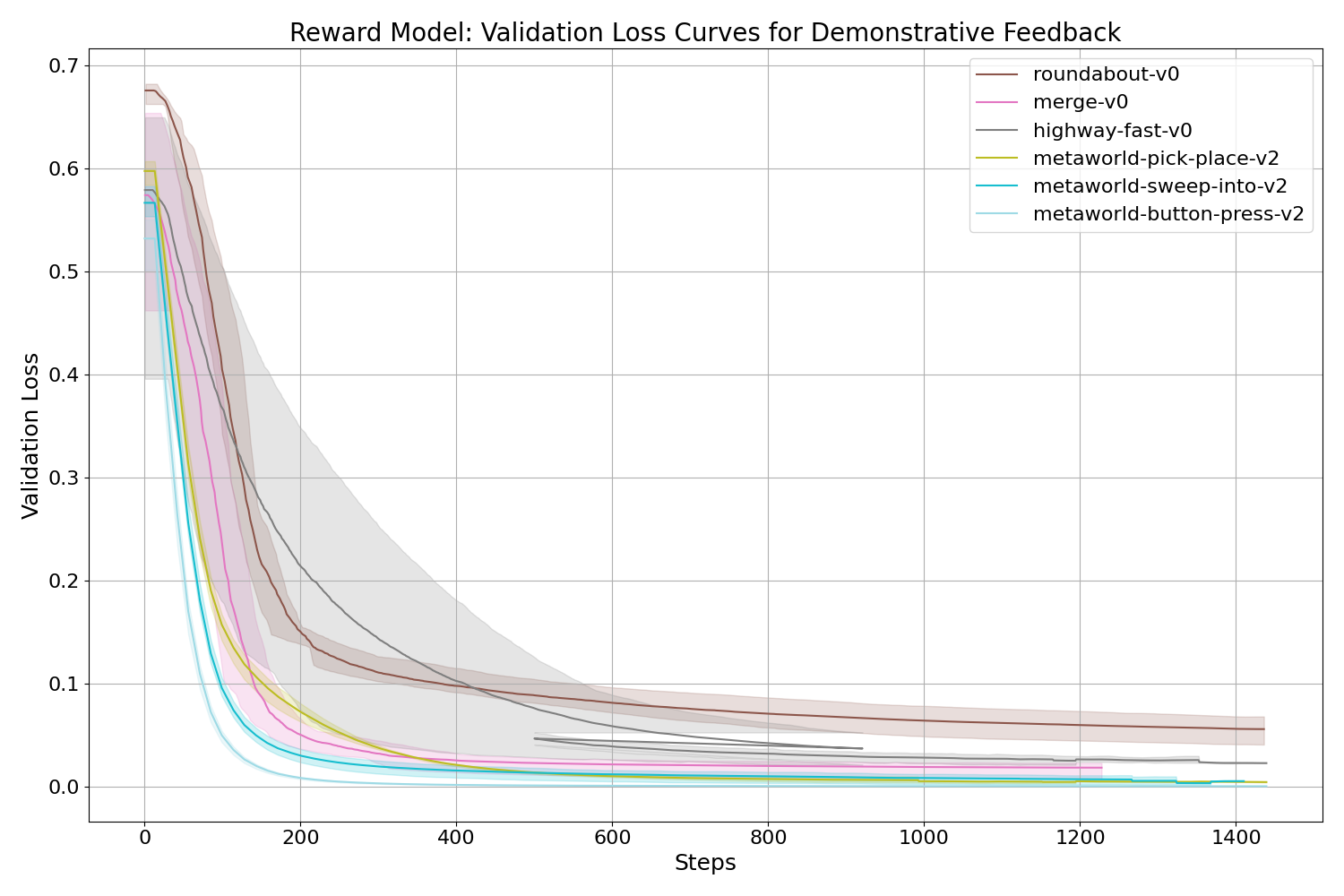}
        \caption{Demonstrative Feedback}
        \label{fig:demonstrative}
    \end{subfigure}
    \hfill
    \begin{subfigure}[b]{0.45\textwidth}
        \centering
        \includegraphics[width=\textwidth]{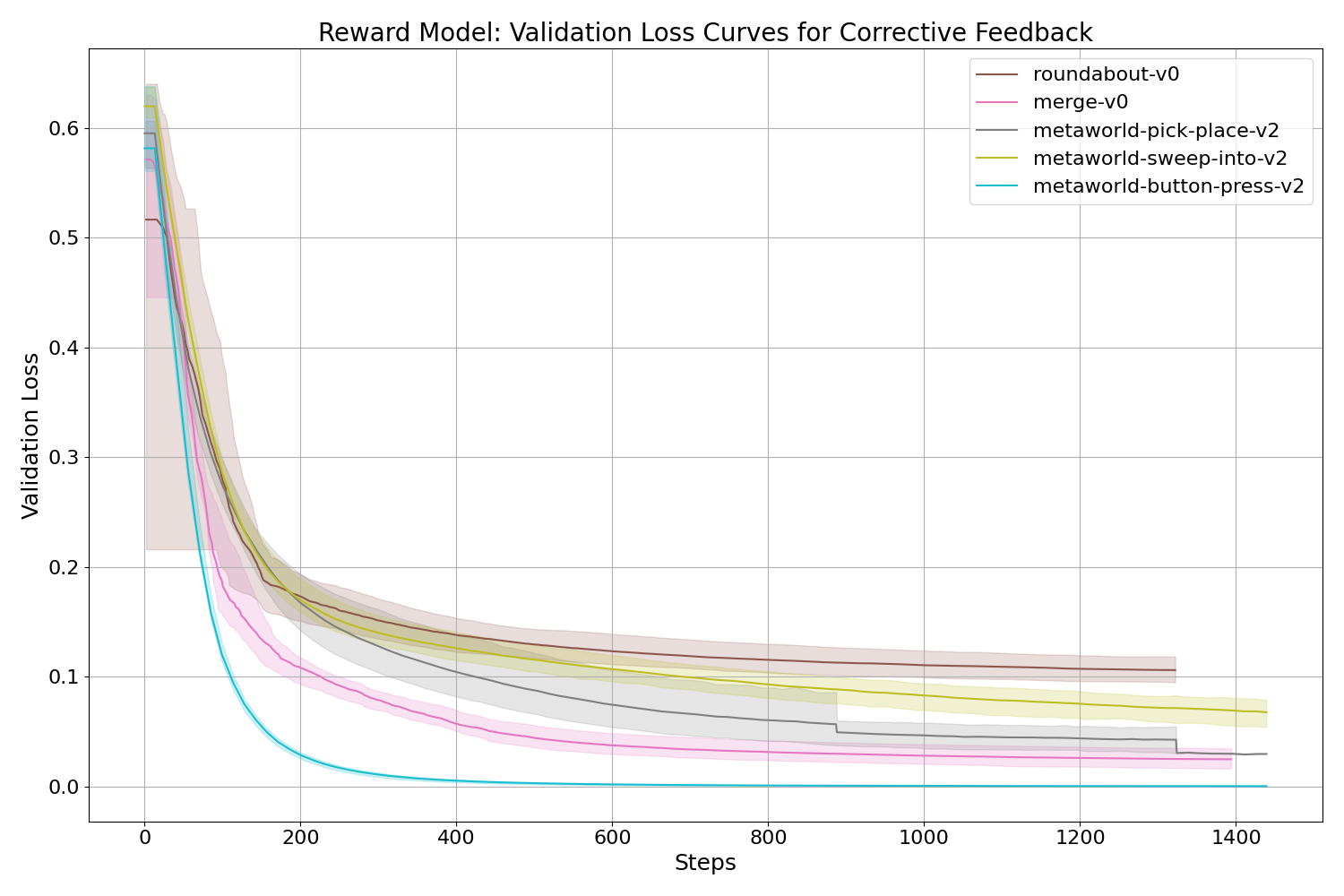}
        \caption{Corrective Feedback}
        \label{fig:comparative}
    \end{subfigure}
    %\vskip\baselineskip
    \begin{subfigure}[b]{0.45\textwidth}
        \centering
        \includegraphics[width=\textwidth]{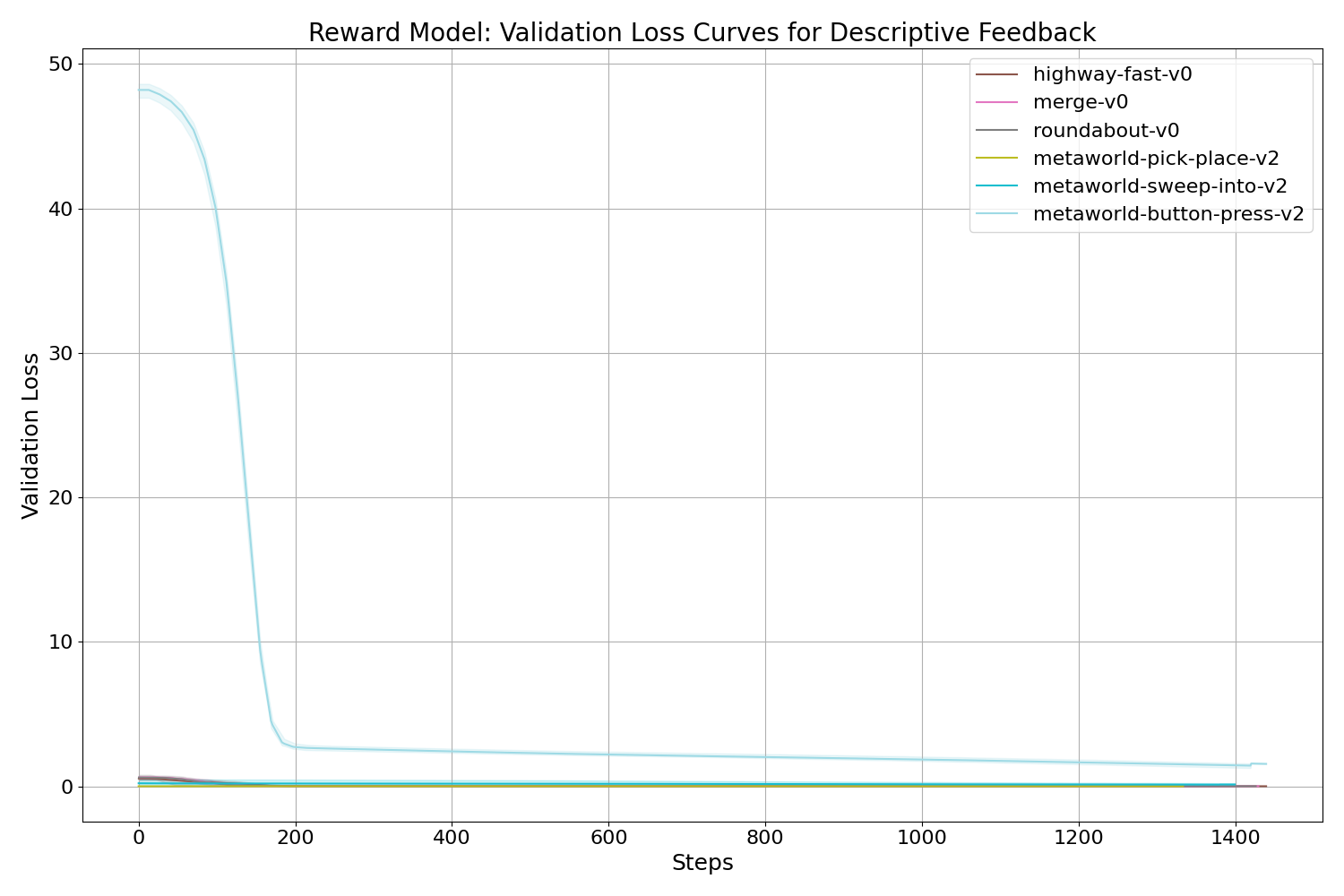}
        \caption{Descriptive Feedback}
        \label{fig:cluster_descr}
    \end{subfigure}
    \hfill
    %\vskip\baselineskip
    \begin{subfigure}[b]{0.45\textwidth}
        \centering
        \includegraphics[width=\textwidth]{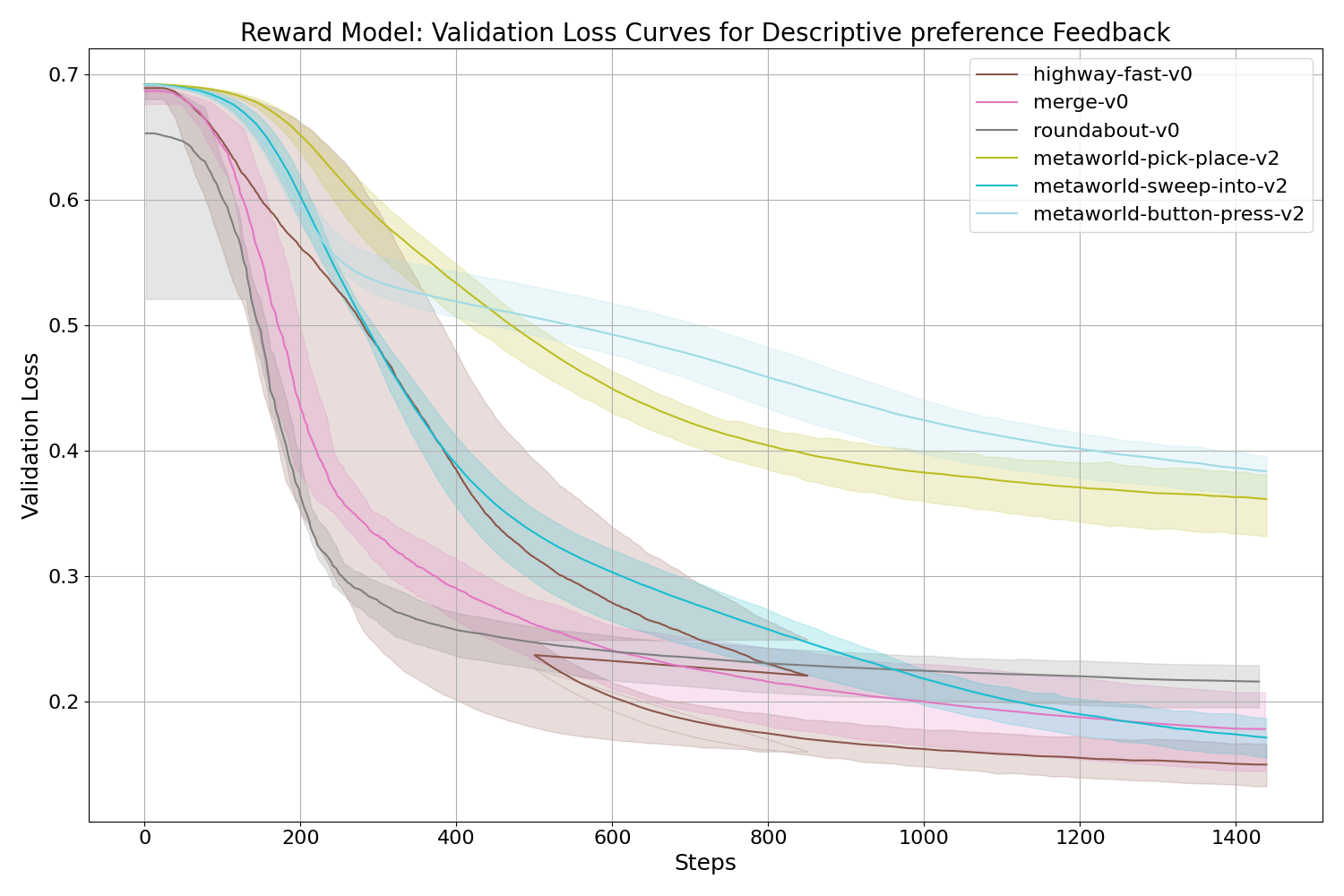}
        \caption{Descriptive Preference Feedback}
        \label{fig:descr_pref}
    \end{subfigure}
    \caption{Continuation of Validation Loss Curves for Different Feedback Types (\textbf{Highway-Env}, \textbf{MetaWorld}): Steps is the number of optimizer steps. Averaged over five feedback datasets.}
    \label{fig:all_feedback_types_rew_model_curves_2}
\end{figure}

When analyzing results for \textit{MetaWorld} and \textit{Highway}, we observe more variance compared to the Mujoco-Envs. 

\newpage

\subsection{Reward Model Training Results with Noise}
\label{app_sub_sec:rew_model_results_noise}
As shown above, with optimal feedback, all reward models train effectively, achieving low validation loss. In the following plots, we show the reward model validation loss for different levels of noise, with $\beta=\{0.,1, 0.25, 0.5, 0.75\}$, averaged across 3 seeds. We can observe, that the introduced noise leads to a clear decrease in validation loss.

\begin{figure}[htbp]
    \centering
    \includegraphics[width=0.7\linewidth]{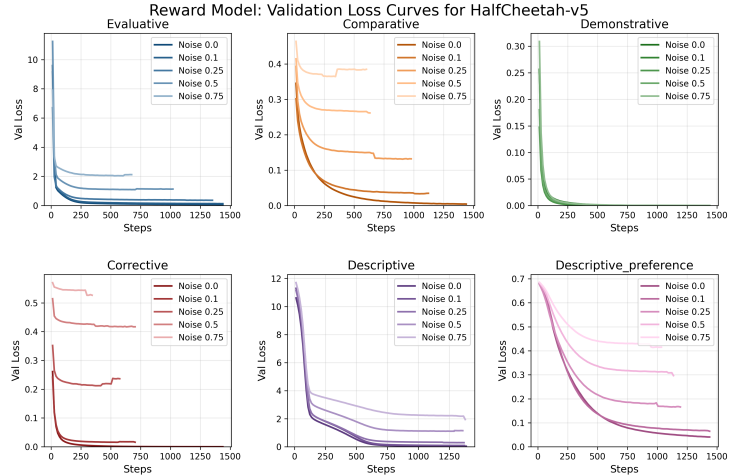}
    \caption{Reward Learning Loss Curve for \textbf{HalfCheetah-v5} at different levels of noise}
    \label{fig:half-cheetah-rew-loss-curves}
\end{figure}

\begin{figure}[htbp]
    \centering
    \includegraphics[width=0.7\linewidth]{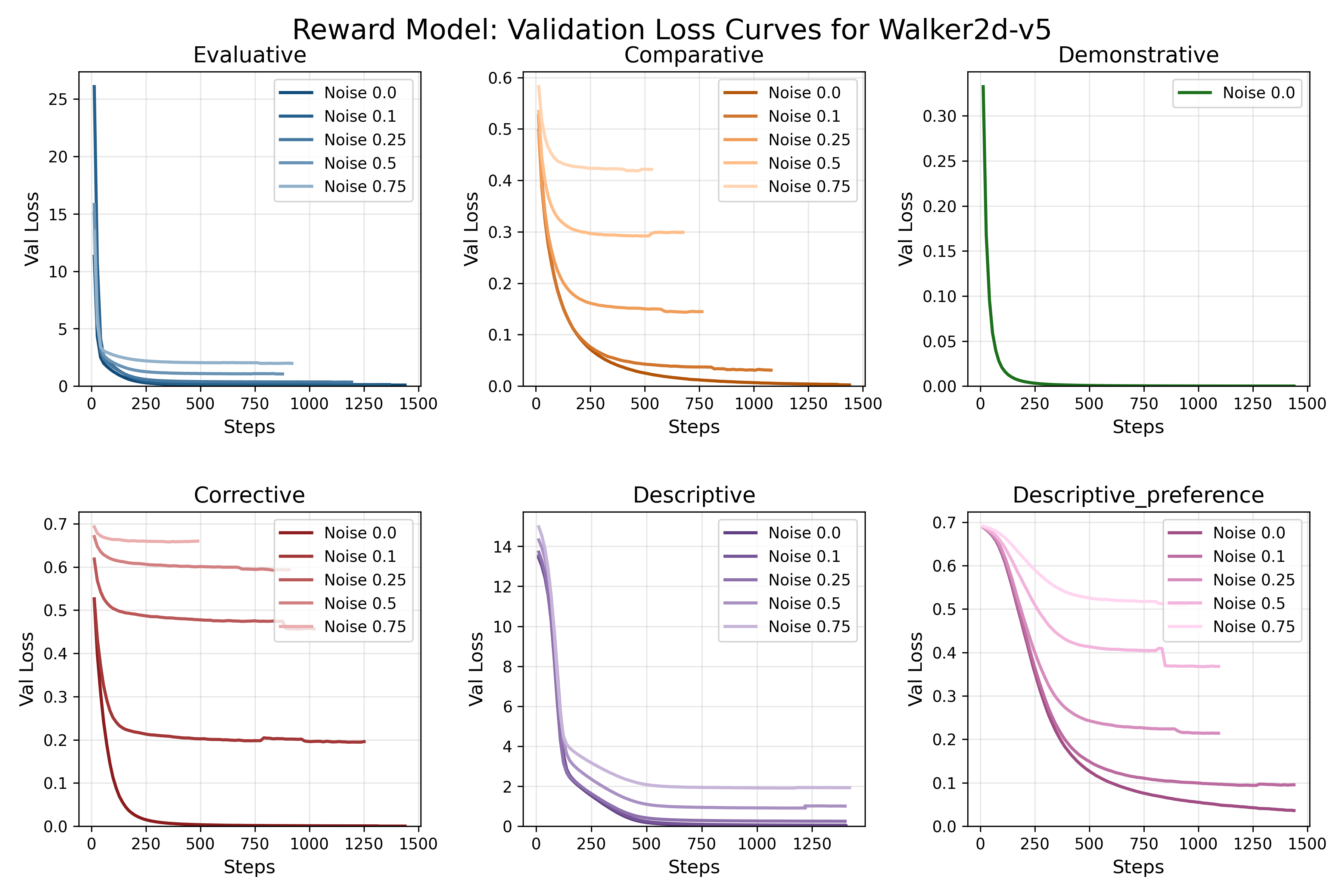}
    \caption{Reward Learning Loss Curve for \textbf{Walker2d-v5} at different levels of noise}
    \label{fig:half-walker-rew-loss-curves}
\end{figure}

\begin{figure}[htbp]
    \centering
    \includegraphics[width=0.7\linewidth]{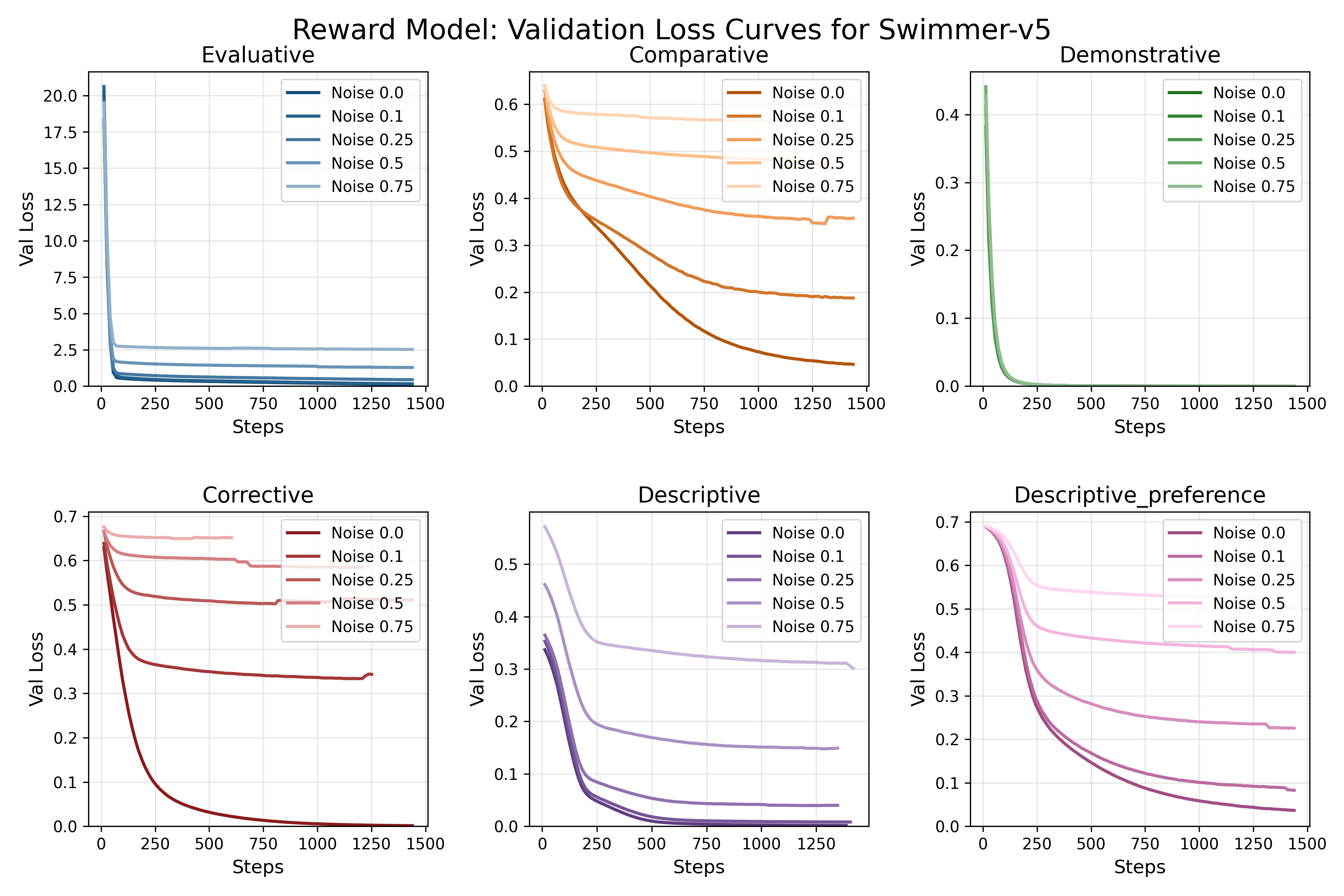}
    \caption{Reward Learning Loss Curve for \textbf{Swimmer-v5} at different levels of noise}
    \label{fig:half-swimmer-rew-loss-curves}
\end{figure}

\begin{figure}[htbp]
    \centering
    \includegraphics[width=0.7\linewidth]{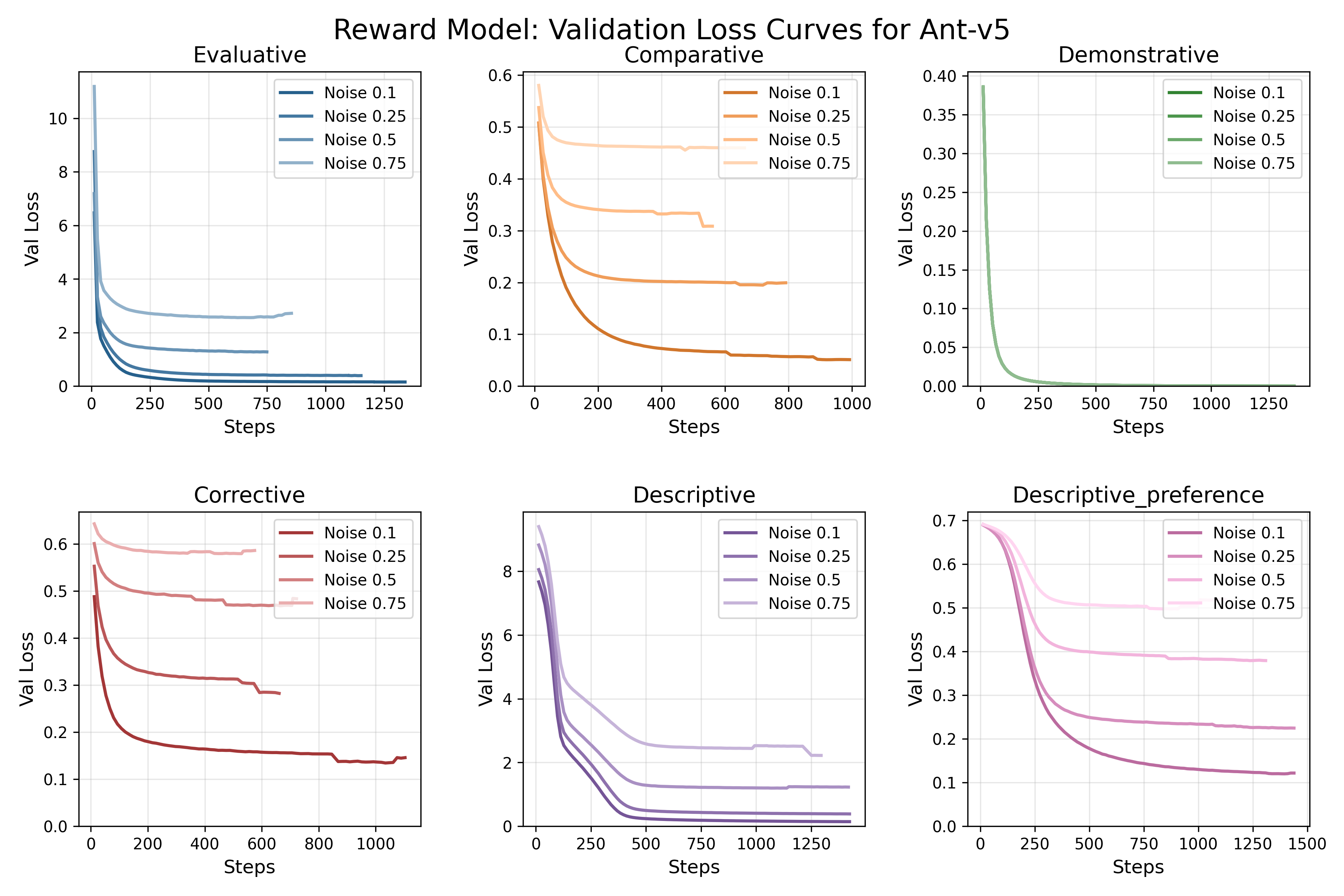}
    \caption{Reward Learning Loss Curve for \textbf{Ant-v5} at different levels of noise}
    \label{fig:half-ant-rew-loss-curves}
\end{figure}

\begin{figure}[htbp]
    \centering
    \includegraphics[width=0.7\linewidth]{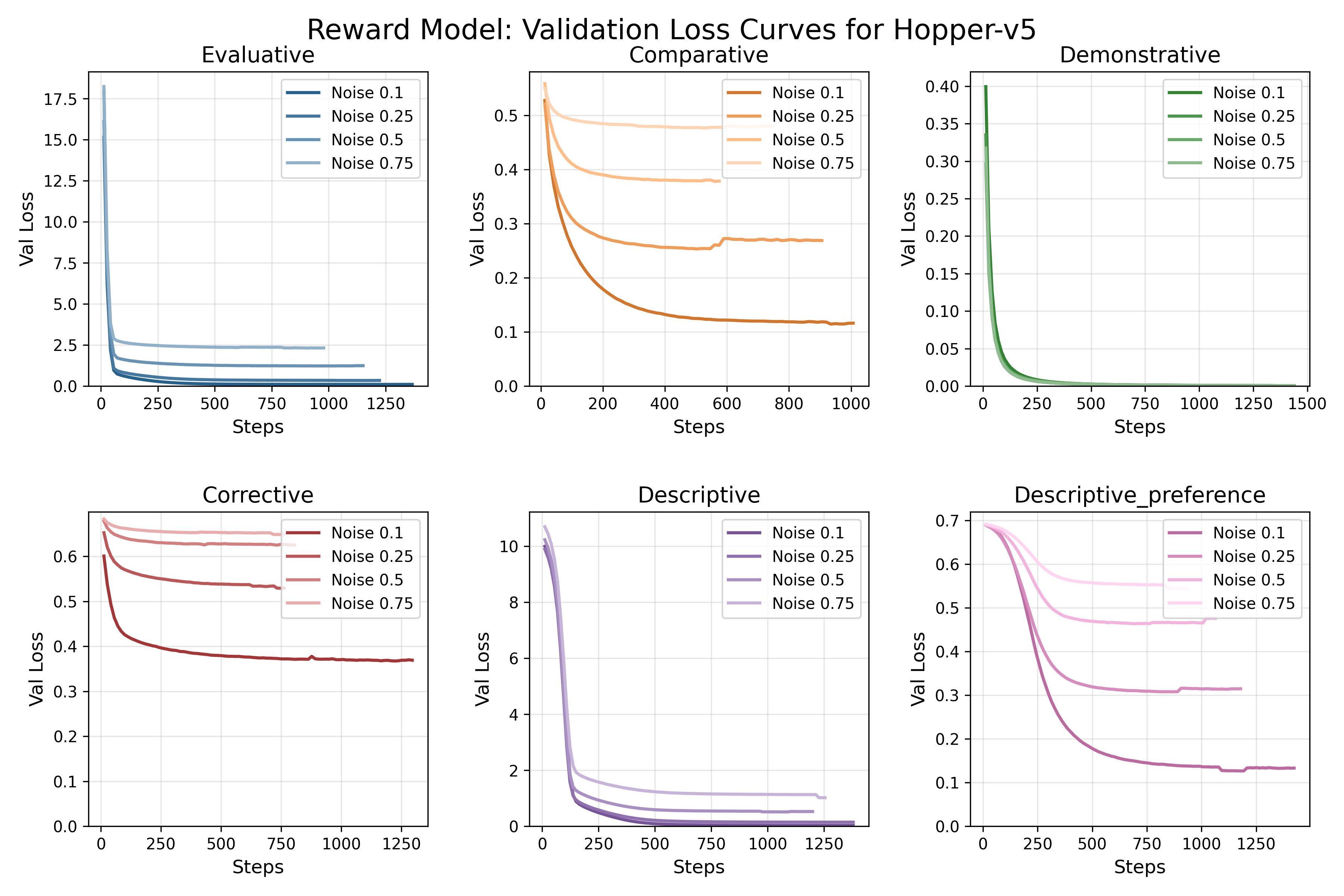}
    \caption{Reward Learning Loss Curve for \textbf{Hopper-v5} at different levels of noise}
    \label{fig:half-hopper-rew-loss-curves}
\end{figure}

\begin{figure}[htbp]
    \centering
    \includegraphics[width=0.7\linewidth]{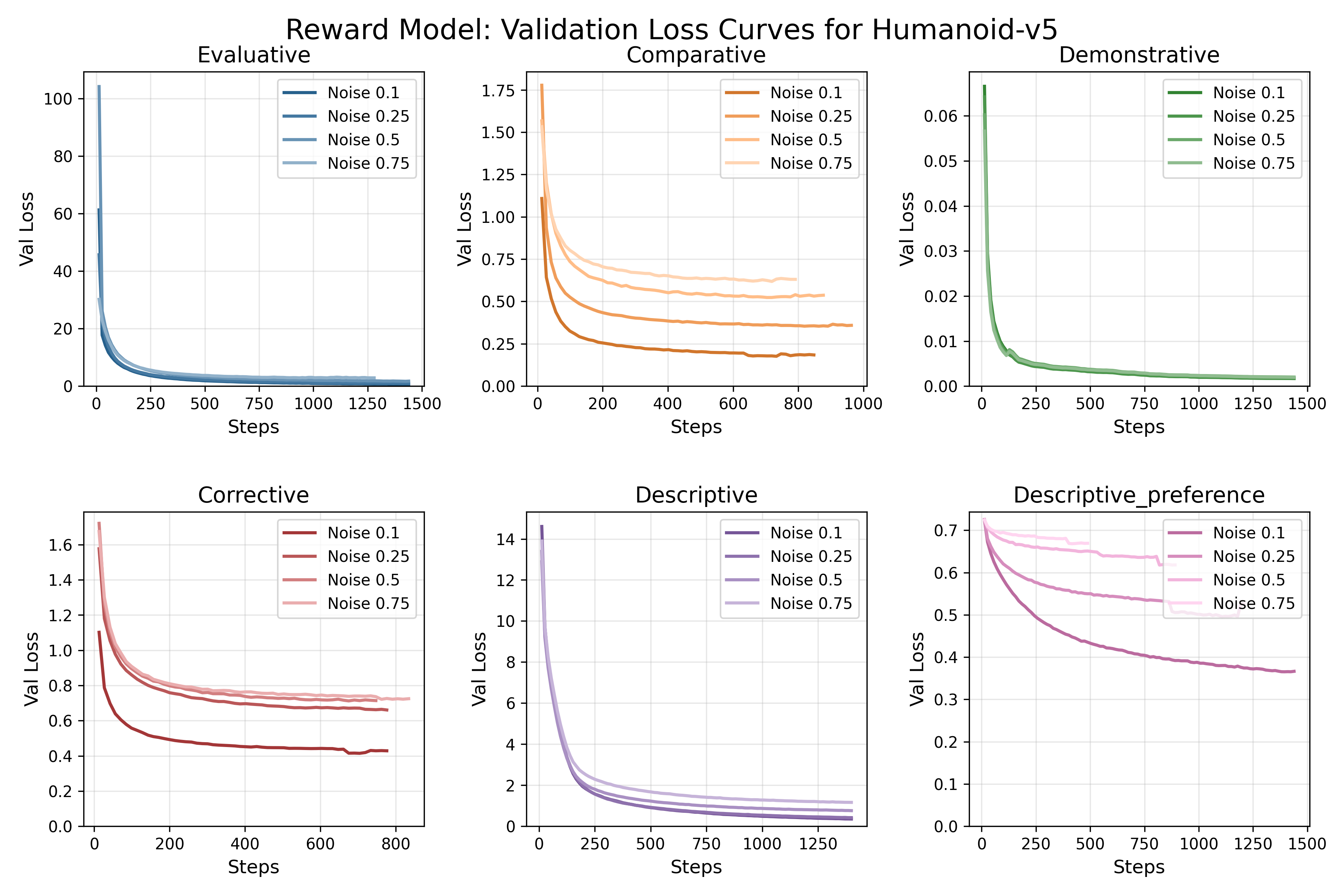}
    \caption{Reward Learning Loss Curve for \textbf{Humanoid-v5} at different levels of noise}
    \label{fig:half-humanoid-rew-loss-curves}
\end{figure}

\clearpage

\subsection{Correlation between Ground-Truth and Learned Reward Functions}
\label{app_subsec:correlation_heatmaps}

\begin{figure}[htbp]
    \centering
    \begin{subfigure}[b]{0.49\textwidth}
        \centering
        \includegraphics[width=\textwidth]{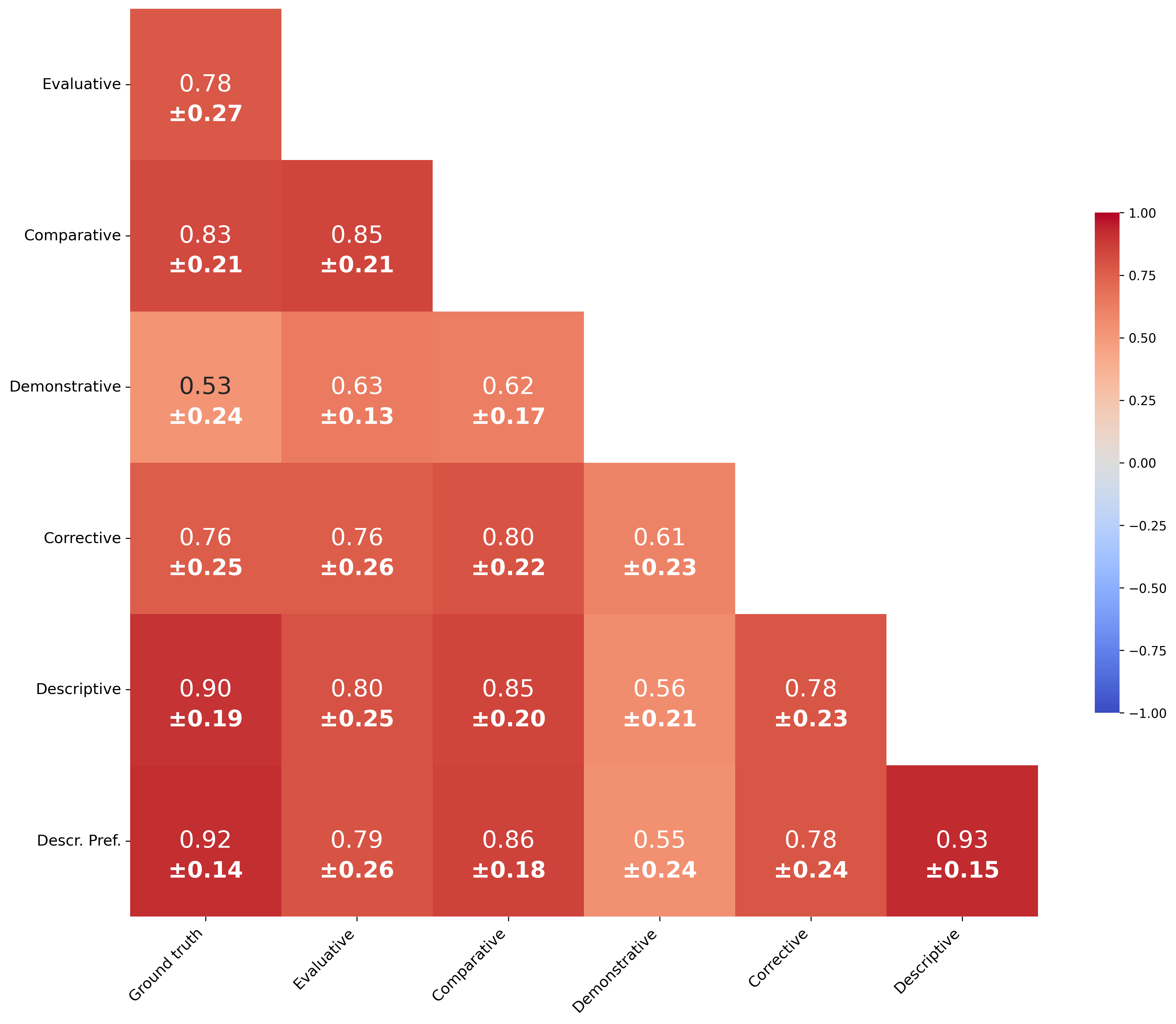}
        \caption{Noise: 0.25}
        \label{fig:corr_noise_0_25}
    \end{subfigure}
    \hfill
    \begin{subfigure}[b]{0.49\textwidth}
        \centering
        \includegraphics[width=\textwidth]{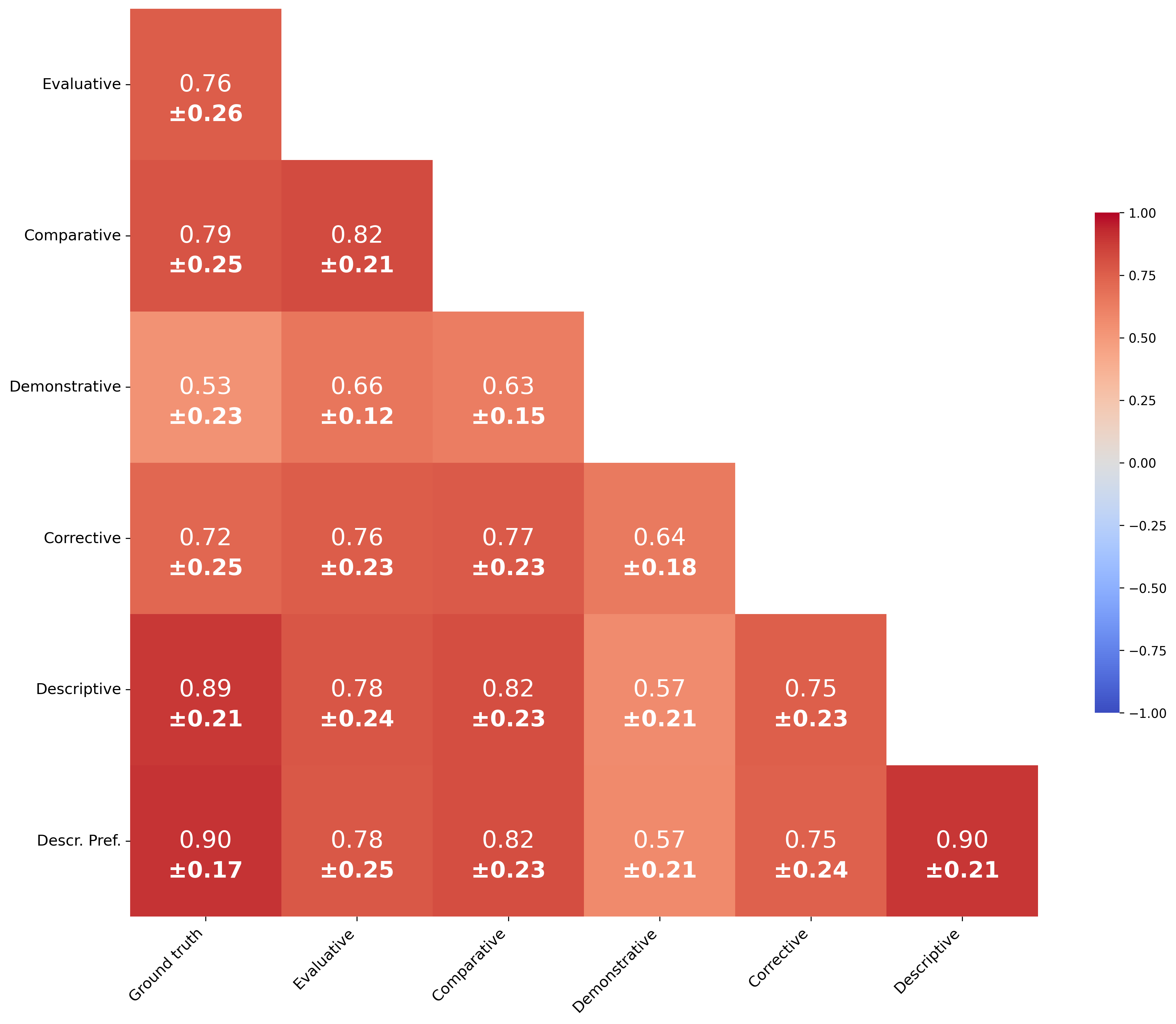}
        \caption{Noise: 0.5}
        \label{fig:corr_noise_0_5}
    \end{subfigure}
    %\vskip\baselineskip
    \begin{subfigure}[b]{0.49\textwidth}
        \centering
        \includegraphics[width=\textwidth]{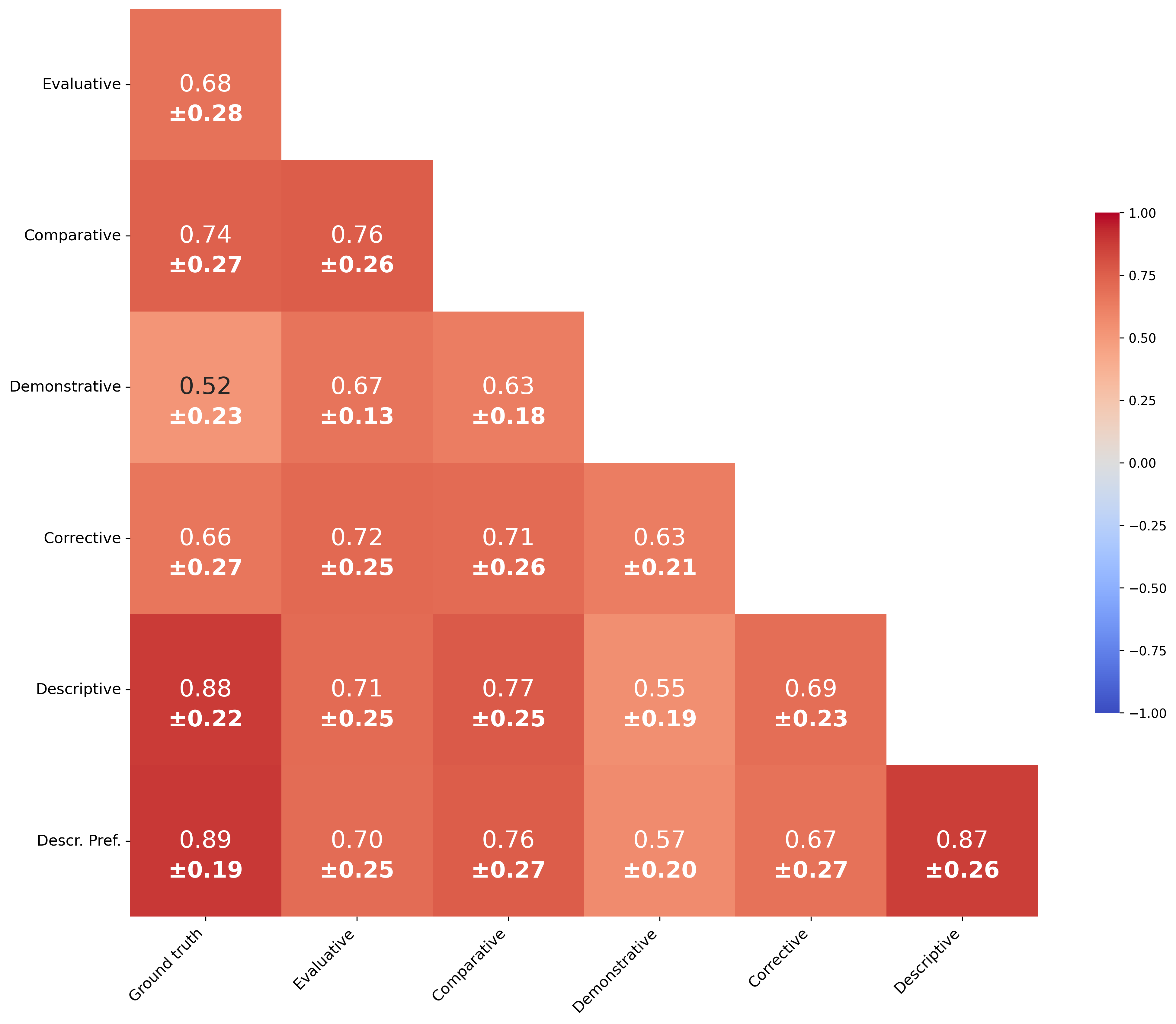}
        \caption{Noise: 0.75}
        \label{fig:corr_noise_0_75}
    \end{subfigure}
    \hfill
    \begin{subfigure}[b]{0.49\textwidth}
        \centering
        \includegraphics[width=\textwidth]{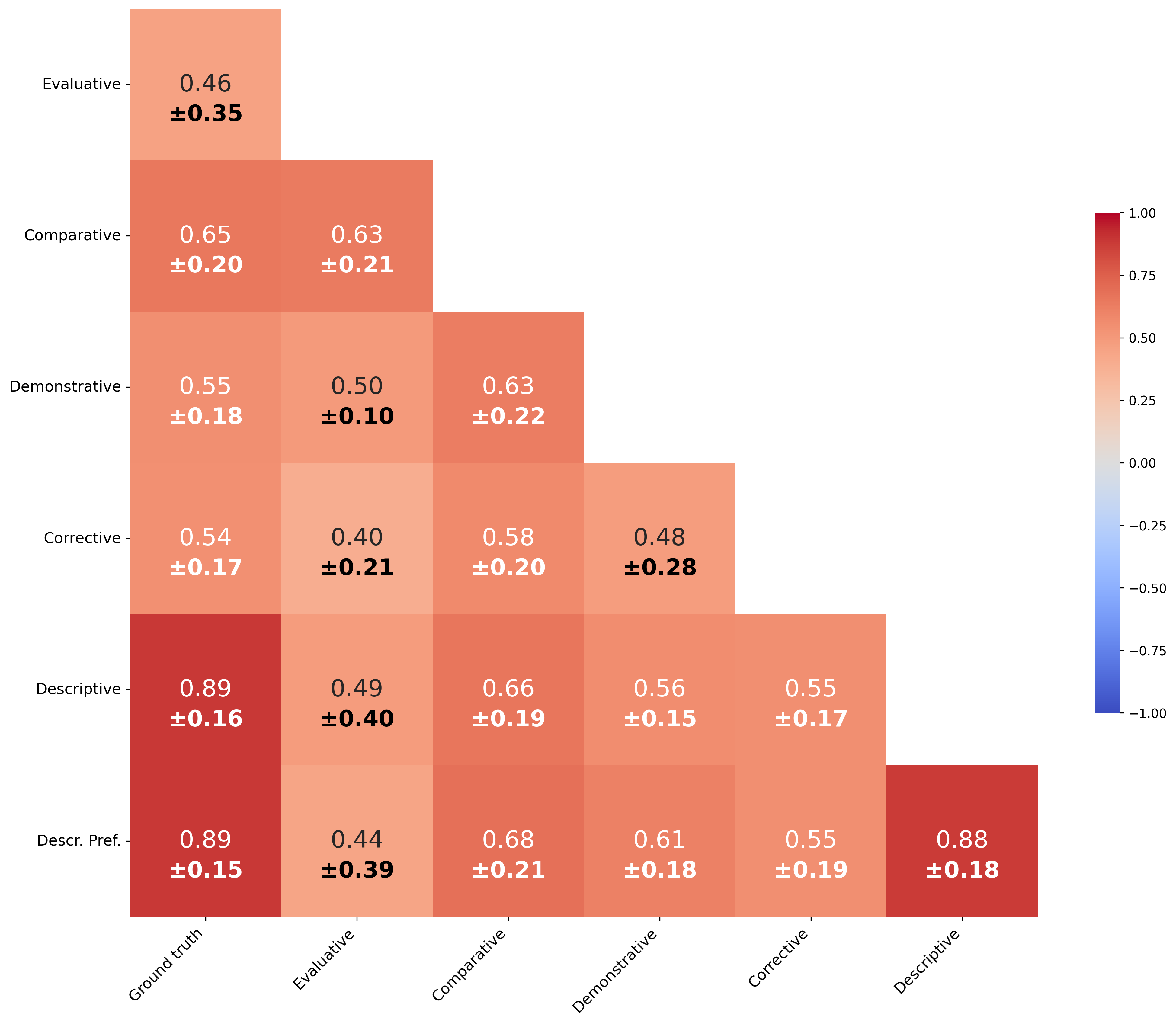}
        \caption{Noise: 1.5}
        \label{fig:corr_noise_1_5}
    \end{subfigure}
    %\vskip\baselineskip
    \begin{subfigure}[b]{0.49\textwidth}
        \centering
        \includegraphics[width=\textwidth]{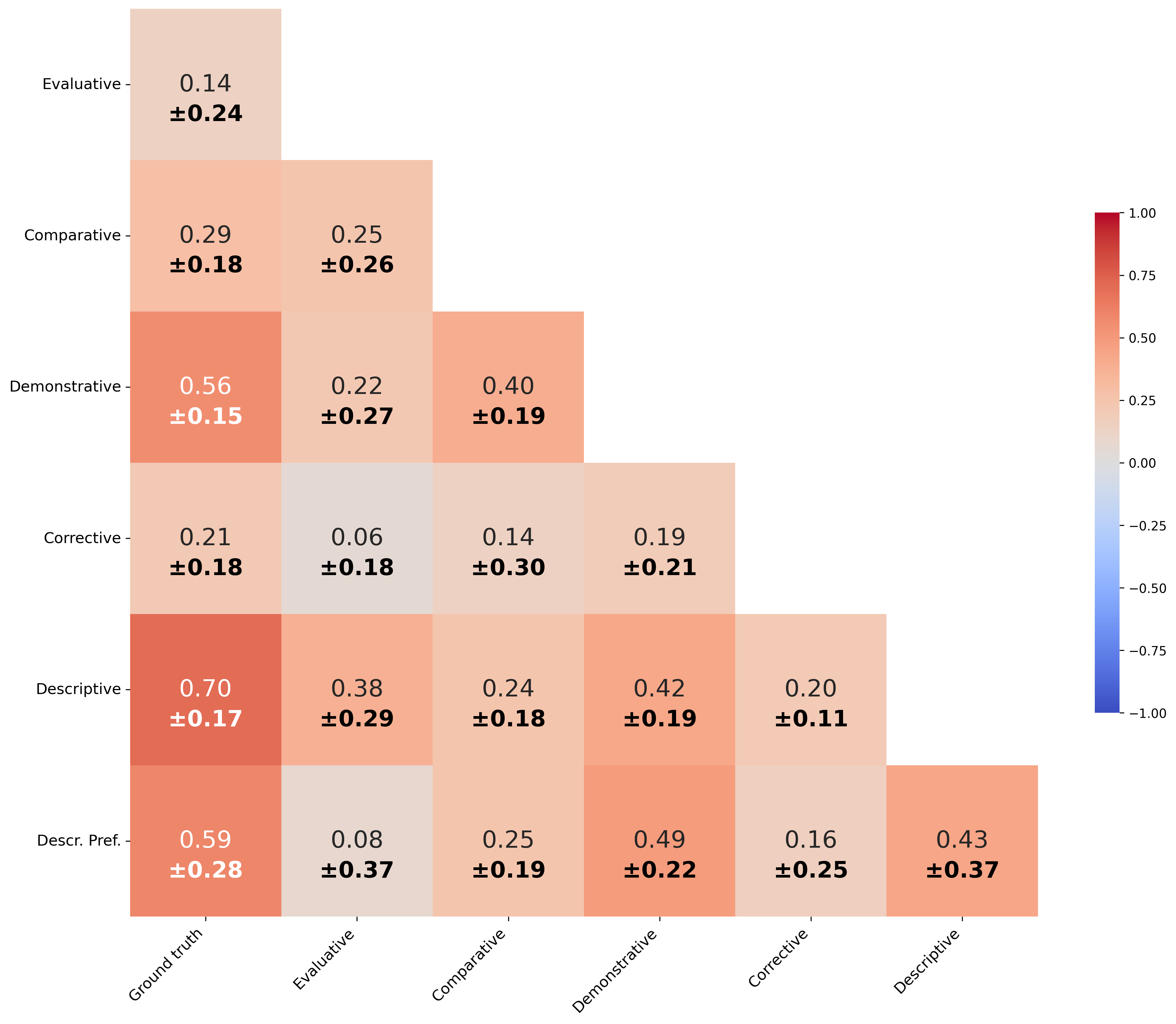}
        \caption{Noise: 3.0}
        \label{fig:corr_noise_3_0}
    \end{subfigure}
    \caption{Summary of \textbf{pairwise reward function correlations} between feedback types at different noise levels ( Correlations are averaged across all six Mujoco environments, 3 seeds per env. and noise level)}
    \label{fig:all_feedback_types_rl_curves}
\end{figure}

\clearpage

\subsubsection{Correlation across feedback types for Individual Environments}

\begin{figure}[htbp]
    \centering
    \begin{subfigure}[b]{0.49\textwidth}
        \centering
        \includegraphics[width=\textwidth]{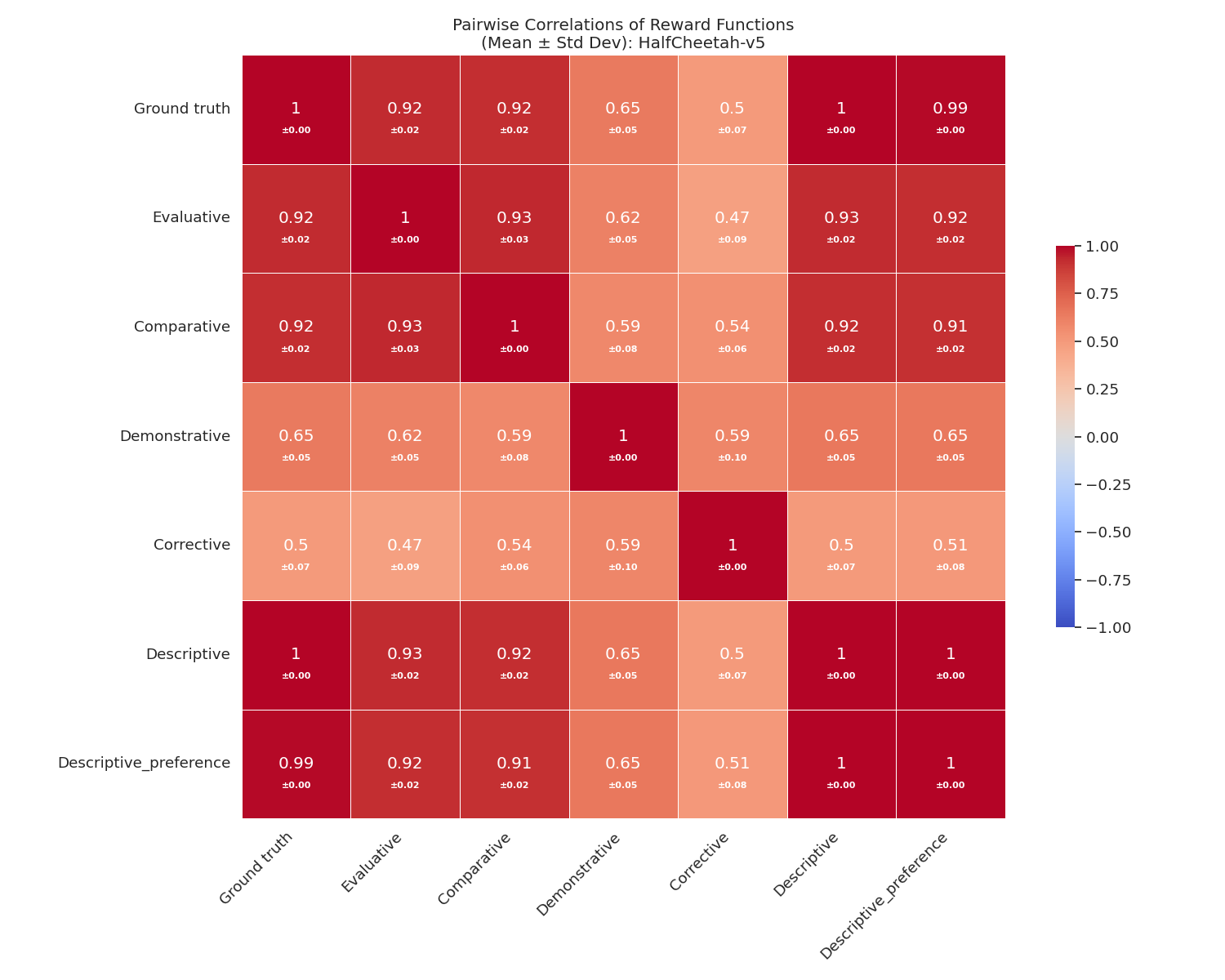}
        \caption{HalfCheetah-v5}
        \label{fig:corr_plot_0_0_cheetah}
    \end{subfigure}
    \hfill
    \begin{subfigure}[b]{0.49\textwidth}
        \centering
        \includegraphics[width=\textwidth]{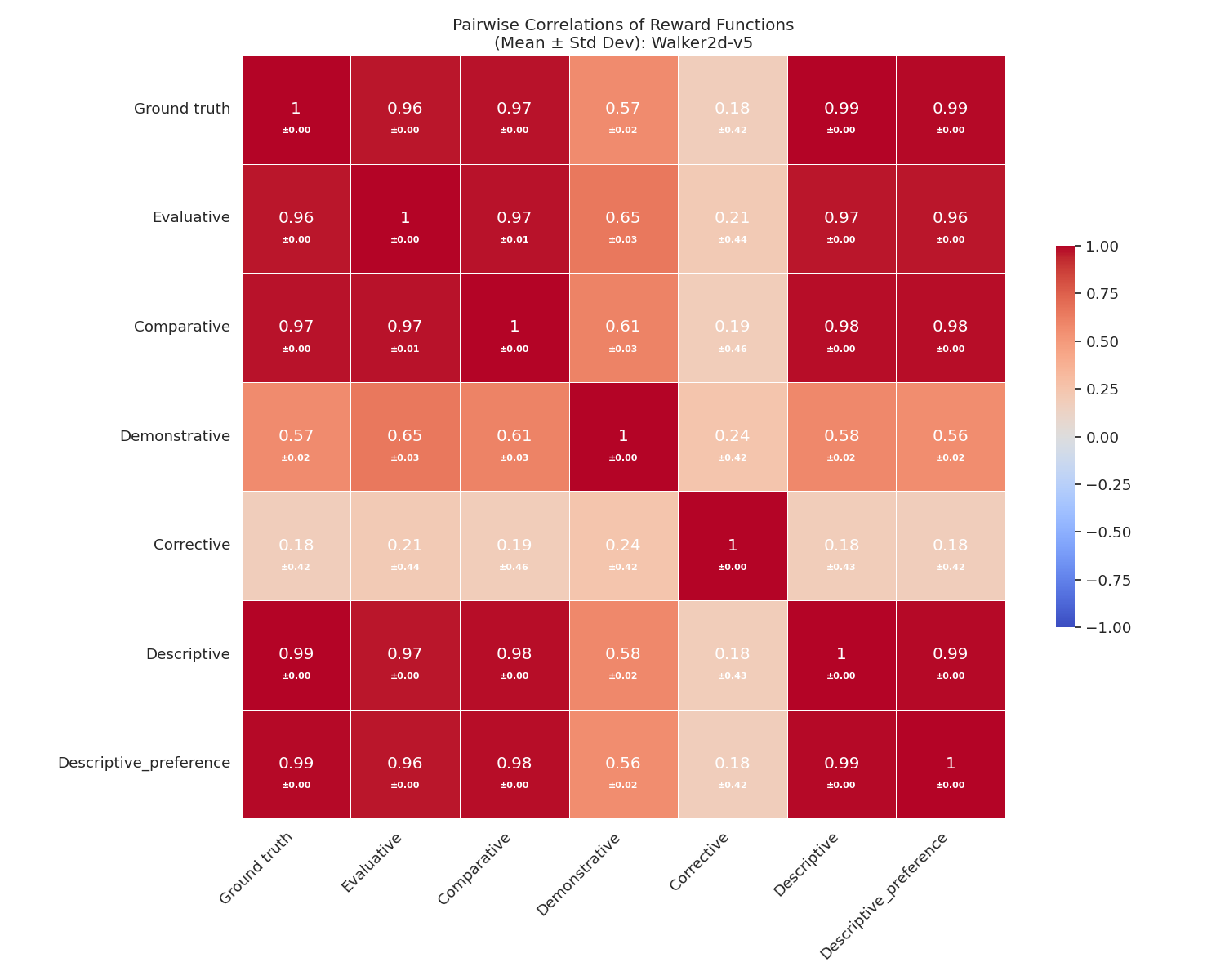}
        \caption{Walker-v5}
        \label{fig:corr_plot_0_0_walker}
    \end{subfigure}
    %\vskip\baselineskip
    \begin{subfigure}[b]{0.49\textwidth}
        \centering
        \includegraphics[width=\textwidth]{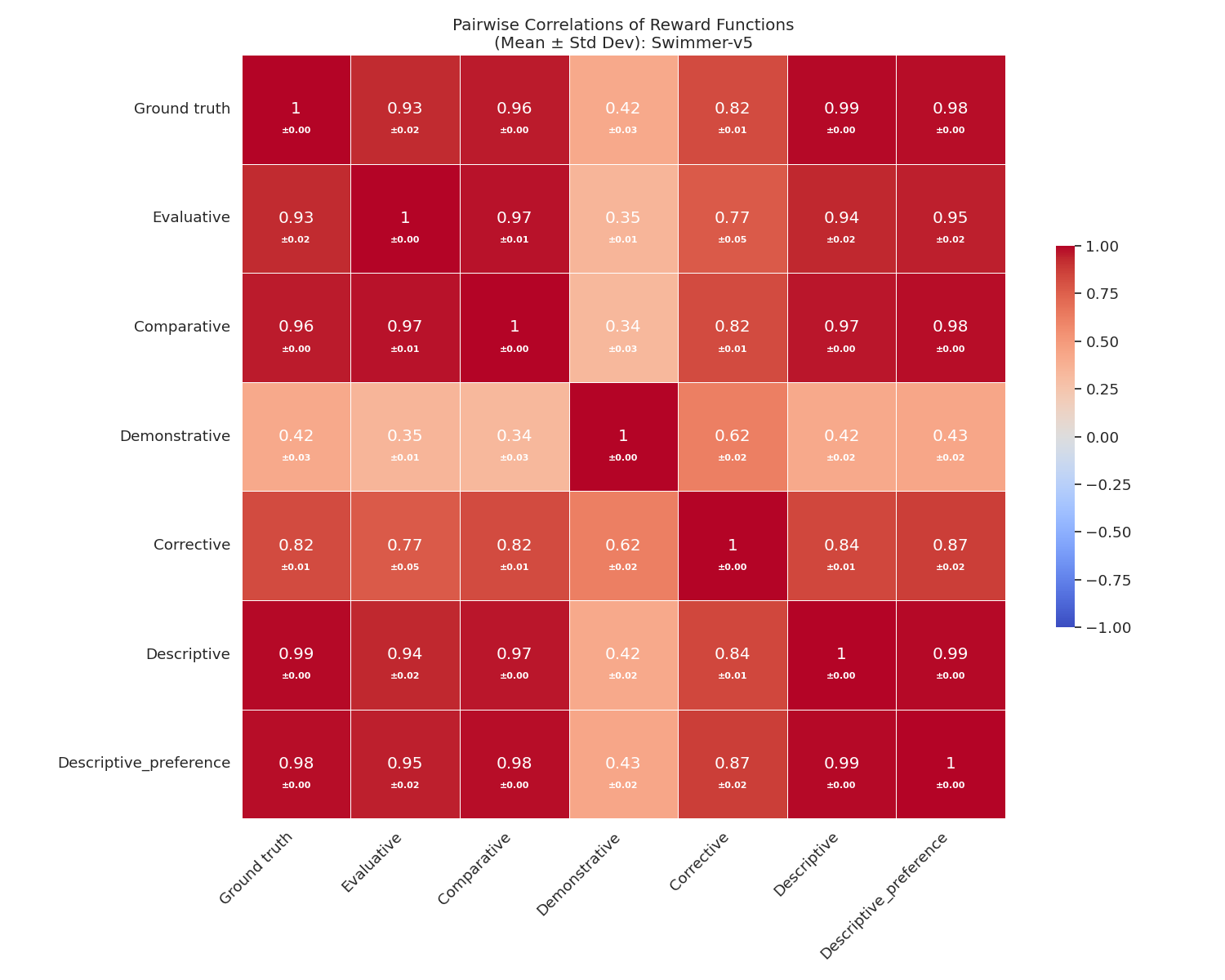}
        \caption{Swimmer-v5}
        \label{fig:corr_plot_0_0_swimmer}
    \end{subfigure}
    \hfill
    \begin{subfigure}[b]{0.49\textwidth}
        \centering
        \includegraphics[width=\textwidth]{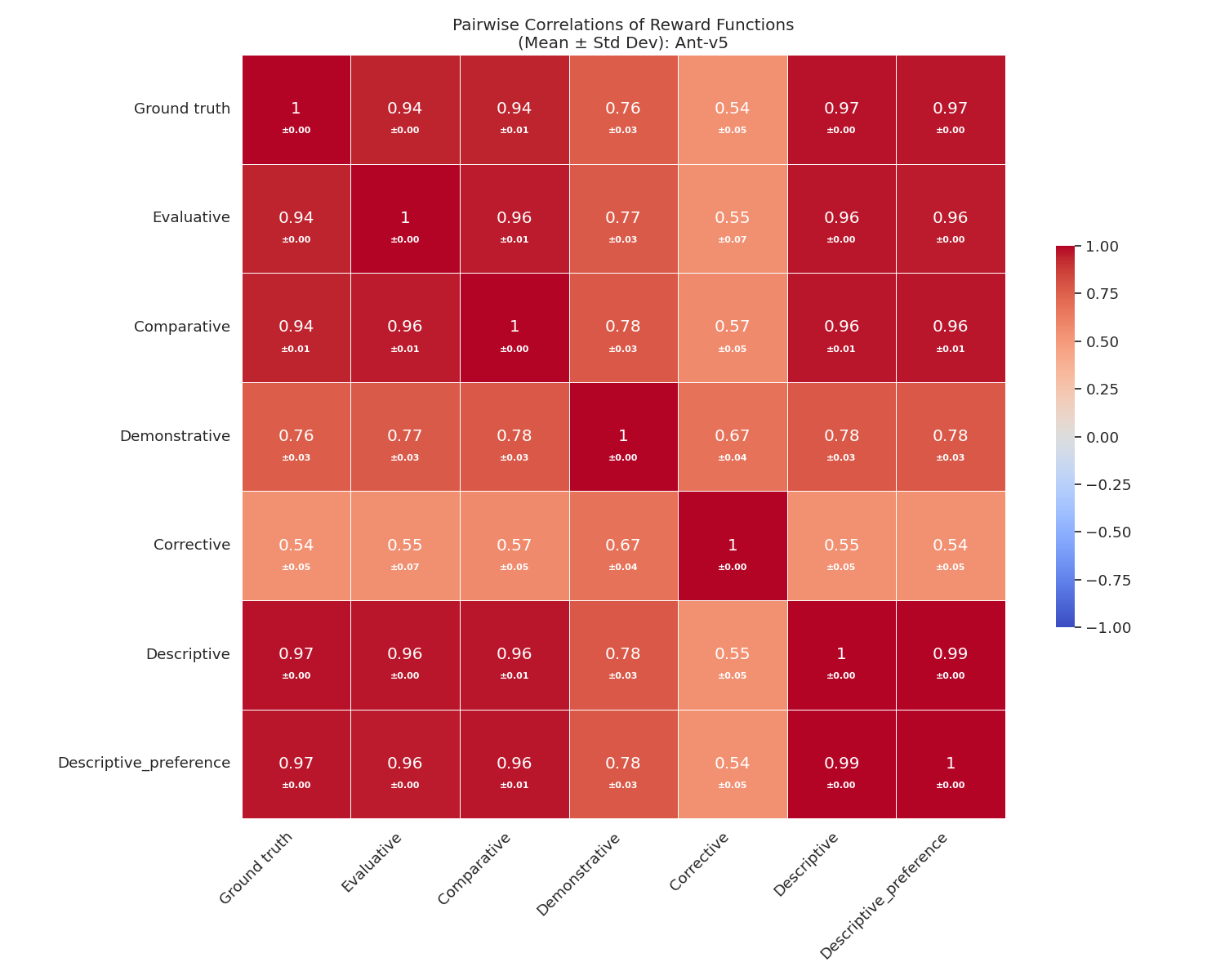}
        \caption{Ant-v5}
        \label{fig:corr_plot_0_0_ant}
    \end{subfigure}
    %\vskip\baselineskip
    \begin{subfigure}[b]{0.49\textwidth}
        \centering
        \includegraphics[width=\textwidth]{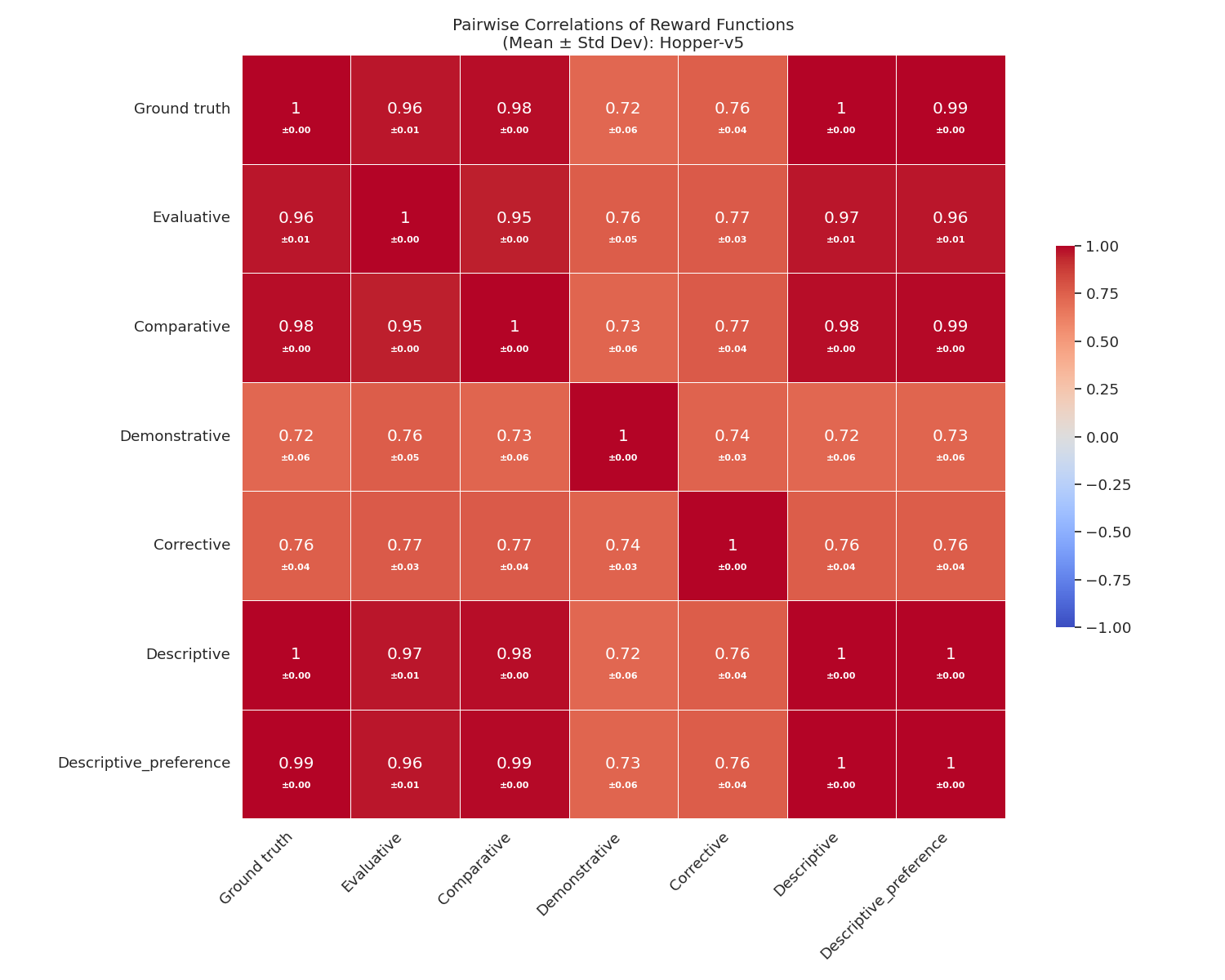}
        \caption{Hopper-v5}
        \label{fig:corr_plot_0_0_hopper}
    \end{subfigure}
    \hfill
    %\vskip\baselineskip
    \begin{subfigure}[b]{0.49\textwidth}
        \centering
        \includegraphics[width=\textwidth]{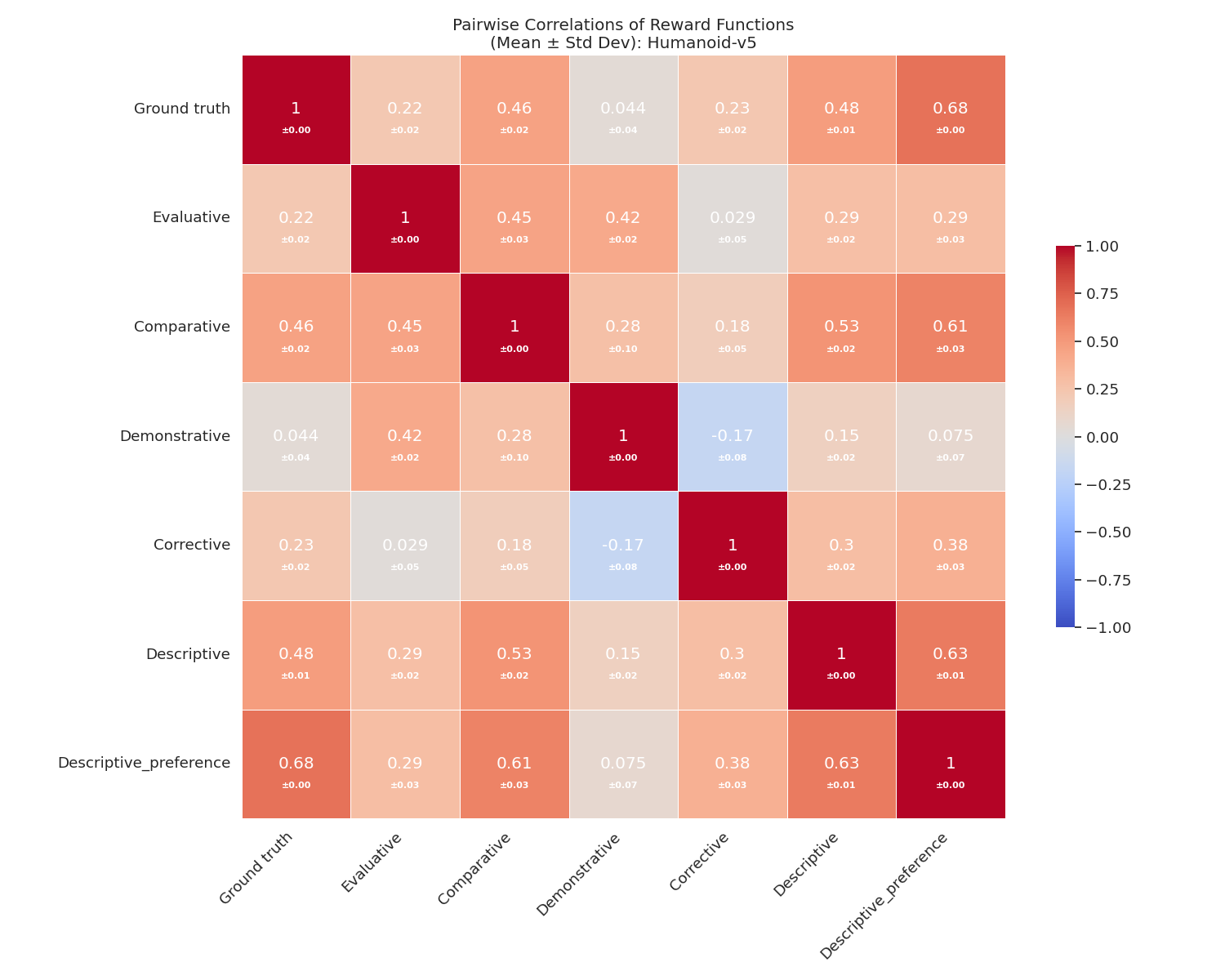}
        \caption{Humanoid-v5}
        \label{fig:corr_plot_0_0_hopper}
    \end{subfigure}
    \caption{Pairwise correlations between the ground truth and learned reward functions from different feedback types. Optimal feedback without noise.}
    \label{fig:all_feedback_types_rl_curves}
\end{figure}

\begin{figure}[htbp]
    \centering
    \begin{subfigure}[b]{0.49\textwidth}
        \centering
        \includegraphics[width=\textwidth]{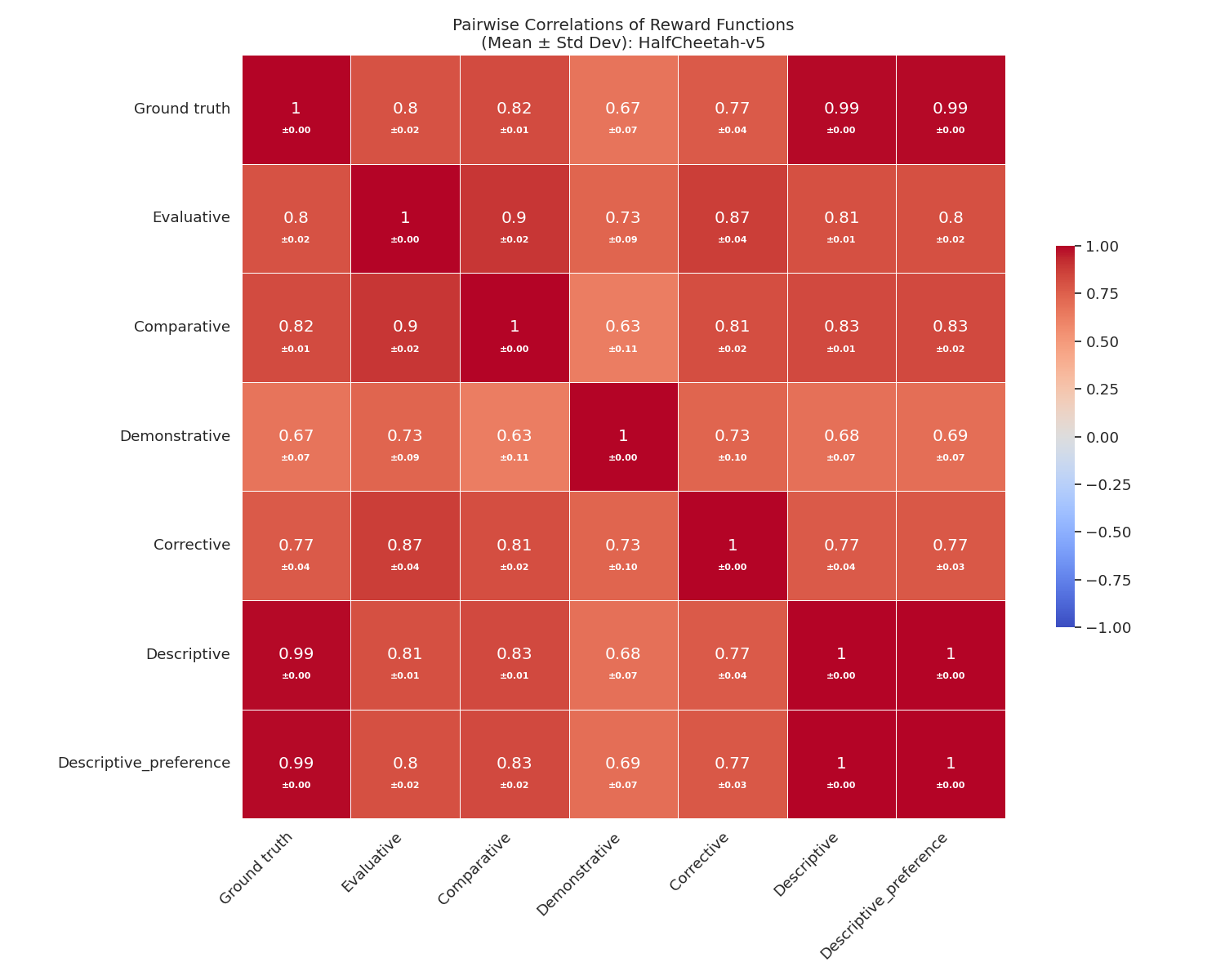}
        \caption{HalfCheetah-v5}
        \label{fig:corr_plot_0_5_cheetah}
    \end{subfigure}
    \hfill
    \begin{subfigure}[b]{0.49\textwidth}
        \centering
        \includegraphics[width=\textwidth]{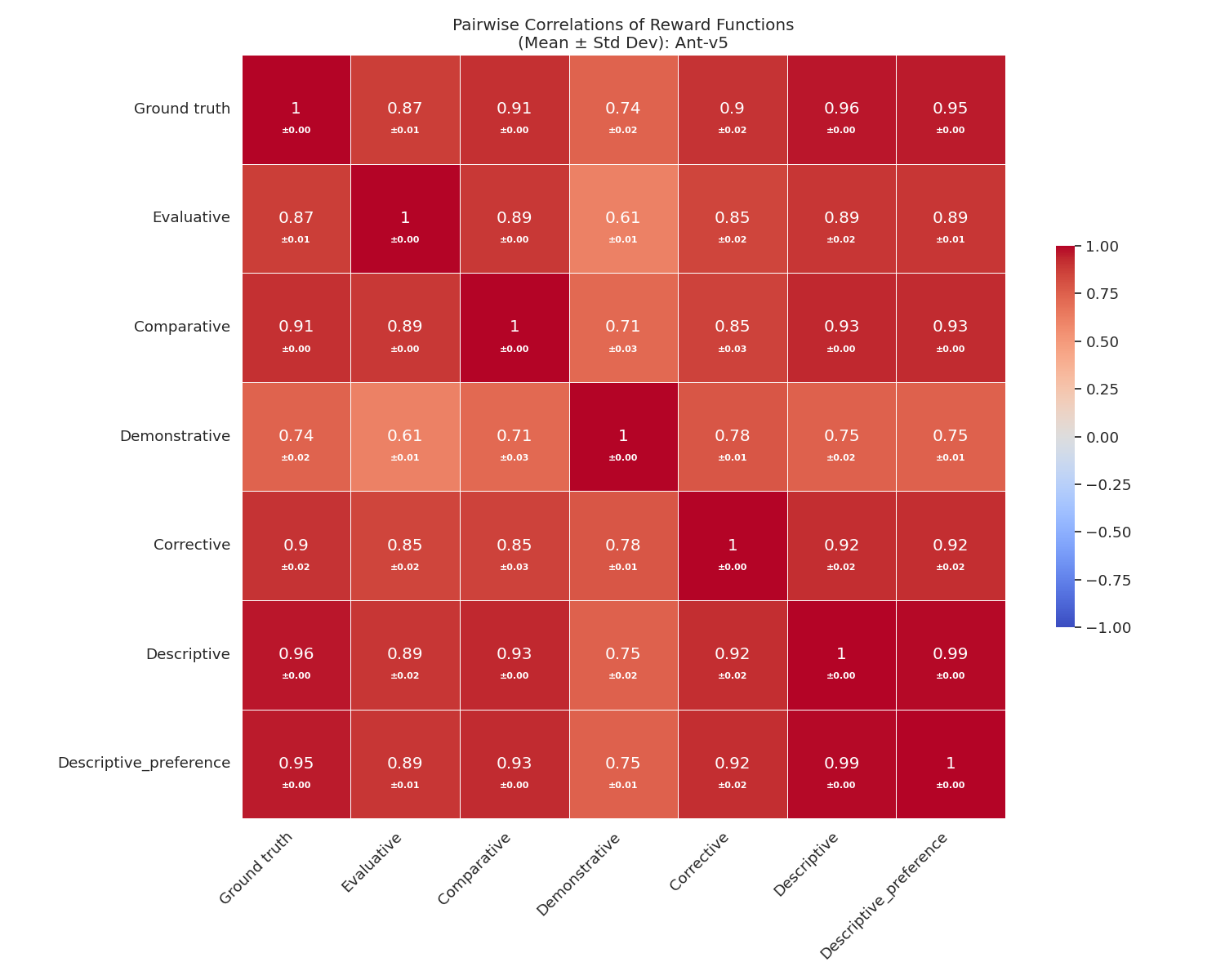}
        \caption{Walker-v5}
        \label{fig:corr_plot_0_5_walker}
    \end{subfigure}
    %\vskip\baselineskip
    \begin{subfigure}[b]{0.49\textwidth}
        \centering
        \includegraphics[width=\textwidth]{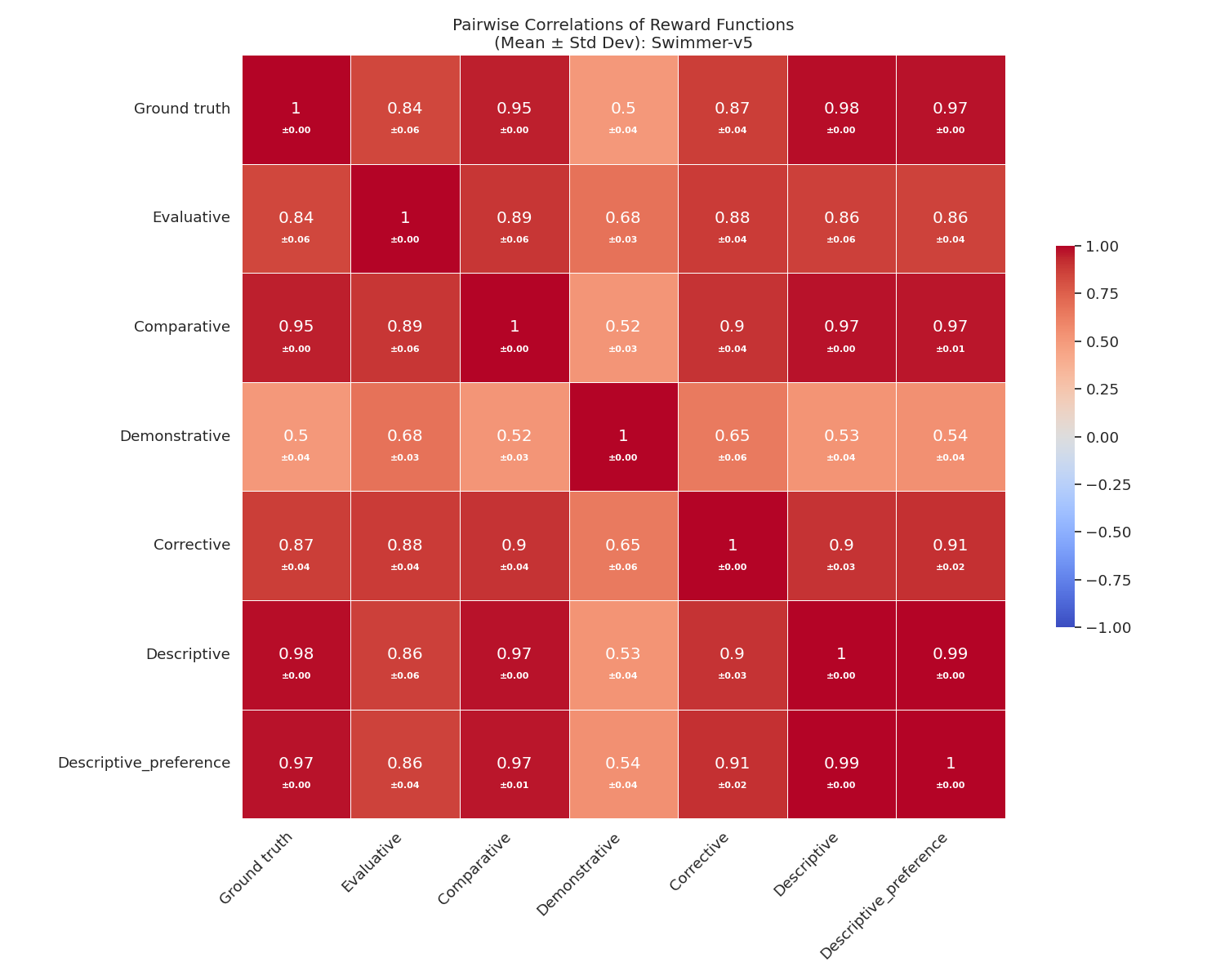}
        \caption{Swimmer-v5}
        \label{fig:corr_plot_0_5_swimmer}
    \end{subfigure}
    \hfill
    \begin{subfigure}[b]{0.49\textwidth}
        \centering
        \includegraphics[width=\textwidth]{figures/corr_plot_Ant-v5_noise_0.5.png}
        \caption{Ant-v5}
        \label{fig:corr_plot_0_5_ant}
    \end{subfigure}
    %\vskip\baselineskip
    \begin{subfigure}[b]{0.49\textwidth}
        \centering
        \includegraphics[width=\textwidth]{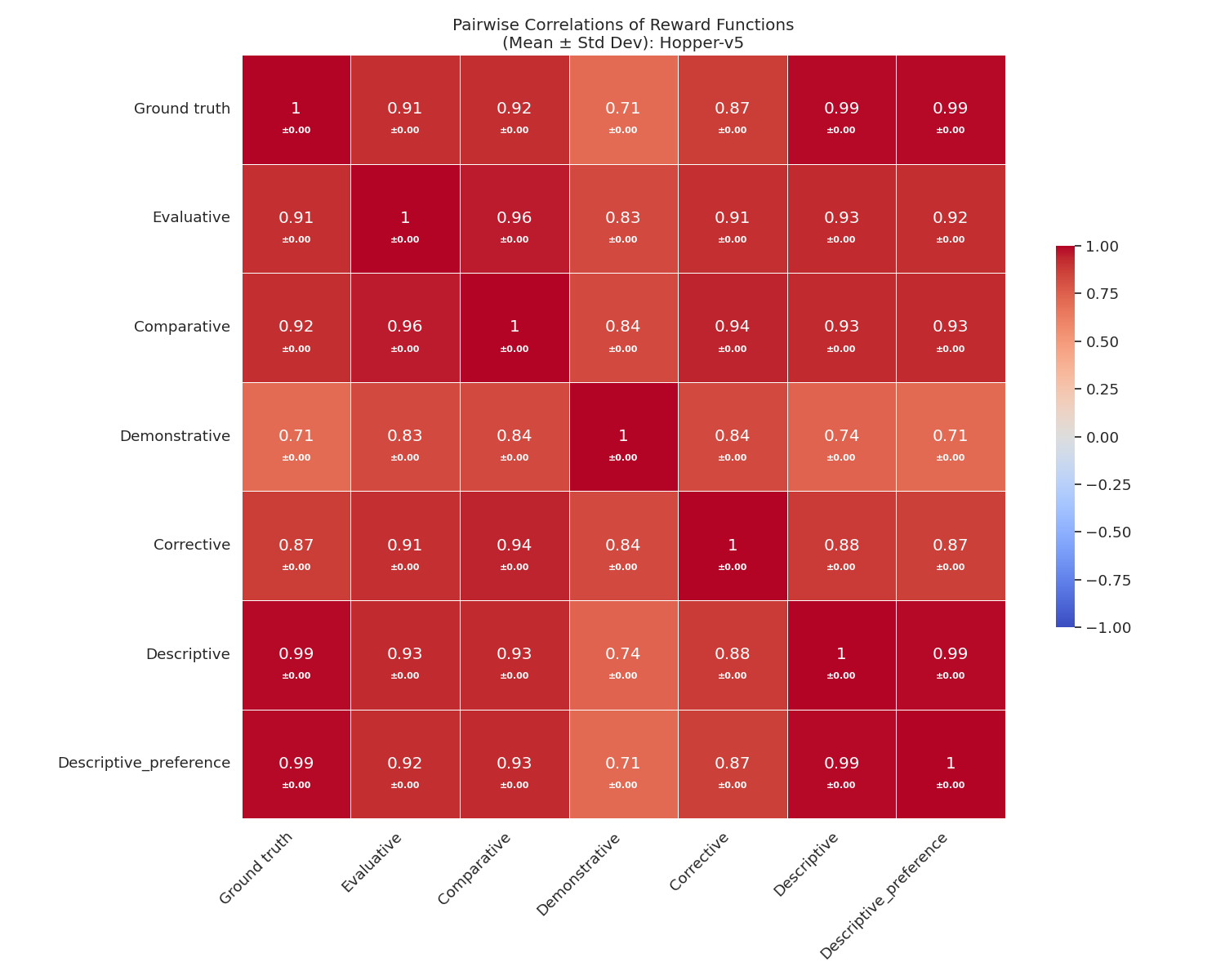}
        \caption{Hopper-v5}
        \label{fig:corr_plot_0_5_hopper}
    \end{subfigure}
    \hfill
    %\vskip\baselineskip
    \begin{subfigure}[b]{0.49\textwidth}
        \centering
        \includegraphics[width=\textwidth]{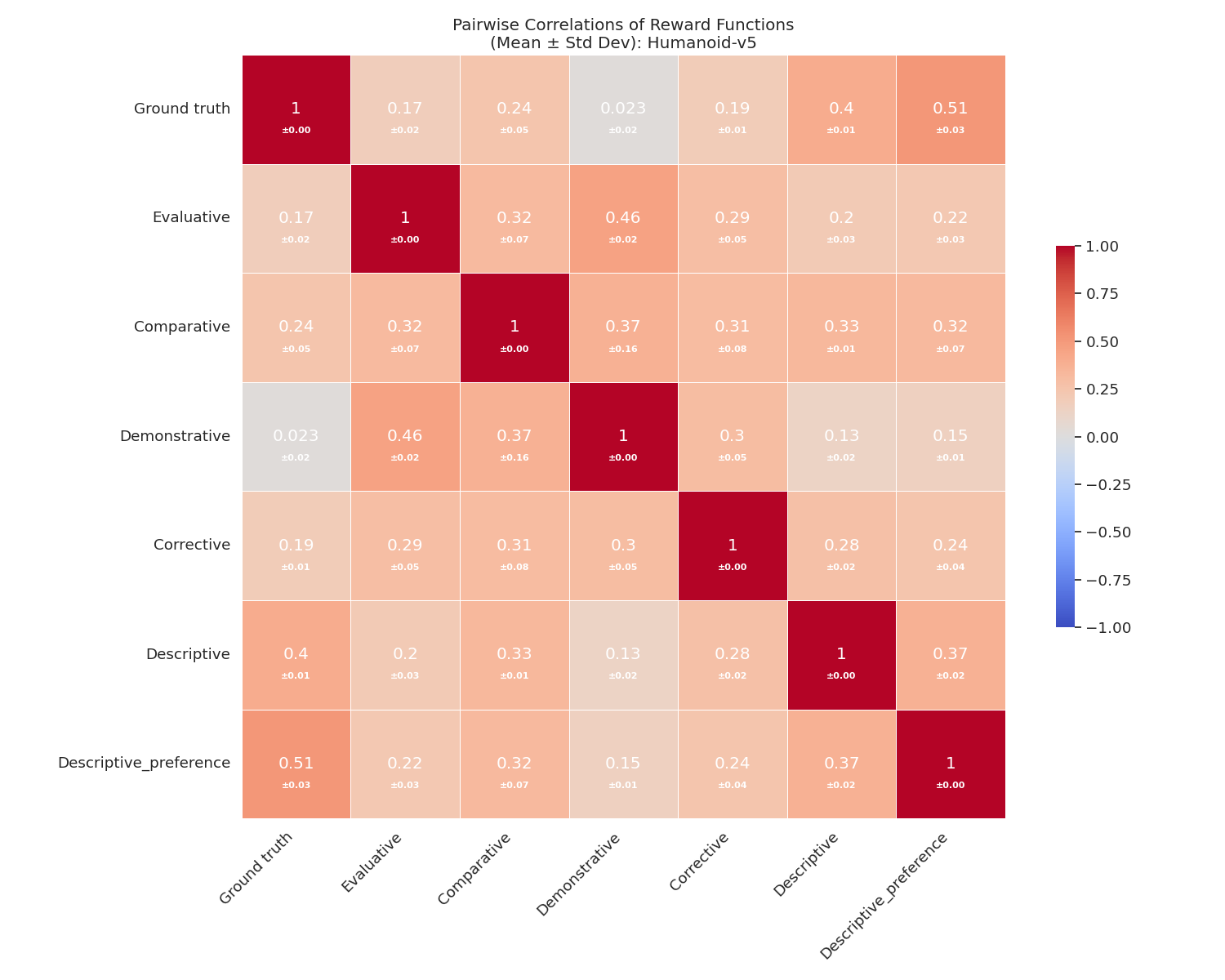}
        \caption{Humanoid-v5}
        \label{fig:corr_plot_0_5_hopper}
    \end{subfigure}
    \caption{Pairwise correlations between the ground truth and learned reward functions from different feedback types. Noise level 0.5.}
    \label{fig:all_feedback_types_rl_curves}
\end{figure}

\clearpage

\subsection{Influence of Noise on Correlation with Ground-Truth Reward Function}
\label{app_subsec:noise_curves}

\begin{figure}[htbp]
    \centering
    \begin{subfigure}[b]{0.49\textwidth}
        \centering
        \includegraphics[width=\textwidth]{figures/correlation_lines_HalfCheetah-v5_noise.png}
        \caption{HalfCheetah-v5}
        \label{fig:corr_plot_0_5_cheetah}
    \end{subfigure}
    \hfill
    \begin{subfigure}[b]{0.49\textwidth}
        \centering
        \includegraphics[width=\textwidth]{figures/correlation_lines_Walker2d-v5_noise.png}
        \caption{Walker-v5}
        \label{fig:corr_plot_0_5_walker}
    \end{subfigure}
    %\vskip\baselineskip
    \begin{subfigure}[b]{0.49\textwidth}
        \centering
        \includegraphics[width=\textwidth]{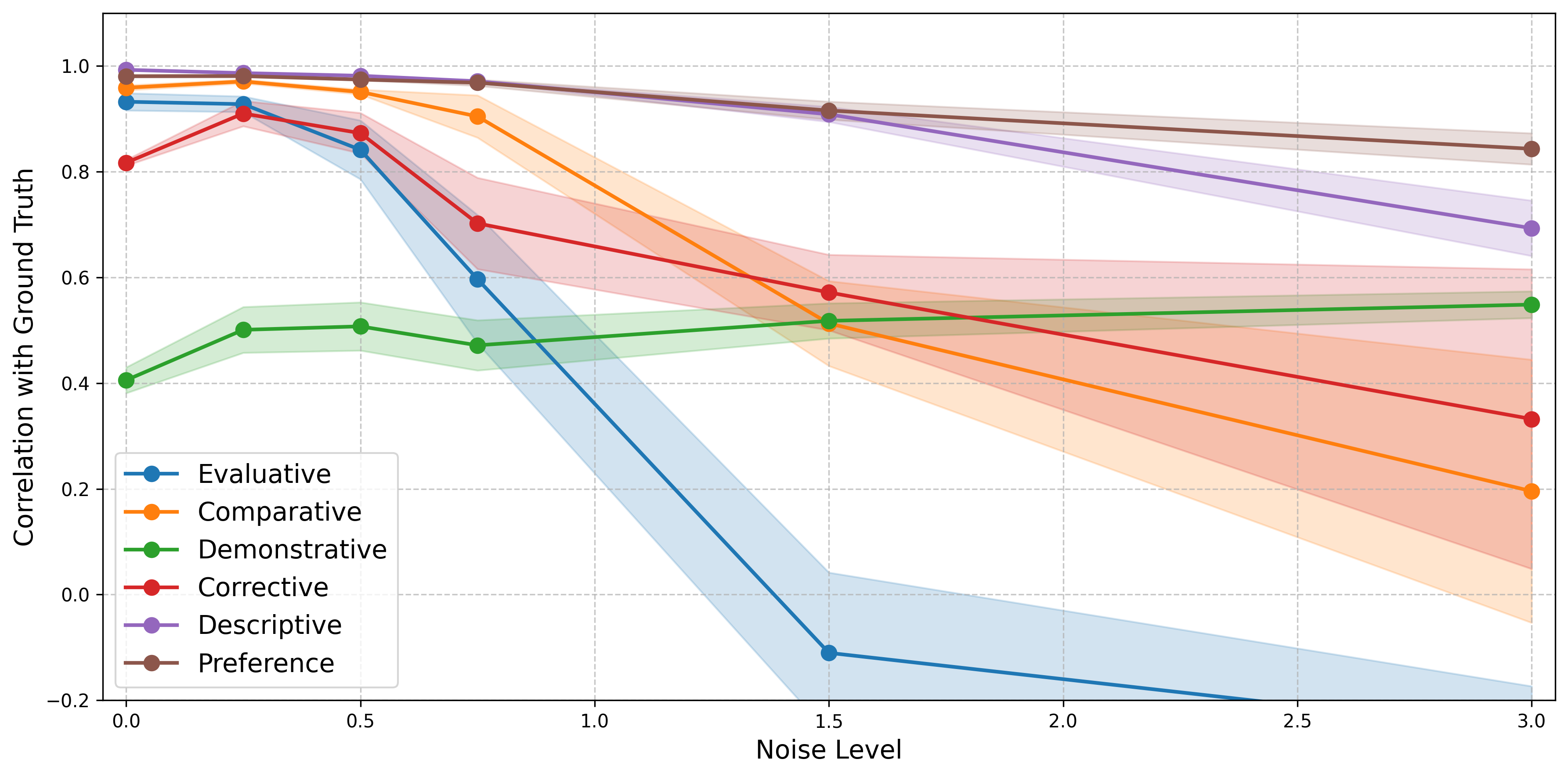}
        \caption{Swimmer-v5}
        \label{fig:corr_plot_0_5_swimmer}
    \end{subfigure}
    \hfill
    \begin{subfigure}[b]{0.49\textwidth}
        \centering
        \includegraphics[width=\textwidth]{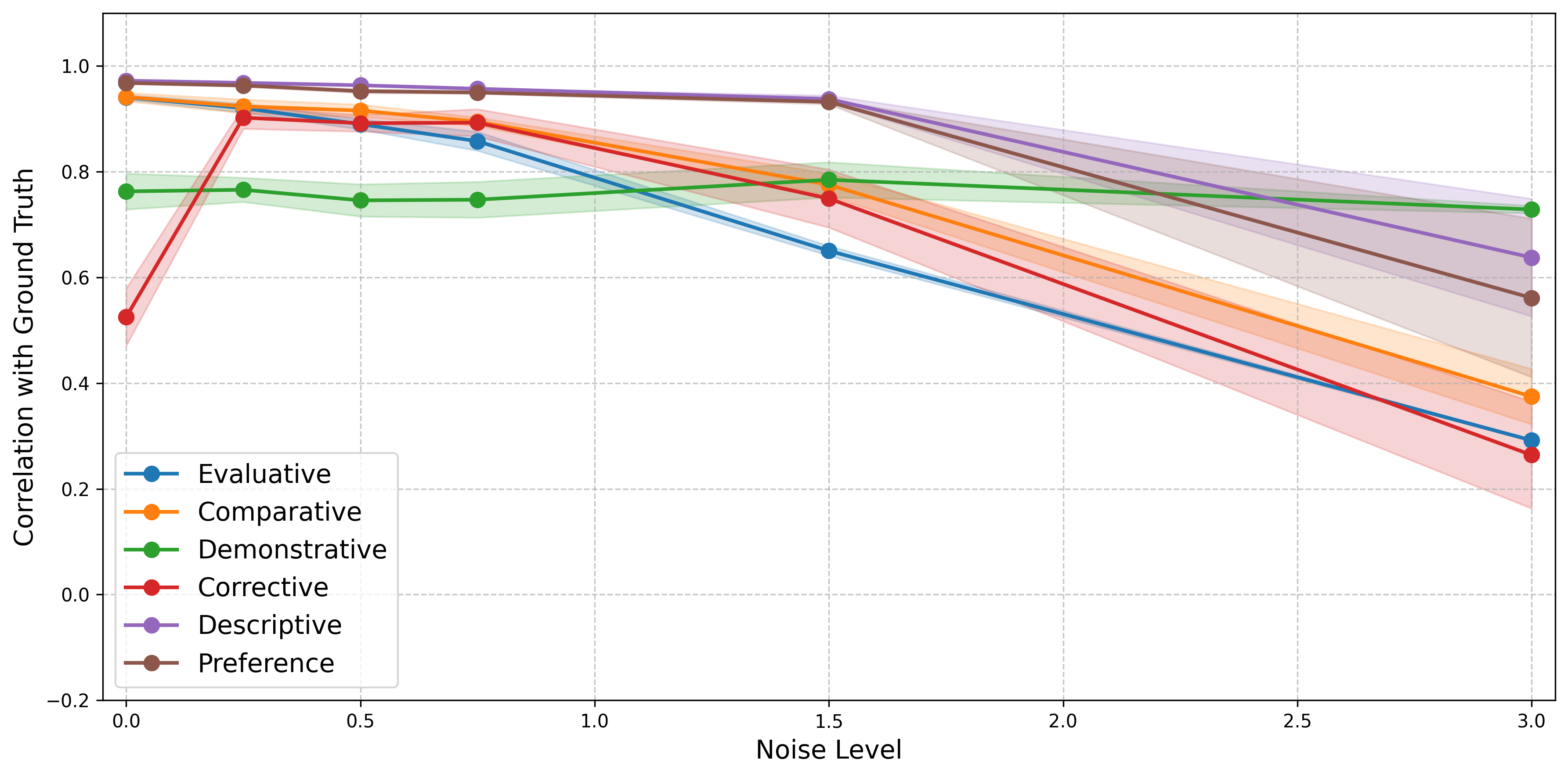}
        \caption{Ant-v5}
        \label{fig:corr_plot_0_5_ant}
    \end{subfigure}
    %\vskip\baselineskip
    \begin{subfigure}[b]{0.49\textwidth}
        \centering
        \includegraphics[width=\textwidth]{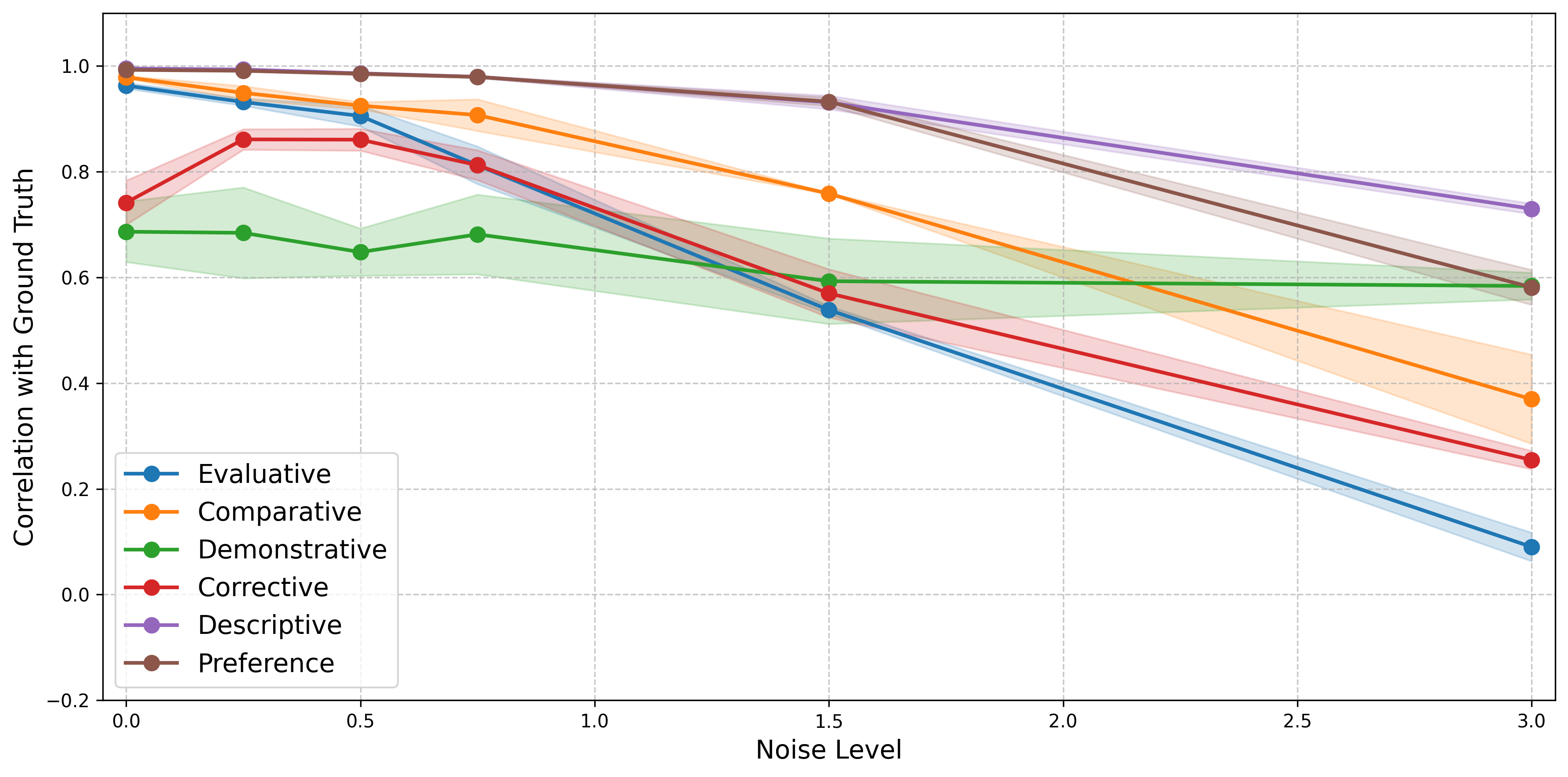}
        \caption{Hopper-v5}
        \label{fig:corr_plot_0_5_hopper}
    \end{subfigure}
    \hfill
    %\vskip\baselineskip
    \begin{subfigure}[b]{0.49\textwidth}
        \centering
        \includegraphics[width=\textwidth]{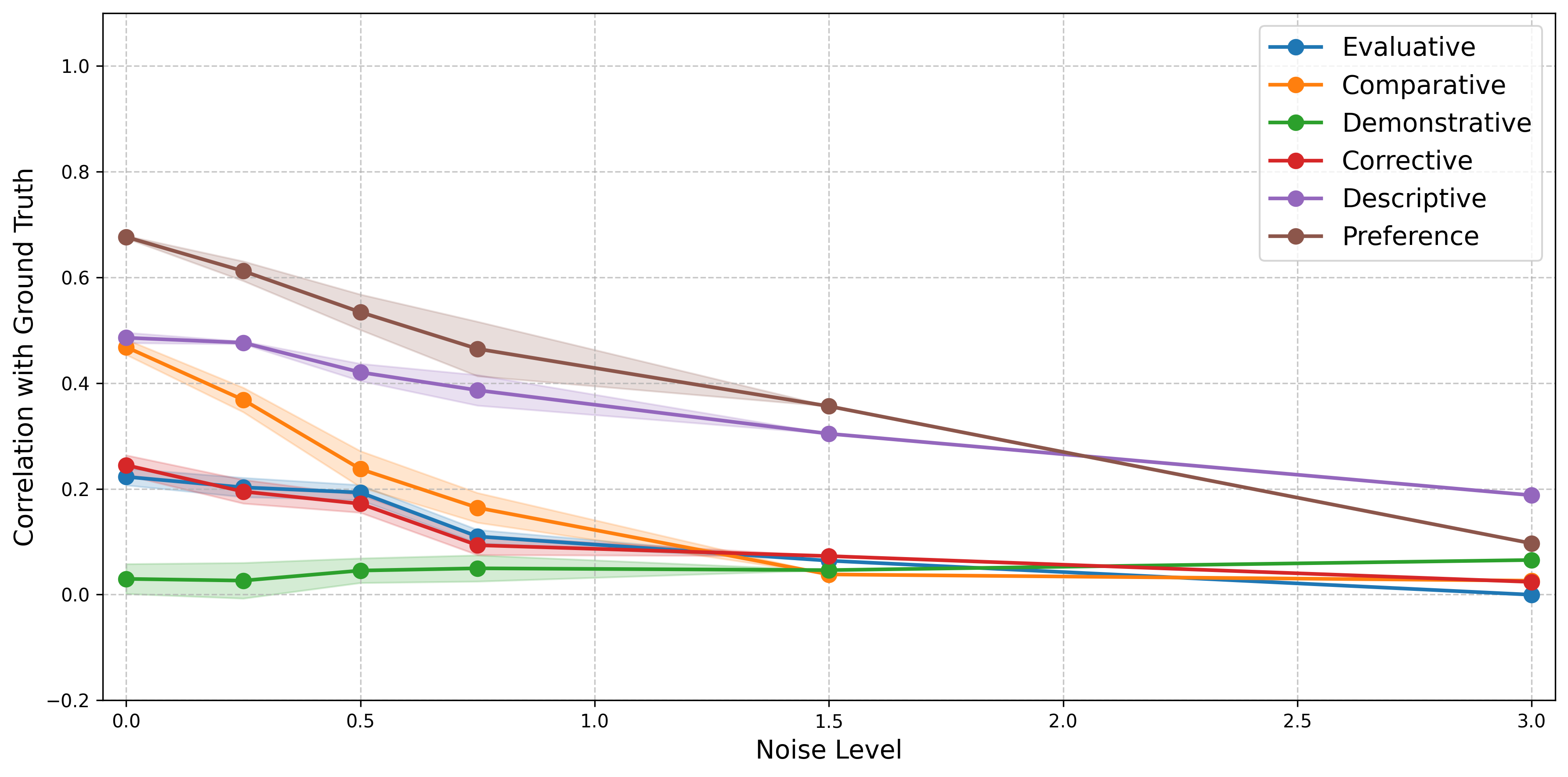}
        \caption{Humanoid-v5}
        \label{fig:corr_plot_0_5_hopper}
    \end{subfigure}
    \caption{Influence of noise on correlation with ground-truth reward function}
    \label{fig:correlation_curves}
\end{figure}

\clearpage

\subsection{Sequential Reward Predictions}

\begin{figure}[htbp]
    \centering
    \begin{subfigure}[b]{0.49\textwidth}
        \centering
        \includegraphics[width=\textwidth]{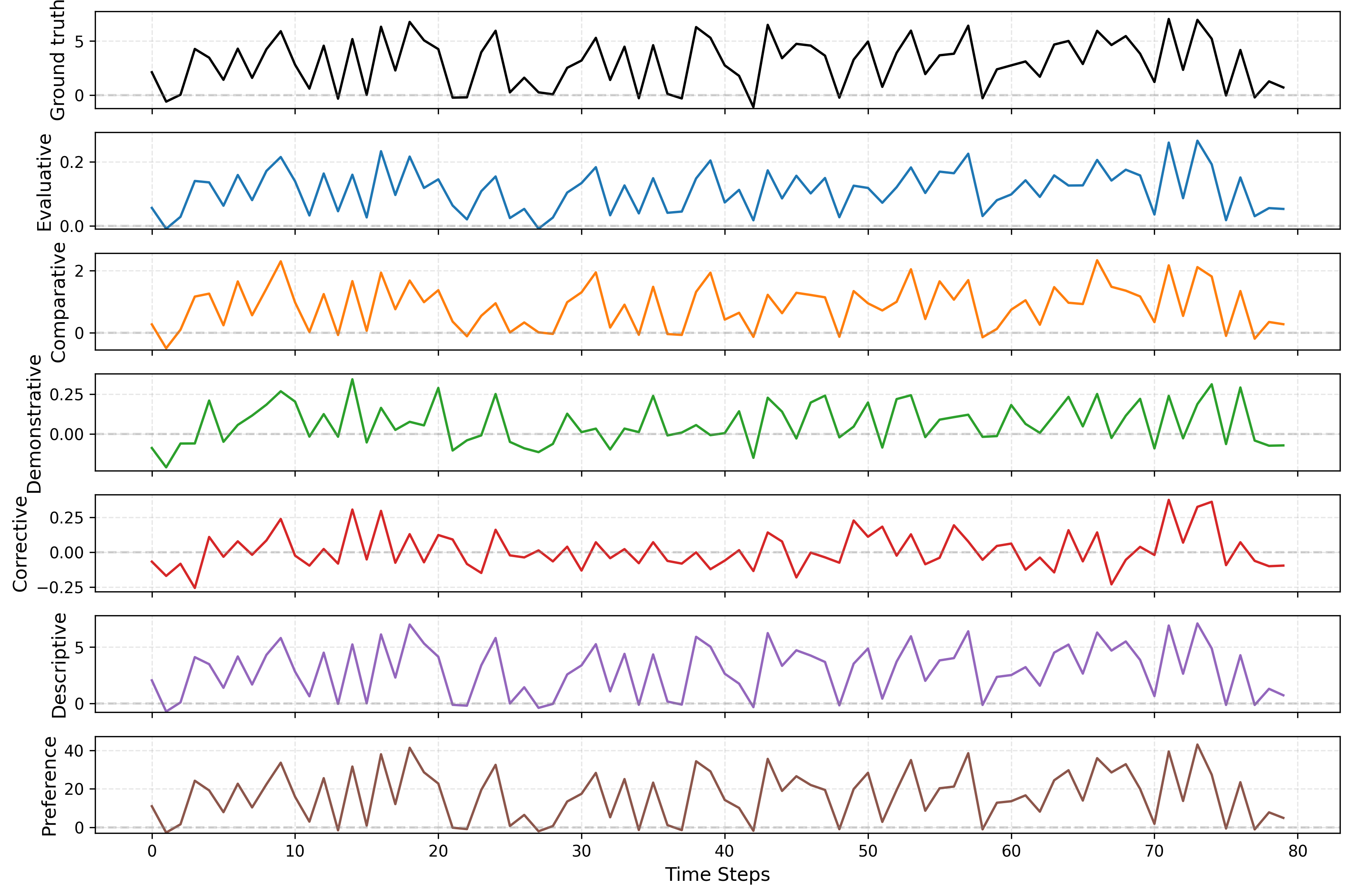}
        \caption{HalfCheetah-v5}
        \label{fig:corr_plot_0_5_cheetah}
    \end{subfigure}
    \hfill
    \begin{subfigure}[b]{0.49\textwidth}
        \centering
        \includegraphics[width=\textwidth]{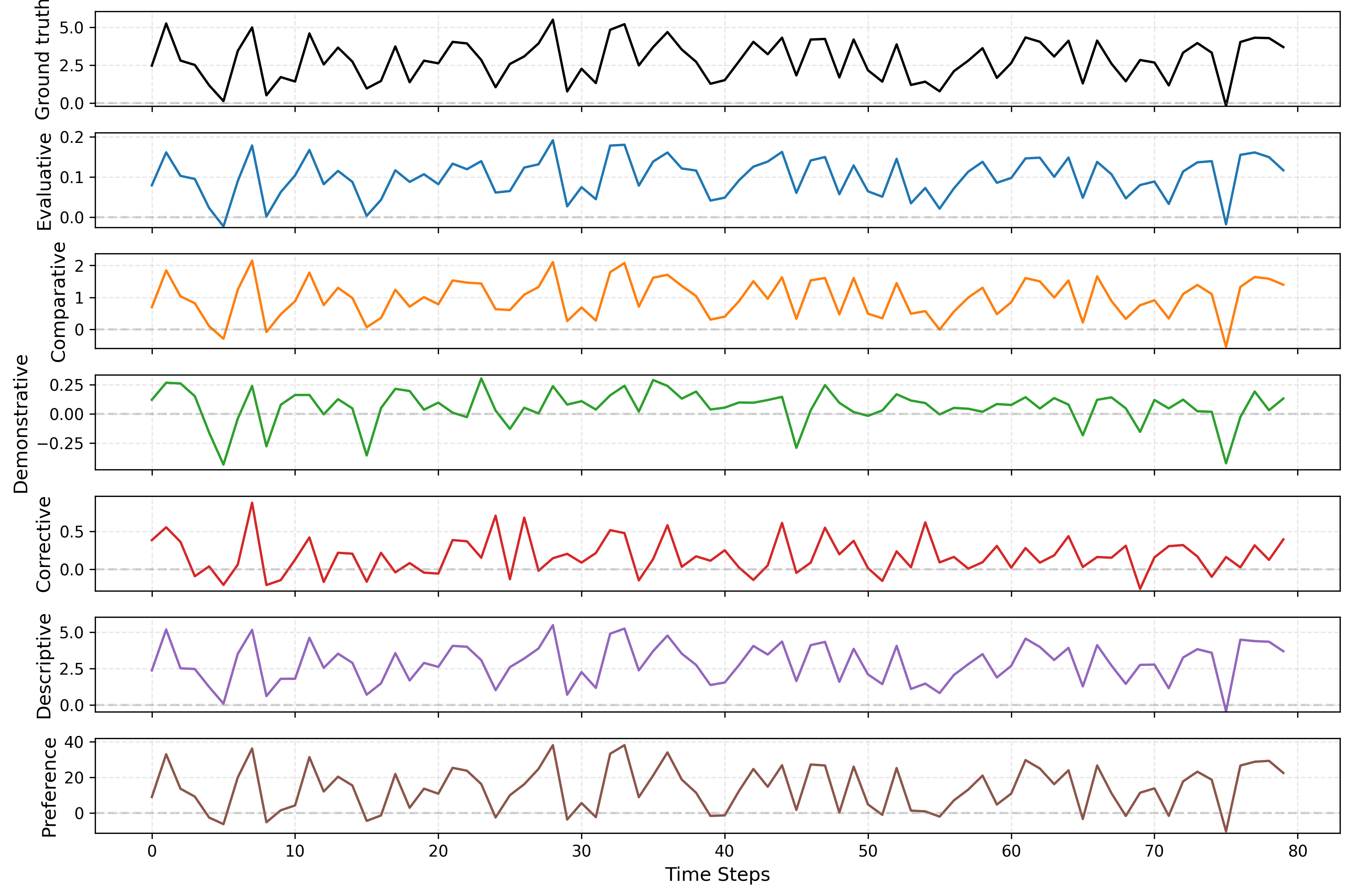}
        \caption{Walker-v5}
        \label{fig:corr_plot_0_5_walker}
    \end{subfigure}
    %\vskip\baselineskip
    \begin{subfigure}[b]{0.49\textwidth}
        \centering
        \includegraphics[width=\textwidth]{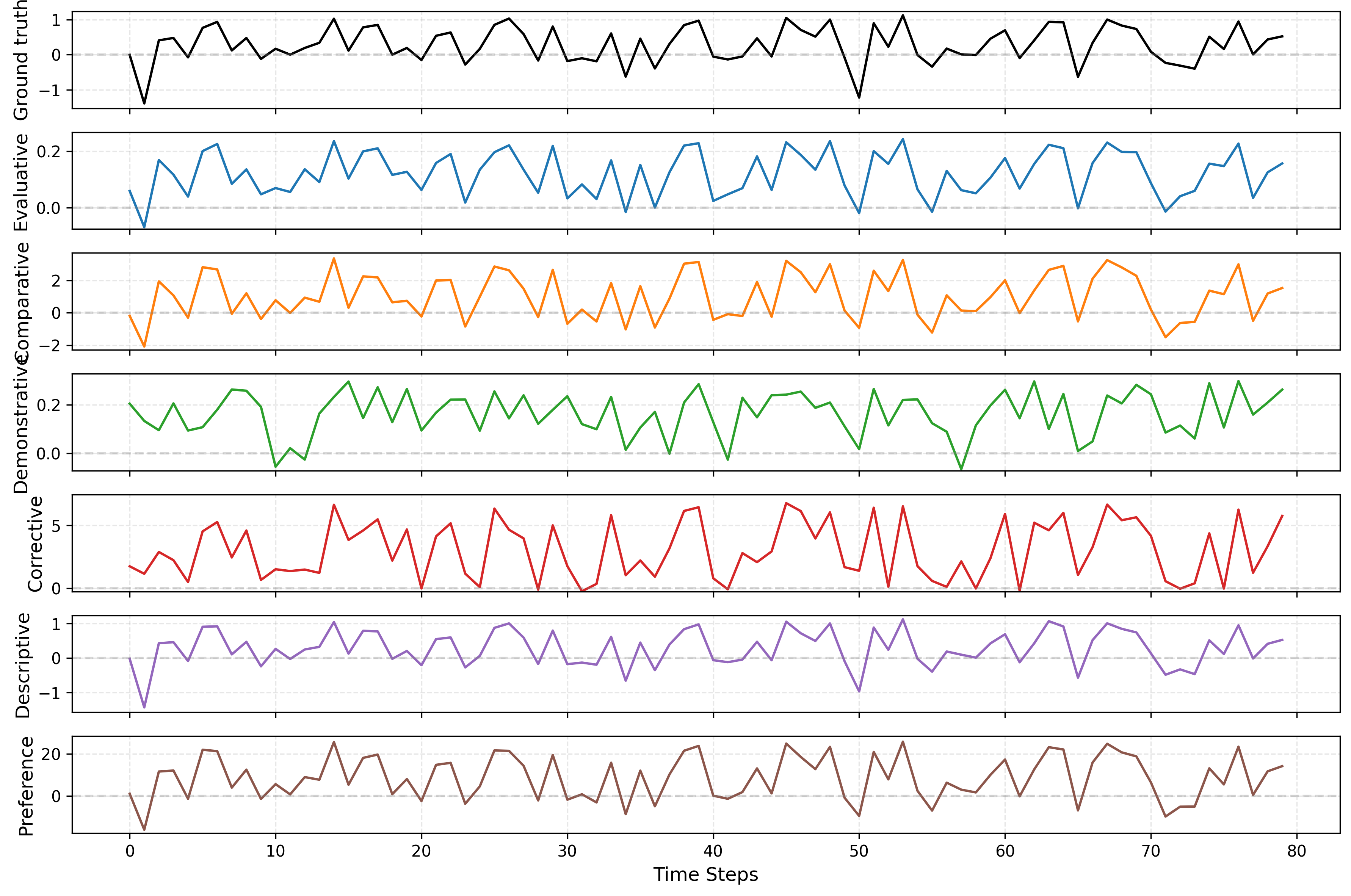}
        \caption{Swimmer-v5}
        \label{fig:corr_plot_0_5_swimmer}
    \end{subfigure}
    \hfill
    \begin{subfigure}[b]{0.49\textwidth}
        \centering
        \includegraphics[width=\textwidth]{figures/sequence_multiples_Ant-v5_noise_0.0.png}
        \caption{Ant-v5}
        \label{fig:corr_plot_0_5_ant}
    \end{subfigure}
    %\vskip\baselineskip
    \begin{subfigure}[b]{0.49\textwidth}
        \centering
        \includegraphics[width=\textwidth]{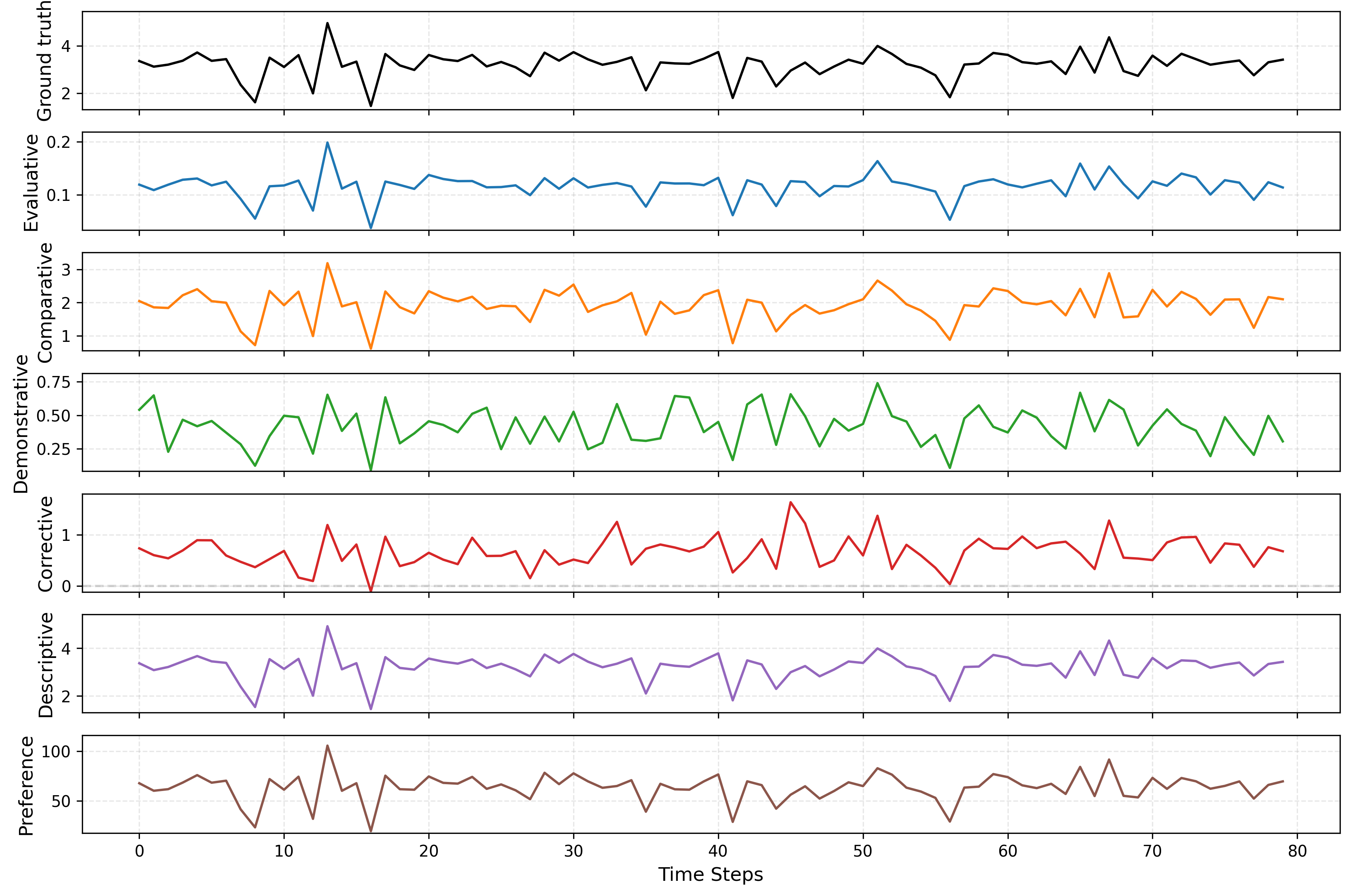}
        \caption{Hopper-v5}
        \label{fig:corr_plot_0_5_hopper}
    \end{subfigure}
    \hfill
    %\vskip\baselineskip
    \begin{subfigure}[b]{0.49\textwidth}
        \centering
        \includegraphics[width=\textwidth]{figures/sequence_multiples_Humanoid-v5_noise_0.0.png}
        \caption{Humanoid-v5}
        \label{fig:corr_plot_0_5_hopper}
    \end{subfigure}
    \caption{Ground-Truth reward function and reward model predictions for randomly sampled sequences.}
    \label{fig:sequences_reward_predictions}
\end{figure}

\clearpage

\section{Agent Training Details}
\label{app:agent_train_details}

\subsection{Training Configuration}
For the training of downstream RL-agents, we chose the same hyperparamter configuration as for the initial expert models (see~\autoref{tab:mujoco_hyperparams_ppo} and \autoref{tab:mujoco_hyperparams_sac}). 

We implemented the reward function as a wrapper around the \textit{Gymnasium}-environment, which replaces the ground-truth environment rewards with the prediction of the reward model. A reward is predicted for each observed state-action pair. As mentioned above, for our analysis, we use a single pre-trained reward model. instead of a consciously updated reward model.\\
For the environments trained with PPO, we chose to standardize the rewards by subtracting a running mean, and dividing by the standard deviation. This follows the existing hyperparameters of RL for these environments, as available in \textit{StableBaselines3-Zoo}~\citep{Raffin2021}. The SAC agents are trained with unnormalized rewards for all runs.

\subsection{Detailed Reward Curves for RL Training with Optimal Reward Models}
\label{app_subsec:rl_training_curves}

\begin{figure}[htbp]
    \centering
    \begin{subfigure}[b]{0.45\textwidth}
        \centering
        \includegraphics[width=\textwidth]{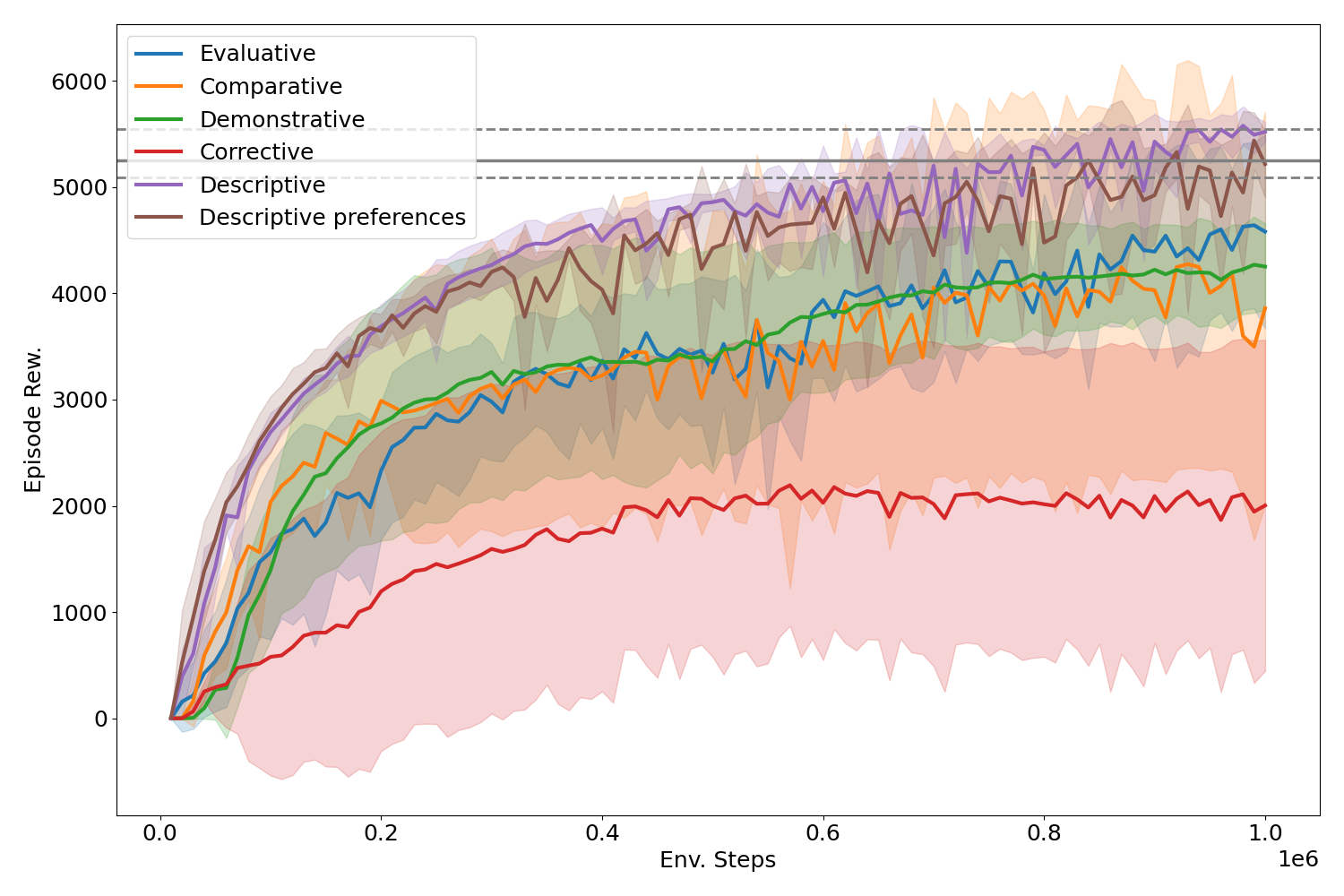}
        \caption{HalfCheetah-v5}
        \label{fig:rl_details_heetah}
    \end{subfigure}
    \hfill
    \begin{subfigure}[b]{0.45\textwidth}
        \centering
        \includegraphics[width=\textwidth]{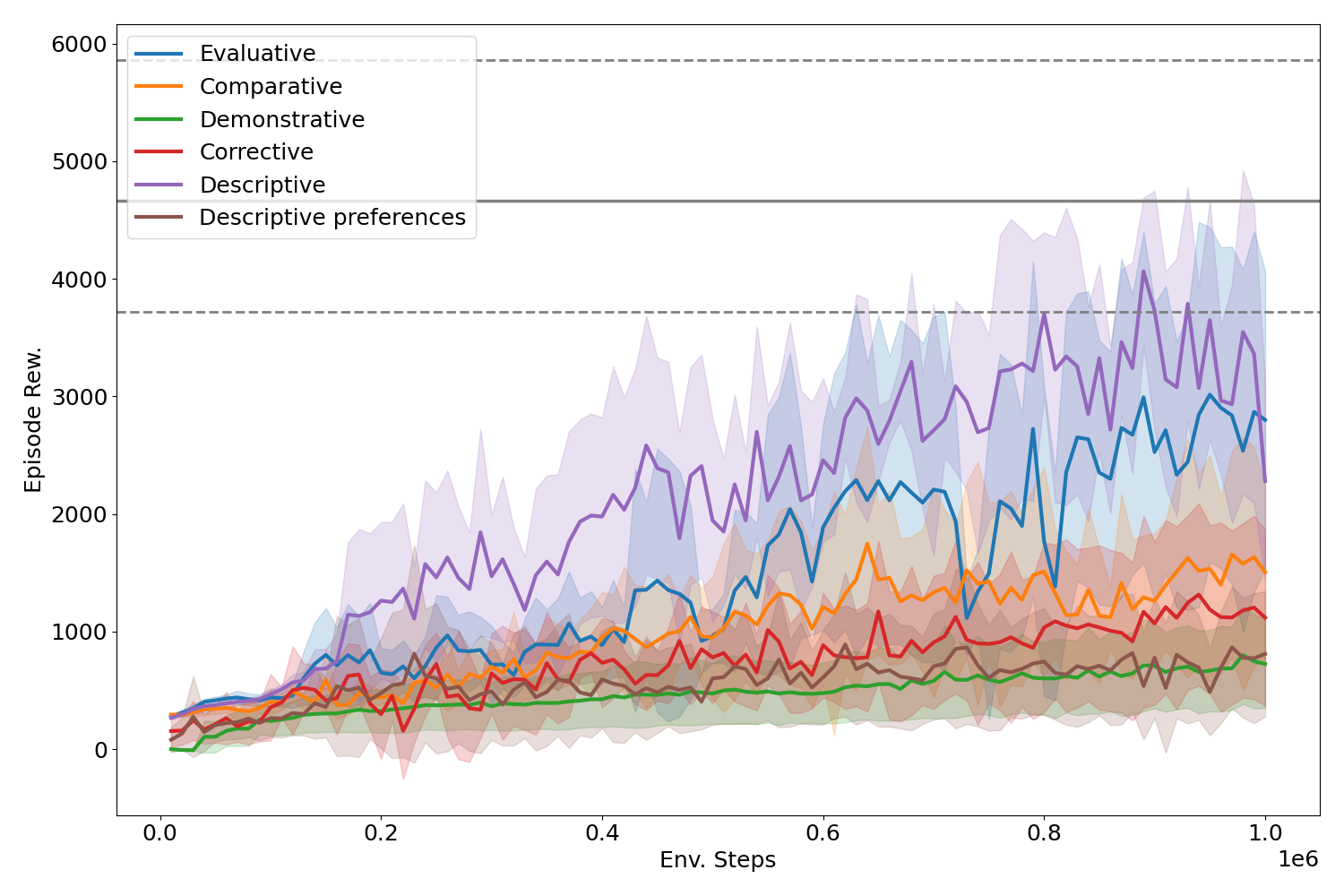}
        \caption{Walker-v5}
        \label{fig:rl_details_walker}
    \end{subfigure}
    %\vskip\baselineskip
    \begin{subfigure}[b]{0.45\textwidth}
        \centering
        \includegraphics[width=\textwidth]{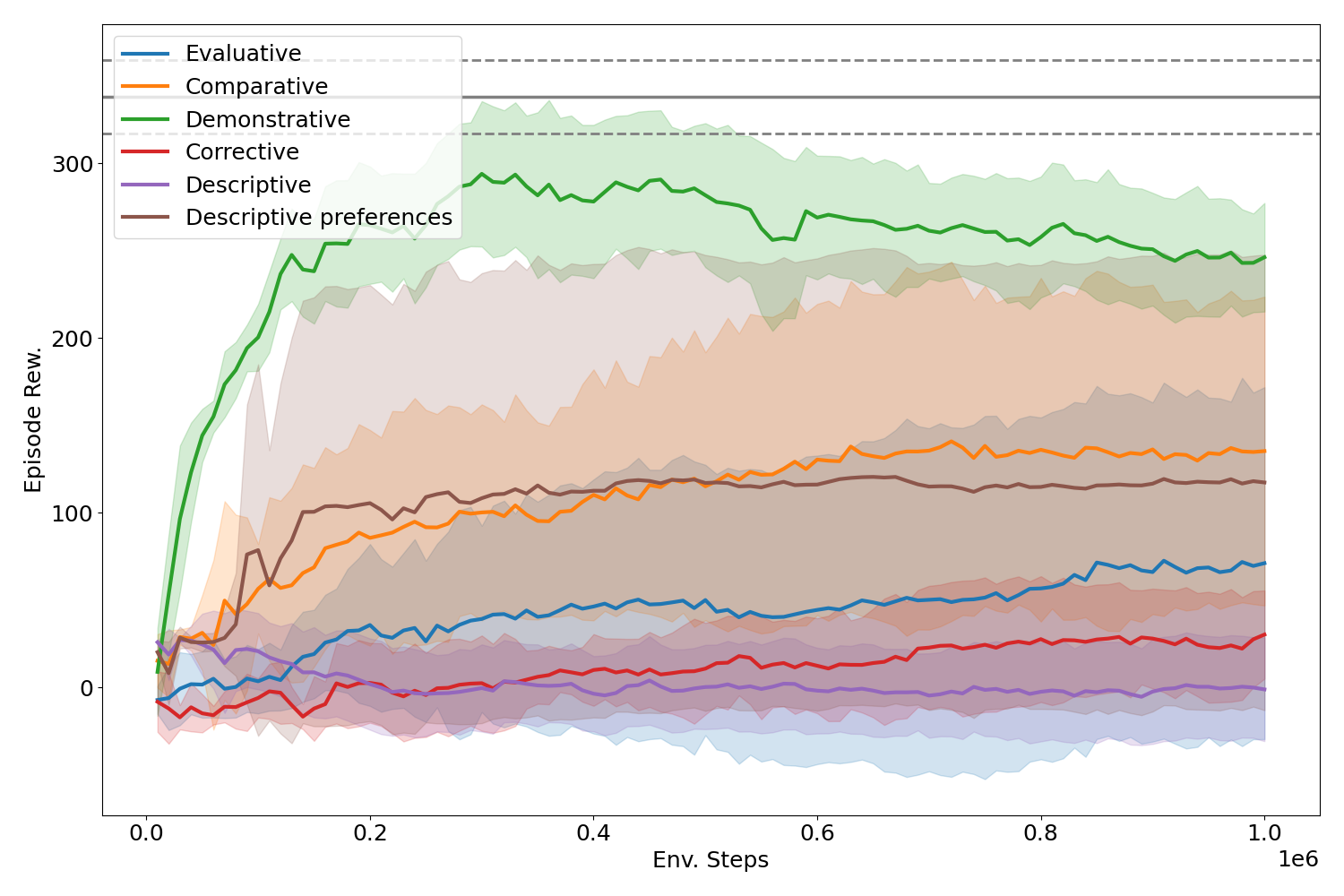}
        \caption{Swimmer-v5}
        \label{fig:rl_details_swimmer}
    \end{subfigure}
    \hfill
    \begin{subfigure}[b]{0.45\textwidth}
        \centering
        \includegraphics[width=\textwidth]{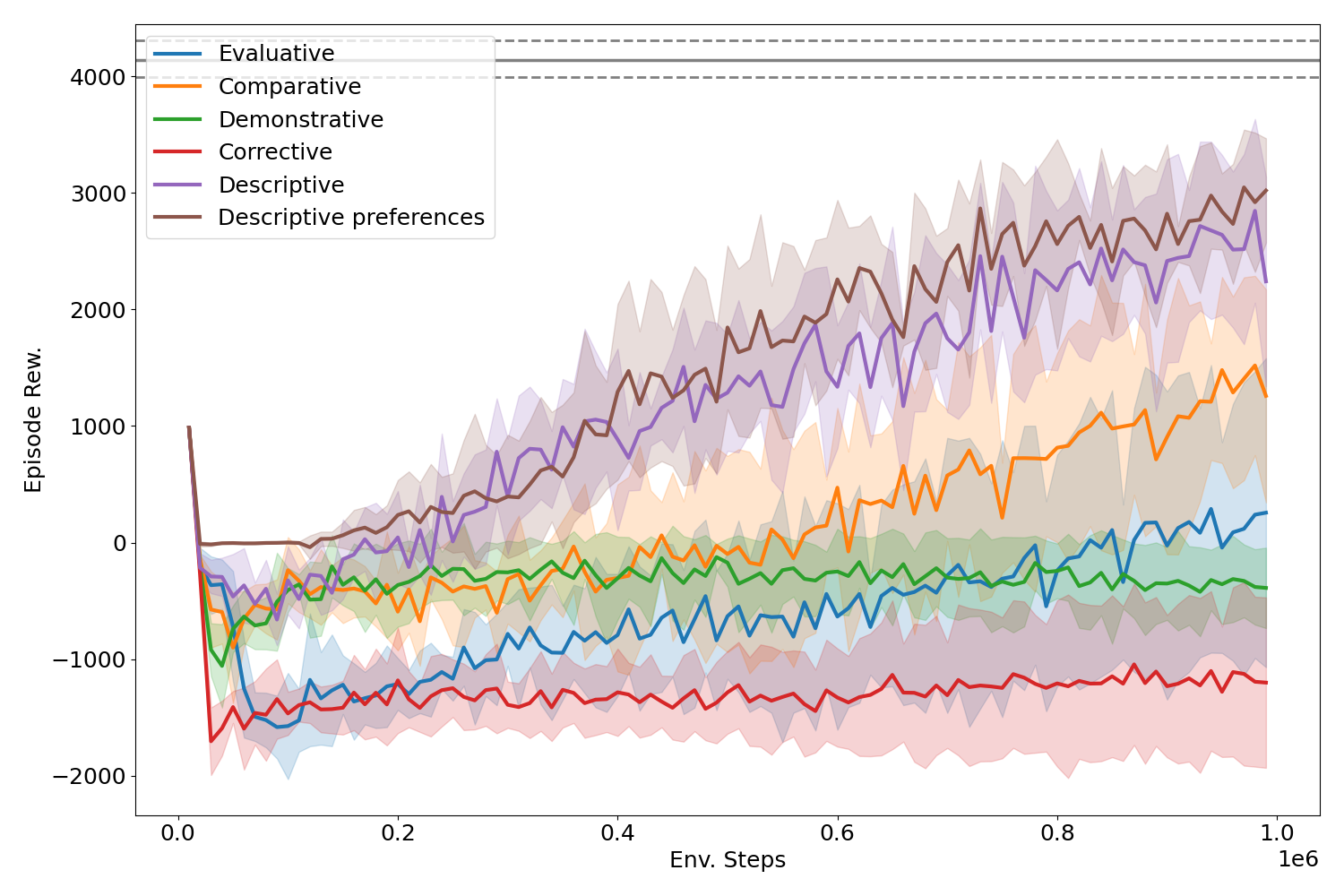}
        \caption{Ant-v5}
        \label{fig:rl_details_ant}
    \end{subfigure}
    %\vskip\baselineskip
    \begin{subfigure}[b]{0.45\textwidth}
        \centering
        \includegraphics[width=\textwidth]{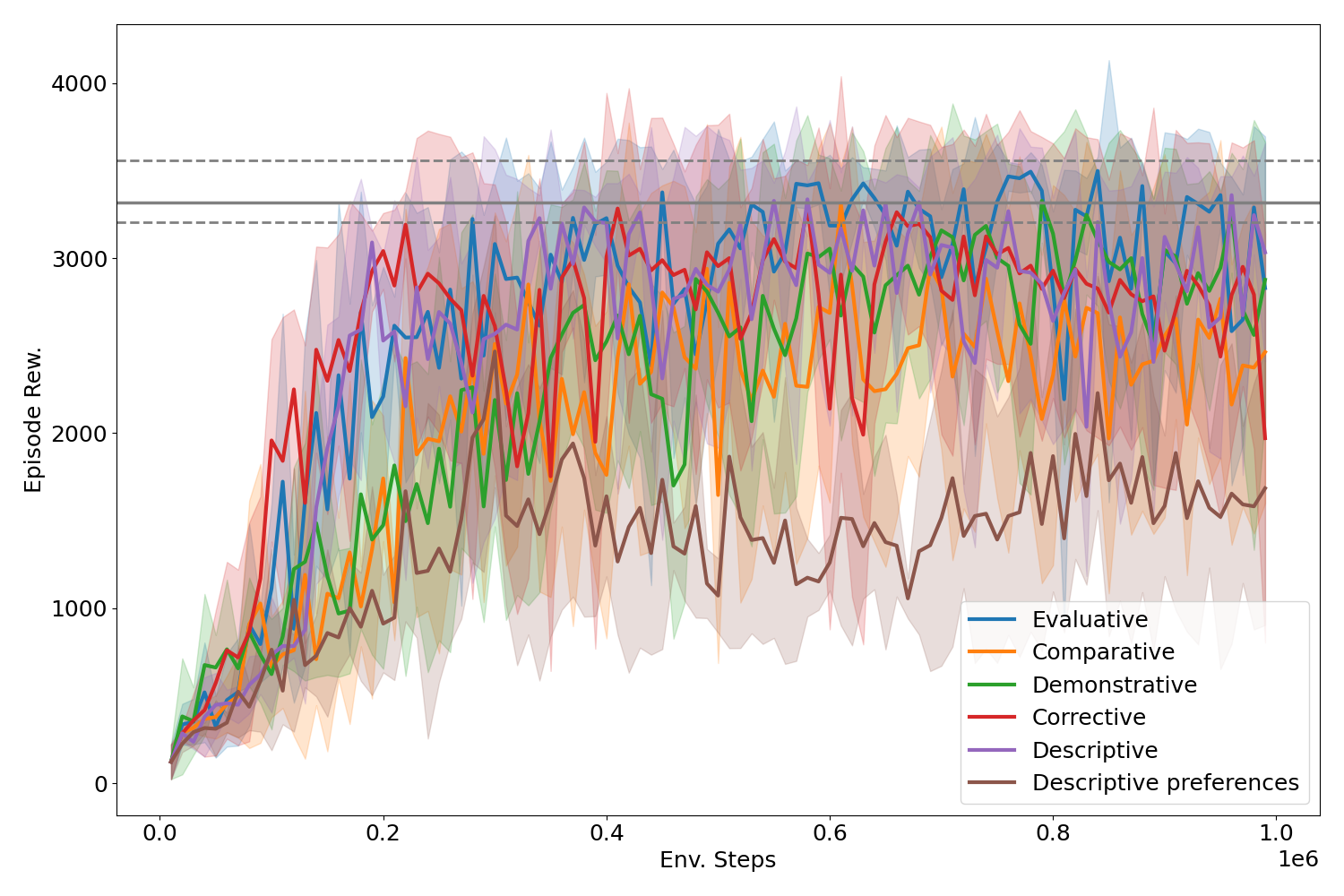}
        \caption{Hopper-v5}
        \label{fig:rl_details_hopper}
    \end{subfigure}
    \hfill
    %\vskip\baselineskip
    \begin{subfigure}[b]{0.45\textwidth}
        \centering
        \includegraphics[width=\textwidth]{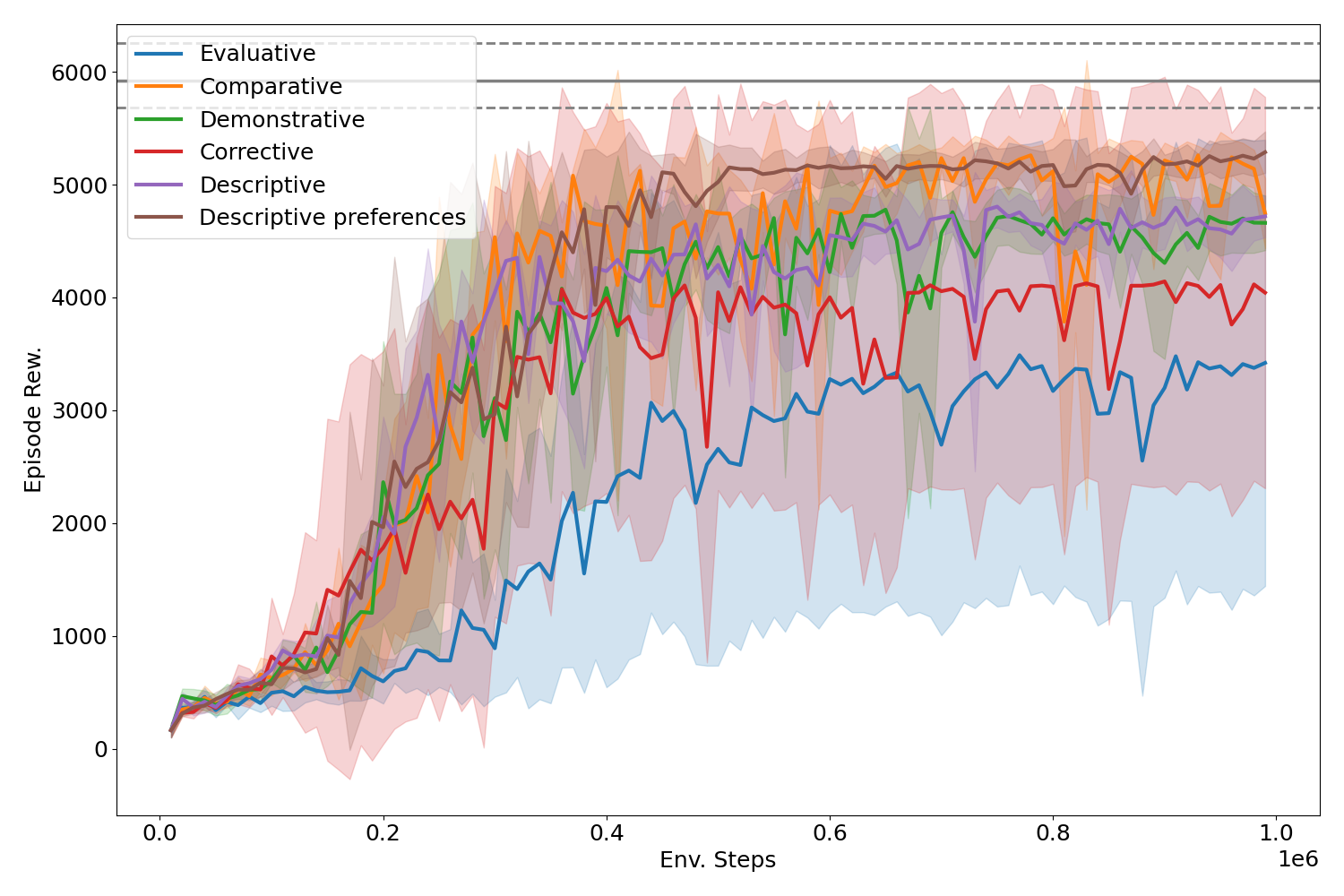}
        \caption{Humanoid-v5}
        \label{fig:rl_details_hopper}
    \end{subfigure}
    \caption{RL reward learning curves for optimal rewards. The shaded area indicates the maximum and minimum reward curves. Rewards are aggregated over 5 feedback sets.}
    \label{fig:all_feedback_types_rl_curves}
\end{figure}

\begin{figure}[htbp]
    \centering
    \begin{subfigure}[b]{0.45\textwidth}
        \centering
        \includegraphics[width=\textwidth]{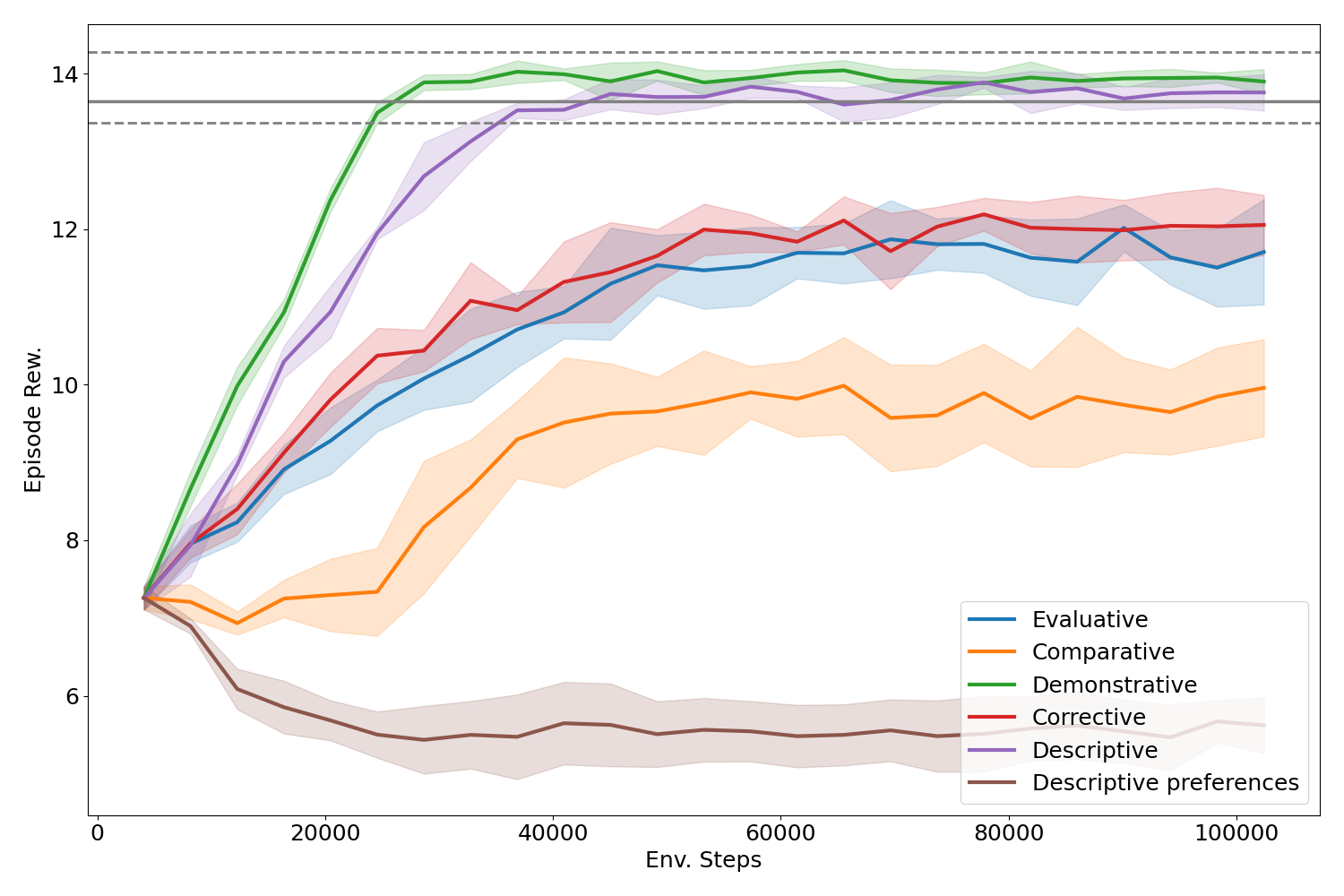}
        \caption{merge-v0}
        \label{fig:rl_details_merge}
    \end{subfigure}
    \hfill
    \begin{subfigure}[b]{0.45\textwidth}
        \centering
        \includegraphics[width=\textwidth]{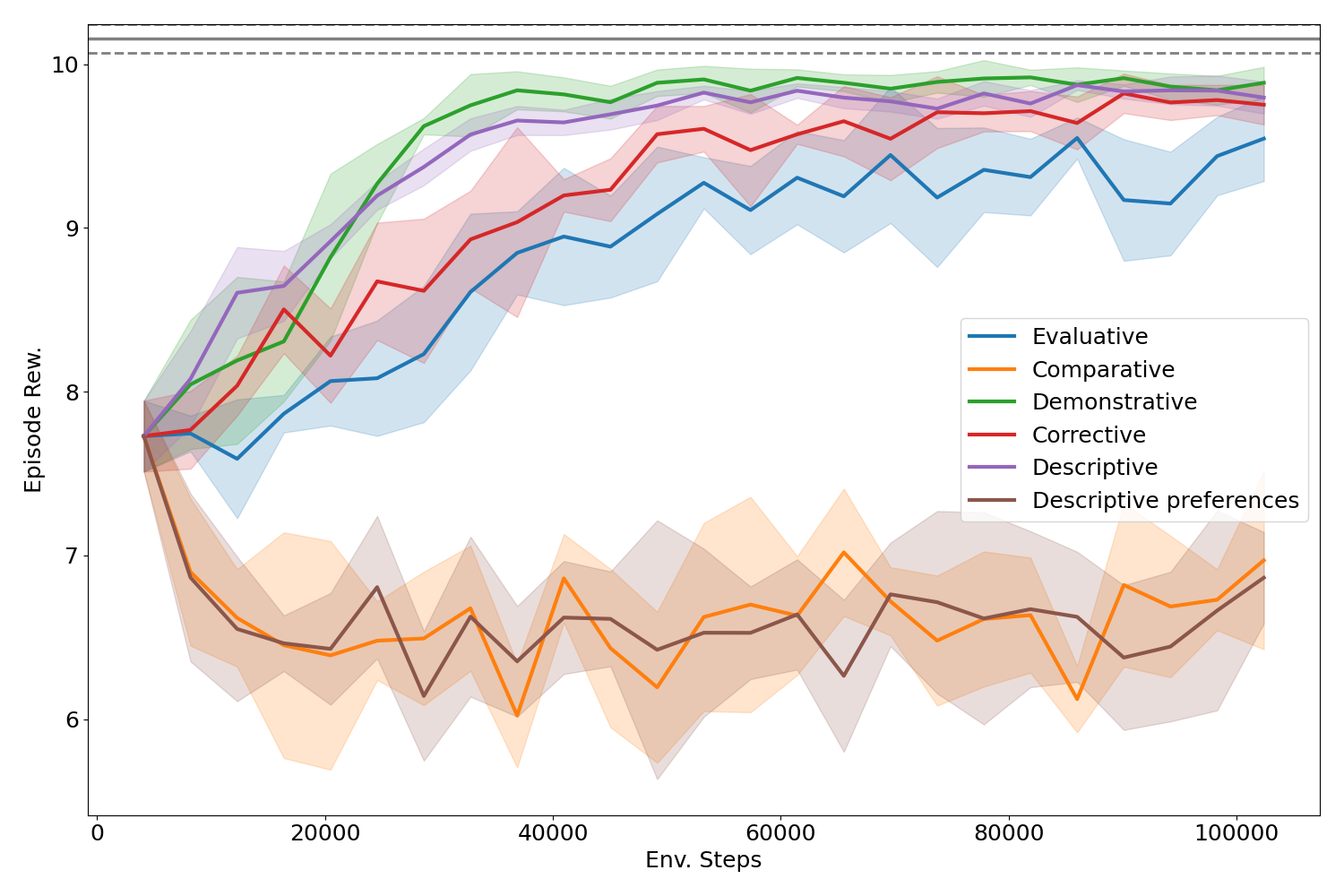}
        \caption{roundabout-v5}
        \label{fig:rl_details_walker}
    \end{subfigure}
    %\vskip\baselineskip
    \begin{subfigure}[b]{0.45\textwidth}
        \centering
        \includegraphics[width=\textwidth]{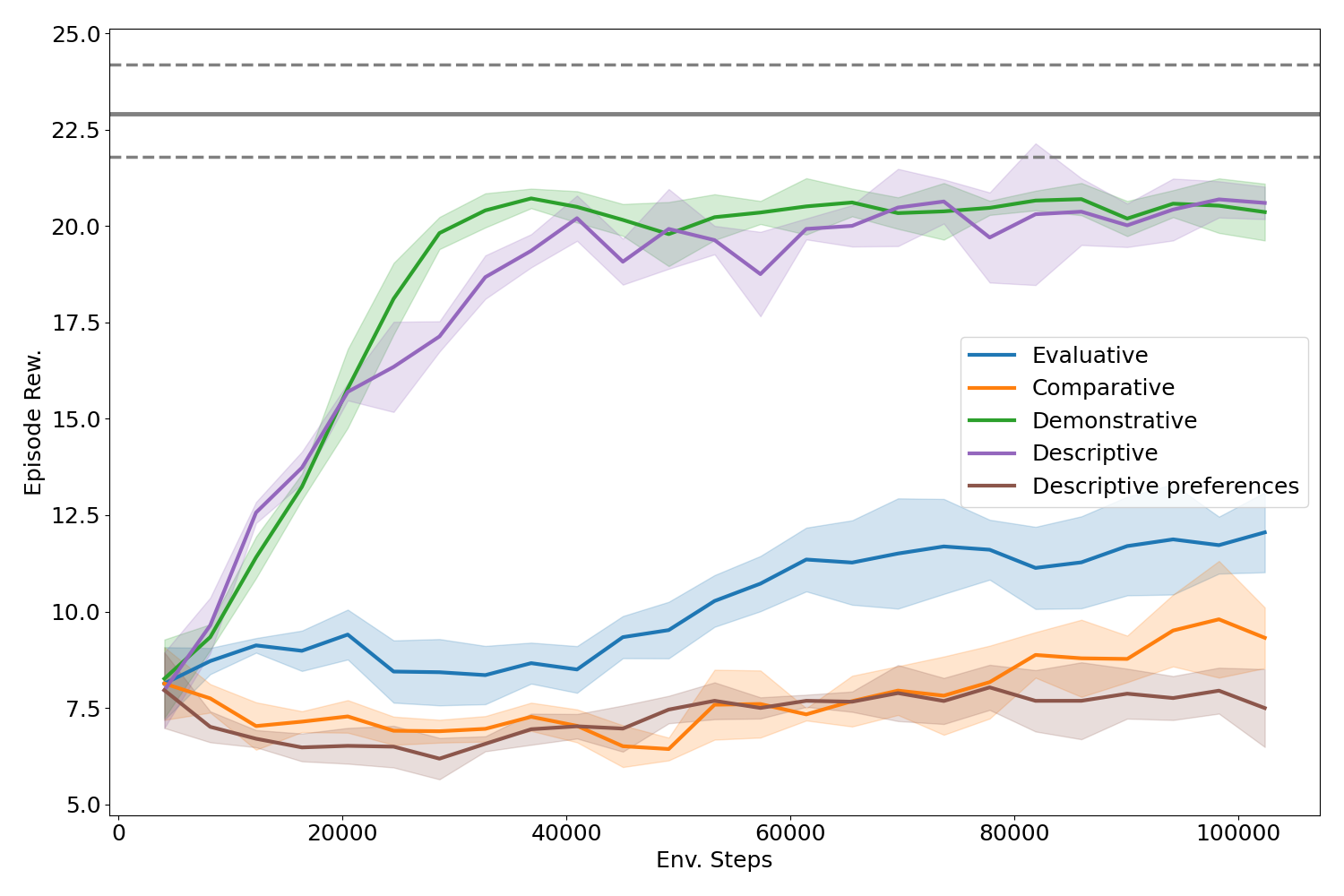}
        \caption{highway-fast-v0}
        \label{fig:rl_details_swimmer}
    \end{subfigure}
    \hfill
    \begin{subfigure}[b]{0.45\textwidth}
        \centering
        \includegraphics[width=\textwidth]{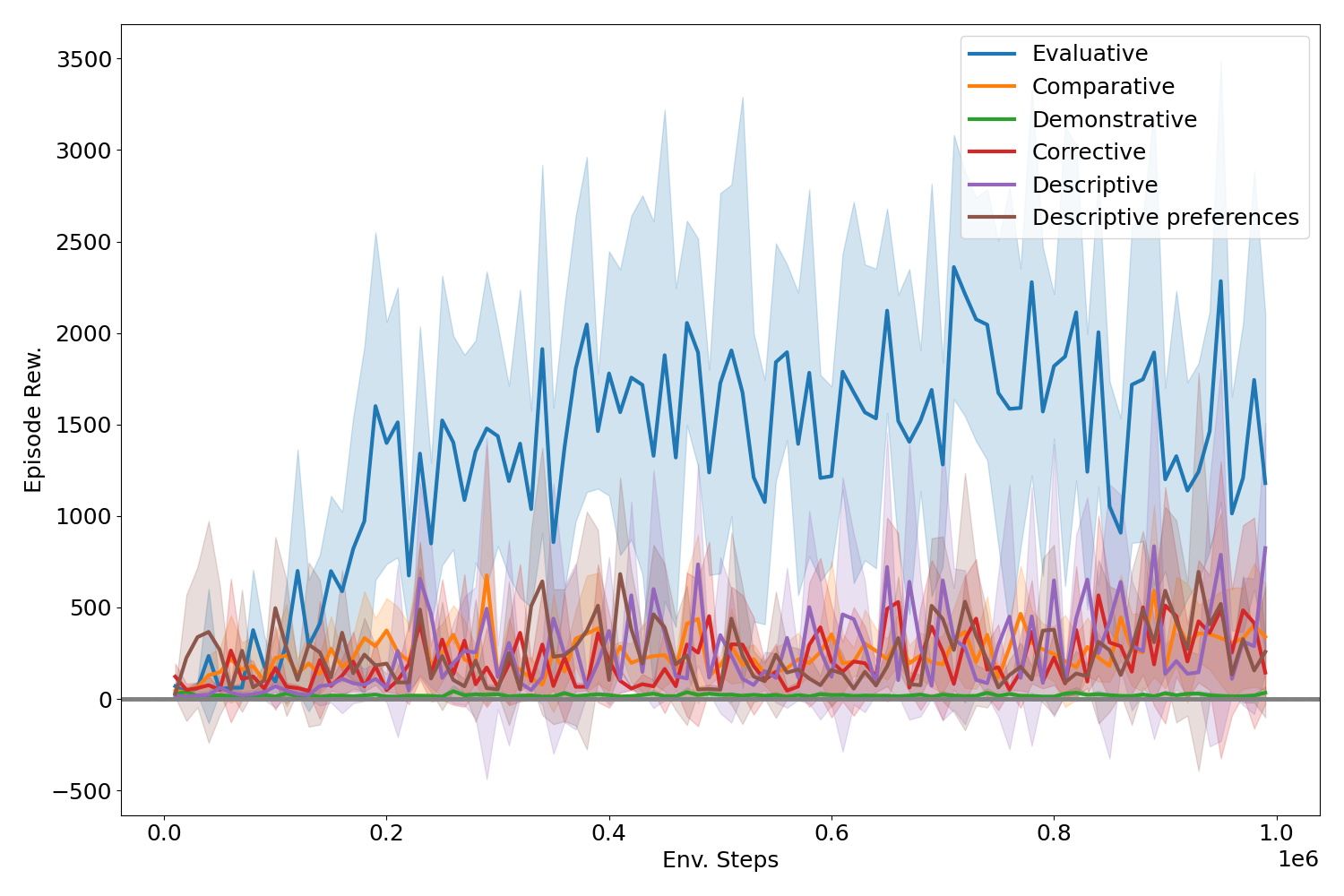}
        \caption{metaworld-sweep-into-v2}
        \label{fig:rl_details_ant}
    \end{subfigure}
    \caption{RL reward learning curves for optimal rewards. The shaded area indicates the maximum and minimum reward curves. Rewards are aggregated over 5 feedback sets.}
    \label{fig:all_feedback_types_rl_curves}
\end{figure}

\clearpage

\subsection{Detailed Reward Curves for RL Training with Noisy Reward Models}
\label{app_subsec:detail_noise_reward_curves}
The following plots show the downstream RL performance for agents trained with reward function of multiple noise levels. Rewards are aggregated over 3 seeds per noise level. We can identify two trends: (1) Often, RL performance is surprisingly robust, even as reward model are way less accurate, and multiple feedback types are competitive in this regard. Descriptive feedback seems to be the most robust, only being heavily influenced by noise in one environment (\emph{Swimmer-v5}), but actually improving its's performance over optimal feedback, which we attribute to an implicit regularization effect. (2) We do see that different environments can be more or less susceptible to introduced noise in general. Environments like \textbf{Hopper-v5} and \textbf{HalfCheetah-v5} seem more robust than an environment like \textbf{Swimmer-v5}.

\begin{figure}[htbp]
    \centering
    \includegraphics[width=0.68\linewidth]{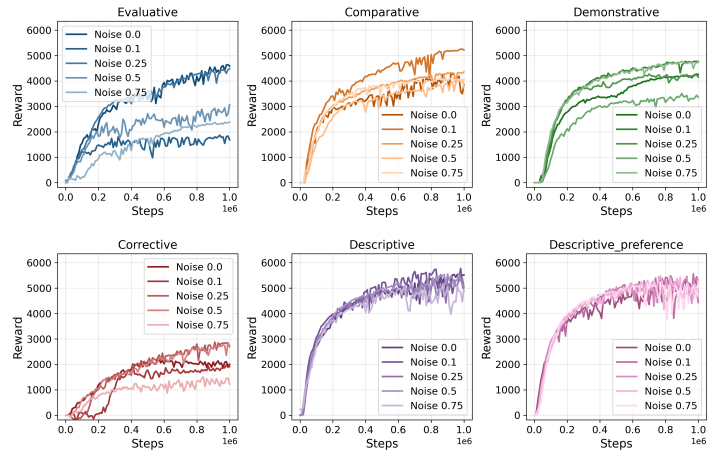}
    \caption{RL Reward Curves for \textbf{HalfCheetah-v5} at different levels of noise}
    \label{fig:half-cheetah-rew-loss-curves}
\end{figure}

\begin{figure}[htbp]
    \centering
    \includegraphics[width=0.68\linewidth]{}
    \caption{RL Reward Curves for \textbf{Walker2d-v5} at different levels of noise.}
    \label{fig:half-walker-rew-loss-curves}
\end{figure}

\begin{figure}[htbp]
    \centering
    \includegraphics[width=0.65\linewidth]{}
    \caption{RL Reward Curves for \textbf{Swimmer-v5} at different levels of noise}
    \label{fig:half-swimmer-rew-loss-curves}
\end{figure}

\begin{figure}[htbp]
    \centering
    \includegraphics[width=0.65\linewidth]{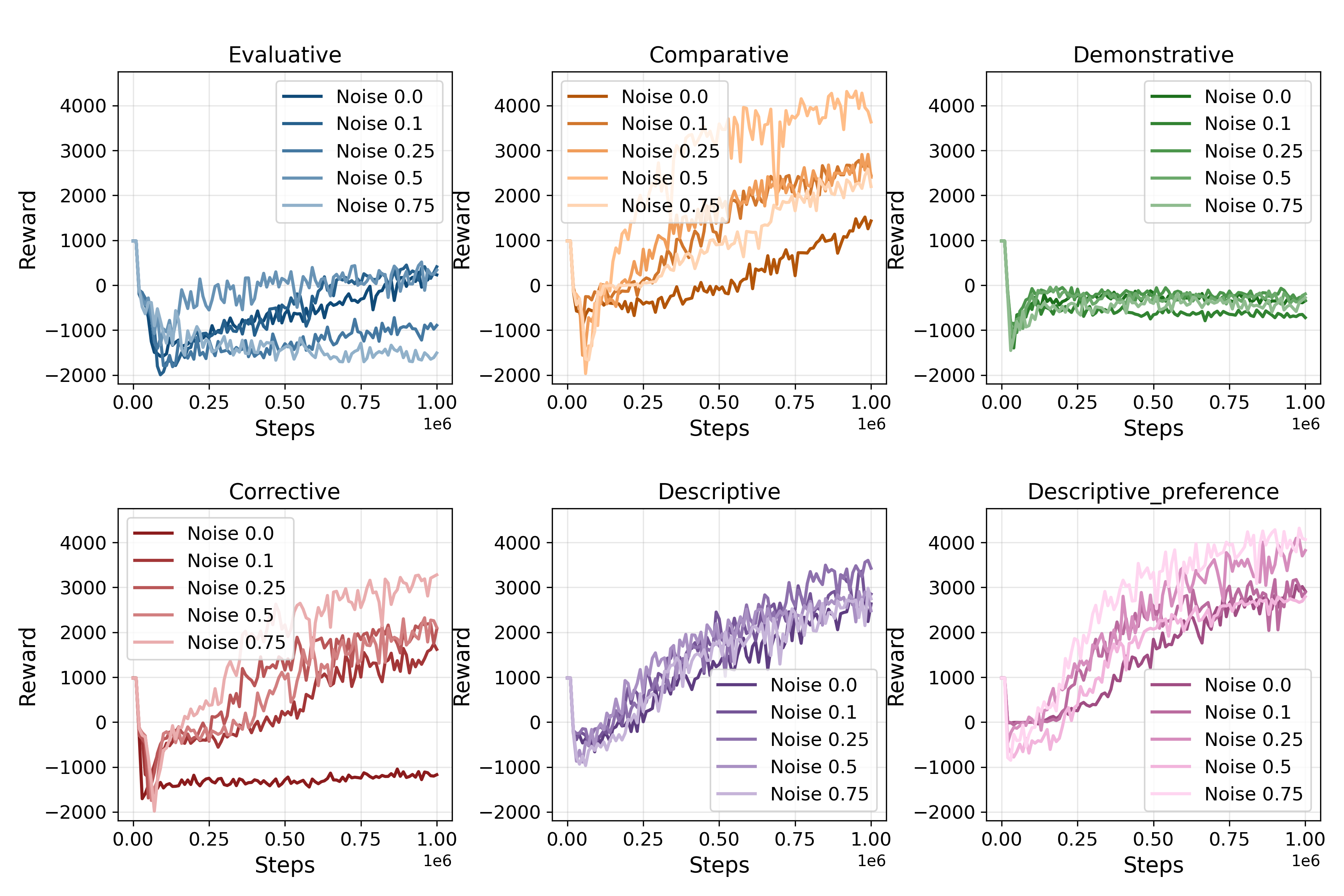}
    \caption{RL Reward Curves for \textbf{Ant-v5} at different levels of noise}
    \label{fig:ant-rew-loss-curves}
\end{figure}

\begin{figure}[htbp]
    \centering
    \includegraphics[width=0.65\linewidth]{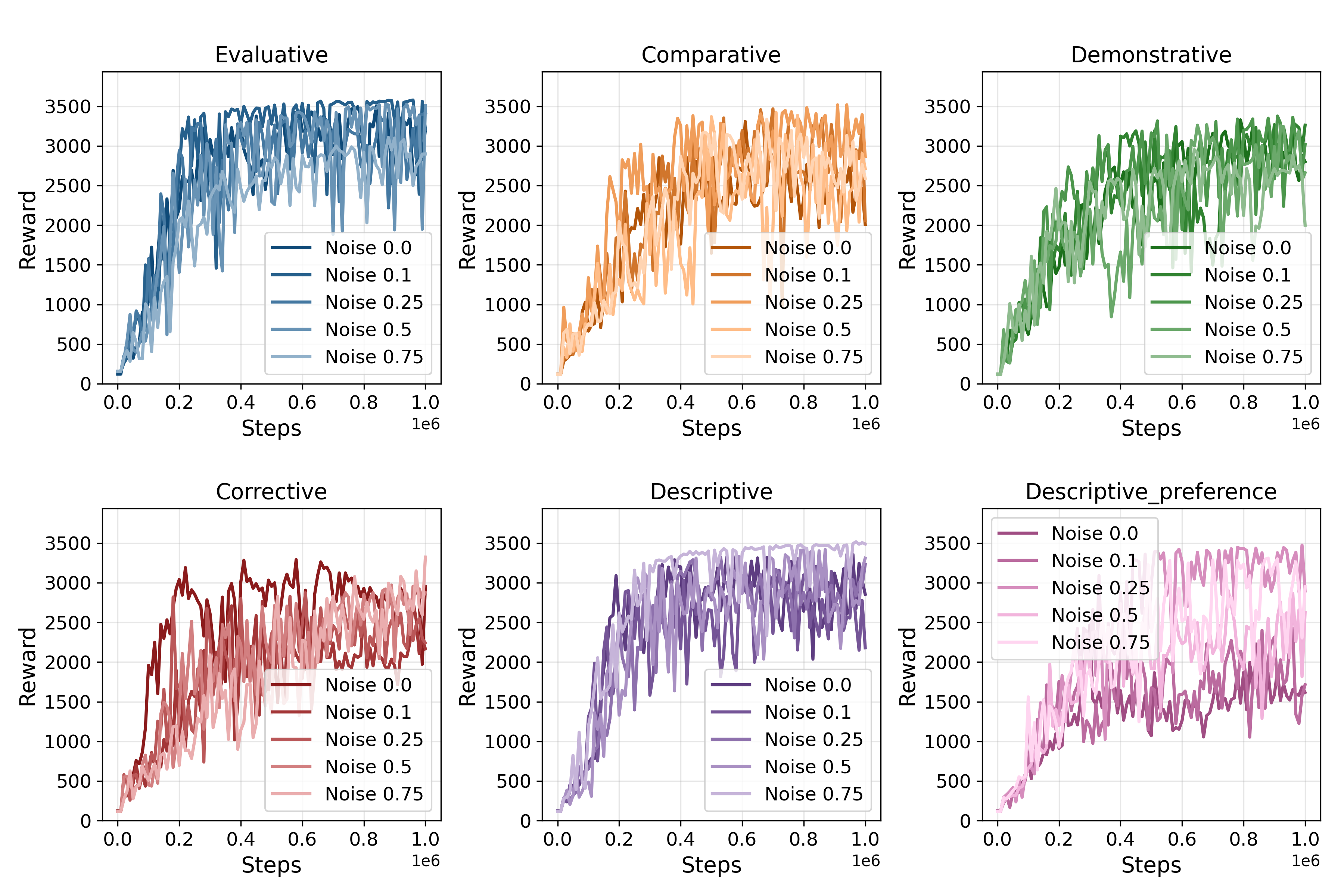}
    \caption{RL Reward Curves for \textbf{Hopper-v5} at different levels of noise}
    \label{fig:hopper-rew-loss-curves}
\end{figure}

\begin{figure}[htbp]
    \centering
    \includegraphics[width=0.65\linewidth]{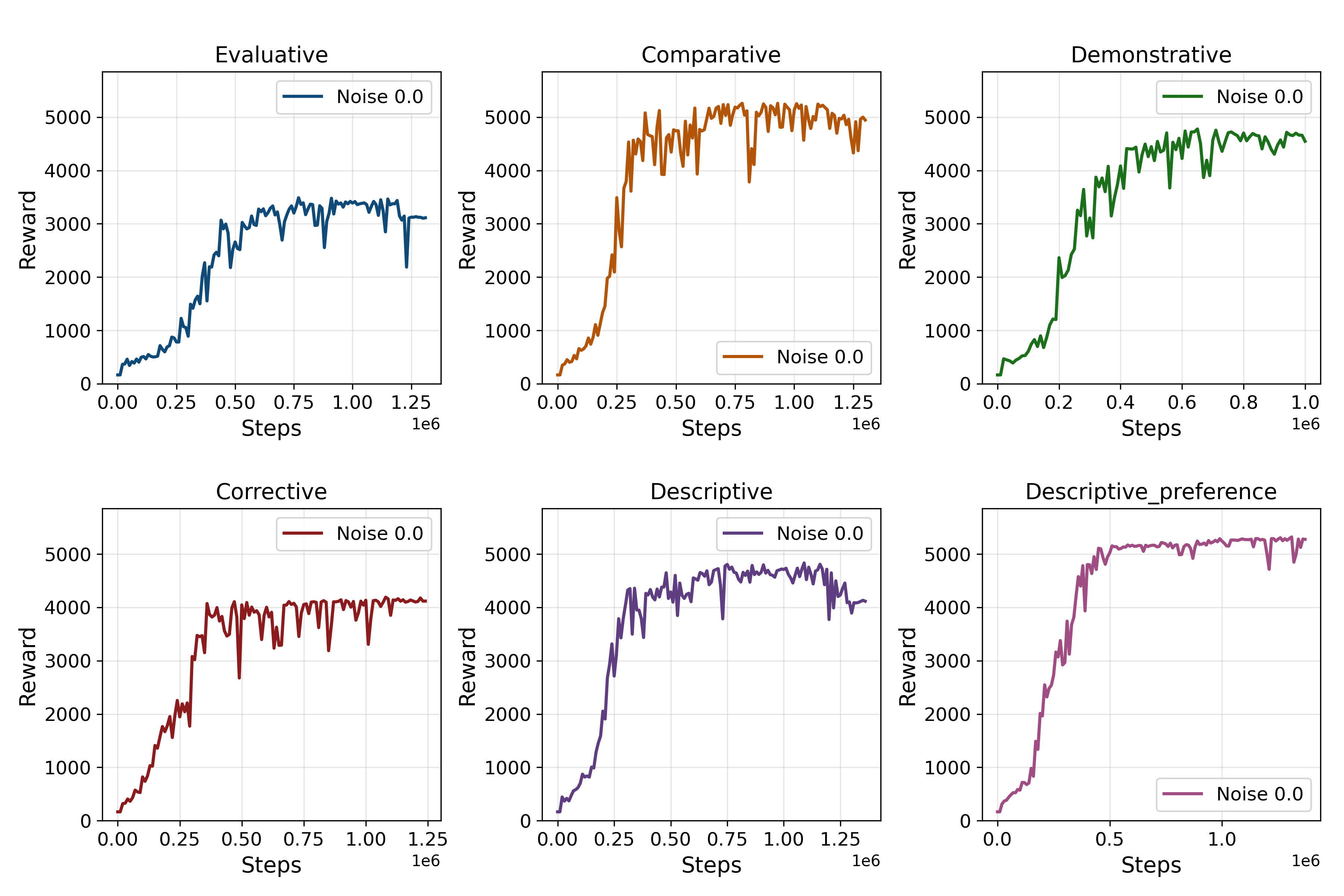}
    \caption{RL Reward Curves for \textbf{Humanoid} at different levels of noise}
    \label{fig:humanoid-rew-loss-curves}
\end{figure}

\clearpage

\subsection{Detailed Reward Curves for RL Training with Varying Amounts of Feedback}
The following plots show the downstream RL performance for agents trained with varying amounts of feedback. We created datasets with fewer feedback instances by sampling random segments/state clusters from the full dataset. Rewards are aggregated over 3 seeds per amount of feedback.

\begin{figure}[htbp]
    \centering
    \includegraphics[width=0.68\linewidth]{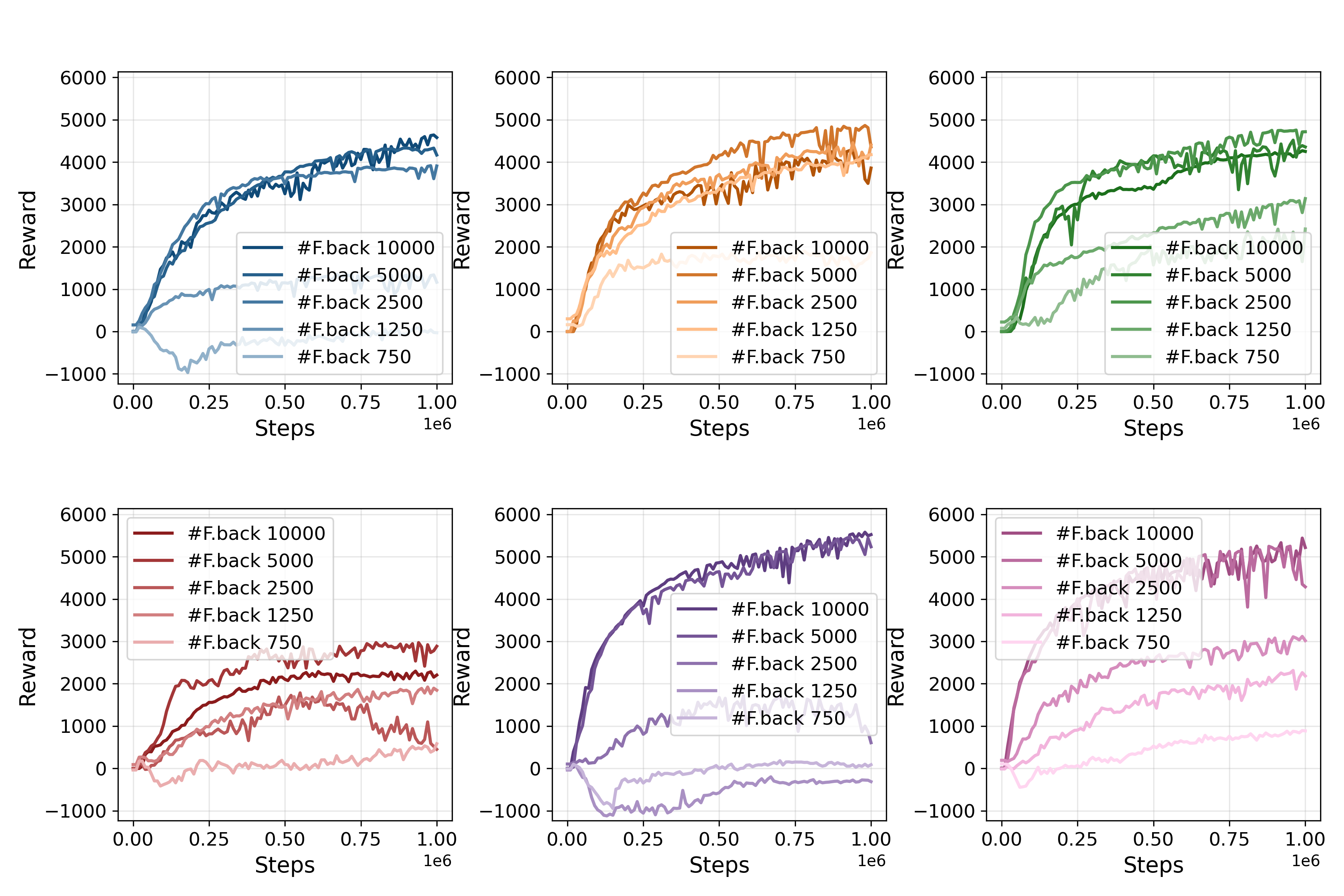}
    \caption{RL Rewards for \textbf{HalfCheetah-v5} with different amounts of feedback.}
    \label{fig:half-cheetah-rew-loss-curves-nfeedback}
\end{figure}

\begin{figure}[htbp]
    \centering
    \includegraphics[width=0.68\linewidth]{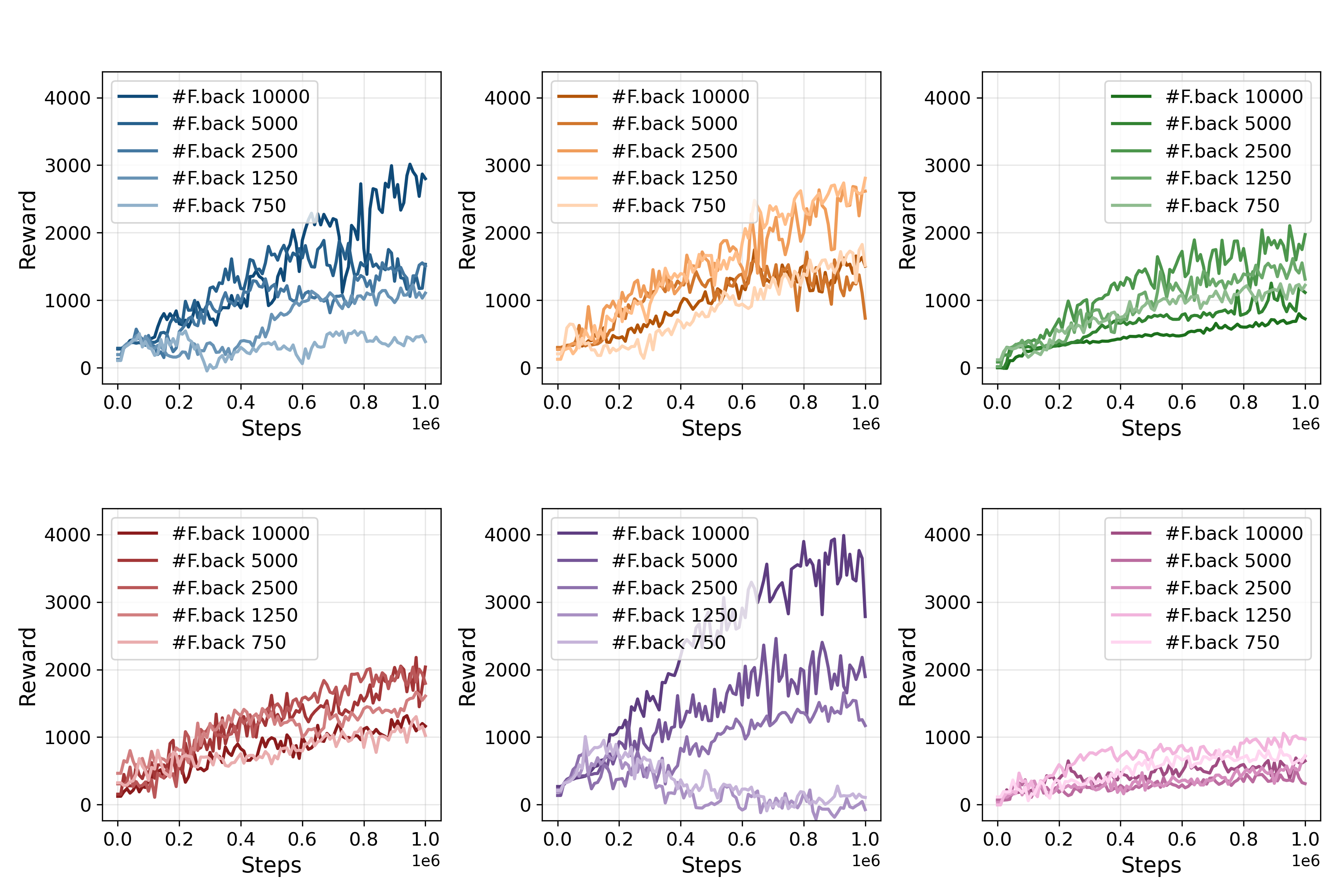}
    \caption{RL Rewards for \textbf{Walker2d-v5} with different amounts of feedback.}
    \label{fig:half-walker-rew-loss-curves-nfeedback}
\end{figure}

\begin{figure}[htbp]
    \centering
    \includegraphics[width=0.65\linewidth]{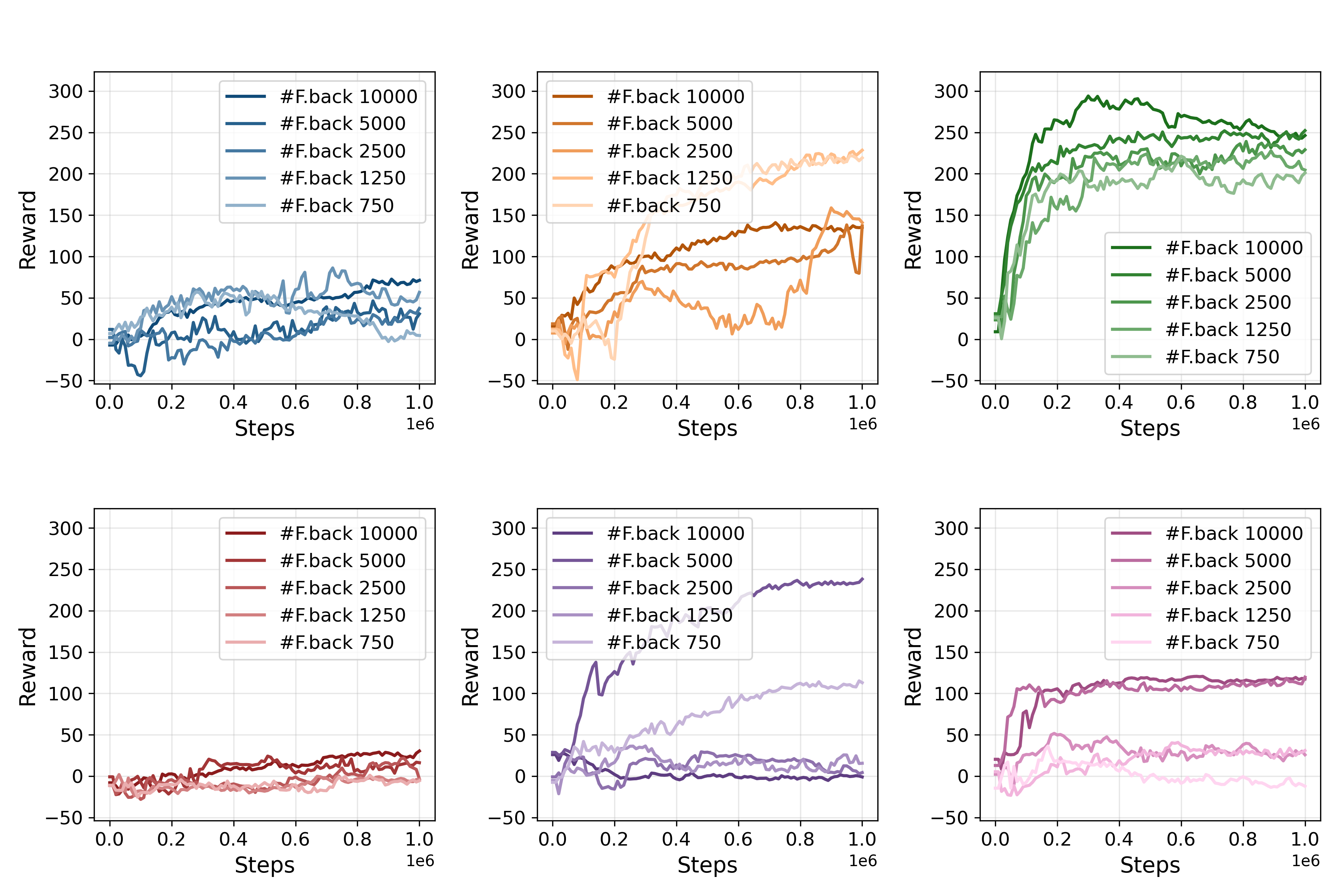}
    \caption{RL Rewards for \textbf{Swimmer-v5} with different amounts of feedback.}
    \label{fig:half-swimmer-rew-loss-curves-nfeedback}
\end{figure}

\begin{figure}[htbp]
    \centering
    \includegraphics[width=0.65\linewidth]{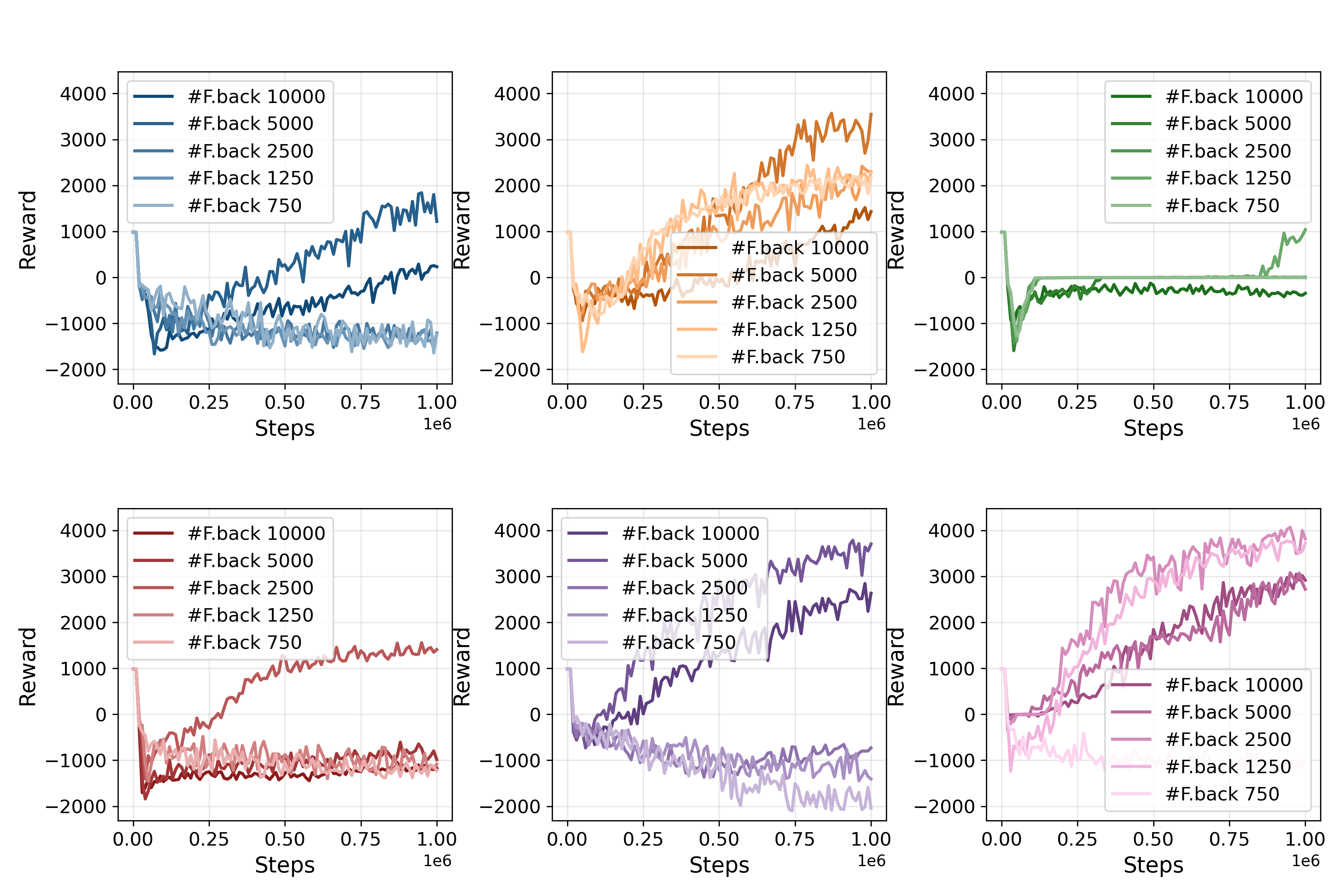}
    \caption{RL Rewards for \textbf{Ant-v5} with different amounts of feedback.}
    \label{fig:ant-rew-loss-curves-nfeedback}
\end{figure}

\begin{figure}[htbp]
    \centering
    \includegraphics[width=0.65\linewidth]{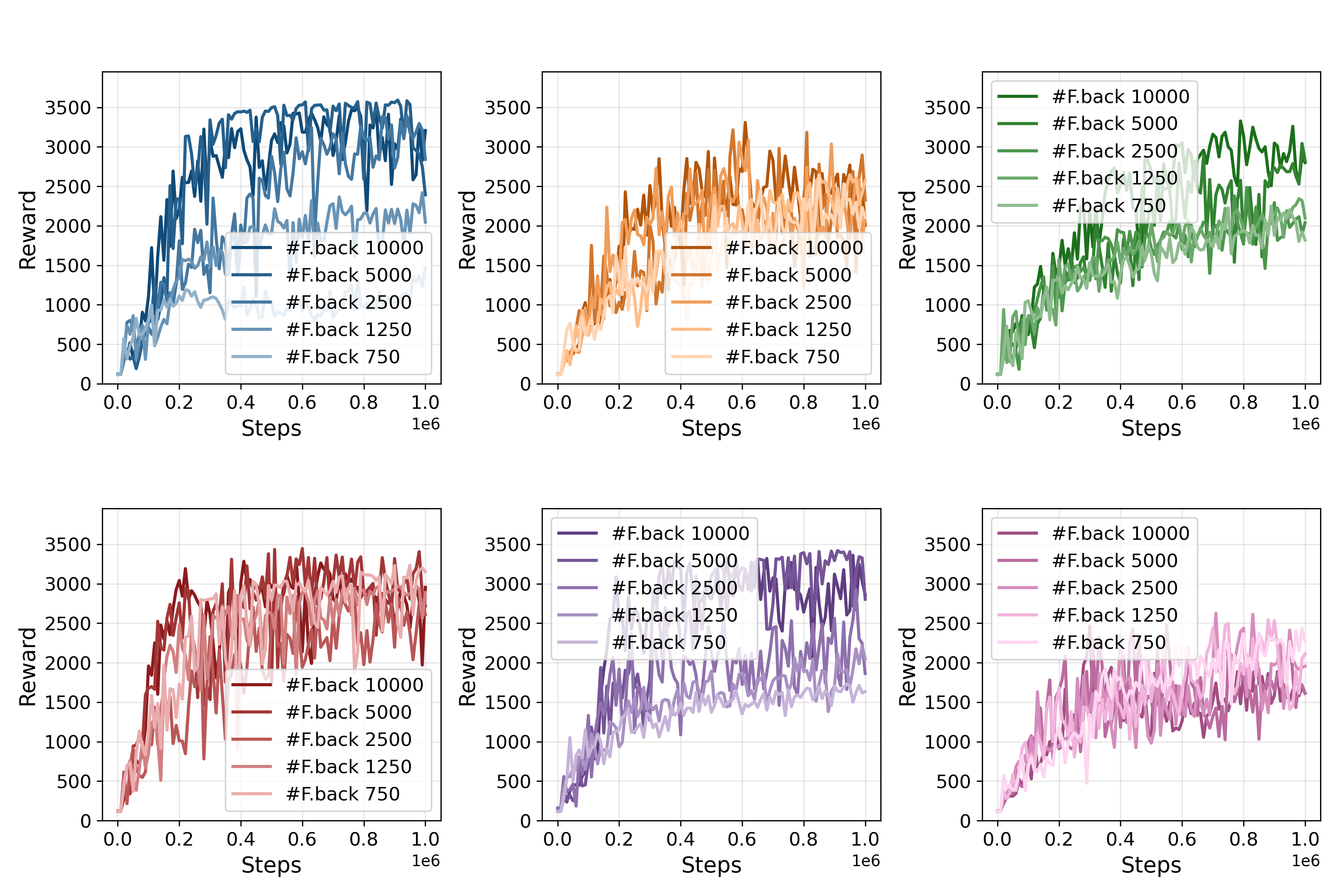}
    \caption{RL Rewards for \textbf{Hopper-v5} with different amounts of feedback.}
    \label{fig:hopper-rew-loss-curves-nfeedback}
\end{figure}

\begin{figure}[htbp]
    \centering
    \includegraphics[width=0.65\linewidth]{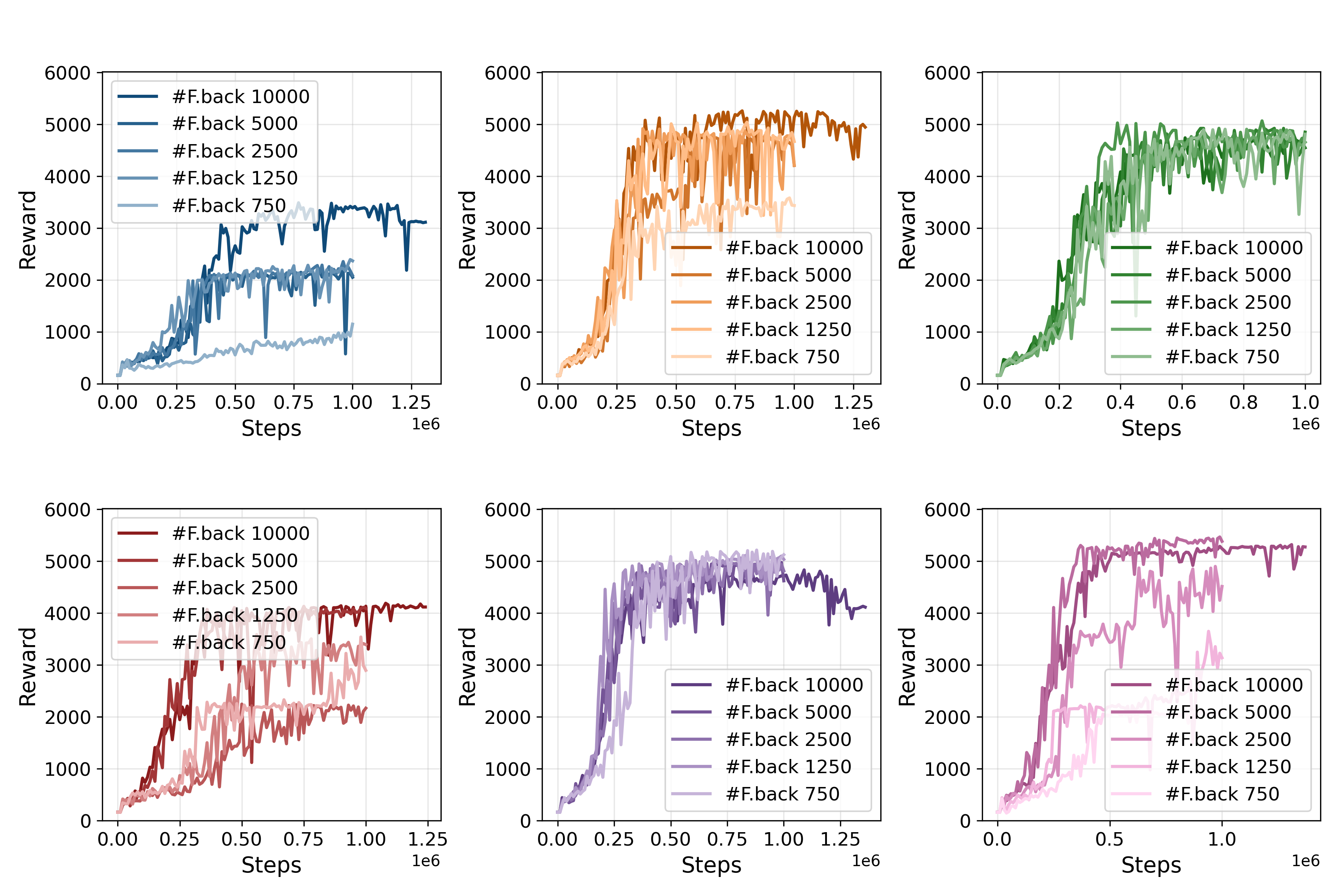}
    \caption{RL Rewards for \textbf{Humanoid} with different amounts of feedback.}
    \label{fig:humanoid-rew-loss-curves-nfeedback}
\end{figure}
We find that all feedback types are sensitive to decreasing number of feedback instances. However we still can highlight some particular observations: Comparative feedback is relatively efficient, showing better performance with few feedback instances. Other types, like evaluative and descriptive feedback show a noticeable drop-off for fewer instances. However, sub-sampling might impact evaluative and descriptive feedback types in an particular manner: Because the reward histograms, and the clustering are computed for the entire dataset, sampling from the full dataset is not representative of recomputed statistics/a recomputed coarse clustering. We therefore report this experiment as an important indictor for the stability of different feedback types, and encourage future exploration.

\newpage

\subsection{Behavioral Cloning Baselines}
\label{app_subsec:bc_baselines}
For each environment, we also train a baseline via supervised behavioral cloning (BC). As the training dataset, we utilize the demonstrations, generated for the feedback data. Segments are flattened, i.e. 10,100 segments of 50 steps each lead to 500,000 state-action pairs for training of the policy. For the policy network, we use a two-layer neural network with 32 units (as is the default in the \textit{imitation} library~\cite{gleave2022imitation}). Additionally, entropy-regularization is used to avoid over-fitting of the supervised policies. We have trained networks with a batch size of 32, and 20 maximum epochs on the dataset. However, we often did not see significant improvement after around 5 episodes.

\section{Joint-Reward Model Details}
\label{app:joint_rew_modeling}

\subsection{Training Configuration}
Similarly to the other training runs, we chose default training parameters. However, more extensive hyper-parameter search might yield better performance in the future.

For training, the RL algorithm is unmodified from a normal training setup. We only adapted the reward function. As before, we implement learning with the multi-type ensemble via a wrapper around the environment. This time, instead of a single reward function, we load a set of reward functions.

For a state-action pair, each individual reward model is queried and predicts a reward. As mentioned above, we standardize the individual reward functions, as they can be in dissimilar value ranges. We do this by standardizing each individual reward function, by subtracting the rolling mean, and dividing by the rolling standard deviation. We compute the rolling standard deviation with Welford's algorithm~\cite{welford}.

Given the ensemble of single feedback reward functions 
$R_{fb} = \{\hat{r}_{eval}, \hat{r}_{comp}, \hat{r}_{demo}, \hat{r}_{corr}, \hat{r}_{descr}, \hat{r}_{descr.pref.} \}$ we implement two variants:
\begin{itemize}
    \item Averaging: We just average the normalized reward into a joint reward prediction
    $$\hat{r(s,a)} = \frac{1}{|R_{fb}|} \sum_{r_{type} \in R_{fb}} r_{type}(s,a)$$
    \item Uncertainty-Weighted Ensemble: Each single feedback reward model is an ensemble in itself (i.e., relying on the Masksembles architecture). We can therefore receive an approximate posterior probability for each feedback type, i.e. a mean and standard deviation between the sub-model predictions. We experimented with using these standard deviations as uncertainties for each feedback type. We then weighted the reward predictions with the inverted standard deviation, i.e., a feedback type with lower uncertainty for a state-action pair contributes to the prediction. 

    \[
    \hat{\mu} = \frac{\sum_{i=1}^{n} \frac{\mu_i}{\sqrt{\sigma_i^2}}}{\sum_{i=1}^{n} \frac{1}{\sqrt{\sigma_i^2}}}
    \]
\end{itemize}

\end{document}